\pgfplotsset{compat=newest}
    \newcolumntype{L}{>{\raggedright\arraybackslash}X}
    \newcolumntype{C}{>{\centering\arraybackslash}X}
\definecolor{citrine}{rgb}{0.89, 0.82, 0.04}
\definecolor{blued}{RGB}{70,197,221}
\newcommand{\M}{\mathcal M}
\newcommand{\A}{\mathcal A}
\renewcommand{\O}{\mathcal O}
\newcommand{\calS}{\mathcal S}
\renewcommand{\Re}{\mathbb R}
\newcommand{\nextstates}{\Gamma}
\newcommand{\MR}{D^\text{MR}}
\newcommand{\MD}{D^\text{MD}}
\newcommand{\SR}{\Pi^\text{SR}}
\newcommand{\SD}{\Pi^\text{SD}}
\newcommand{\PiC}{\Pi_c}
\newcommand{\SP}[1]{sp\left\{{#1}\right\}}
\newcommand{\proj}[1]{\Gamma_c{#1}}
\newcommand{\norm}[2][\infty]{\left\|#2\right\|_{#1}}
\DeclareMathOperator*{\argmax}{\arg\,\max}
\DeclareMathOperator*{\argmin}{\arg\,\min}
\newcommand{\transp}{\mathsf{T}}
\newcommand{\opT}[1]{T_c#1}
\newcommand{\opN}[1]{N_c{#1}}
\newcommand{\opG}[1]{G_c{#1}}
\newcommand{\evi}{{\small\textsc{EVI}}\xspace}
\newcommand{\regal}{{\small\textsc{Regal}}\xspace}
\newcommand{\regalc}{{\small\textsc{Regal.C}}\xspace}
\newcommand{\regald}{{\small\textsc{Regal.D}}\xspace}
\newcommand{\regopt}{{\small\textsc{ScOpt}}\xspace}
\newcommand{\ucrl}{{\small\textsc{UCRL}}\xspace}
\newcommand{\psrl}{{\small\textsc{PSRL}}\xspace}
\newcommand{\scal}{{\small\textsc{SCAL}}\xspace}
\newcommand{\rmaxbound}{r_{\max}}
\newcommand{\wt}[1]{\widetilde{#1}}
\newcommand{\wh}[1]{\widehat{#1}}
\newcommand{\wb}[1]{\overline{#1}}
\DeclareMathAccent{\wtilde}{\mathord}{largesymbols}{"65}
\newcommand{\uwidetilde}[1]{\underaccent{\wtilde}{#1}}
\newcommand{\uwt}[1]{\underaccent{\wtilde}{#1}}
\newcommand{\pluseq}{\mathrel{+}=}
\DeclareMathOperator*{\supp}{supp}
\DeclareRobustCommand{\eg}{e.g.,\@\xspace}
\DeclareRobustCommand{\ie}{i.e.,\@\xspace}
\DeclareRobustCommand{\wrt}{w.r.t.\@\xspace}
\DeclareRobustCommand{\st}{s.t.\@\xspace}
\newtheorem{theorem}{Theorem}
\newtheorem{lemma}[theorem]{Lemma}
\newtheorem{proposition}[theorem]{Proposition}
\newtheorem*{remark}{Remark}
\theoremstyle{remark}
\theoremstyle{theorem}
\newtheorem{definition}{Definition}
\newtheorem{example}{Example}
\newtheorem{assumption}[theorem]{Assumption}
\newlength{\minipagewidth}
\newlength{\minipagewidthx}
\newcommand{\bookboxx}[1]{\small
\par\medskip\noindent
\framebox[0.99\textwidth]{
\begin{minipage}{0.97\dimexpr\textwidth-\parindent\relax} {#1} \end{minipage} } \par\medskip }
\icmltitlerunning{Efficient Bias-Span-Constrained Exploration-Exploitation}
\g@addto@macro\normalsize{%
  \setlength\abovedisplayskip{5pt}
  \setlength\belowdisplayskip{5pt}
}
\begin{document} 

\twocolumn[
\icmltitle{Efficient Bias-Span-Constrained Exploration-Exploitation\\ in Reinforcement Learning}

% It is OKAY to include author information, even for blind
% submissions: the style file will automatically remove it for you
% unless you've provided the [accepted] option to the icml2017
% package.

% list of affiliations. the first argument should be a (short)
% identifier you will use later to specify author affiliations
% Academic affiliations should list Department, University, City, Region, Country
% Industry affiliations should list Company, City, Region, Country

% you can specify symbols, otherwise they are numbered in order
% ideally, you should not use this facility. affiliations will be numbered
% in order of appearance and this is the preferred way.[0,c]
\icmlsetsymbol{equal}{*}

\begin{icmlauthorlist}
        \icmlauthor{Ronan Fruit}{equal,inria}
        \icmlauthor{Matteo Pirotta}{equal,inria}
        \icmlauthor{Alessandro Lazaric}{fair}
        \icmlauthor{Ronald Ortner}{leoben}
\end{icmlauthorlist}

\icmlaffiliation{inria}{SequeL Team, INRIA Lille, France}
\icmlaffiliation{fair}{Facebook AI Research, Paris, France}
\icmlaffiliation{leoben}{Montanuniversit\"{a}t Leoben, Austria}

\icmlcorrespondingauthor{Ronan Fruit}{ronan.fruit@inria.fr}

% You may provide any keywords that you 
% find helpful for describing your paper; these are used to populate 
% the "keywords" metadata in the PDF but will not be shown in the document
\icmlkeywords{boring formatting information, machine learning, ICML}

\vskip 0.3in
]

% this must go after the closing bracket ] following \twocolumn[ ...

% This command actually creates the footnote in the first column
% listing the affiliations and the copyright notice.
% The command takes one argument, which is text to display at the start of the footnote.
% The \icmlEqualContribution command is standard text for equal contribution.
% Remove it (just {}) if you do not need this facility.

%\printAffiliationsAndNotice{}  % leave blank if no need to mention equal contribution
\printAffiliationsAndNotice{\icmlEqualContribution} % otherwise use the standard text.
%\footnotetext{hi}

\begin{abstract}
We introduce \scal, an algorithm designed to perform efficient exploration-exploitation in any \emph{unknown weakly-communicating} Markov decision process (MDP) for which an upper bound $c$ on the span of the \emph{optimal bias function} is known. For an MDP with $S$ states, $A$ actions and $\nextstates \leq S$ possible next states, we prove a regret bound of $\widetilde{O}(c\sqrt{\Gamma SAT})$, which significantly improves over existing algorithms (e.g., \ucrl and \psrl), whose regret scales linearly with the MDP \emph{diameter} $D$. In fact, the optimal bias span is finite and often much smaller than $D$ (e.g., $D=\infty$ in non-communicating MDPs). A similar result was originally derived by~\citet{Bartlett2009regal} for \regalc, for which no tractable algorithm is available. In this paper, we \emph{relax} the optimization problem at the core of \regalc, we carefully analyze its properties, and we provide the first \emph{computationally efficient algorithm} to solve it. %Finally, we integrate \regopt into a ``standard'' exploration-exploitation routine and obtain \scal, which is able to \emph{efficiently} learn in any \emph{weakly communicating} MDPs.
Finally, we report numerical simulations supporting our theoretical findings and showing how \scal significantly outperforms \ucrl in MDPs with \emph{large} diameter and \emph{small} span.

\end{abstract}

% !TEX root = regal.tex

\vspace{-0.3in}
\section{Introduction}
\vspace{-0.05in}

While learning in an unknown environment, a reinforcement learning (RL) agent must trade off the \textit{exploration} needed to collect information about the dynamics and reward, and the \textit{exploitation} of the experience gathered so far to gain as much reward as possible. In this paper, we focus on the regret framework~\citep{Jaksch10}, which evaluates the exploration-exploitation performance by comparing the rewards accumulated by the agent and an optimal policy. A common approach to the exploration-exploitation dilemma is the \emph{optimism in face of uncertainty} (OFU) principle: the agent maintains optimistic estimates of the value function and, at each step, it executes the policy with highest optimistic value~\citep[\eg][]{Brafman:2003:RGP:944919.944928,Jaksch10,Bartlett2009regal}. 
%If the selected action is (near-)optimal, no regret is suffered. Otherwise, this is an efficient way to learn from experience.
An alternative approach is posterior sampling~\citep{thompson1933likelihood}, which maintains a Bayesian distribution over MDPs (\ie dynamics and expected reward) and, at each step, samples an MDP and executes the corresponding optimal policy~\citep[\eg][]{Osband2013more,Abbasi-Yadkori2015bayesian,Osband2017posterior,Ouyang2017learning,Agrawal2017posterior}.
%Posterior sampling for RL (\psrl) has been widely studied under the Bayesian~\citep[\eg][]{Osband2013more,Abbasi-Yadkori2015bayesian,Osband2017posterior,Ouyang2017learning} and, recently, the frequentist regret~\citep{Agrawal2017posterior}. 

Given a finite MDP with $S$ states, $A$ actions, and diameter~$D$ (i.e., the time needed to connect any two states), \citet{Jaksch10} proved that no algorithm can achieve regret smaller than $\Omega(\sqrt{DSAT})$. While recent work successfully closed the gap between upper and lower bounds w.r.t.\ the dependency on the number of states \citep[\eg][]{Agrawal2017posterior,pmlr-v70-azar17a}, %\footnote{Some of these results only hold for the finite horizon case, which is somewhat simpler than the general on-line learning problem and for which the dependency on $D$ is replaced by the horizon.} 
relatively little attention has been devoted to the dependency on $D$. While the diameter quantifies the number of steps needed to ``recover'' from a bad state in the worst case, the actual regret incurred while ``recovering'' is related to the difference in potential reward between ``bad'' and ``good'' states, which is accurately measured by the span (\ie the range) $\SP{h^*}$ of the optimal bias function~$h^*$. While the diameter is an upper bound on the bias span, it could be arbitrarily larger (e.g., weakly-communicating MDPs may have finite span and infinite diameter) thus suggesting that algorithms whose regret scales with the span may perform significantly better.\footnote{The proof of the lower bound relies on the construction of an MDP whose diameter actually coincides with the bias span (up to a multiplicative numerical constant), thus leaving the open question whether the ``actual'' lower bound depends on $D$ or the bias span. See \citep{Osband2016} for a more thorough discussion.} Building on the idea that the OFU principle should be \textit{mitigated} by the bias span of the optimistic solution, \citet{Bartlett2009regal} proposed three different algorithms (referred to as \regal) achieving regret scaling with $\SP{h^*}$ instead of~$D$. The first algorithm defines a span regularized problem, where the regularization constant needs to be carefully tuned depending on the state-action pairs visited in the future, which makes it unfeasible in practice. Alternatively, they propose a constrained variant, called \regalc, where the regularized problem is replaced by a constraint on the span. Assuming that an upper-bound $c$ on the bias span of the optimal policy is known (\ie $\SP{h^*} \leq c$), \regalc achieves regret upper-bounded by $\wt{\mathcal{O}}(\min\{D,c\}S\sqrt{AT})$. Unfortunately, they do not propose any computationally tractable algorithm solving the constrained optimization problem, which may even be ill-posed in some cases. Finally, \regald avoids the need of knowing the \textit{future} visits by using a doubling trick, but still requires solving a regularized problem, for which no computationally tractable algorithm is known.

In this paper, we build on \regalc and propose a constrained optimization problem for which we derive a computationally efficient algorithm, called \regopt. We identify conditions under which \regopt converges to the optimal solution and propose a suitable stopping criterion to achieve an $\varepsilon$-optimal policy. Finally, we show that using a slightly modified optimistic argument, the convergence conditions are always satisfied and the learning algorithm obtained by integrating \regopt into a \ucrl-like scheme (resulting into \scal) achieves regret scaling as $\wt{\mathcal{O}}(\min\{D,c\}\sqrt{\nextstates SAT})$ when an upper-bound $c$ on the optimal bias span is available, thus providing the first computationally tractable algorithm that can solve weakly-communicating MDPs.

% !TEX root = regal.tex

\vspace{-0.1in}
\section{Preliminaries}
\vspace{-0.05in}

%\begin{itemize}
%\item MDP (including decision rules and policies)
%\item Average reward
%\item Value iteration (and guarantees)
%\item \sout{Exploration vs exploitation (regret, sketch of UCRL and TS algorithms)}
%\item \sout{The \regal problem}
%\end{itemize}

We consider a finite \emph{weakly-communicating} Markov decision process \citep[Sec. 8.3]{puterman1994markov} $M = \langle \calS, \A, r, p\rangle$ with a set of states $\calS$ and a set of actions $\A=\bigcup_{s \in \calS} \A_s$. Each state-action pair $(s,a)\in \mathcal{S}\times \mathcal{A}_s$ is characterized by a reward distribution with mean $r(s,a)$ and support in $[0, \rmaxbound]$ as well as a transition probability distribution $p(\cdot|s,a)$ over next states.
We denote by $S = |\calS|$ and $A = \max_{s\in\mathcal{S}}|\A_s|$ the number of states and actions, and by $\nextstates$ the maximum support of all transition probabilities.
% $p(\cdot|s,a)$.
%Following the notation of~\citet{puterman1994markov}, a
A Markov randomized \emph{decision rule} $d: \calS \rightarrow \mathcal{P}(\A)$ maps states to distributions over actions. The corresponding set is denoted by $\MR$, while the subset of Markov deterministic decision rules is $\MD$. A stationary \emph{policy} $\pi = (d, d, \ldots)$ $=:d^\infty$ repeatedly applies the same decision rule $d$ over time. 
%In the following we restrict our attention to \emph{stationary policies} that apply the same decision rule at each time step, i.e., $\pi = (d, d, \ldots) = d^\infty$. 
The set of stationary policies defined by Markov randomized (resp. deterministic) decision rules is denoted by $\SR(M)$ (resp. $\SD(M)$). The \emph{long-term average reward} (or \textit{gain}) of a policy $\pi\in\SR(M)$ starting from $s\in\calS$ is
\begin{align*}
        g^\pi_M(s) := \lim_{T\to +\infty} \mathbb{E}_{\mathbb{Q}}\Bigg[ \frac{1}{T}\sum_{t=1}^T r(s_t,a_t) \Bigg],
\end{align*}
where $\mathbb{Q} := \mathbb{P}\left(\cdot|a_t \sim \pi(s_t); s_0=s; M\right)$. Any stationary policy $\pi\in\SR$ has an associated bias function defined as
\begin{align*}
        h^\pi_M(s) := \underset{T\to +\infty}{C\text{-}\lim}~\mathbb{E}_{\mathbb{Q}}\Bigg[\sum_{t=1}^{T} \big(r(s_t,a_t) - g_M^\pi(s_t)\big)\Bigg],
\end{align*}
that measures the expected total difference between the reward and the stationary reward in \emph{Cesaro-limit}\footnote{For policies with an aperiodic chain, the standard limit exists.} (denoted $C\text{-}\lim$). Accordingly, the difference of bias values $h^\pi_M(s)-h^\pi_M(s')$ quantifies the (dis-)advantage of starting in state $s$ rather than $s'$. In the following, we drop the dependency on $M$ whenever clear from the context and
% In the following, we will remove dependence on the MDP $M$ where there are no ambiguities.
denote by $\SP{h^\pi} := \max_s h^\pi(s) - \min_{s} h^\pi(s)$ the \emph{span} of the bias function. 
In weakly communicating MDPs, any optimal policy $\pi^* \in \argmax_\pi g^\pi(s)$ has \emph{constant} gain, \ie $g^{\pi^*}(s) = g^*$ for all $s\in\calS$. Let $P_d\in \Re^{S\times S}$ and $r_d \in \Re^S$ be the transition matrix and reward vector associated with decision rule $d\in\MR$. We denote by $L_d$ and $L$ the Bellman operator associated with $d$ and \emph{optimal} Bellman operator
\begin{align*}
       \forall v \in \Re^S,~~ L_dv := r_d + P_d v; \quad Lv := \max_{d\in\MR} \big\{ r_d + P_d v \big\}.
\end{align*}
%
% where $v\in\Re^S$ is a value function. 
For any policy $\pi = d^\infty \in \Pi^{SR}$, the gain $g^\pi$ and bias $h^\pi$ satisfy the following system of \emph{evaluation equations}
\begin{equation}\label{eq:eval.equations}
g^\pi = P_d g^\pi; \quad h^\pi = L_d h^\pi - g^\pi.
%                \begin{cases}
%                g^\pi = P_d g^\pi\\
%                h^\pi = L_d h^\pi - g^\pi
%                \end{cases}.
\end{equation}
Moreover, there exists a policy $\pi^* \in \argmax_\pi g^\pi(s)$ for which $(g^*,h^*) = (g^{\pi^*},h^{\pi^*})$ satisfy the \emph{optimality equation}
\begin{equation}\label{eq:optimality.equation}
%                 \begin{cases}
                h^* = L h^* - g^* e, \quad\text{ where }\;\; e = (1,\dots,1)^\intercal.
%                 \end{cases}.
\end{equation}
% denotes the vector of all 1's.
Finally, we denote by 
	$D := \max_{(s,s') \in \calS\times \calS, s\neq s'} \{\tau_{M}(s \to s')\}$
the diameter of $M$,
where $\tau_{M}(s\to s')$ is the minimal expected number of steps needed to reach $s'$ from $s$ in $M$.
 
\textbf{Learning problem.} Let $M^*$ be the true \emph{unknown} MDP. We consider the learning problem where $\mathcal{S}$, $\mathcal{A}$ and $\rmaxbound$ are \emph{known}, while rewards $r$ and transition probabilities $p$ are \emph{unknown} and need to be estimated on-line. We evaluate the performance of a learning algorithm $\mathfrak{A}$ after $T$ time steps by its cumulative \emph{regret} $\Delta(\mathfrak{A},T) = T g^* - \sum_{t=1}^T r_t(s_t,a_t)$.% measuring the difference between the performance of an optimal policy and $\mathfrak{A}$. 
% In weakly communicating MDPs, any optimal policy $\pi^* \in \argmax_\pi g^\pi(s)$ has a constant gain $g^{\pi^*}(s) = g^*$ for all $s\in\calS$. We define the Bellman operator associated with decision rule $d\in\MR$ and the optimal Bellman operator as\todorout{mention also the optimality equation and express \eqref{eq:eval.equations} using $L_d$}
% %
% \begin{align*}
%         L_dv = r_d + P_d v; \quad Lv = \max_{d\in\MR} \big\{ r_d + P_d v \big\},
% \end{align*}
% %
% where $v\in\Re^S$ is a value function.

% !TEX root = regal.tex
% \section{Span-Constrained Exploration-Exploitation}
\vspace{-0.05in}
\section{Optimistic Exploration-Exploitation}\label{sec:problem.statement}
\vspace{-0.05in}

Since our proposed algorithm \scal (Sec.~\ref{sec:regret}) is a tractable variant of \regalc and thus a modification of \ucrl, we first recall their common structure summarized in Fig.~\ref{fig:ucrl.constrained}.
% optimization problem posed by \citet{Bartlett2009regal}.

\vspace{-0.05in}
\subsection{Upper-Confidence Reinforcement Learning}
\vspace{-0.05in}

\ucrl proceeds through episodes $k=1,2\dots$ At the beginning of each episode $k$, \ucrl computes a set of plausible MDPs defined as
$
\mathcal M_k = \big\{ M = \langle \calS, \A, \wt{r}, \wt{p}\rangle : 
        \wt{r}(s,a) \in B_r^k(s,a),~
        \wt{p}(s'|s,a) \in B_p^k(s,a,s'),
        \sum_{s'} \wt{p}(s'|s,a) = 1
%        |\wt{r}(s,a) - \wh{r}(s,a)| \leq B_r(s,a),
%        \|\wt{p}(\cdot|s,a) - \wh{p}(\cdot|s,a)\|_1 \leq B_p(s,a)
\big\}$, where $B_r^k$ and $B_p^k$ are high-probability confidence intervals on the rewards and transition probabilities of the true MDP $M^*$, which guarantees that $M^* \in \mathcal{M}_k$ w.h.p.
%\citet{Jaksch10,Bartlett2009regal} considered Hoeffding's inequality to build such confidence intervals. % $B_r^k$ and $B_p^k$.
We use confidence intervals constructed using empirical Bernstein's inequality~\citep{audibert2007tuning,Maurer2009empirical}
\begin{align*}
\beta_{r,k}^{sa}
&:=
         \sqrt{\frac{14 
                         \wh{\sigma}_{r,k}^2(s,a)
%        \ln(2 SA t_k/\delta)
                        b_{k,\delta}
                }{\max\lbrace 1, N_k(s,a)\rbrace}} + \frac{\frac{49}{3}
%                 \alpha_r 
\rmaxbound
%        \ln(2 SA t_k/\delta)
        b_{k,\delta}
        }{\max\lbrace 1, N_k(s,a)-1 \rbrace},\\
\beta_{p,k}^{sas'}
&:= 
\sqrt{\frac{14 \wh{\sigma}_{p,k}^2(s'|s,a) 
%        \ln(2 SA t_k/\delta)
                        b_{k,\delta}
                }{\max\lbrace 1, N_k(s,a)\rbrace}} +\frac{\frac{49}{3}
%                 \alpha_p 
%        \ln(2 SA t_k/\delta)
        b_{k,\delta}
        }{\max\lbrace 1,N_k(s,a)-1 \rbrace},
\end{align*}
where $N_k(s,a)$ is the number of visits in $(s, a)$ before episode $k$, $\wh{\sigma}_{r,k}^2(s,a)$ and $\wh{\sigma}_{p,k}^2(s'|s,a)$ are the empirical variances of $r(s,a)$ and ${p}(s'|s,a)$ %respectively 
% (formally defined in eqn.~\eqref{eqn:def_empirical_variance} of App.~\ref{app:regret.reg.ucrl}) 
and $b_{k,\delta} = \ln(2 SA t_k/\delta)$.
% \footnote{Note that $\alpha_r$ and $\alpha_p$ are numerical (multiplicative) coefficients used to shrink the confidence intervals in order to speed-up the convergence of the learning algorithm in practice. However, for the regret analysis to work, we need to set them equal to $1$.}
% $\wh{\sigma}_{p,k}^2 = \wh{p}(s'|s,a) (1-\wh{p}(s'|s,a))$.
Given the empirical averages $\wh{r}_k(s,a)$ and $\wh{p}_k(s'|s,a)$ of rewards and transitions, we define $\mathcal{M}_k$ by $B_r^k(s,a) := [\wh{r}_k(s,a) - \beta_{r,k}^{sa}, \wh{r}_k(s,a) + \beta_{r,k}^{sa}] \cap [0, \rmaxbound]$ and $B_p^k(s,a,s') := [\wh{p}_k(s'|s,a) - \beta_{p,k}^{sas'},\wh{p}_k(s'|s,a) + \beta_{p,k}^{sas'}] \cap [0,1]$.

Once $\mathcal{M}_k$ has been computed, \ucrl finds an approximate solution $(\wt{M}_k^*,\wt\pi_k^*)$ to the optimization problem
\begin{align}\label{eq:optimistic.policy}
( \wt{M}^*_k, \wt\pi^*_k ) \in \argmax_{M\in\mathcal{M}_k, \pi\in\SD(M)} g^\pi_M.
\end{align}
Since $M^* \in \mathcal{M}_k$ w.h.p., it holds that ${g}^*_{\wt{M}_k} \geq  {g}^*_{M^*}$.
As noticed by~\citet{Jaksch10}, problem~\eqref{eq:optimistic.policy} is equivalent to finding $\wt{\mu}^* \in \argmax_{\mu \in \SD(\wt{\mathcal{M}}_k)} \big\lbrace g^\mu_{\wt{\mathcal{M}}_k} \big\rbrace$ where $\wt{\mathcal{M}}_k$ is the \emph{extended} MDP (sometimes called \emph{bounded-parameter} MDP) implicitly defined by $\mathcal{M}_k$. More precisely, in $\wt{\mathcal{M}}_k$ the (finite) action space $\A$ is ``extended'' to a compact action space $\wt{\A}_k$ by considering every possible value of the confidence intervals $B_r^k(s,a)$ and $B_p^k(s,a,s')$ as fictitious actions. The equivalence between the two problems comes from the fact that for each $\wt{\mu} \in \SD(\wt{\mathcal{M}}_k)$ there exists a pair ($\wt{M}, \wt{\pi}$) such that the policies $\wt{\pi}$ and $\wt{\mu}$ induce the same Markov reward process on respectively $\wt{M}$ and $\wt{\mathcal{M}}_k$, and conversely. Consequently, \eqref{eq:optimistic.policy} can be solved by running so-called \textit{extended} value iteration (\evi): starting from an initial vector $u_0 =0$, \evi recursively computes
\begin{align}\label{eq:evi}
u_{n+1}(s) \!=\! \max_{a,\wt{r}, \wt{p}} \big[ \wt{r}(s,a) + \wt{p}(\cdot|s,a)^\transp u_n\big] \!=\! \wt{L} u_n(s),
\end{align}
%
% The optimistic policy $\wt\pi^*$ is defi(ned as 
% %
% \begin{align}\label{eq:optimistic.policy}
% ( \wt{M}^*, \wt\pi^* ) = \argmax_{M\in\mathcal{M}, \pi\in\SD} g^\pi(M),
% \end{align}
% %
% with a corresponding optimistic gain $\wt{g}^* = g^{\wt\pi^*}(\wt{M}^*)$.
% \ucrl computes an approximation of $\wt\pi^*$ by running the \textit{extended} value iteration (\evi) algorithm: Let $u_0(s) = 0$ for all $s\in\calS$, then at each iteration \evi computesIn order to derive an algorithm to solve problem~\eqref{eq:optimistic.policy.relaxedregalc},
% %
% \begin{align}\label{eq:evi}
% u_{n+1}(s) \!=\! \max_{a,\wt{r}, \wt{p}} \big[ \wt{r}(s,a) + \wt{p}(\cdot|s,a)^\transp u_n\big] \!=\! \wt{L} u_n(s),
% \end{align}
% %
where $\wt{L}$ is the \textit{optimistic} optimal Bellman operator associated to $\wt{\mathcal{M}}_k$. 
% (sometimes called \emph{bounded-parameter} MDP) implicitly defined by $\mathcal{M}_k$. In $\wt{\mathcal{M}}_k$, the (finite) action space $\A$ is extended to a compact action space $\wt{\A}_k$ by considering every possible value of the confidence intervals $B_r^k(s,a)$ and $B_p^k(s,a,s')$ as fictitious actions.
% Refer to \citep[Section 3.1.1]{Jaksch10} for more details.
% where, in each state, extended actions include the basic action $a$, the reward $\wt{r}(s,a)$, and transition probabilities $\wt{p}(\cdot|s,a)$ within their corresponding confidence intervals. \todorout{Stopped here}
% 
% As noticed by~\citet{Jaksch10}, problem~\eqref{eq:optimistic.policy} is equivalent to finding $\wt{\mu}^* \in \argmax_{\mu \in \SD(\wt{\mathcal{M}}_k)} \left\lbrace g^\mu_{\wt{\mathcal{M}}_k} \right\rbrace$ since for each $\wt{\mu} \in \SD(\wt{\mathcal{M}}_k)$ there exists a pair ($\wt{M}, \wt{\pi}$) such that the policies $\wt{\pi}$ and $\wt{\mu}$ induce the same Markov Reward Process (MRP) on respectively $\wt{M}$ and $\wt{\mathcal{M}}_k$, and conversely.
If \evi is stopped when $\SP{u_{n+1}-u_n} \leq \varepsilon_k$, then the greedy policy $\wt{\mu}_{k}$ \wrt $u_{n}$ is guaranteed to be $\varepsilon_k$-optimal, \ie $g^{\wt{\mu}_{k}}_{\wt{\mathcal{M}}_k} \geq {g}^*_{\wt{\mathcal{M}}_k} - \varepsilon_k \geq {g}^*_{M^*} - \varepsilon_k $. Therefore, the policy $\wt{\pi}_k$ associated to $\wt{\mu}_{k}$ is an \emph{optimistic} $\varepsilon_k$\emph{-optimal} policy, and \ucrl executes $\wt{\pi}_k$ until the end of episode $k$.

\begin{figure}[t]
\renewcommand\figurename{\small Figure}
\begin{minipage}{\columnwidth}
%\vspace{-0.3in}
\bookboxx{
\textbf{Input:} Confidence $\delta \in ]0,1[$, $r_{\max}$, $\calS$, $\A$, a constant $c \geq 0$

%\noindent \textbf{Initialization:} Set $i= 1$, and observe initial state $s_0$.\\

\noindent \textbf{For} episodes $k=1, 2, ...$ \textbf{do}

\begin{enumerate}[leftmargin=4mm,itemsep=0mm]
\item Set $t_k = t$ and episode counters $\nu_k (s,a) = 0$.

\item Compute estimates $\wh{p}_k(s' | s,a)$, $\wh{r}_k(s,a)$ and a confidence set~$\mathcal{M}_k$ (\ucrl, \regalc), resp.~$\mathcal{M}_k^\ddagger$ (\scal).
%$\wt{\mathcal{M}}_k^\ddagger$ (based on Thm.~\ref{def:augmented.perturbed.mdp} with $\eta = 1/\sqrt{t_k}$)

\item
% Compute (an approximation of) the optimal policy $\widetilde{\pi}_k$ and the optimistic MDP $\widetilde{M}_k$, that is,
% $$(\wt{M}^*_k, \wt\pi^*_k ) \in \argmax_{M\in\mathcal{M}_k, \pi} g^\pi_M.$$
Compute an $\rmaxbound/\sqrt{t_k}$-approximation $\widetilde{\pi}_k$ of the solution of Eq.~\ref{eq:optimistic.policy} (\ucrl), resp. Eq.~\ref{eq:optimistic.policy.regalc} (\regalc), resp. Eq.~\ref{eq:optimistic.policy.scal} (\scal).

%\item Use \regopt to compute an $1/\sqrt{t_k}$-approximation $\widetilde{\pi}_k$ of the solution of Eq.~\ref{eq:optimistic.policy.relaxedregalc} where $\mathcal{M}_k$ is replaced by $\mathcal{M}_k^{\ddagger}$ 
%
\item Sample action $a_t \sim \wt{\pi}_k(\cdot|s_t)$.

\item \textbf{While} $\nu_k(s_t,a_t) \leq \max\{1, N_k(s_t,a_t)\}$ \textbf{do}
\begin{enumerate}[leftmargin=4mm,itemsep=-1mm]
        \item Execute $a_t$, obtain reward $r_{t}$, and observe next state $s_{t+1}$.
        \item Set $\nu_k (s_t,a_t) \pluseq 1$.
        \item Sample action $a_{t+1} \sim \wt{\pi}_k(\cdot|s_{t+1})$ and set $t \pluseq 1$.
\end{enumerate}

\item Set $N_{k+1}(s,a) = N_{k}(s,a)+ \nu_k(s,a)$.
\end{enumerate}
}
 \vspace{-0.1in}
\caption{\small The general structure of optimistic algorithms for RL.}
\label{fig:ucrl.constrained}
% \vspace{-0.2in}
\end{minipage}
\vspace{-0.2in}
\end{figure}

\vspace{-0.05in}
\subsection{A first relaxation of \regalc}
\vspace{-0.05in}
\regalc follows the same steps as \ucrl but instead of solving problem~\eqref{eq:optimistic.policy}, it tries to find the best \emph{optimistic} model $\wt{M}^*_{\textsc{RC}} \in \mathcal{M}_{\textsc{RC}}$ having constrained \emph{optimal} bias span \ie
\begin{align}\label{eq:optimistic.policy.regalc}
        ( \wt{M}^*_{\textsc{RC}},\wt{\pi}^*_{\textsc{RC}}) = \argmax_{M \in \mathcal{M}_{\textsc{RC}}, \pi \in \SD(M)} g^{\pi}_M,
\end{align}
where $\mathcal{M}_{\textsc{RC}} := \{ M \in \mathcal{M}_k  :  \SP{h^{*}_M} \leq c\}$ is the set of plausible MDPs with bias span of the \emph{optimal} policy bounded by $c$.
Under the assumption that $\SP{h^*_{M^*}} \leq c$, \regalc discards any MDP $M \in \mathcal{M}_k$ whose \textit{optimal} policy has a span larger than $c$ (\ie $\SP{h^{*}_M} > c$) and
otherwise looks for the MDP with highest \emph{optimal} gain $g^{*}(M)$. Unfortunately, there is no guarantee that all MDPs in $\mathcal{M}_{\textsc{RC}}$ are weakly communicating and thus have constant gain. As a result, we suspect this problem to be ill-posed (\ie the maximum is most likely not well-defined).
% \footnote{This intuition comes from Ex.~\ref{E:twostates} of Sec.~\ref{sec:optimization} where we need to enforce constant gain.} 
Moreover, even if it is well-posed, searching the space $\mathcal{M}_{\textsc{RC}}$ seems to be computationally intractable. Finally, for any $M \in \mathcal{M}_k$, there may be several optimal policies with different bias spans and some of them may not satisfy the optimality equation~\eqref{eq:optimality.equation} and are thus difficult to compute.

In this paper, we slightly modify problem~\eqref{eq:optimistic.policy.regalc} as follows:
% Following the intuition behind \regalc, we replace the optimization problem~\eqref{eq:optimistic.policy} by its constrained counterpart
%
\begin{align}\label{eq:optimistic.policy.relaxedregalc}
        (\wt{M}_c^*, \wt\pi_c^*) \in \argmax_{M\in\mathcal{M}_k, \pi\in\PiC(M)} g^\pi_{M},
\end{align}

\vspace{-0.1in}
where the search space of policies is defined as
\begin{equation*}\label{E:Z_C}
        \PiC(M) := \left\{ \pi\in \SR: \SP{h^\pi_M} \leq c \; \wedge \;  \SP{g^\pi_M} = 0  \right\},
\end{equation*}
and $\max_{\pi \in \PiC(M)} \{g^\pi_M\} = -\infty$ if $\PiC(M) = \emptyset$.
Similarly to \eqref{eq:optimistic.policy}, problem~\eqref{eq:optimistic.policy.relaxedregalc} is equivalent to solving $\wt{\mu}^*_c \in \argmax_{\mu \in \PiC(\wt{\mathcal{M}}_k)} \big\lbrace g^\mu_{\wt{\mathcal{M}}_k} \big\rbrace$. Unlike~\eqref{eq:optimistic.policy.regalc}, for \emph{every} MDP in $\mathcal{M}_k$ (not just those in $\mathcal{M}_{\textsc{RC}}$), \eqref{eq:optimistic.policy.relaxedregalc} considers \emph{all} (stationary) policies with \emph{constant gain} satisfying the span constraint (not just the deterministic optimal policies).
Since $g^\pi_M$ and $\SP{h^\pi_M}$ are in general non-continuous functions of ($M$, $\pi$), the argmax in \eqref{eq:optimistic.policy.regalc} and \eqref{eq:optimistic.policy.relaxedregalc} may not exist. 
%On contrary, in the next section we will show that the maximum of~\eqref{eq:optimistic.policy.relaxedregalc} always exists for \scal.
Nevertheless, by reasoning in terms of supremum value, we can show that~\eqref{eq:optimistic.policy.relaxedregalc} is always a \emph{relaxation} of~\eqref{eq:optimistic.policy.regalc} (where we enforce the additional constraint of constant gain).
\begin{proposition}\label{prop:relaxation}
        Define the  following restricted set of MDPs $\mathcal{E}_k = \mathcal{M}_{\textsc{RC}} \cap \{M \in \mathcal{M}_k:~ \SP{g^{*}_M} = 0\}$.
        %and $\mathcal{E}' = \{M \in \mathcal{M} | \exists \pi \in \SR : \SP{g^{\pi}(M)} = 0 \wedge \SP{h^{\pi}(M)} \leq c \}$. 
        Then
        \[
                \sup_{M\in\mathcal{E}_k, \pi \in \SD} g^{\pi}_M \leq \sup_{M \in \mathcal{M}_k, \pi \in \PiC(M)} g^\pi_M.
        \]
\end{proposition}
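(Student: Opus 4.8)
The plan is to prove the inequality MDP-by-MDP: I would show that for every $M\in\mathcal{E}_k$ the value $\sup_{\pi\in\SD}g^\pi_M$ appearing on the left is already attained by a policy that is \emph{feasible} for the right-hand problem, \ie by some $\pi^*\in\PiC(M)$, and then conclude by taking the supremum over $M\in\mathcal{E}_k$ and using $\mathcal{E}_k\subseteq\mathcal{M}_k$. So first I would fix an arbitrary $M\in\mathcal{E}_k$. Since $M\in\mathcal{M}_{\textsc{RC}}$ we have $\SP{h^*_M}\leq c$, and since $M$ lies in $\{M:\SP{g^*_M}=0\}$ its optimal gain $g^*_M$ is constant across states. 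In particular the left-hand supremum over deterministic policies equals this common optimal gain, because $g^*_M(s)=\max_\pi g^\pi_M(s)$ at every state and the optimal gain is attained by a deterministic policy.

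The crucial step is to exhibit a \emph{single} optimal policy whose \emph{bias} (not merely its gain) is controlled. For this I would invoke the existence result recalled right after~\eqref{eq:eval.equations}: there is a policy $\pi^*\in\SR$ for which $(g^{\pi^*}_M,h^{\pi^*}_M)=(g^*_M,h^*_M)$ solves the optimality equation~\eqref{eq:optimality.equation}. This $\pi^*$ satisfies both constraints defining $\PiC(M)$: first $\SP{g^{\pi^*}_M}=\SP{g^*_M}=0$ because the optimal gain is constant, and second $\SP{h^{\pi^*}_M}=\SP{h^*_M}\leq c$ because $M\in\mathcal{M}_{\textsc{RC}}$. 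Hence $\pi^*\in\PiC(M)$, and therefore
\[
\sup_{\pi\in\SD}g^\pi_M = g^*_M = g^{\pi^*}_M \leq \sup_{\pi\in\PiC(M)}g^\pi_M .
\]
Taking the supremum over $M\in\mathcal{E}_k$ on the left and bounding the right-hand side by the supremum over all $M\in\mathcal{M}_k$ (using $\mathcal{E}_k\subseteq\mathcal{M}_k$) then yields the claim.

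The delicate point is conceptual rather than computational: I must be sure to select the \emph{particular} optimal policy whose bias coincides with the optimality-equation solution $h^*_M$, since, as noted in the text, other optimal policies of the same MDP $M$ may have strictly larger bias span and would violate the constraint $\SP{h^\pi_M}\leq c$. The two restrictions built into the definition of $\mathcal{E}_k$ are exactly what make this work: the constant-gain condition $\SP{g^*_M}=0$ guarantees simultaneously that the left-hand supremum reduces to the scalar optimal gain and that the span constraint $\SP{g^{\pi^*}_M}=0$ of $\PiC(M)$ is met, while $\SP{h^*_M}\leq c$ supplies the bias-span constraint; thus the two constraints defining $\mathcal{E}_k$ map precisely onto the two constraints defining $\PiC(M)$. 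A minor thing to keep honest is the vector nature of the gains, but this causes no trouble because on both sides the relevant gains are constant (span zero), so the comparison is effectively between scalars.
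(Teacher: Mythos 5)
Your proof is correct and follows essentially the same route as the paper's: both arguments reduce to $\mathcal{E}_k \subseteq \mathcal{M}_k$ plus the observation that, for every $M \in \mathcal{E}_k$, the optimal gain is attained by a policy that is feasible for the right-hand problem. One difference is worth flagging, and it is in your favor. The paper's one-line proof asserts the inclusion $\argmax_{\pi \in \SD}\{g^\pi_M\} \subseteq \PiC(M)$, i.e., that \emph{every} deterministic gain-optimal policy of $M \in \mathcal{E}_k$ satisfies the span constraint. That claim is too strong: as the paper itself remarks when discussing \regalc, an MDP may have several gain-optimal policies with different bias spans (e.g., two policies reaching an absorbing rewarding state along paths of very different reward, both gain-optimal but with arbitrarily different bias spans), and membership in $\mathcal{M}_{\textsc{RC}}$ only controls $\SP{h^*_M}$, the bias of the solution of the optimality equation~\eqref{eq:optimality.equation}. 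Your proof instead selects exactly that policy $\pi^*$ with $(g^{\pi^*}_M,h^{\pi^*}_M)=(g^*_M,h^*_M)$, verifies the two constraints of $\PiC(M)$ separately ($\SP{g^{\pi^*}_M}=0$ from the defining condition of $\mathcal{E}_k$, and $\SP{h^{\pi^*}_M}\leq c$ from $\mathcal{M}_{\textsc{RC}}$), and then passes to suprema; since the inequality between suprema only needs \emph{one} feasible optimal policy per $M$, this is the watertight version of the paper's argument rather than a genuinely different one.
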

\vspace{-0.2in}
\begin{proof}
        The result follows from the fact that $\mathcal{E}_k \subseteq \mathcal{M}_k$ and $\forall M \in \mathcal{E}_k$, $\argmax_{\pi \in \SD} \{g^{\pi}_M\} \subseteq\PiC(M)$.
\end{proof}
\vspace{-0.2in}
As a result, the \emph{optimism} principle is preserved when moving from~\eqref{eq:optimistic.policy.regalc} to~\eqref{eq:optimistic.policy.relaxedregalc} and since the set of admissible MDPs $\M_k$ is the same, any algorithm solving~\eqref{eq:optimistic.policy.relaxedregalc} would enjoy the same regret guarantees as \regalc. In the following we further characterise problem~\eqref{eq:optimistic.policy.relaxedregalc}, introduce a \emph{truncated} value iteration algorithm to solve it, and finally integrate it into a \ucrl-like scheme to recover \regalc regret guarantees.
% Finally, we want to mention that in the following section we will present a relaxation of problem~\eqref{eq:optimistic.policy.relaxedregalc}
% preserving the benefits of optimism and for which the maximum always exists.
% This problem will be at the core of the proposed algorithm.
% In the next subsection, we introduce \scal and show it achieves \regalc-like regret performance. 
% As already mentioned in Sec.~\ref{sec:optimization}, $\pi_c^*(\wt{\mathcal{M}}_k)$ is not guaranteed to always exist in general ($\wt{\mathcal{M}}_k$ is a priori not unichain so Prop.~\ref{prop:P.well.defined} does not hold). To overcome this problem, we further relax~\eqref{eq:optimistic.policy.relaxedregalc} by expanding $\mathcal{M}_k$ to an approximately bigger set $\mathcal{M}_k^{\ddagger} \supsetsim \mathcal{M}_k$. By doing so we (almost) preserve optimism \ie $g_c^*(\wt{\mathcal{M}}_k^{\ddagger}) \geq g_c^*(\wt{\mathcal{M}}_k) - \eta$, with $\eta>0$ arbitrarily small and we prove that $\pi_c^*(\wt{\mathcal{M}}_k^{\ddagger})$ always exists.
% This algorithm does not exactly solve problem~\eqref{eq:optimistic.policy.relaxedregalc} but rather considers a relaxation of it where $\mathcal{M}_k$ is \emph{``augmented''} \ie $\mathcal{M}_k$ is replaced by a bigger set $\mathcal{M}_k^{\ddagger} \supsetsim \mathcal{M}_k$.

% !TEX root = regal.tex
\vspace{-0.1in}
\section{The Optimization Problem}\label{sec:optimization}
\vspace{-0.05in}

% \textbf{Optimization problem.}
In this section we analyze some properties of the following optimization problem, of which~\eqref{eq:optimistic.policy.relaxedregalc} is an instance,
% We introduce the following optimization problem
%
\begin{equation}\label{P:opt_probl_well_posed}
\begin{aligned}
        \sup_{\pi \in \PiC(M)} \left\{ g^\pi_M \right\},
\end{aligned}
\end{equation}
where $M$ is any MDP (with discrete or compact action space) \st $\Pi_c(M) \neq \emptyset$.
% where the search space is defined as
% %
% \begin{equation*}\label{E:Z_C}
%         \PiC(M) := \left\{ \pi\in \SR: \SP{h^\pi_M} \leq c \; \wedge \;  \SP{g^\pi_M} = 0  \right\}.
% \end{equation*}
%
Problem~\eqref{P:opt_probl_well_posed} aims at finding a policy that maximizes the gain $g^\pi_M$ within the set of randomized policies with constant gain (\ie $\SP{g^\pi_M} = 0$) and bias span smaller than $c$ (\ie $\SP{h^\pi_M} \leq c$). Since $g^\pi_M \in [0, \rmaxbound]$ the supremum always exists and we denote it by $g^*_c(M)$. The set of maximizers is denoted by $\PiC^*(M) \subseteq \PiC(M)$, with elements $\pi_c^*(M)$ (if $\PiC^*(M)$ is non-empty).

% In order to illustrate the choice of considering randomized policies with constant gain, we introduce the following simple MDP.
% The following example shows the superiority of randomized policies over deterministic ones when the bias span is constrained.
In order to give some intuition about the solutions of problem~\eqref{P:opt_probl_well_posed}, we introduce the following illustrative MDP.

% \begin{figure}[t!]
%   \centering
%   \begin{tikzpicture}[->, >=stealth', shorten >=1pt, auto, node distance=2.8cm, semithick]
%         \tikzstyle{every state}=[fill=red,draw=none,text=white]
%         \node[state] (A) {$s_0$};
%         \node[state] (B) [right of=A] {$s_1$};
%         \path (A) edge[bend left] node {$a_0$, {\color{red}0}} (B);
%         \path (A) edge[loop above] node {$a_1$, {\color{red}$\frac{1}{2}$}} (A);
% 
%         \path (B) edge[bend left] node {$a_0$, {\color{red} 0}} (A);
%         \path (B) edge[loop below] node {$a_1$, {\color{red} 1}} (B);
%   \end{tikzpicture}
% \caption{\label{F:2states_nondetpol}
%         Toy example with deterministic transitions for all actions and deterministic reward (in red).
% }
% \end{figure}
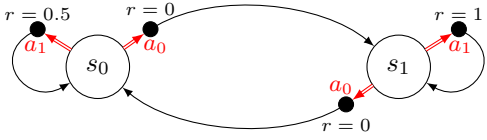
\begin{figure}[t]
        \centering
\begin{tikzpicture}
	\tikzset{VertexStyle/.style = {draw, 
									shape          = circle,
	                                text           = black,
	                                inner sep      = 2pt,
	                                outer sep      = 0pt,
	                                minimum size   = 24 pt}}
	\tikzset{VertexStyle2/.style = {shape      $\SR$ to $\Re^{S}$    = circle,
	                                text           = black,
	                                inner sep      = 2pt,
	                                outer sep      = 0pt,
	                                minimum size   = 14 pt}}
	\tikzset{Action/.style = {draw, 
                					shape          = circle,
	                                text           = black,
	                                fill           = black,
	                                inner sep      = 2pt,
	                                outer sep      = 0pt}}
	                                 
    % Node of the MDP
	\node[VertexStyle](s0) at (0,0) {$ s_{0} $};
	\node[Action](a0s0) at (.7,.5){};
	\node[Action](a1s0) at (-.8,0.5){};
	\node[VertexStyle](s1) at (4,0){$s_1$};
	\node[Action](a0s1) at (3.3,-0.5){};
	\node[Action](a1s1) at (4.8,0.5){};
    
	% Edges of the MDP
	\draw[->, >=latex, double, color=red](s0) to node[midway, right, yshift=-0.2em]{{\small $a_0$}} (a0s0);
	\draw[->, >=latex](a0s0) to [out=30,in=140,looseness=0.8] (s1);
	\draw[->, >=latex, double, color=red](s0) to node[midway, left, yshift=-0.2em]{{\small $a_1$}} (a1s0);
	\draw[->, >=latex](a1s0) to [out=210,in=210,looseness=2] (s0);

	\draw[->, >=latex, double, color=red](s1) to node[midway, left, yshift=.2em]{{\small $a_0$}} (a0s1);
	\draw[->, >=latex](a0s1) to [out=210,in=-40,looseness=0.8] (s0);
	\draw[->, >=latex, double, color=red](s1) to node[midway, right, yshift=-0.2em]{{\small $a_1$}} (a1s1);
	\draw[->, >=latex](a1s1) to [out=-30,in=-30,looseness=2] (s1);
    
    % reward
    \node [above, yshift=0.2em] at (a0s0) {\scriptsize $r=0$};
    \node [below, yshift=-0.2em] at (a0s1) {\scriptsize $r=0$};
    \node [above] at (a1s0) {\scriptsize $r=0.5$};
    \node [above] at (a1s1) {\scriptsize $r=1$};
\end{tikzpicture}
\vspace{-0.1in}
\caption{\label{F:2states_nondetpol}
        Toy example with deterministic transitions and reward for all actions.
}
\vspace{-0.1in}
\end{figure}

\begin{example} \label{E:twostates}
Consider the two-states MDP depicted in Fig.~\ref{F:2states_nondetpol}. For a generic stationary policy $\pi\in\SR$ with decision rule $d \in \MR$ we have that
        \begin{equation*}
                d = \begin{bmatrix}
                        x & 1-x\\
                        y & 1-y
                \end{bmatrix};
                \enspace
                P_d = \begin{bmatrix}
                        1-x & x\\
                        y & 1-y
                \end{bmatrix}\!,\;
                r_d = \begin{bmatrix}
                        \frac{1-x}{2}\\
                        1-y
                \end{bmatrix}.
        \end{equation*}

We can compute the gain $g = [g_1, g_2]$ and the bias $h = [h_1, h_2]$ by solving the linear system~\eqref{eq:eval.equations}.
For any $x>0$ or $y>0$, we obtain
        \begin{align*}
                g_1 \!=\! g_2 \!=\! \frac{1}{2} + x\frac{1-3y}{2(x+y)}; \quad h_2 - h_1 \!=\! \frac{1}{2} + \frac{1-3y}{2(x+y)},
%                g_1 = g_2 &= \frac{1}{2} + x\frac{1-3y}{2(x+y)},\\
%                h_2 - h_1 &= \frac{1}{2} + \frac{1-3y}{2(x+y)},
        \end{align*}
while for $x=0$, $y=0$, we have $g_1 = 1/2$ and $g_2 = 1$, with $h_2=h_1 = 0$. Note that $0 \leq \SP{h^\pi} \leq 1$ for any $\pi \in\SR$. 
In the following, we will use this example choosing particular values for $x$, $y$, and $c$ to illustrate some important properties of optimization problem ~\eqref{P:opt_probl_well_posed}.
%  Depending on the values of $x$, $y$, and the span constraint $c$ we can create different scenarios to justify the definition of problem~\eqref{P:opt_probl_well_posed}.
\end{example}

\textbf{Randomized policies.} 
% For non-stationary policies, $g^\pi$ and $h^\pi$ are not well defined in general. Hence we exclude non-stationary policies.
% We first notice that the set of randomized stationary policies is the largest set that we can consider in this problem, as . 
The following lemma shows that, unlike in unconstrained gain maximization where there always exists an optimal deterministic policy, the solution of~\eqref{P:opt_probl_well_posed} may indeed be a randomized policy.

\begin{lemma}\label{lem:stoch_opt_policy}
% Let $\PiC^* \subset \PiC$ be the set of solutions to problem~\eqref{P:opt_probl_well_posed}, that is, policies achieving the maximum \emph{constant} gain with span-constrained bias.\footnote{Notice that $\PiC^*$ may be empty, since the supremum in~\eqref{P:opt_probl_well_posed} may not be achieved by any policy in $\PiC$.} 
There exists an MDP $M$ and a scalar $c \geq 0$, such that $\PiC^*(M) \neq \emptyset$ and $\PiC^*(M) \cap \SD(M) = \emptyset$.
\end{lemma}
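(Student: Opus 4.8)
The plan is to turn the two-state MDP of Example~\ref{E:twostates} into an explicit witness, fixing the constraint level at $c = 3/4$ (any $c \in (1/2,1)$ works equally well). First I would translate the abstract problem into the two scalar parameters $(x,y) \in [0,1]^2$ of the decision rule. Using the closed-form expressions of Example~\ref{E:twostates}, every policy with $(x,y) \neq (0,0)$ is unichain and hence has \emph{constant} gain $g(x,y) = \tfrac{2x+y-3xy}{2(x+y)}$, whereas $(x,y)=(0,0)$ is the unique multichain rule ($g_1 = \tfrac12 \neq 1 = g_2$) and is therefore excluded from $\PiC(M)$ by the constraint $\SP{g^\pi}=0$. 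Thus $\PiC(M)$ is identified with $\{(x,y)\in[0,1]^2\setminus\{(0,0)\} : \SP{h^\pi}\le c\}$, where $\SP{h^\pi} = \bigl|\tfrac{1+x-2y}{2(x+y)}\bigr|$.

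Second, I would exhibit a feasible \emph{randomized} policy that strictly beats every feasible deterministic one. On the edge $x=1$ a direct computation gives $g(1,y) = \SP{h^{(1,y)}} = \tfrac{1-y}{1+y}$, so the policy $\pi_0 := (x,y) = \bigl(1,\tfrac{1-c}{1+c}\bigr)$ satisfies $\SP{h^{\pi_0}} = c$ and $g^{\pi_0} = c = \tfrac34$; since $0 < \tfrac{1-c}{1+c} < 1$, the rule $\pi_0$ is genuinely randomized and feasible, whence $g^*_c(M) \ge c = \tfrac34$. On the other hand $\SD(M)$ contains only the four rules $(x,y)\in\{0,1\}^2$: the rule $(1,0)$ has span $1>c$ (infeasible), $(0,0)$ is excluded, while $(0,1)$ and $(1,1)$ have gains $\tfrac12$ and $0$. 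Hence every feasible deterministic policy has gain at most $\tfrac12 < \tfrac34 \le g^*_c(M)$, so no deterministic policy can attain the optimum.

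Third, it remains to show the supremum is actually \emph{attained}, i.e.\ $\PiC^*(M)\neq\emptyset$; the achieving policy then necessarily has gain $\ge \tfrac34$ and is therefore randomized, which proves the lemma. The subtle point --- and the step I expect to be the main obstacle --- is that the feasible set is obtained by \emph{removing} the point $(0,0)$, so a priori it need not be closed and the maximizer could escape to that deleted point. This is resolved by the observation that $\SP{h^{(x,y)}} \to +\infty$ as $(x,y)\to(0,0)$ (the numerator tends to $1$ while the denominator tends to $0$): there is a neighbourhood of the origin on which the span exceeds $c$, so the feasible set stays bounded away from $(0,0)$, is closed and bounded, hence compact. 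Continuity of $g$ on this compact set then yields attainment, giving $\PiC^*(M)\neq\emptyset$ and finishing the argument. If one prefers to avoid compactness, one can instead verify directly that $\tfrac34$ upper-bounds $g$ over the feasible region --- the interior critical points of $g$ satisfy $y = x+\tfrac13$ and give $g=\tfrac{1-3x}{2}\le\tfrac12$, the edge $y=0$ has span $\ge 1$ (infeasible), and on the active constraint $\SP{h}=c$ the gain is maximized at $x=1$ --- so that $\pi_0$ is itself an explicit maximizer, making $\PiC^*(M)=\{\pi_0\}$ and $\PiC^*(M)\cap\SD(M)=\emptyset$ fully explicit.
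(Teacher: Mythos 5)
Your main line of argument is correct and is essentially the paper's own proof: the same two-state MDP of Ex.~\ref{E:twostates}, the same randomized witness $(x,y)=\bigl(1,\tfrac{1-c}{1+c}\bigr)$ with gain and span both equal to $c$, and the same comparison against deterministic rules (your enumeration, listing both $(0,1)$ with gain $\tfrac12$ and $(1,1)$ with gain $0$, is in fact more precise than the paper's claim that $(0,1)$ is the \emph{only} feasible deterministic rule). What you add is a genuine improvement in rigor: the paper merely asserts that the randomized policy ``maximizes the gain'', while the lemma also requires $\PiC^*(M)\neq\emptyset$, i.e.\ attainment of the supremum over a feasible set that is not obviously closed. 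Your compactness argument --- $\SP{h^{(x,y)}}=\tfrac{|1+x-2y|}{2(x+y)}$ diverges as $(x,y)\to(0,0)$, so the feasible set stays bounded away from the deleted point, is compact, and $g$ is continuous on it --- settles this point cleanly.

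One correction to your optional ``direct verification'' branch: $g$ has \emph{no} interior critical points, since $\tfrac{\partial g}{\partial y} = \tfrac{-x(1+3x)}{2(x+y)^2}<0$ whenever $x>0$ and $\tfrac{\partial g}{\partial x} = \tfrac{y(1-3y)}{2(x+y)^2}$; stationarity forces $x=0$, so the claimed family $y=x+\tfrac13$ with $g=\tfrac{1-3x}{2}$ is not a correct characterization, and your case analysis also skips the edges $x=0$ and $y=1$. If you want a direct optimality proof that avoids compactness, use instead the identity $g-\tfrac12 = x\bigl(h_2-h_1-\tfrac12\bigr)$: on the feasible set $h_2-h_1\le c$ and $x\in[0,1]$, hence $g\le \max\bigl\{\tfrac12,\,\tfrac12+(c-\tfrac12)\bigr\}=c$, with equality only when $x=1$ and $h_2-h_1=c$, i.e.\ exactly at your $\pi_0$. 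Either way the lemma is proved; the flaw sits only in the dispensable branch, not in the argument you rely on.
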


\begin{proof}
Consider Ex.~\ref{E:twostates} with constraint $1/2 < c < 1$. The only deterministic policy $\pi_D$ with constant gain and bias span smaller than $c$ is defined by the decision rule with $x=0$ and $y=1$, which leads to $g^{\pi_D} = 1/2$ and $\SP{h^{\pi_D}} = 1/2$. On the other hand, a randomized policy $\pi_R$ can satisfy the constraint and maximize the gain by taking $x=1$ and $y = (1-c)/(1+c)$, which gives $\SP{h^{\pi_R}} = c$ and $g^{\pi_R} = c > g^{\pi_D}$, thus proving the statement.
\end{proof}

\textbf{Constant gain.} The following lemma shows that if we consider non-constant gain policies, the supremum in~\eqref{P:opt_probl_well_posed} may not be well defined, as no \emph{dominating} policy exists. A policy $\pi\in\SR$ is \emph{dominating} if for any policy $\pi'\in\SR$, $g^\pi(s) \geq g^{\pi'}(s)$ in all states $s\in\calS$.

\begin{lemma}\label{lem:constant.gain}
There exists an MDP $M$ and a scalar $c \geq 0$, such that there exists no dominating policy $\pi$ in $\SR$ with constrained bias span (i.e., $\SP{h^\pi} \leq c$).
\end{lemma}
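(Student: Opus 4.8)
The plan is to exhibit a concrete counterexample, reusing the two-state MDP of Example~\ref{E:twostates} together with any constant $c<1$. The key idea is that a dominating policy is forced to attain, \emph{in every state simultaneously}, the maximal gain achievable from that state, and that in this MDP realizing both maxima at once unavoidably costs a bias span of at least~$1$.

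First I would compute, for each state, the largest gain attainable by any policy in $\SR$. Since action $a_1$ makes each state absorbing, staying in $s_1$ yields gain $1$, so $\max_{\pi} g^\pi(s_1)=1$; and by first moving $s_0\to s_1$ (any $x>0$) and then remaining in $s_1$ ($y=0$) one also reaches gain $1$ from $s_0$, so $\max_{\pi} g^\pi(s_0)=1$. These are indeed the maximal values because rewards are bounded by $\rmaxbound=1$. In particular the rule $x=1,\ y=0$ is dominating, so dominating policies \emph{do} exist; the content of the lemma is that none of them respects the span constraint.

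Next I would show that every dominating policy has span at least $1$. By definition a dominating $\pi$ satisfies $g^\pi(s_0)\ge g^{\pi'}(s_0)$ and $g^\pi(s_1)\ge g^{\pi'}(s_1)$ for all $\pi'\in\SR$, so the previous step forces $g^\pi(s_0)=g^\pi(s_1)=1$; in particular $\pi$ has constant gain. Substituting $g=1$ into the gain formula $g=\tfrac12+\tfrac{x(1-3y)}{2(x+y)}$ of Example~\ref{E:twostates} and simplifying yields $y(3x+1)=0$, hence $y=0$, while the degenerate rule $x=y=0$ is ruled out since it has $g(s_0)=\tfrac12\neq 1$. For the remaining family $y=0,\ x\in(0,1]$, the bias formula gives the (positive) value $\SP{h^\pi}=h_2-h_1=\tfrac12+\tfrac{1}{2x}$, which is decreasing in $x$ and equals $1$ at $x=1$; hence $\SP{h^\pi}\ge 1$ for every dominating policy.

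Finally I would conclude by choosing any $c<1$ (for instance $c=\tfrac12$): every dominating policy then violates $\SP{h^\pi}\le c$, so no dominating policy with constrained bias span exists, which is exactly the statement. The one step I would treat most carefully is the bookkeeping around the non-constant-gain rule $x=y=0$: because the definition of \emph{dominating} quantifies over all of $\SR$ (not only constant-gain or constrained policies), I must verify both that this degenerate policy fails to be dominating and that the per-state maxima are genuinely both equal to $1$, so that the reduction to the family $\{y=0,\ x\in(0,1]\}$ is airtight.
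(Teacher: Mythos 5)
Your proof is correct, but it takes a genuinely different route from the paper's. The paper stays inside the span-constrained set: with $1/2 < c < 1$ it invokes the proof of Lem.~\ref{lem:stoch_opt_policy} to get that the best \emph{constant-gain} constrained policy $\pi_R$ has gain $[c,c]$, observes that the only non-constant-gain policy ($x=y=0$) is also constrained (span $0$) with gain $[1/2,1]$, and concludes from the incomparability of these two vectors that no constrained policy can dominate even the other constrained policies. You instead characterize the policies that dominate \emph{all} of $\SR$: they must attain the per-state maximal gain $1$ in both states, which forces $y=0$, $x\in(0,1]$, and every such policy has $\SP{h^\pi}=\tfrac12+\tfrac1{2x}\ge 1$, so any $c<1$ excludes them. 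Both arguments establish the lemma as literally stated (with ``dominating'' defined relative to all of $\SR$). What each buys: yours is self-contained (no reliance on Lem.~\ref{lem:stoch_opt_policy}), works for every $c<1$ rather than only $c\in(1/2,1)$, and additionally shows that dominating policies \emph{do} exist but are all excluded by the constraint. The paper's argument proves a strictly stronger fact — that no dominating element exists even \emph{within} the constrained set — which is what actually substantiates the surrounding claim that the supremum in~\eqref{P:opt_probl_well_posed} becomes ill-posed once non-constant-gain policies are admitted; note that this stronger conclusion genuinely needs $c>1/2$, since for $c<1/2$ the policy $x=y=0$ dominates every other constrained policy (all of which then have gain at most $1/2$ in both states). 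As a side observation, your span formula $\tfrac12+\tfrac1{2x}$ exceeds $1$ for $x<1$, which shows the remark in Ex.~\ref{E:twostates} that $\SP{h^\pi}\le 1$ for all $\pi\in\SR$ is inaccurate; this is an error in the example's commentary, not in your derivation, and it affects neither proof.
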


\begin{proof}
Consider Ex.~\ref{E:twostates} with constraint $1/2 < c < 1$. As shown in the proof of Lem.~\ref{lem:stoch_opt_policy}, the optimal stationary policy $\pi_R$ with constant gain has $g^*_c =[c,c]$. On the other hand, the only policy $\pi$ with non-constant gain is $x=0$, $y=0$, which has $\SP{h^\pi}=0 < c$ and $g^\pi(s_0) = 1/2 < c = g^*_c$ and $g^\pi(s_1) = 1 > c = g^*_c$, thus proving the statement.
\end{proof}

% On the other hand, problem~\eqref{P:opt_probl_well_posed} is always well defined since the supremum always exists.
On the other hand, when the search space is restricted to policies with constant gain, the optimization problem is well posed. 
Whether problem~\eqref{P:opt_probl_well_posed} always admits a maximizer is left as an open question. The main difficulty 
%in proving this result 
comes from the fact that, in general, $\pi \mapsto g^\pi$ is not a continuous map and $\PiC$ is not a closed set. For instance in Ex.~\ref{E:twostates}, although the maximum is attained, the point $x=0$, $y=0$ does not belong to $\PiC$ (\ie $\PiC$ is not closed) and $ g^\pi$ is not continuous at this point. Notice that when the MDP is \emph{unichain} \citep[Sec. 8.3]{puterman1994markov}, $\PiC$ is compact, $g^\pi$ is continuous, and we can prove the following lemma (see App.~\ref{app:optim_problem_formulation}):
% Lem.~\ref{prop:P.well.defined}.
\begin{lemma}\label{prop:P.well.defined}
If $M$ is unichain then $\PiC^*(M) \neq \emptyset$.
% the supremum in \eqref{P:opt_probl_well_posed} is attained (i.e., $\Pi_c^*(M) \neq \emptyset$).
\end{lemma}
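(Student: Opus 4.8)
The plan is to prove the statement by the Weierstrass extreme value theorem: I will show that under the unichain assumption the feasible set $\PiC(M)$ is compact and the objective $\pi\mapsto g^\pi_M$ is continuous on it, so that the supremum $g^*_c(M)$ is attained and $\PiC^*(M)\neq\emptyset$ (throughout I use the standing assumption $\PiC(M)\neq\emptyset$ of problem~\eqref{P:opt_probl_well_posed}). First I would parametrize the policy space: a stationary randomized policy $\pi=d^\infty$ is identified with its decision rule $d\in\MR$, and $\MR=\prod_{s\in\calS}\mathcal{P}(\A_s)$ is a finite product of probability simplices, hence compact, while the maps $d\mapsto P_d$ and $d\mapsto r_d$ are (multi)linear and thus continuous. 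Since $M$ is unichain, $P_d$ is unichain for \emph{every} $d\in\MR$, so every $\pi\in\SR$ has constant gain and the constraint $\SP{g^\pi_M}=0$ holds automatically; consequently $\PiC(M)=\{d\in\MR:\SP{h^{d^\infty}_M}\le c\}$.

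The crux is the continuity of $d\mapsto(g^{d^\infty}_M,h^{d^\infty}_M)$, which in the general multichain case fails — precisely the pathology exhibited at $x=y=0$ in Ex.~\ref{E:twostates}. To recover it here I would fix a reference state $s_0$ and consider, for the scalar $g$ and the vector $h\in\Re^S$, the linear system obtained from the evaluation equations~\eqref{eq:eval.equations} together with the normalization $h(s_0)=0$, namely $(I-P_d)h+g\,e=r_d$ and $h(s_0)=0$, written compactly as $A_d z=b_d$ with $z=(g,h)$. A short computation shows the homogeneous system $A_d z=0$ forces $g=0$ (multiply on the left by the stationary distribution $\pi_d$ of $P_d$, using $\pi_d(I-P_d)=0$ and $\pi_d e=1$), whence $h=P_d h$ is harmonic and therefore constant because $P_d$ is unichain, and the normalization gives $h=0$. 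Thus $A_d$ is invertible for every $d\in\MR$; since matrix inversion is continuous on the open set of invertible matrices and $d\mapsto(A_d,b_d)$ is continuous, $z(d)=A_d^{-1}b_d$ is continuous on $\MR$. Here $g(d)=g^{d^\infty}_M$, and $h(d)$ differs from $h^{d^\infty}_M$ only by the additive constant $h^{d^\infty}_M(s_0)$, so $\SP{h(d)}=\SP{h^{d^\infty}_M}$ by translation invariance of the span.

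With continuity in hand the remainder is routine. The map $d\mapsto\SP{h(d)}$ is continuous, so $\PiC(M)$ is the preimage of the closed interval $(-\infty,c]$ under a continuous map, hence closed, and being a closed subset of the compact set $\MR$ it is compact. The objective $d\mapsto g(d)$ is continuous on this nonempty compact set, so by Weierstrass it attains its maximum, establishing $\PiC^*(M)\neq\emptyset$.

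I expect the main obstacle to be the continuity step, i.e.\ justifying invertibility of $A_d$ \emph{uniformly} over the entire policy class. The argument hinges on the fact that the unichain property of $M$ guarantees $P_d$ is unichain — and hence admits only constant harmonic functions — for every $d\in\MR$, which is exactly what rules out the discontinuities of $g^\pi$ and $h^\pi$ present in the general weakly-communicating case. For MDPs with compact action spaces the same scheme applies, provided one first verifies that $\MR$ is compact and $d\mapsto(P_d,r_d)$ continuous in the appropriate topology.
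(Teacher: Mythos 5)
Your proof is correct, and it shares the paper's overall skeleton — in a unichain MDP every stationary policy has constant gain, so $\PiC(M)$ reduces to $\{\pi\in\SR : \SP{h^\pi_M}\le c\}$; one then proves continuity of $\pi\mapsto(g^\pi,h^\pi)$, deduces that $\PiC(M)$ is a closed (hence compact) subset of the compact set $\MR$, and applies the Weierstrass extreme value theorem. Where you genuinely diverge is in the key continuity step. The paper argues probabilistically: it controls the stationary distribution via Seneta's decomposition $(\mu^\intercal-\wh{\mu}^\intercal)=\mu^\intercal(P-\wh{P})H_{\wh{P}}$, obtains the bound $\|\mu-\wh{\mu}\|_1\le\|P-\wh{P}\|_{\infty,1}\|H_{\wh{P}}\|_{\infty,\infty}$, and then uses $g^\pi=\mu^\intercal r_d$ and $h^\pi=H_P r_d$ together with continuity of the deviation matrix $P\mapsto H_P$. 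You argue linear-algebraically: you augment the evaluation equations with the normalization $h(s_0)=0$, show the square system $A_d z=b_d$ has only the trivial homogeneous solution (the unichain hypothesis entering exactly once, to force harmonic vectors $h=P_dh$ to be constant), and invoke continuity of matrix inversion on the set of invertible matrices. Your route buys self-containedness: invertibility is a pointwise check, continuity is automatic (Cramer's rule), no deviation-matrix machinery is needed, and gain and bias come out simultaneously; your observation that the normalized solution differs from the Cesaro-limit bias only by a constant, so the span constraint is unaffected, is exactly right. The paper's route buys a quantitative, Lipschitz-type perturbation estimate on $\mu$ (and hence on $g^\pi$, $h^\pi$) rather than bare continuity. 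One shared implicit step deserves a word in either treatment: that unichain-ness of all \emph{deterministic} policies (Puterman's definition) extends to all \emph{randomized} ones — this follows from the one-line closure argument that a recurrent class of $P_d$ is closed under every action in the support of $d$, so two disjoint recurrent classes for a randomized $d$ would produce a deterministic policy with two disjoint closed sets.
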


% While problem~\eqref{P:opt_probl_well_posed} may not always admit a maximizer~\footnote{The difficulty in proving this result comes from the topology of $\PiC$. For instance, in Ex.~\ref{E:twostates}, $\PiC$ is not closed (the point $x=0$, $y=0$ is not contained in $\PiC$). Nonetheless, for $0 \leq c \leq 1$ the supremum is still attained in $\PiC$. So whether there exist cases when the supremum is not attained in $\PiC$ is left as an open question.}, 
We will later show that for the specific instances of \eqref{P:opt_probl_well_posed} that are encountered by our algorithm \scal, Lem.~\ref{prop:P.well.defined} holds.
\vspace{-0.05in}
\section{Planning with \regopt} \label{sec:planning}
\vspace{-0.05in}

In this section, we introduce \regopt and derive sufficient conditions for its convergence to the solution of~\eqref{P:opt_probl_well_posed}. In the next section, we will show that these assumptions always hold when \regopt is carefully integrated into \ucrl (while in App.~\ref{app:counterexamples} we show that they may not hold in general). 
 %We will later use these results in Sec.~\ref{sec:regret} to integrate \regopt to \scal and prove \regal-like regret guarantees.

%%%%%%%%%%%%%%%%%%%%%%%%%%%%%%%%%%%%%%%%%%%%%%%%%%%%%%%%%%%%%%%%%%%%%%
\vspace{-0.05in}
\subsection{Span-constrained value and policy operators}\label{ssec:operators}
\vspace{-0.05in}

\begin{figure}[t]
\renewcommand\figurename{\small Figure}
\begin{minipage}{\columnwidth}
%\vspace{-0.3in}
\bookboxx{
\textbf{Input:} Initial vector $v_0 \in \mathbb{R}^S$, reference state $\overline{s}\in \mathcal{S}$, contractive factor $\gamma \in (0,1)$, accuracy $\varepsilon \in (0,+\infty)$\\
\textbf{Output:} Vector $v_n \in \mathbb{R}^S$, policy $\pi_n=(G_c v_n)^\infty$
 \vspace{-0.05in}
\begin{enumerate}
 \item Initialize $n =0$ and $v_{1} = \opT{v_{0}} - (\opT{v_{0}})(\wb{s})e$,
 \vspace{-0.05in}
%  \\where $e = (1,\dots,1)^\intercal$ denotes the vector of all 1's
 \item \noindent \textbf{While} $\SP{v_{n+1}-v_n} + \frac{2\gamma^n}{1-\gamma}\SP{v_1-v_0} > \varepsilon$ \textbf{do}
 \vspace{-0.05in}
 \begin{enumerate}
 \item $n \pluseq 1$.
 \item $v_{n+1} = \opT{v_{n}} - (\opT{v_{n}})(\wb{s})e$.
 \end{enumerate}
% \begin{align*}
%  v_{n+1} := \opT{v_{n}} - \opT{v_{n}}(\wb{s})e
% \end{align*}
\end{enumerate}
}
 \vspace{-0.1in}
\caption{\small Algorithm \regopt.}
\label{fig:regopt}
\vspace{-0.2in}
\end{minipage}
\end{figure}

\regopt is a version of (relative) value iteration~\citep{puterman1994markov,bertsekas1995dynamic}, where the optimal Bellman operator is modified to return value functions with span bounded by $c$, and the stopping condition is tailored to return a \emph{constrained greedy} policy with near-optimal gain. We first introduce a \emph{constrained} version of the optimal Bellman operator $L$.

 \begin{definition}\label{def:opT}
Given $v \in \Re^{S}$ and $c \geq 0$, we define the value operator $\opT{}:\Re^{S}\rightarrow \Re^{S}$ as
        \begin{align}\label{eq:opT}
                \opT{v} = \begin{cases}
                        Lv(s) & \forall s \in \overline{\calS}(c,v),\\
                        c + \min_s \{Lv(s)\} & \forall s \in \mathcal{S} \setminus \overline{\calS}(c,v),
                \end{cases}
        \end{align}
        where $\overline{\calS}(c,v) = \left\{s \in \mathcal{S} | Lv(s) \leq \min_s \{Lv(s)\} + c \right\}$. 
\end{definition}

In other words, operator $\opT{}$ applies a \emph{span truncation} to the one-step application of $L$, that is, for any state $s \in \mathcal{S}$, $\opT{v}(s) = \min\{Lv(s), \min_{x}Lv(x) + c\}$, which guarantees that $\SP{\opT{v}} \leq c$. Unlike $L$, operator $\opT{}$ is not always associated with a decision rule $d$ \st $\opT{v} =L_d v$ (see App.~\ref{app:counterexamples}). We say that $\opT{}$ is \emph{feasible} at $v \in \mathbb{R}^S$ and $s\in\calS$ if there exists a distribution $\delta^+_v(s) \in \mathcal{P}(A)$ such that
\begin{equation}\label{eq:dec.rule.opT}
\opT{v}(s) = \sum_{a\in \mathcal{A}_s} \delta_v^+(s,a) \big[ r(s,a) + p(\cdot|s,a)^\transp v \big].
\end{equation}
When a distribution $\delta^+_v(s)$ exists in all states, we say that $\opT{}$ is \emph{globally feasible} at $v$, and $\delta^+_v$ is its associated decision rule, i.e., $\opT{v} =L_{\delta^+_v} v$.
In the following lemma, we identify sufficient and necessary conditions for (global) \emph{feasibility}.
% In the following lemma we show that $\opT{}$ always returns an upper-bound to an optimization problem over one-step span-constrained policies and we identify a sufficient and necessary condition for (global) \emph{feasibility}.

\begin{lemma}\label{L:TRpolicyexistence}
Operator $\opT{}$ is \emph{feasible} at $v\in \mathbb{R}^S$ and $s\in\calS$ if and only if
\begin{equation}\label{E:Top_policy_cond}
\min_{a\in \A_s} \{ r(s,a) + p(\cdot|s,a)^\transp v \} \leq \min_{s'} \{L v(s') \}+ c.
\end{equation}
Furthermore, let 
\begin{equation}
D(c,v) := \left\{d \in\MR \;|\; \SP{L_d v} \leq c \right\}
\end{equation}
be the set of randomized decision rules $d$ whose associated operator $L_d$ returns a span-constrained value function when applied to $v$. Then, $\opT{v}$ is \emph{globally feasible} if and only if $D(c,v) \neq \emptyset$, in which case we have
\begin{equation}\label{eq:opT.LP}
\opT{v} = \max_{\delta\in D(c,v)} L_{\delta} v, \quad\text{ and }\quad \delta^+_v \in \argmax_{\delta\in D(c,v)} L_{\delta} v.
\end{equation}
%
%and thus $\delta^+_v \in \argmax_{\delta\in D(c,v)} L_{\delta} v$.
% the decision rules associated to $\opT{v}$ are such that
%
% \begin{equation*}
% \opT{v} = L_{\delta_v^+} v \implies \delta^+_v \in \argmax_{\delta\in D(c,v)} L_{\delta} v,
% \end{equation*}
%
% and $\opT{v} = L_{\delta_v^+} v$.
\end{lemma}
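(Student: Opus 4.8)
The plan is to reduce everything to one elementary observation: at a fixed state $s$ the set of values realizable by averaging over actions is a closed interval, and a randomized decision rule picks its value in each state \emph{independently}. Writing $q(s,a) := r(s,a) + p(\cdot|s,a)^\transp v$, the set $\{\sum_{a} \delta(a) q(s,a) : \delta \in \mathcal{P}(\A_s)\}$ is the convex hull of the numbers $q(s,a)$, hence the interval $[m_s, Lv(s)]$ with $m_s := \min_{a} q(s,a)$ and $Lv(s) = \max_a q(s,a)$. Setting $\ell := \min_{s'} Lv(s')$, feasibility of $\opT{}$ at $(v,s)$ is exactly the question of whether the target $\opT{v}(s) = \min\{Lv(s), \ell + c\}$ lies in $[m_s, Lv(s)]$.

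First I would prove the scalar characterization \eqref{E:Top_policy_cond}. Since $\opT{v}(s) \le Lv(s)$ by construction, the upper endpoint is never violated, so feasibility is equivalent to $m_s \le \opT{v}(s) = \min\{Lv(s), \ell+c\}$. As $m_s \le Lv(s)$ always holds, the constraint $m_s \le Lv(s)$ is free and the only binding condition is $m_s \le \ell + c$, which is precisely \eqref{E:Top_policy_cond}. (Equivalently: for $s \in \overline{\calS}(c,v)$ the condition is automatic, and for $s \notin \overline{\calS}(c,v)$ it is exactly what is required.)

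For the global statement I would use that $d \in \MR$ is determined, as far as its action on $v$ is concerned, by the profile $(w_s)_s := (L_d v(s))_s$ with each $w_s$ ranging freely and independently in $[m_s, Lv(s)]$; thus $D(c,v) \neq \emptyset$ iff there exist $w_s \in [m_s, Lv(s)]$ with $\max_s w_s - \min_s w_s \le c$. The crucial point is that the floor is \emph{pinned}: choosing $s_0$ with $Lv(s_0) = \ell$ forces $\min_s w_s \le w_{s_0} \le \ell$, so every span-$c$-feasible profile obeys $w_s \le \max_s w_s \le \min_s w_s + c \le \ell + c$, whence $w_s \le \min\{Lv(s), \ell + c\} = \opT{v}(s)$ for all $s$. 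This delivers both directions at once: if $D(c,v) \neq \emptyset$ then $m_s \le w_s \le \ell + c$ for all $s$, i.e. \eqref{E:Top_policy_cond} holds everywhere, which is global feasibility; conversely, when \eqref{E:Top_policy_cond} holds at every state, the explicit choice $w_s = \opT{v}(s)$ lies in $[m_s, Lv(s)]$ and satisfies $\min_s w_s = \ell$, $\max_s w_s \le \ell + c$, hence $\SP{\opT{v}} \le c$, so the realizing rule $\delta^+_v$ belongs to $D(c,v)$ and gives $\opT{v} = L_{\delta^+_v} v$. Combined with $w_s \le \opT{v}(s)$ for every $d \in D(c,v)$, this yields $\opT{v} = \max_{\delta \in D(c,v)} L_\delta v$ componentwise with maximizer $\delta^+_v$.

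The step I expect to require the most care is this pinned-floor argument, which is what upgrades the per-state feasibility test into simultaneous componentwise optimality of $\opT{v}$: one must argue that the minimum of any span-$c$-feasible profile cannot exceed $\ell$, and it is exactly this that caps every component at $\ell + c$ and makes the truncation both feasible and dominating. A secondary point to treat cleanly is the interval/convex-hull claim for compact action spaces, where every intermediate value is reached by two-point mixtures of the minimizing and maximizing Diracs (compactness and continuity guarantee these exist); one should also note that the rule realizing $\opT{v}$ need not be unique, so $\delta^+_v$ is only claimed to be \emph{some} element of the $\argmax$.
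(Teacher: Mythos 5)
Your proposal is correct and follows essentially the same route as the paper's proof: the per-state characterization via two-action mixtures (intermediate value theorem on the interval $[\min_a q(s,a), Lv(s)]$), and your ``pinned floor'' observation is exactly the paper's key inequality $\min_s\{L_d v(s)\} \leq \min_s\{Lv(s)\}$, which drives both the componentwise dominance $L_d v \leq \opT{v}$ for all $d \in D(c,v)$ and the equivalence between global feasibility and $D(c,v) \neq \emptyset$. The only difference is organizational (you derive dominance and the global equivalence in one sweep, the paper proves dominance first and argues the converse direction by contradiction), not mathematical.
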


\vspace{-0.1in}
The last part of this lemma shows that when $\opT{}$ is globally feasible at $v$ (\ie $D(c,v) \neq \emptyset$), $\opT{v} = L_{\delta_v^+}v$ is the \emph{componentwise maximal} value function of the form $L_\delta v$ with decision rule $\delta \in \MR$ satisfying $\SP{L_\delta v } \leq c$. Surprisingly, even in the presence of a constraint on the one-step value span, such a \emph{componentwise} maximum still exists (which is not as straightforward as in the case of the greedy operator $L$). Therefore, whenever $D(c,v) \neq \emptyset$, optimization problem~\eqref{eq:opT.LP} can be seen as an LP-problem (see App.~\ref{app:optim_problem_formulation2}).

% This lemma shows that $\opT{}v$ is upper-bounding all span-constrained value functions that can be obtained by applying the operator $L_d$ associated to a decision rule $d \in \MR$ on $v$ (Eq.~\ref{eq:opT.upper.bound}). Furthermore, if condition~\eqref{E:Top_policy_cond} holds in all states, $\opT{v}$ is globally feasible (\ie $\opT{v} = L_{\delta_v^+} v$ for some $\delta_v^+$), $D(c,v) \neq \emptyset$ and $\delta^+_v$ can be seen as a solution of the LP-problem in~\eqref{eq:opT.LP}.\footnote{See App.~\ref{app:operators} for a proof that~\eqref{eq:opT.LP} is an LP-problem.}
% and it is associated to the (randomized) decision rule $\delta^+_v$ maximizing $L_{\delta}v$ over all decision rules in $D(c,v)$ that guarantee a bias with span smaller than $c$.
% \footnote{Eq.~\ref{eq:opT.LP} is a linear program since $L_d v$ is linear in the decision rule and $D(c,v)$ can be expressed as a set of linear constraints.} 
% Based on this observation we introduce the following span-constrained greedy policy operator.
% 
\begin{definition}\label{def:Gop}
Given $v \in \Re^{S}$ and $c \geq 0$, let $\widetilde{\calS}(c,v)$ be the set of states where $\opT{v}$ is \emph{feasible} (condition~\eqref{E:Top_policy_cond}) with $\delta^+_v(s)$ be the associated decision rule (Eq.~\ref{eq:dec.rule.opT}). We define the operator $\opG{}: \Re^{S}\rightarrow \MR$ as\footnote{When there are several policies $\delta_v^+$ achieving $\opT{v}(s) = L_{\delta_v^+} v(s)$ in state $s\in \mathcal{S}$, $\opG{}$ chooses an arbitrary decision rule.}
        \begin{align*}
                \opG{v} \!=\! \begin{cases}
                        \delta^+_v(s) & \!\!s \in \widetilde{\calS}(C,v),\\
                        \argmin\limits_{a \in \mathcal{A}_s} \left\{ r(s,a) + p(\cdot|s,a)^\transp v\right\} & \!\! s \in \mathcal{S} \!\setminus\! \widetilde{\calS}(C,v).
                \end{cases}
        \end{align*}
\end{definition}
% 
% In other words $\opG{v}$ returns a decision rule whose associated operator matches $\opT{v}$ in all feasible states and returns the action that best approaches (from above) the value $\opT{v}$ in all other states. For notational convenience we also define the (value) operator associated to policy $\opG{v}$ as
% %
% \begin{align}\label{eq:Nop}
% \opN{v} = L_{(\opG{v})}v.
% \end{align}
% %
% If $\opT{v}$ is globally feasible, then $\opN{v} = \opT{v}$ and $\opG{v} = \delta^+_v$.
As a result, if $\opT{}$ is globally feasible at $v$, by definition $\opG{v} = \delta^+_v$. Note that computing $\delta^+_v$ is \emph{not} significantly more difficult than computing a greedy policy (see App.~\ref{sec:T.policy} for an \emph{efficient implementation}).

We are now ready to introduce \regopt (Fig.~\ref{fig:regopt}). Given a vector $v_0\in\Re^S$ and a reference state $\wb s$, \regopt implements relative value iteration where $L$ is replaced by $T_c$, \ie
 \begin{align}\label{eq:rel.val.iteration}
 v_{n+1} = \opT{v_{n}} - \opT{v_{n}}(\wb{s})e.
 \end{align}
Notice that the term $(\opT{v_{n}})(\wb{s})e$ subtracted at any iteration~$n$ prevents $v_n$ from increasing linearly with $n$ and thus avoids numerical instability. However, the subtraction can be dropped without affecting the convergence properties of \regopt. If the stopping condition is met at iteration $n$, \regopt returns policy $\pi_n = d_n^\infty$ where $d_n = \opG{v_n}$.

% We are now ready to define our algorithm \regopt. Given an arbitrary value function $v_0 \in \Re^{S}$ and an arbitrary reference state $\wb{s}\in\calS$, at each iteration $n$ \regopt computes 
% %
% \begin{align}\label{eq:rel.val.iteration}
% v_{n+1} = \opT{v_{n}} - \opT{v_{n}}(\wb{s})e,
% \end{align}
% %
% where $e = (1,\dots,1)^\intercal$ denotes the vector of all 1's.
% After $n$ iterations \regopt returns policy $\pi_n = d_n^\infty$ where $d_n = \opG{v_n}$.
% Note that \regopt is a form of relative value iteration. The term $\opT{v_{n}}(\wb{s})e$ is just used to avoid numerical instability.

%%%%%%%%%%%%%%%%%%%%%%%%%%%%%%%%%%%%%%%%%%%%%%%%%%%%%%%%%%%%%%%%%%%%%%
\vspace{-0.05in}
\subsection{Convergence and Optimality Guarantees}\label{ssec:guarantees}
\vspace{-0.05in}

%In App.~\ref{app:counterexamples}, we introduce counter-examples showing that in general \regopt may not return a valid solution to~\eqref{P:opt_probl_well_posed} and may not even converge to a fixed point. Nonetheless, 
%In this section we prove that under some assumptions on the MDP, \regopt converges to the solution of~\eqref{P:opt_probl_well_posed}. In the next section, we will show that these assumptions always hold when \regopt is carefully integrated into \ucrl (while in App.~\ref{app:counterexamples} we show that these assumptions may not hold in general). 
%We now proceed with analyzing the convergence properties of \regopt. 
In order to derive convergence and optimality guarantees for \regopt we need to analyze the properties of operator~$\opT{}$. 
We start by proving that $\opT{}$ preserves the one-step \emph{span contraction} properties of $L$.

%We start with an assumption on operator $L$.\footnote{While the assumption always holds in the discounted setting, in general this is not the case in the undiscounted setting. If the MDP is unichain, $L$ is a $J$-stage contraction with $S\geq J \geq 1$.}

\begin{assumption}\label{asm:L.contraction}
The optimal Bellman operator $L$ is a 1-step $\gamma$-span-contraction, \ie there exists a $\gamma <1$ such that for any vectors $u,v \in \Re^{S}$, $\SP{Lu - Lv} \leq \gamma \SP{u-v}$.\footnote{In the undiscounted setting, if the MDP is unichain, $L$ is a $J$-stage contraction with $S\geq J \geq 1$.}
\end{assumption}

% The convergence and optimality guarantees of \regopt mostly rely on the properties of the span-constrained value iteration process defined by $\opT{}$. 
%In order to derive convergence and optimality guarantees for \regopt we need to analyze the properties of operator $\opT{}$. 
%We start by proving that $\opT{}$ preserves the one-step span contraction properties of $L$.

\begin{lemma}\label{lem:contraction}
Under Asm.~\ref{asm:L.contraction}, $\opT$ is a $\gamma$-span contraction.
\end{lemma}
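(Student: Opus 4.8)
The plan is to factor the constrained operator as $\opT{} = \Psi_c \circ L$, where $\Psi_c : \Re^S \to \Re^S$ is the \emph{span-truncation} map $\Psi_c(w)(s) := \min\{w(s),\, \min_x w(x) + c\}$. This is exactly the componentwise description of $\opT$ recorded right after Definition~\ref{def:opT}, so that $\opT{v} = \Psi_c(Lv)$ for every $v$. Since Assumption~\ref{asm:L.contraction} already supplies $\SP{Lu - Lv} \le \gamma \SP{u-v}$, it suffices to prove that $\Psi_c$ is a \emph{span nonexpansion}, i.e. $\SP{\Psi_c(w) - \Psi_c(w')} \le \SP{w - w'}$ for all $w, w' \in \Re^S$. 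Composing the two bounds then yields $\SP{\opT{u} - \opT{v}} = \SP{\Psi_c(Lu) - \Psi_c(Lv)} \le \SP{Lu - Lv} \le \gamma \SP{u-v}$, which is the claim; the contraction factor is inherited entirely from $L$, while the truncation supplies only the unit nonexpansion.

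To show $\Psi_c$ is a span nonexpansion, I would first establish two elementary structural properties. First, \emph{monotonicity}: if $w \le w'$ componentwise then $\min_x w(x) \le \min_x w'(x)$, and since the scalar min is nondecreasing in each argument, $\min\{w(s), \min_x w(x) + c\} \le \min\{w'(s), \min_x w'(x) + c\}$ for every $s$, i.e. $\Psi_c(w) \le \Psi_c(w')$. Second, \emph{translation invariance}: for any constant $k$, $\Psi_c(w + ke)(s) = \min\{w(s)+k, \min_x w(x) + k + c\} = \Psi_c(w)(s) + k$, hence $\Psi_c(w + ke) = \Psi_c(w) + ke$. The only subtlety is that the truncation level $\min_x w(x) + c$ itself depends on $w$; both verifications accommodate this, since the minimum is itself monotone and commutes with adding a constant.

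Given these two properties, the nonexpansion follows from a sandwiching argument. Writing $m_0 := \min_s(w(s)-w'(s))$ and $M := \max_s(w(s)-w'(s))$, so that $M - m_0 = \SP{w-w'}$, we have $w' + m_0 e \le w \le w' + M e$ componentwise. Applying $\Psi_c$ and using monotonicity together with translation invariance gives $\Psi_c(w') + m_0 e \le \Psi_c(w) \le \Psi_c(w') + M e$, so that $m_0 \le (\Psi_c(w) - \Psi_c(w'))(s) \le M$ for every $s$, and therefore $\SP{\Psi_c(w) - \Psi_c(w')} \le M - m_0 = \SP{w-w'}$. This is the desired span nonexpansion, and combined with the $\gamma$-span contraction of $L$ it closes the argument.

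I expect the only (mild) obstacle to be making the factorization $\opT{v} = \Psi_c(Lv)$ explicit and checking monotonicity and translation invariance of $\Psi_c$ carefully, precisely because the truncation threshold $\min_x Lv(x) + c$ is data-dependent and so does not act as a fixed cap. Once those are in place, no further work on $\opT$ itself is needed: the generic lemma ``monotone plus translation invariant implies span nonexpansion'' does all the remaining work, and the $\gamma$ comes for free from Assumption~\ref{asm:L.contraction}.
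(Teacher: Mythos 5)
Your proof is correct, and its skeleton is the same as the paper's: the paper also factors $\opT{}$ as the span--truncation applied to $L$ (your $\Psi_c$ is the paper's projection operator $\proj{}$ of Lemma~\ref{prop:truncation}), proves that this truncation is nonexpansive in span semi-norm, and composes with the $\gamma$-span contraction of $L$ exactly as you do. Where you genuinely diverge is in \emph{how} the nonexpansion of the truncation is established. The paper proves the componentwise bound $\min\{v-u\} \leq \proj{v}(s) - \proj{u}(s) \leq \max\{v-u\}$ (property \emph{(b)} of Lemma~\ref{L:proj_properties}) by an explicit four-case analysis, according to whether the truncation is active at $s$ for $v$, for $u$, for both, or for neither, and then reads off the span nonexpansion. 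You instead derive that very same componentwise bound abstractly, from monotonicity and translation invariance of the truncation via the sandwich $w' + m_0 e \leq w \leq w' + Me$ --- the classical Blackwell-style argument that any monotone, translation-invariant map is span (and $\ell_\infty$) nonexpansive. Your route is shorter and more conceptual, and it makes clear that nothing about the specific form of the truncation matters beyond these two structural properties; the paper's case analysis is more pedestrian but delivers the same intermediate inequality, which it then reuses to get both the span and the $\ell_\infty$ nonexpansion (the latter is needed elsewhere, e.g.\ in the proof of Thm.~\ref{T:improved_op}). Note that your sandwich argument yields the $\ell_\infty$ bound with no extra work as well, so nothing is lost.
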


The proof of Lemma~\ref{lem:contraction} relies on the fact that the truncation of $L$ in the definition of $\opT{}$ is non-expansive in span semi-norm. 
Details are given in App.~\ref{app:operators}, where it is also shown that $\opT{}$ preserves other properties of $L$ such as \emph{monotonicity} and \emph{linearity}. It then follows that $\opT{}$ admits a fixed point solution to an optimality equation (similar to $L$) and thus \regopt converges to the corresponding bias and gain, the latter being an upper-bound on the optimal solution of~\eqref{P:opt_probl_well_posed}. We formally state these results in Lem.~\ref{lem:optimality.equation.T}.

\begin{lemma}\label{lem:optimality.equation.T}
Under Asm.~\ref{asm:L.contraction}, the following properties hold:
 \begin{enumerate}
 \item \emph{Optimality equation and uniqueness:} There exists a solution $(g^+,h^+) \in \Re \times \Re^{S}$ to the optimality equation
  \begin{align}\label{E:optimality_eq_average_T}
   \opT{h^+} = h^+ + g^+ e.
  \end{align}   
  If $(g,h) \in \Re \times \Re^{S}$ is another solution of~\eqref{E:optimality_eq_average_T}, then 
  $g = g^+ $ and there exists $\lambda  \in \Re$ \st $h = h^+ + \lambda  e$.
  \item \emph{Convergence:} For any initial vector $v_0 \in \Re^{S}$, the sequence $(v_n)$ generated by \regopt converges to a solution vector $h^+$ of the optimality equation~\eqref{E:optimality_eq_average_T}, and
  \[ \lim_{n\to +\infty} \opT{}^{n+1}v_0-\opT{}^n v_0 = g^+e.\]
  \item \emph{Dominance:}
  The gain $g^+$ is an upper-bound on the supremum of~\eqref{P:opt_probl_well_posed}, i.e., $g^+ \geq g^*_c$.
 \end{enumerate}
\end{lemma}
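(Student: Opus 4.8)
The plan is to treat $\opT{}$ exactly as one treats the optimal Bellman operator in the span-contraction analysis of average-reward MDPs, relying on three structural properties: the $\gamma$-span contraction of Lemma~\ref{lem:contraction}, translation invariance $\opT{(v+\lambda e)} = \opT{v} + \lambda e$ for all $\lambda\in\Re$, and componentwise monotonicity. The latter two are the properties established in App.~\ref{app:operators}; translation invariance holds because $L(v+\lambda e)=Lv+\lambda e$ leaves the set $\overline{\calS}(c,v)$ of Definition~\ref{def:opT} unchanged and shifts both branches of the definition by $\lambda$. With these in hand, parts~1 and~2 are essentially the standard Banach-fixed-point argument transported to the span seminorm, while part~3 is where the span constraint genuinely enters.

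For the optimality equation and uniqueness, I would pass to the quotient space $\Re^S/\mathrm{span}\{e\}$. By translation invariance $\opT{}$ descends to a well-defined self-map of this space, and by Lemma~\ref{lem:contraction} it is a $\gamma$-contraction for the norm induced by $\SP{\cdot}$ (which is a genuine norm on the quotient, since $\SP{v}=0$ iff $v$ is a multiple of $e$). As the quotient is a finite-dimensional, hence complete, normed space, Banach's fixed-point theorem yields a unique fixed point: a vector $h^+$, unique up to an additive multiple of $e$, together with a scalar $g^+$ satisfying $\opT{h^+}=h^+ + g^+e$. Feeding any second solution $h=h^+ + \lambda e$ back into $\opT{}$ via translation invariance then forces $g=g^+$, giving uniqueness of the gain.

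For convergence, I would first note, by an easy induction using translation invariance, that the relative iteration of Fig.~\ref{fig:regopt} satisfies $v_n = \opT{}^n v_0 - (\opT{}^n v_0)(\wb s)e$, so $v_n$ is the unnormalized iterate shifted to vanish at $\wb s$. Since $\opT{}^n h^+ = h^+ + n g^+ e$, the contraction gives $\SP{\opT{}^n v_0 - h^+} \leq \gamma^n \SP{v_0 - h^+}\to 0$. Because $v_n - h^+$ (with $h^+$ taken to vanish at $\wb s$) itself vanishes at $\wb s$, and any vector $x$ with $x(\wb s)=0$ obeys $\norm{x}\leq \SP{x}$, this upgrades to $\infty$-norm convergence $v_n \to h^+$. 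Finally, writing $\opT{}^{n+1} v_0 - \opT{}^n v_0 = \opT{v_n} - v_n$ (again by translation invariance) and using continuity of $\opT{}$ gives $\opT{v_n}-v_n \to \opT{h^+}-h^+ = g^+e$, which is the stated limit.

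The dominance claim is the crux, and the main obstacle is establishing the single inequality $L_d h^\pi \leq \opT{h^\pi}$ for a feasible policy. Given any $\pi=d^\infty\in\PiC(M)$, the evaluation equations give $L_d h^\pi = h^\pi + g^\pi e$, so the range of $L_d h^\pi$ equals $\SP{h^\pi}\leq c$; combined with the componentwise bound $L_d h^\pi \leq L h^\pi$ this yields $L_d h^\pi(s) \leq \min_x L h^\pi(x) + c$ for every $s$, and therefore $L_d h^\pi \leq \min\{Lh^\pi,\ \min_x Lh^\pi(x)+c\} = \opT{h^\pi}$. Hence $\opT{h^\pi}\geq h^\pi + g^\pi e$, and iterating via monotonicity and translation invariance gives $\opT{}^n h^\pi \geq h^\pi + n g^\pi e$. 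Dividing by $n$ and letting $n\to\infty$, the right-hand side tends to $g^\pi e$, while the left-hand side tends to $g^+e$ by part~2 (Cesàro-averaging the successive differences with $v_0=h^\pi$), so $g^+\geq g^\pi$. Taking the supremum over $\PiC(M)$ delivers $g^+\geq g^*_c$. The subtle point to present carefully is that the constant-gain requirement $\SP{g^\pi}=0$ is exactly what turns $L_d h^\pi$ into $h^\pi + g^\pi e$ and hence controls the truncation threshold.
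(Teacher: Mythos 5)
Your proposal is correct and takes essentially the same route as the paper's proof: the quotient space $\Re^S/\mathrm{span}\{e\}$ with the Banach fixed-point theorem for existence and uniqueness, the relative-iteration identity $v_n = \opT{}^n v_0 - (\opT{}^n v_0)(\wb s)e$ together with the bound $\norm{x}\leq \SP{x}$ for vectors vanishing at $\wb s$ for convergence, and the feasibility inequality $\opT{h^\pi}\geq L_d h^\pi = h^\pi + g^\pi e$ iterated via monotonicity and linearity for dominance. The only cosmetic deviations are that you converge directly to the fixed point supplied by part~1 instead of re-proving Cauchyness, you inline the key inequality of Lemma~\ref{L:TRpolicyexistence} rather than citing it, and you finish the dominance step with a Ces\`aro average rather than the limit of successive differences --- all equivalent.
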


% This lemma illustrates that the value iteration process induced by $\opT{}$ converges to the solution of an optimality equation. 
% Nonetheless, $g^+$ is only an upper-bound on the optimal solution and 
A direct consequence of point 2 of Lem.~\ref{lem:optimality.equation.T} (convergence)
% and the contraction of Lem.~\ref{lem:contraction} 
is that \regopt always stops after a finite number of iterations. Nonetheless, $\opT{}$ may not always be globally feasible at $h^+$ (see App.~\ref{app:counterexamples}) and thus there may be no policy associated to optimality equation~\eqref{E:optimality_eq_average_T}. Furthermore, even when there is one, Lem.~\ref{lem:optimality.equation.T} provides no guarantee on the performance of the policy returned by \regopt after a finite number of iterations. To overcome these limitations, we introduce an additional assumption, which leads to stronger performance guarantees for \regopt.

\begin{assumption}\label{asm:feasibility}
        Operator $\opT{}$ is globally feasible at any vector $v \in \mathbb{R}^S$ such that $\SP{v} \leq c$.
\end{assumption}

\begin{theorem}\label{thm:opT.optimality}
        Assume Asm.~\ref{asm:L.contraction} and~\ref{asm:feasibility} hold and let $\gamma$ denote the contractive factor of $\opT{}$ (Asm.~\ref{asm:L.contraction}). For any $v_0 \in \mathbb{R}^S$ such that $\SP{v_0}\leq c$, any $\overline{s} \in \mathcal{S}$ and any $\varepsilon >0$, the policy $\pi_n$ output by \regopt{$(v_0,\overline{s},\gamma,\varepsilon)$} is such that $\| g^+e - g^{\pi_n} \|_{\infty} \leq \varepsilon$. Furthermore, if in addition the policy $\pi^+ = (\opG{h^+})^\infty$ is \emph{unichain}, $g^+$ is the solution to optimization problem~\eqref{P:opt_probl_well_posed} \ie $g^+ = g^*_c$ and $\pi^+ \in \Pi^*_c$.
%
%  \begin{align}\label{eq:stopping.criterion}
% \Big| g^+ - g^{\pi} \Big| \leq \varepsilon
%  \end{align}
%
\end{theorem}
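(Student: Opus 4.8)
The plan is to establish the two assertions in turn: first the $\varepsilon$-optimality of the returned policy $\pi_n$, then the exact optimality and feasibility of $\pi^+$ under the extra unichain hypothesis.

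For the first assertion I would begin by propagating the span constraint along the iterates. Since $\opT{}$ always returns a vector of span at most $c$ and the relative shift $-\opT{v_n}(\wb s)e$ leaves the span unchanged, an easy induction (using $\SP{v_0}\le c$) gives $\SP{v_n}\le c$ for all $n$. This is exactly what lets me invoke Asm.~\ref{asm:feasibility}: $\opT{}$ is globally feasible at every $v_n$, so with $d_n:=\opG{v_n}$ we have $\opT{v_n}=L_{d_n}v_n$. Writing $P_{d_n}^{*}:=\lim_{N}\frac1N\sum_{t=0}^{N-1}P_{d_n}^{t}$ for the Cesàro-limit (stationary) matrix of $d_n$ and using $P_{d_n}^{*}P_{d_n}=P_{d_n}^{*}$ and $P_{d_n}^{*}e=e$, the gain of $\pi_n=d_n^\infty$ satisfies $g^{\pi_n}=P_{d_n}^{*}r_{d_n}=P_{d_n}^{*}(L_{d_n}v_n-v_n)=P_{d_n}^{*}(\opT{v_n}-v_n)$. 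By the linearity of $\opT{}$ the normalization cancels, so $\opT{v_n}-v_n=\opT{}^{n+1}v_0-\opT{}^{n}v_0=:\Delta_n$, and by the convergence part of Lem.~\ref{lem:optimality.equation.T} we have $\Delta_n\to g^+e$.

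Then I would bound the gain error. Since $P_{d_n}^{*}$ is stochastic and fixes constant vectors, $\norm{g^+e-g^{\pi_n}}=\norm{P_{d_n}^{*}(g^+e-\Delta_n)}\le\norm{g^+e-\Delta_n}$, so it suffices to control $\norm{g^+e-\Delta_n}$. I would split this into a \emph{level} and a \emph{shape} contribution: writing $\rho_n:=\Delta_n(\wb s)$, the vector $\rho_n e-\Delta_n$ vanishes at $\wb s$ and hence has infinity-norm at most its span, giving $\norm{g^+e-\Delta_n}\le|g^+-\rho_n|+\SP{\Delta_n}$, where $\SP{\Delta_n}=\SP{v_{n+1}-v_n}\le\gamma^{n}\SP{v_1-v_0}$ by the span contraction of Lem.~\ref{lem:contraction}. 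For the level term I would use the \emph{monotonicity} of $\opT{}$: the McQueen/Odoni bracketing shows $\min_s\Delta_n(s)$ is nondecreasing and $\max_s\Delta_n(s)$ nonincreasing with both converging to $g^+$, so the per-step drift obeys $|\rho_{n+1}-\rho_n|\le\SP{\Delta_n}$ and summing the tail yields $|g^+-\rho_n|\le\frac{\gamma^{n}}{1-\gamma}\SP{v_1-v_0}$. Combining, $\norm{g^+e-g^{\pi_n}}\le\SP{v_{n+1}-v_n}+\frac{2\gamma^{n}}{1-\gamma}\SP{v_1-v_0}$, which is exactly the quantity tested by the stopping rule of \regopt; hence it is $\le\varepsilon$ at termination.

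For the second assertion I would instantiate the same analysis at the fixed point. Since $\opT{h^+}=h^++g^+e$ is span-$c$-bounded, $\SP{h^+}=\SP{\opT{h^+}}\le c$, so Asm.~\ref{asm:feasibility} applies at $h^+$ and $d^+:=\opG{h^+}$ satisfies $L_{d^+}h^+=\opT{h^+}=h^++g^+e$, i.e.\ $h^+=L_{d^+}h^+-g^+e$. If $\pi^+=(d^+)^\infty$ is unichain its gain is constant, and the Cesàro identity gives $g^{\pi^+}=P_{d^+}^{*}(L_{d^+}h^+-h^+)=P_{d^+}^{*}(g^+e)=g^+e$, so $\SP{g^{\pi^+}}=0$. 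Moreover $(g^+e,h^+)$ solves the evaluation equations~\eqref{eq:eval.equations} of $\pi^+$, and for a unichain policy the solution is unique up to an additive constant, so $h^{\pi^+}=h^++\lambda e$ and $\SP{h^{\pi^+}}=\SP{h^+}\le c$. Hence $\pi^+\in\PiC(M)$ is feasible for~\eqref{P:opt_probl_well_posed}, which forces $g^*_c\ge g^{\pi^+}=g^+$; combined with the dominance bound $g^+\ge g^*_c$ of Lem.~\ref{lem:optimality.equation.T} this gives $g^+=g^*_c$ and $\pi^+\in\PiC^*$.

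The main obstacle, and where I would be most careful, lies in the first assertion. The greedy rule $d_n$ may be \emph{multichain}, so $g^{\pi_n}$ need not be constant and cannot be read off a single optimality equation; the clean remedy is the Cesàro identity $g^{d^\infty}=P_d^{*}(L_dv-v)$, valid for arbitrary $d$, together with the infinity-norm nonexpansiveness of $P_d^{*}$. The second delicate point is matching the infinity-norm error to the stopping criterion: the \emph{shape} $\SP{\Delta_n}$ decays geometrically under the span contraction, but the \emph{level} $\rho_n$ converges to $g^+$ only because $\opT{}$ is monotone (pure span contraction alone does not control the level drift $\rho_{n+1}-\rho_n$), which is precisely the origin of the two terms in the stopping rule. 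Throughout, the invariant $\SP{v_n}\le c$ (and $\SP{h^+}\le c$) is what legitimizes invoking global feasibility; without it $\opT{v_n}$ would not be representable as $L_{d_n}v_n$ and the gain identity would break down.
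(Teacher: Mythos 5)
Your proposal is correct and follows essentially the same route as the paper's own proof (given there for the more general Thm.~\ref{thm:opT.optimality2}): the invariant $\SP{v_n}\leq c$ to legitimize global feasibility, the Ces\`aro-limit identity $g^{\pi_n}=P^*_{d_n}(\opT{v_n}-v_n)$ to handle possibly multichain greedy decision rules, a monotonicity-plus-contraction sandwich whose two terms match the stopping rule, and the identical evaluation-equation/unichain (Puterman Cor.~8.2.7) plus dominance argument for the second claim. The only difference is cosmetic, in how the level of $g^+$ is pinned down: the paper uses the fixed-point distance $\|h^+-v_n\|_{\infty}\leq\frac{\gamma^n}{1-\gamma}\SP{v_1-v_0}$ together with monotonicity, linearity and the optimality equation, whereas you use McQueen-style monotone bracketing of $\min_s$ and $\max_s$ of the successive differences $\opT{}^{n+1}v_0-\opT{}^{n}v_0$ and a telescoped drift sum, both resting on exactly the same three properties (monotonicity, linearity, $\gamma$-span contraction) of $\opT{}$.
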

% \begin{assumption}\label{asm:feasibility}
% Operator $\opT{}$ is globally feasible when applied to $h^+$, i.e., the decision rule $d^+ = \opG{h^+}$ is such that $\opT{h^+} = L_{d^+}{h^+} = \opN{h^+} = h^+ + g^+ e$.
% \end{assumption}
% 
% \begin{theorem}\label{thm:opT.optimality}
% Under Asm.~\ref{asm:L.contraction} and~\ref{asm:feasibility}, $g^+$ and the corresponding policy $\pi^+ = (d^+)^\infty$ are the optimal solution to the optimization problem~\eqref{P:opt_probl_well_posed}, i.e., $g^+ = g^*_c$ and $\pi^+ = \pi^*_c$. Furthermore, let\todorout{if space is needed, remove the explicit formulas and refer to appendix}
% %
%  \begin{align*}
%   M_n^+ &= \frac{2 \gamma^n}{1-\gamma} \SP{v_1 - v_0} + \max_s \left\lbrace \opT{v_n}(s) -v_n(s) \right\rbrace\\
%   m_n^+ &= \frac{- 2 \gamma^n}{1-\gamma} \SP{v_1 - v_0} + \min_s \left\lbrace \opT{v_n}(s) -v_n(s) \right\rbrace \\
%   M_n &= \max_s \left\lbrace \opN{v_n}(s) -v_n(s) \right\rbrace \\
%   m_n &= \min_s \left\lbrace \opN{v_n}(s) -v_n(s) \right\rbrace,
%  \end{align*}
% %
% and $d_n = \opG{v_n}$ be the decision rule computed after $n$ iterations and $\pi_n = (d_n)^\infty$ the corresponding policy. Then under Asm.~\ref{asm:L.contraction} and~\ref{asm:feasibility} we have
% %
%  \begin{align}\label{eq:stopping.criterion}
% \Big| g^+ - g^{\pi_n} \Big| \leq \max\left\{ M_n^+ - m_n, \; M_n-m_n^+ \right\}.
%  \end{align}
% %
% \end{theorem}

The first part of the theorem shows that the stopping condition used in Fig.~\ref{fig:regopt} ensures that \regopt returns an $\varepsilon$-optimal policy $\pi_n$.
Notice that while $\SP{h^+} = \SP{\opT{h^+}} \leq c$ by definition of $\opT$, in general when the policy $\pi^+ = (\opG{h^+})^\infty$ associated to $h^+$ is \emph{not unichain}, we might have $\SP{h^+} < sp\{h^{\pi^+}\}$. On the other hand, Corollary 8.2.7. of \citet{puterman1994markov} ensures that if $\pi^+$ is unichain then $\SP{h^+} = sp\{h^{\pi^+}\}$, hence the second part of the theorem. Notice also that even if $\pi^+$ is unichain, we cannot guarantee that $\pi_n$ satisfies the span constraint, \ie $\SP{h^{\pi_n}}$ may be arbitrary larger than $c$. Nonetheless, in the next section, we show that the definition of $\opT{}$ and Thm.~\ref{thm:opT.optimality} are sufficient to derive regret bounds when \regopt is integrated into \ucrl.

% While the first part of the theorem shows that \regopt converges to the solution of~\eqref{P:opt_probl_well_posed}, the second part shows that the stopping condition used in Fig.~\ref{fig:regopt} insures that \regopt returns an $\varepsilon$-optimal policy $\pi_n$.
% % Since the algorithm is guaranteed to converge, the stopping condition is met in finite time for any $\varepsilon>0$. 
% Finally, notice that while $\SP{h^+} \leq c$ by definition, in general we cannot guarantee that $\pi_n$ satisfies the span constraint \ie $\SP{h^{\pi_n}}$ may be arbitrary larger than $c$. Nonetheless, in the next section, we show that the definition of $\opT{}$ and Thm.~\ref{thm:opT.optimality} are sufficient to derive regret guarantees when \regopt is integrated to \ucrl.

% !TEX root = regal.tex
% \section{The \scal Algorithm}
\vspace{-0.05in}
\section{Learning with \scal}\label{sec:regret}
\vspace{-0.05in}
In this section we introduce \scal, an optimistic online RL algorithm that employs \regopt to compute policies that efficiently balance exploration and exploitation.
We prove that the assumptions stated in Sec.~\ref{ssec:guarantees} hold when \regopt is integrated into the optimistic framework.
Finally, we show that \scal enjoys the same regret guarantees as \regalc, while being the first implementable and efficient algorithm to solve bias-span constrained exploration-exploitation.

Based on Def.~\ref{def:opT}, we define $\wt{\opT{}}$ as the \emph{span truncation} of the optimal Bellman operator $\wt{L}$ of the bounded-parameter MDP $\wt{\mathcal{M}}_k$  (see Sec.~\ref{sec:problem.statement}). 
Given the structure of problem~\eqref{eq:optimistic.policy.relaxedregalc}, one might consider applying \regopt (using $\wt{\opT{}}$) to the extended MDP $\wt{\mathcal{M}}_k$.
Unfortunately, in general $\wt L$ does not satisfy Asm.~\ref{asm:L.contraction} and~\ref{asm:feasibility} and thus $\wt{\opT{}}$ may not enjoy the properties of Lem.~\ref{lem:optimality.equation.T} and Thm.~\ref{thm:opT.optimality}. To overcome this problem, we slightly modify $\wt{\mathcal{M}}_k$ as described in Def.~\ref{def:augmented.perturbed.mdp}.

\begin{definition}\label{def:augmented.perturbed.mdp}
Let $\wt{\mathcal{M}}$ be a bounded-parameter (extended) MDP. Let $1 \geq \eta >0$ and $\overline{s} \in \mathcal{S}$ an arbitrary state. We define the ``modified'' MDP $\wt{\mathcal{M}}^{\ddagger}$ associated to $\wt{\mathcal{M}}$ by\footnote{For any closed interval $[a,b] \subset \Re$, $\max \lbrace[a,b]\rbrace := b$ and $\min\lbrace[a,b]\rbrace := a$}
%: $\forall s \in \mathcal{S}$ and $\forall a \in \mathcal{A}_s$,
%        
 \begin{align*}
         B^{\ddagger}_r(s,a) &= [0, \max\{B_r(s,a)\}], \\
 B_p^{\ddagger}(s,a,s') &= \begin{cases}B_p(s,a,s')& \text{if}~~s'\neq \overline{s}, \\ B_p(s,a,\overline{s}) \cap [\eta, 1] &\text{otherwise}, \end{cases}
 \end{align*}
where we assume that $\eta$ is small enough so that: $B_p(s,a,\overline{s}) \cap [\eta, 1] \neq \emptyset$, $\sum_{s' \in \mathcal{S}} \min\{B_p^{\ddagger}(s,a,s')\} \leq 1$, and $\sum_{s' \in \mathcal{S}} \max\{B^{\ddagger}_p(s,a,s')\} \geq 1$.
 We denote by $\wt{L}^\ddagger$ the optimal Bellman operator of $\wt{\mathcal{M}}^{\ddagger}$ (cf.\ Eq.~\ref{eq:evi}) and by $\wt{T}_c^\ddagger$ the span truncation of $\wt{L}^\ddagger$ (cf.\ Def.~\ref{def:opT}).
\end{definition}

By slightly \emph{perturbing} the confidence intervals $B_p$ of the transition probabilities, we enforce that the ``\emph{attractive}'' state $\wb{s}$ is reached with non-zero probability from any state-action pair $(s,a)$ implying that the \emph{ergodic coefficient} of $\wt{\mathcal{M}}^{\ddagger}$
% A sufficient condition for having $\wt{L}$ contractive is that~\citep[Thm. 6.6.6]{puterman1994markov}
\[
\gamma = 1 - \min_{
        \substack{
s,u \in \mathcal{S},~ a, b \in \mathcal{A}\\
% a \in \mathcal{A}_s, b \in \mathcal{A}_u\\
\wt{p},~ \wt{q} \in B_p^\ddagger
}
} 
%\min_{\widetilde{p}_\eta, \overline{p}_\eta}
\left\lbrace \sum_{j \in \mathcal{S}}\underbrace{\min \left\{ \wt{p}(j|s,a),\wt{q}(j|u,b) \right\}}_{\geq \eta ~\text{if}~ j=\overline{s}} \right\rbrace
\]

\vspace{0.1in}
is smaller than $1-\eta <1$, so that $\wt{L}^\ddagger$ is $\gamma$-contractive~\citep[Thm. 6.6.6]{puterman1994markov}, \ie Asm.~\ref{asm:L.contraction} holds. Moreover, for any policy $\pi \in \SR(\wt{\mathcal{M}}^{\ddagger})$, state $\overline{s}$ necessarily belongs to all \emph{recurrent classes} of $\pi$ implying that $\pi$ is unichain and so $\wt{\mathcal{M}}^{\ddagger}$ is unichain. As is shown in Thm.~\ref{th:augmented.perturbed.mdp}, the $\eta$-perturbation of $B_p$ introduces a small bias $\eta c$ in the final gain. 
% As we can notice from the definition of $\gamma$, it is sufficient to perturb $B_p$ in order to create an ``attractive'' state $\wb{s}$ with non-zero probability of being reached from any pair $(s,a)$. In particular, this condition ensures that for any MDP $M \in \mathcal{M}_k$, $L$ is a span contraction.

By \emph{augmenting} (without perturbing) the confidence intervals $B_r$ of the rewards, we ensure two nice properties. First of all, for any vector $v\in \mathbb{R}^S$, $\wt{L}v = \wt{L}^\ddagger v$ and thus by definition $\wt{T}_cv = \wt{T}_c^\ddagger v$. Secondly, there exists a decision rule $\delta \in \MR(\wt{\mathcal{M}}^{\ddagger})$ such that $\forall s \in \calS$, $\wt{r}_\delta^\ddagger(s) = 0$ meaning that $sp\{\wt{L}_\delta^\ddagger v\} = sp\{\wt{P}_\delta^\ddagger v\}\leq \SP{v}$ \citep[Proposition 6.6.1]{puterman1994markov}. Thus if $\SP{v} \leq c$ then $sp\{\wt{L}_\delta^\ddagger v\} \leq c$ and so $\delta \in \wt{D}^\ddagger(c,v) \neq \emptyset$ which by Lem.~\ref{L:TRpolicyexistence} implies that $\wt{T}_c^\ddagger $ is globally feasible at $v$. Therefore, Asm.~\ref{asm:feasibility} holds in $\wt{\mathcal{M}}^{\ddagger}$. 
% can make $\wt{\opT{}}$ globally feasible at any $v \in \mathbb{R}^S$ \st $\SP{v} \leq c$ (Asm.~\ref{asm:feasibility}).
% In order to have Asm.~\ref{asm:feasibility} to hold in the extended MDP, we would like (similarly to Lem.~\ref{L:TRpolicyexistence}) the following condition to hold for each state $s\in \calS$:
% \[
%         \min_{a,\wt{r} \in B_r,\wt{p} \in B_p} \{\wt{r}(s,a)+\wt{p}(\cdot|s,a)^\transp v\} \leq \min_{s'} \{\wt{L}v(s')\}+c
% \]
% This is not true in general but by augmenting $B_r$ to always include the minimum reward (\ie zero), we can demonstrate that the fix point of $\wt{T}_c$ is not changed and the new ``augmented'' MDP always admits a (stochastic) policy associated to $\wt{T}_c v$ ($\SP{v} \leq c$). Thus, Asm.~\ref{asm:feasibility} holds for the ``augmented'' extended MDP.

When combining both the perturbation of $B_p$ and the augmentation of $B_r$ we obtain Thm.~\ref{th:augmented.perturbed.mdp} (proof in App.~\ref{app:augmented.perturbed.mdp}).
% We now formally state the properties of the modified operator $\wt{T}_c^\ddagger$ (proofs in App.~\ref{app:augmented.perturbed.mdp}).

% By creating an ``attractive'' state $\wb{s}$ with non-zero probability of being reached from any pair $(s,a)$, we ensure that the \emph{ergodic coefficient} $\gamma$~\citep[][Thm. 6.6.6]{puterman1994markov} is strictly less than $1$. This is a sufficient condition for $\wt{L}^\ddagger$ to be a $\gamma$-span contraction (\ie Asm.~\ref{asm:L.contraction} holds).
% On the other hand, the augmentation of the reward ensures that $D(c,v) \neq \emptyset$ for any $v$ \st $\SP{v} \leq c$, thus Asm.~\ref{asm:feasibility} holds by Lem.~\ref{L:TRpolicyexistence}.

\begin{theorem}\label{th:augmented.perturbed.mdp}
        Let $\wt{\mathcal{M}}$ be a bounded-parameter (extended) MDP and $\wt{\mathcal{M}}^{\ddagger}$ its ``modified'' counterpart (see Def.~\ref{def:augmented.perturbed.mdp}).
%         Let $1 \geq \eta >0$, $\overline{s} \in \mathcal{S}$ and consider the ``augmented'' and ``perturbed'' MDP $\wt{\mathcal{M}}^{\ddagger}$ defined $\forall s \in \mathcal{S}$ and $\forall a \in \mathcal{A}_s$ by:
%  \begin{align*}
%          B^{\ddagger}_r(s,a) &= [0, \max\{B_r(s,a)\}], \\
%  B_p^{\ddagger}(s,a,s') &= \begin{cases}B_p(s,a,s')& \text{if}~~s'\neq \overline{s} \\ B_p(s,a,\overline{s}) \cap [\eta, 1] &\text{otherwise} \end{cases},
%  \end{align*}
%  where we assume that $\eta$ is small enough so that: $B_p(s,a,\overline{s}) \cap [\eta, 1] \neq \emptyset$, $\sum_{s' \in \mathcal{S}} \min B_p^{\ddagger}(s,a,s') \leq 1$ and $\sum_{s' \in \mathcal{S}} \max B_p(s,a,s') \geq 1$.
%  Let $\wt{L}^\ddagger$ denote the optimal Bellman operator of $\wt{\mathcal{M}}^{\ddagger}$ (Eq.~\ref{eq:evi}) and $\wt{T}_c^\ddagger$ is the span truncation of $\wt{L}^\ddagger$ (Def.~\ref{def:opT}). 
 Then
 \begin{enumerate}
     %    \item $\forall v\in \Re^{S}$, $\| \wt{L} v - \wt{L}^\ddagger v \|_\infty \leq \SP{v}\eta$ and 
         \item $\wt{L}^\ddagger$ is a $\gamma$-span contraction with $\gamma \leq 1 - \eta <1$ (\ie Asm.~\ref{asm:L.contraction} holds) and thus Lem.~\ref{lem:optimality.equation.T} applies to $\wt{T}_c^\ddagger$. Denote by $(g^+,h^+)$ a solution to equation~\eqref{E:optimality_eq_average_T} for $\wt{T}_c^\ddagger$.
         \item $\wt{T}_c^\ddagger$ is globally feasible at any $v \in \mathbb{R}^S$ \st $\SP{v} \leq c$ (\ie Asm.~\ref{asm:feasibility} holds) and $\wt{\mathcal{M}}^{\ddagger}$ is unichain implying that $\pi^+ = \opG{h^+}$ is unichain. Thus Thm.~\ref{thm:opT.optimality} applies to $\wt{T}_c^\ddagger$.
         \item 
%          Let $(g^+,h^+) \in \Re \times \Re^S$ be a solution of the optimality equation~\eqref{E:optimality_eq_average_T} for $\wt{T}_c^\ddagger$. 
%                  Then, 
%                  $\wt{T}_c^\ddagger$ is globally feasible when applied to $h^+$
%          % there exists a policy $\mu \in \PiC(\wt{\mathcal{M}}^\ddagger)$ such that:
%          \[
%                  \exists \mu = \wt{G}_c(h^+), \; \wt{L}^{\ddagger}_\mu h^+ = \wt{T}_{c}^{\ddagger} h^+ 
%                  = \wt{N}_c^\ddagger h^+ 
%                  = h^+ + g^+ e
%          \]
%          and the gain $g^+$ is $(\eta c)$-optimistic \wrt $g^+(\wt{\mathcal{M}})$
        $
                 \forall \mu \in \PiC(\wt{\mathcal{M}}), \quad g^+ = g_c^*(\wt{\mathcal{M}}^\ddagger) \geq g^\mu(\wt{\mathcal{M}}) - \eta c.
        $
 \end{enumerate}
\end{theorem}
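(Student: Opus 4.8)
The plan is to establish the three claims by carefully tracking how the two separate modifications of $\wt{\mathcal{M}}$ — the perturbation of $B_p$ and the augmentation of $B_r$ — affect the Bellman operator, and then quantifying the bias introduced by the perturbation. The first two points are essentially already justified in the discussion preceding the theorem: point 1 follows from the ergodic-coefficient computation showing $\gamma \leq 1-\eta$ (invoking Thm.~6.6.6 of Puterman to get $\gamma$-contraction of $\wt{L}^\ddagger$), after which Lem.~\ref{lem:optimality.equation.T} applies directly to $\wt{T}_c^\ddagger$. Point 2 follows from the argument that augmenting $B_r$ to include $0$ yields a decision rule $\delta$ with zero reward in every state, so that $\SP{\wt{L}_\delta^\ddagger v} = \SP{\wt{P}_\delta^\ddagger v} \leq \SP{v} \leq c$ for any $v$ with $\SP{v}\leq c$, placing $\delta \in \wt{D}^\ddagger(c,v)$; by Lem.~\ref{L:TRpolicyexistence} this gives global feasibility. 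Combined with the observation that $\wb{s}$ lies in every recurrent class (so $\wt{\mathcal{M}}^\ddagger$ is unichain, forcing $\pi^+$ to be unichain), Thm.~\ref{thm:opT.optimality} applies.

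The real content is point 3, which I would split into two inequalities. \textbf{First, the equality} $g^+ = g_c^*(\wt{\mathcal{M}}^\ddagger)$ is exactly the conclusion of Thm.~\ref{thm:opT.optimality} under the hypotheses verified in points 1 and 2: since $\pi^+$ is unichain, the fixed-point gain $g^+$ of the optimality equation~\eqref{E:optimality_eq_average_T} coincides with the optimal span-constrained gain of problem~\eqref{P:opt_probl_well_posed} instantiated on $\wt{\mathcal{M}}^\ddagger$. \textbf{Second, the lower bound} $g_c^*(\wt{\mathcal{M}}^\ddagger) \geq g^\mu(\wt{\mathcal{M}}) - \eta c$ for every $\mu \in \PiC(\wt{\mathcal{M}})$ is where the perturbation cost must be paid. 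I would fix such a $\mu$, with $\SP{g^\mu_{\wt{\mathcal{M}}}}=0$ and $\SP{h^\mu_{\wt{\mathcal{M}}}} \leq c$, and use it to build a comparable policy in $\wt{\mathcal{M}}^\ddagger$. The key estimate is that moving from $\wt{P}_d$ to the perturbed kernel $\wt{P}^\ddagger_d$ shifts at most $\eta$ of the transition mass toward $\wb{s}$, and since the bias $h^\mu$ has span at most $c$, redirecting $\eta$ mass changes the one-step value $\wt{P}_d h^\mu$ by at most $\eta \cdot \SP{h^\mu} \leq \eta c$. Propagating this single-step error through the evaluation equations~\eqref{eq:eval.equations} (the gain is an average of one-step rewards under the stationary distribution, so the per-step $\eta c$ error does not compound) yields a gain degradation of at most $\eta c$.

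\textbf{The main obstacle} will be handling the feasibility of $\mu$ (or a suitable modification of it) as a \emph{span-constrained, constant-gain} policy in the perturbed MDP $\wt{\mathcal{M}}^\ddagger$, so that it is an admissible competitor in $\PiC(\wt{\mathcal{M}}^\ddagger)$ and hence lower-bounds $g_c^*(\wt{\mathcal{M}}^\ddagger)$. The perturbation can in principle push the bias span slightly above $c$ or break constant gain, so the cleanest route is probably to argue at the level of the operator rather than a fixed policy: show that $\wt{T}_c^\ddagger h^\mu \geq \wt{T}_c h^\mu - \eta c\, e$ pointwise (the reward augmentation only increases values, and the transition perturbation costs at most $\eta c$ by the span bound on $h^\mu$), combine with $\wt{T}_c h^\mu \geq g^\mu e + h^\mu$ coming from $\mu$ being span-constrained with constant gain in $\wt{\mathcal{M}}$, and then iterate $\wt{T}_c^\ddagger$ to exploit the monotonicity and the additive-constant-commuting properties established in App.~\ref{app:operators}. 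Passing to the limit via point~2 of Lem.~\ref{lem:optimality.equation.T} converts this one-step domination into the gain inequality $g^+ \geq g^\mu - \eta c$, completing the proof.
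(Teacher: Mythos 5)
Your proposal is correct and takes essentially the same route as the paper's proof: points 1--2 are discharged exactly as in the paper (contraction from the ergodic coefficient, global feasibility from the zero-reward decision rule, unichainness from the attractor state $\wb{s}$), and point 3 rests on the same three ingredients the paper uses --- the one-step perturbation bound $\|\wt{L}v-\wt{L}^\ddagger v\|_\infty \leq \eta \SP{v} \leq \eta c$ at span-bounded vectors, the fact that $\mu\in\PiC(\wt{\mathcal{M}})$ gives $\wt{T}_c h^\mu \geq h^\mu + g^\mu e$ via Lem.~\ref{L:TRpolicyexistence}, and the Cesaro-limit argument from point 2 of Lem.~\ref{lem:optimality.equation.T}. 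The only cosmetic difference is that you fold everything into a single induction $(\wt{T}_c^\ddagger)^n h^\mu \geq h^\mu + n(g^\mu-\eta c)e$ using monotonicity and linearity, whereas the paper separately tracks the two sequences $(\wt{T}_c^{\downarrow})^n h^\mu$ and $(\wt{T}_c^{\eta,\downarrow})^n h^\mu$, bounds their $\ell_\infty$ distance by $n\eta c$, and then combines with $(\wt{T}_c)^n h^\mu \geq h^\mu + n g^\mu e$; the two bookkeeping schemes are interchangeable.
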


\scal (cf.\ Fig.~\ref{fig:ucrl.constrained}) is a variant of UCRL that applies \regopt (instead of \evi, see Eq.~\ref{eq:evi}) on the bounded parameter MDP~$\wt{\mathcal{M}}_k^{\ddagger}$ (instead of $\wt{\mathcal{M}}_k$, cf.\ step 2 in Fig.~\ref{fig:ucrl.constrained}) in each episode~$k$ to solve the optimization problem
\begin{align}\label{eq:optimistic.policy.scal}
        \max_{M\in\wt{\mathcal{M}}^{\ddagger}_k, \pi\in\PiC(M)} g^\pi_{M},
\end{align}
whose maximum is denoted by $g_c^*(\wt{\mathcal{M}}^\ddagger_k)$. The intervals $B_p^{\ddagger}$ of $\wt{\mathcal{M}}_k^{\ddagger}$ are constructed using parameter\footnote{Notice that given that $\beta_{p,k}^{sa} \geq \eta_k$ for all $(s,a) \in \calS \times \mathcal{A}$ (see definition in Sec.~\ref{sec:problem.statement}), the assumptions of Def.~\ref{def:augmented.perturbed.mdp} hold trivially.}  $\eta_k = \rmaxbound/(c\cdot t_k)$ and an arbitrary attractive state $\overline{s} \in \mathcal{S}$. \regopt is run at step~3 in Fig.~\ref{fig:ucrl.constrained} with an initial value function $v_0 = 0$, the same reference state $\wb s$ used for the construction of $B_p^{\ddagger}$, contraction factor $\gamma_k =1-\eta_k$, and accuracy $\varepsilon_k = \rmaxbound/\sqrt{t_k}$. \regopt finally returns an optimistic (nearly) optimal policy satisfying the span constraint. This policy is executed until the end of the episode. %Another difference with \ucrl is the fact that the policy executed at each episode may be stochastic but this does not affect the stopping condition of episodes.

Thm.~\ref{th:augmented.perturbed.mdp} ensures that the specific instance of problem~\eqref{eq:optimistic.policy.relaxedregalc} for \scal (i.e., problem~\eqref{eq:optimistic.policy.scal}) is well defined and admits a maximizer $\pi_c^*(\wt{\mathcal{M}}_k^\ddagger)$ that can be efficiently computed using \regopt. Moreover, up to an accuracy $\eta_k \cdot c = \rmaxbound/t_k$, policy $\pi_c^*(\wt{\mathcal{M}}_k^\ddagger)$ is still optimistic \wrt all policies in the set of constrained policies $\Pi_c(\wt{\mathcal{M}}_k)$ for the \emph{initial} extended MDP. Since the true (unknown) MDP $M^*$ belongs to $\mathcal{M}_k$ with high probability, under the assumption that $\SP{h^{*}_{M^*}}\leq c$, $g_c^*(\wt{\mathcal{M}}_k^\ddagger) \geq g^*_{M^*}-\rmaxbound/t_k$. As briefly mentioned in Sec.~\ref{sec:planning}, in practice \regopt can only output an approximation $\wt{\mu}_k$ of $\pi_c^*(\wt{\mathcal{M}}_k^\ddagger)$ and we have no guarantees on $\SP{h^{\wt{\mu}_k}}$. However, the regret proof of \scal only uses the fact that $\SP{v_n}\leq c$ and this is always satisfied by definition of $\wt{T}_c^\ddagger$.
We are now ready to prove the following regret bound (see App.~\ref{app:regret.reg.ucrl}).
% and the solution $g^+(\wt{\mathcal{M}}^\ddagger)$ is still $(\eta c)$-optimistic. For a formal definition of $\wt{G}_c$ we remand the reader to App.~\ref{}. We have the following guarantees for \scal.

\begin{theorem}\label{thm:regret.scal}
 For any \emph{weakly communicating} MDP $M$ such that $\SP{h^{*}_M}\leq c$, with probability at least $1-\delta$ it holds that for any $T\geq 1$, the regret of \scal is bounded as
 \begin{align*}
         \Delta(\scal,T) = \mathcal{O} \left( \max \lbrace \rmaxbound, c \rbrace \sqrt{\nextstates S A T \ln \left( \frac{T}{\delta} \right)} \right),
 \end{align*}
% where $D$ is the diameter\footnote{The diameter is formally defined in \citep[Definition 1]{Jaksch10}. Since $M$ is weakly communicating, $D$ can be infinite.} of $M$ and 
 where $\nextstates= \max_{s\in\calS,a\in\A}\|p(\cdot|s,a)\|_0  \leq S$ is the maximal number of states that can be reached from any state.
%  \begin{align*}
%  \nextstatesnextstates = \max_{s,a \in \mathcal{S} \times \mathcal{A}} \big|\lbrace s'\in \mathcal{S}:~ {p}(s'|s,a)>0 \rbrace \big| 
% \end{align*}
\end{theorem}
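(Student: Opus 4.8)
The plan is to follow the episodic regret decomposition of \citet{Jaksch10}, but to substitute the span bound $c$ for the diameter $D$ wherever the latter would otherwise appear, exploiting the fact that \emph{every} value vector produced by \regopt satisfies $\SP{v_n}\le c$ by construction of $\wt{T}_c^\ddagger$. First I would fix the \emph{good event} $\Omega$ on which $M^*\in\mathcal{M}_k$ for all episodes $k$ simultaneously: this is a union bound over the empirical Bernstein confidence intervals $\beta_{r,k}^{sa}$ and $\beta_{p,k}^{sas'}$, and $\mathbb{P}(\Omega)\ge 1-\delta/\text{const}$ given the choice $b_{k,\delta}=\ln(2SAt_k/\delta)$. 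On $\Omega$, optimism holds up to the perturbation bias: since $\SP{h^*_{M^*}}\le c$ and $M^*\in\mathcal{M}_k$, an optimal policy of $M^*$ lies in $\PiC(\wt{\mathcal{M}}_k)$, so part~3 of Thm.~\ref{th:augmented.perturbed.mdp} gives $g^+_k := g_c^*(\wt{\mathcal{M}}_k^\ddagger)\ge g^*_{M^*}-\eta_k c = g^*_{M^*}-\rmaxbound/t_k$.

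Next I set up the per-step identity. Let $w_k$ be the vector returned by \regopt in episode $k$ and $d_k=\opG{w_k}$ the constrained greedy decision rule. Because $\SP{w_k}\le c$, Asm.~\ref{asm:feasibility} (valid here by Thm.~\ref{th:augmented.perturbed.mdp}) makes $\wt{T}_c^\ddagger$ globally feasible at $w_k$, so $\wt{T}_c^\ddagger w_k = \wt{L}^\ddagger_{d_k}w_k$, while Lem.~\ref{lem:optimality.equation.T} and the stopping rule give $\|\wt{T}_c^\ddagger w_k - w_k - g^+_k e\|_\infty\le\varepsilon_k$. Combining this with optimism, for a step $t$ of episode $k$ (writing $p^*=p(\cdot|s_t,a_t)$ for the true transition) the instantaneous regret decomposes as
\begin{align*}
g^*_{M^*}-r_t &\le \big[\wt{r}_k^\ddagger(s_t,a_t)-r_t\big] + \big[\wt{p}_k^\ddagger(\cdot|s_t,a_t)-p^*\big]^\transp w_k \\
&\quad + \big[(p^*)^\transp w_k - w_k(s_t)\big] + \varepsilon_k + \rmaxbound/t_k,
\end{align*}
where the identity holds in conditional expectation over $a_t\sim d_k(s_t)$ and the randomization of $d_k$ contributes an extra martingale term with increments bounded by $\SP{w_k}\le c$.

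Then I bound each contribution summed over all $T$ steps. The reward gaps are controlled by the Bernstein width, $\wt{r}_k^\ddagger(s_t,a_t)-r_t\le 2\beta^{s_ta_t}_{r,k}$ on $\Omega$, giving $O(\rmaxbound\sqrt{SAT\ln(T/\delta)})$. For the transition term I center $w_k$ by $\tfrac12(\max_s w_k(s)+\min_s w_k(s))\,e$ — admissible since $\wt{p}_k^\ddagger$ and $p^*$ are probability vectors — so each coordinate of the centered vector has magnitude at most $c/2$; then $|\wt{p}_k^\ddagger(s'|s_t,a_t)-p^*(s')|\le 2\beta^{s_ta_ts'}_{p,k}+\eta_k\,\mathbbm{1}\{s'=\wb s\}$, and Cauchy–Schwarz over the $\nextstates$ reachable next-states turns the Bernstein variance terms into a factor $\sqrt{\nextstates}$, yielding $O(c\sqrt{\nextstates SAT\ln(T/\delta)})$. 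The telescoping term splits into an Azuma martingale with increments bounded by $\SP{w_k}\le c$, contributing $O(c\sqrt{T\ln(1/\delta)})$, plus episode-boundary terms $O(cK)$ where $K=O(SA\log T)$ is controlled by the doubling rule $\nu_k\le\max\{1,N_k\}$. Finally $\sum_t(\varepsilon_{k(t)}+\rmaxbound/t_{k(t)})$ and the perturbation residual $\sum_t \eta_{k(t)}c\cdot(c/2)$ are lower-order, $\widetilde{O}(\rmaxbound\sqrt{T})$ and $O(\rmaxbound\log T)$ respectively.

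Collecting the dominant terms gives the claimed $O(\max\{\rmaxbound,c\}\sqrt{\nextstates SAT\ln(T/\delta)})$ bound. The main obstacle, and the place where the analysis genuinely departs from \ucrl, is the transition term: I must justify using the \emph{single} span-constrained vector $w_k$ in the per-step identity despite \regopt returning only an $\varepsilon_k$-approximation and despite the $\eta_k$-perturbation of $B_p$ at $\wb s$, and then extract the $c\sqrt{\nextstates}$ scaling from the Bernstein widths via span-centering and Cauchy–Schwarz, all while keeping every approximation and perturbation residual strictly lower order. The remaining bookkeeping — the per-pair sums $\sum_k\nu_k(s,a)/\sqrt{N_k(s,a)}\lesssim\sqrt{N(s,a)}$ together with $\sum_{s,a}\sqrt{N(s,a)}\le\sqrt{SAT}$, and the episode count — is routine once the per-step bound is in place.
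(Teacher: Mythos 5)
Your proposal is correct and follows essentially the same route as the paper's own proof: a Jaksch-style episodic decomposition, optimism via part~3 of Thm.~\ref{th:augmented.perturbed.mdp}, the near-fixed-point property plus global feasibility of $\wt{T}_c^\ddagger$ at the span-constrained vector $w_k$ to obtain a policy-evaluation form, span-centering of $w_k$ (so $\|w_k\|_\infty \le c/2$) combined with empirical Bernstein widths and Cauchy--Schwarz over the support to extract the $c\sqrt{\nextstates}$ factor, and Azuma plus pigeonhole arguments for the martingale and counting terms. The only (harmless) deviations are bookkeeping: the paper notes that the $\eta_k$-perturbation only \emph{shrinks} $B_p^k(s,a,\wb{s})$, so no extra $\eta_k\mathbbm{1}\{s'=\wb{s}\}$ term is needed in the transition bound, and it controls failing confidence regions by bounding their regret contribution \`a la \citet{Jaksch10} rather than by conditioning on a global good event.
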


The previous bound shows that when $c \leq \rmaxbound D$, \scal scales linearly with $c$, while \ucrl scales linearly with $\rmaxbound D$ (all other terms being equal).
Notice that the gap between $\SP{h^{*}}$ and $D$ can be arbitrarily large, and thus the improvement can be significant in many MDPs. As an extreme case, in weakly communicating MDPs the diameter can be infinite, leading \ucrl to suffer linear regret, while \scal is still able to achieve sub-linear regret. However when $c > \rmaxbound D$, given that the true MDP $M^*$ may not belong to $\mathcal{M}_k^\ddagger$, we cannot guarantee that the span of the value function $v_n$ returned by \regopt is bounded by $\rmaxbound D$. Nevertheless, we can slightly modify \scal to address this case: at the beginning of any episode $k$, we run both \regopt (with the same inputs) and \evi (as in \ucrl) in parallel and pick the policy associated to the value with smallest span. With this modification, \scal enjoys the best of both worlds, \ie the regret scales with $\min\lbrace \max\{\rmaxbound,c\}, \rmaxbound D \rbrace$ instead of $c$.
% When $c$ is poorly chosen ($c > \rmaxbound D$),  \scal performs exactly as \ucrl \ie although $D$ is unknown, no additional regret is suffered from trying to leverage the span constraint.
%
When $c$ is wrongly chosen ($c < \SP{h^*_{M^*}}$), \scal converges to a policy in $\Pi^*_c(M^*)$ which can be arbitrarily worse than the true optimal policy in $M^*$.
For this reason we cannot prove a regret bound in this scenario.
Finally, notice that the benefit of \scal over \ucrl comes at a negligible additional computational cost.

% !TEX root = regal.tex
\section{Numerical Experiments}

\begin{figure*}[t]
        %\vspace{-0.1in}
        \centering
    \begin{minipage}{0.34\textwidth}
        \subfigure[]{
                \begin{tikzpicture}
                        \begin{scope}[scale=0.85]
	\tikzset{VertexStyle/.style = {draw, 
									shape          = circle,
	                                text           = black,
	                                inner sep      = 2pt,
	                                outer sep      = 0pt,
	                                minimum size   = 24 pt}}
	\tikzset{VertexStyle2/.style = {shape          = circle,
	                                text           = black,
	                                inner sep      = 2pt,
	                                outer sep      = 0pt,
	                                minimum size   = 14 pt}}
	\tikzset{Action/.style = {draw, 
                					shape          = circle,
	                                text           = black,
	                                fill           = black,
	                                inner sep      = 2pt,
	                                outer sep      = 0pt}}
	                                 
    % Node of the MDP
	\node[VertexStyle](s0) at (0,0) {$ s_{0} $};
	\node[Action](a0s0) at (.7,.7){};
	\node[VertexStyle](s1) at (2.5,0){$s_1$};
	\node[Action](a0s1) at (1.5,0.){};
	\node[VertexStyle](s2) at (5,0){$s_2$};
	\node[Action](a0s2) at (4.3,-0.7){};
	\node[Action](a1s2) at (5,1){};
    
	% Edges of the MDP
	\draw[->, >=latex, double, color=red](s0) to node[midway, right]{{\small $a_0$}} (a0s0);
    \draw[->, >=latex](a0s0) to [out=30,looseness=0.8] node[midway, xshift=1em]{$\delta$} (s1);   
	\draw[->, >=latex](a0s0) to [out=30,in=120,looseness=0.8] node[above]{$1-\delta$} (s2);

	\draw[->, >=latex, double, color=red](s1) to node[above,xshift=0.2em]{{\small $a_0$}} (a0s1);
	\draw[->, >=latex](a0s1) to (s0);
    
	\draw[->, >=latex, double, color=red](s2) to node[midway, left]{{\small $a_0$}} (a0s2);
    \draw[->, >=latex](a0s2) to [out=210, in=330, looseness=0.8] node[midway,xshift=-1em]{$\delta$} (s1);   
	\draw[->, >=latex](a0s2) to [out=210, in=300, looseness=0.8] node[above]{$1-\delta$} (s0);
	\draw[->, >=latex, double, color=red](s2) to node[midway, right]{{\small $a_1$}} (a1s2);
	\draw[->, >=latex](a1s2) to [out=30, in=30, looseness=1.2] (s2);

    % Define reward
    \node [above, yshift=0.2em] at (a0s0) {\scriptsize $r=0$};
    \node [below, xshift=-0.3em] at (a0s1) {\scriptsize $r \sim Be\left(\frac{1}{3}\right)$};
    \node [below, yshift=-0.2em, xshift=1em] at (a0s2) {\scriptsize $r \sim Be\left(\frac{2}{3}\right)$};
    \node [above] at (a1s2) {\scriptsize $r \sim Be\left(\frac{2}{3}\right)$};
    
    \end{scope}
    
\end{tikzpicture}
\label{F:toy3d}
        }\\
        
        \vspace{-0.8em}
        \centering
        \subfigure[]{
        \includegraphics[width=0.945\textwidth]{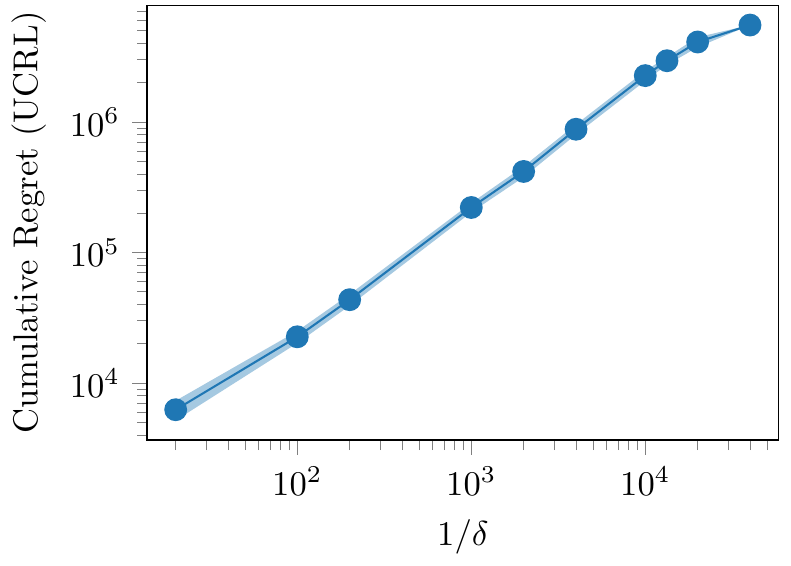}
        \label{F:delta_effect_UCRL}
        }
        \vspace{-0.15in}
        \caption{\textit{(upper)} Simple three-state domain. \textit{(lower)} Cumulative regret incurred by UCRL after $T=2.5\cdot 10^7$ steps as a function of the diameter $D \approx 1/\delta$ (averaged over $20$ runs). }
\end{minipage}
\hfill
        \begin{minipage}{0.65\textwidth}
        \centering
        \begin{tikzpicture}
                \begin{scope}[local bounding box=scope1]
                        \node at (0,0) {
                                \includegraphics[width=0.45\textwidth]{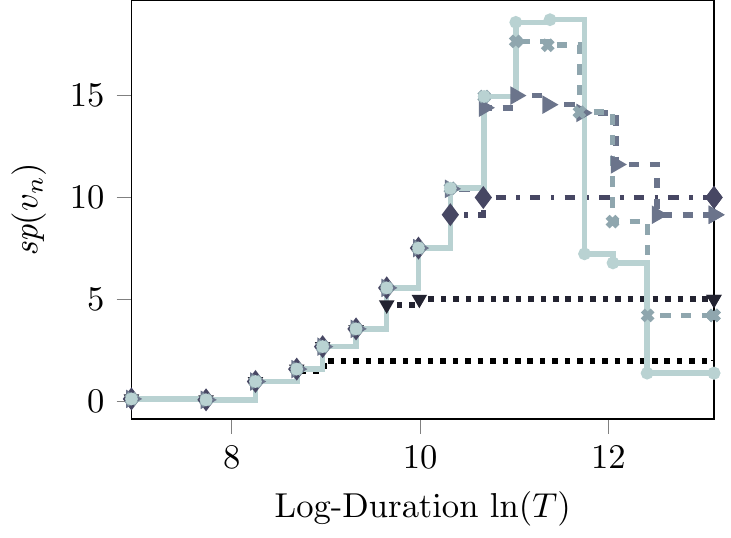}
                        };
                        \node at (5.6,0.15) {
                                \includegraphics[width=0.45\textwidth]{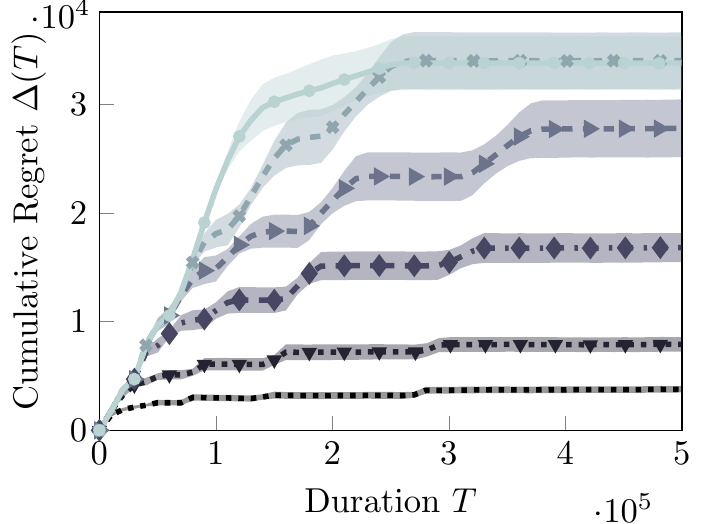}
                        };
                \end{scope}
                \begin{scope}[shift={($(scope1.south west)+(2,-2)$)}]
                        \node at (0.45,0) {
                                \includegraphics[width=0.477\textwidth]{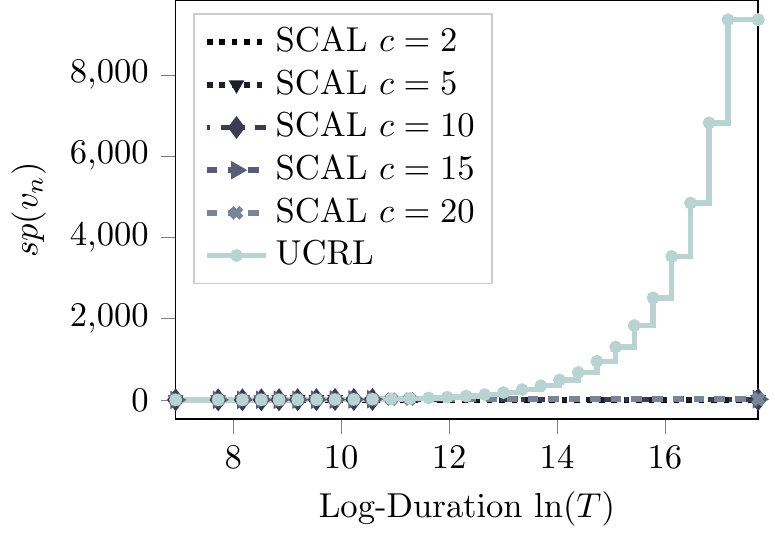}
                        };
                        \node at (6.18,0.15) {
                                \includegraphics[width=0.46\textwidth]{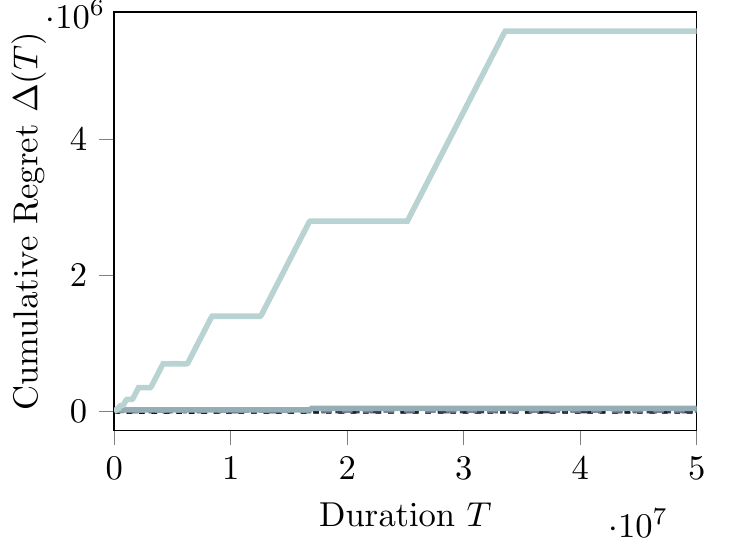}
                        };
                \end{scope}
                \draw (-3,2.1) rectangle (8.1,-1.8);
                \draw (8.1,-1.9) rectangle (-3,-5.8);
                \node at (-2.7, -1.6) {(a)};
                \node at (-2.7, -5.6) {(b)};
        \end{tikzpicture}
%        \subfigure[]{
%        \includegraphics[width=0.45\textwidth]{./pics/toy3d/bernstein/SPAN_Toy3D_BERN_pure.pdf}
%        \label{F:3d_span}
%    }
%        \subfigure[]{
%        \includegraphics[width=0.45\textwidth]{./pics/toy3d/bernstein/REGRET_Toy3D_BERN_pure.pdf}
%        \label{F:3d_regret}
%    }
%        \subfigure[]{
%        \includegraphics[width=0.45\textwidth]{./pics/toy3d/bernstein/SPAN_Toy3D_BERN_zero.pdf}
%        \label{F:3d_span_zero}
%    }
%        \subfigure[]{
%        \includegraphics[width=0.45\textwidth]{./pics/toy3d/bernstein/REGRET_Toy3D_BERN_zero.pdf}
%        \label{F:3d_regret_zero}
%    }
    \vspace{-0.3in}
    \caption{Results in the three-states domain with $\delta=0.005$ \textit{(top)} and $\delta = 0$ \textit{(bottom)}. We report the span of the optimistic bias \textit{(left)} and the cumulative regret \textit{(right)} as a function of $T$. 
            Results are averaged over $20$ runs and $95\%$ confidence intervals are shown.
    % This shows that the knowledge of a ``good'' upper-bound on the bias span can be turned into smaller regret.
    }
    \label{F:3dresults}
    \end{minipage}
    \vspace{-0.15in}
\end{figure*}

% \begin{figure}[t]
%         \centering
%         \subfigure[]{
%         \includegraphics[width=0.9\columnwidth]{./pics/SPAN_Toy3D_CHEB_pure.png}
%         \label{F:3d_span}
%     }
%     \subfigure[]{
% \includegraphics[width=0.9\columnwidth]{./pics/REGRET_Toy3D_CHEB_pure.png}
%         \label{F:3d_regret}
% }
%     \caption{
%         Bias span $sp\{h_t\}$ for UCRL and \scal with different constraint values.
% }
% \end{figure}
%

% \todoaout{The ``mixing'' argument may be a bit difficult to grasp...}
In this section, we numerically validate our theoretical findings. 
The code is available on~\href{https://github.com/RonanFR/UCRL}{GitHub}.
In particular, we show that the regret of \ucrl indeed scales linearly with the diameter, while \scal achieves much smaller regret that only depends on the span. This result is even more extreme in the case of non-communicating MDPs, where $D=\infty$.
Consider the simple but descriptive three-state domain shown in Fig.~\ref{F:toy3d} (results in a more complex domain are reported in App.~\ref{app:experiments}).
In this example, the learning agent only has to choose which action to play in state~$s_2$ (in all other states there is only one action to play).
The rewards are distributed as Bernoulli with parameters shown in Fig.~\ref{F:toy3d} and $\rmaxbound = 1$.
The optimal policy~$\pi^*$ is such that $\pi^*(s_2)=a_1$ with gain $g^* = \frac{2}{3}$ and bias $h^* = \Big[ \frac{-2-\delta}{3(1-\delta)}, \frac{-1}{1-\delta}, 0\Big]$.
% $h^* = [0, -g^*, \frac{1+\delta}{1-\delta} g^*] + \nu \boldsymbol{e}$, $\nu \in \mathbb{R}$.
If $\delta$ is small, $\SP{h^*}=\frac{1}{1-\delta} \approx 1$, while $D \approx \frac{1}{\delta}$.
Fig.~\ref{F:delta_effect_UCRL} shows that, as predicted by theory, the regret of \ucrl (for a fixed horizon $T$) grows linearly with $\frac{1}{\delta}\approx D$. The optimal bias span however is roughly equal to $1$. Therefore, we expect \scal to clearly outperform \ucrl on this example.
% In this section we analyze the impact of the transition probability $\delta$ on the algorithms.
In all the experiments, we noticed that perturbing the extended MDP was not necessary to ensure convergence of \regopt and so we set $\eta_k=0$.
We also set $\gamma_k = 0$ to speed-up the execution of \regopt (see stopping condition in Fig.~\ref{fig:regopt}).
% and we used Hoeffding's bound \todorout{Do we run again the algo with Bernstein's bounds?} for the confidence intervals.

\textbf{Communicating MDPs.} 
We first set $\delta=0.005>0$, giving a communicating MDP.
% We start by considering the value $\delta=10^{-4}$ which leads to a communicating MDP with a diameter $D\approx\frac{1}{\delta} =10^4$ and a bias span $sp\{h^*\} \approx 1$.
%This case is designed to show the weakness of UCRL.
With such a small $\delta$, visiting state $s_1$ is rather unlikely.
% The difficulty of this domain resides in the low probability of visiting state $s_1$ (the time required to shrink the confidence intervals is proportional to the diameter).
Nonetheless, since \ucrl is based on the OFU principle, it keeps trying to visit $s_1$ (\ie play $a_0$ in $s_2$) until it collects enough samples to understand that $s_1$ is actually a \emph{bad} state (before that, \ucrl \emph{``optimistically''} assumes that $s_1$ is a \emph{highly rewarding} state). Therefore, \ucrl plays $a_0$ in $s_2$ for a long time and suffers large regret. This problem is particularly challenging for any learning algorithm solely employing \emph{optimism} like \ucrl (cf.\ \citep{DBLP:journals/mima/Ortner08} for a more detailed discussion on the intrinsic limitations of optimism in RL).
% This means, that OFU principle will force the algorithms (both UCRL and \scal) to move toward $s_1$, that will be believed to be the most rewarding state for long time (in lack of information OFU believes that $s_1$ is almost an $\rmaxbound$ loop).
In contrast, \scal is able to mitigate this issue when an appropriate constraint $c$ is used.
% However, by selecting an appropriate constraint $c$, \scal is able to mitigate this issue. 
More precisely, whenever $s_1$ is believed to be the most rewarding state, the value function (bias) is maximal in $s_1$ and \regopt applies a ``truncation'' in that state and ``mixes'' deterministic actions. 
% When $h^{\pi_t}$ does not satisfy the constraint $c$, the state with highest value will be almost always $s_1$ (this is not true only in the early iterations). At this point, the value in $s_1$ will be truncated.
In other words, \scal leverages on the prior knowledge of the optimal bias span to understand that $s_1$ cannot be as good as predicted (from optimism).
% The high-level meaning of the truncation is that the state cannot be has good as predicted (due to the prior knowledge on the optimal bias span). 
The exploration of the MDP is greatly affected as \scal quickly discovers that action $a_0$ in $s_2$ is suboptimal.
% This directly affects the exploration of the MDP and allows to quickly discover that action $a_1$ in $s_2$ that is better than $a_0$, because $s_1$ cannot be as good as promised by the OFU principle. 
Therefore, \scal is always performing better than UCRL (Fig.~\ref{F:3dresults}(a)) and the smaller $c$, the better the regret.
%Note that flat regions of the regret correspond to episodes playing the optimal policy (\ie action $a_1$ in $s_2$), while linear ones correspond to playing $a_0$ in $s_2$.
% This can be observed in Fig.~\ref{F:test0.0001} by noticing that flat region of the regret correspond to executing the optimal policy, while linear ones correspond to executing $a_0$ in $s_1$.
Surprisingly the \emph{actual} policy played by \scal in this particular MDP is always deterministic. \regopt mixes actions in $s_1$ where only one \emph{true} action is available but the mixing happens in the \emph{extended} MDP $\wt{\mathcal{M}}^{\ddagger}_k$ where the action set is compact. The policy that \regopt outputs is thus \emph{stochastic} in the \emph{extended} MDP but \emph{deterministic} in the \emph{true} MDP.
% One might be surprised by the fact that the policy mixes in $s_1$ where only one action is available. However, we want to stress that the mixing happens in the extended (bounded) MDP where the action set is compact. The resulting policy is in practice selecting action $a_0$ in $s_1$ w.p. $1$.
% It is also worth noticing that in practice, the span of the value function returned by \evi is high but much smaller than the theoretical upper-bound $D$ (see Fig.~\ref{F:3dresults}(a)). 
% Fig.~\subref{F:3d_regret} shows that \scal is always performing better or equal than UCRL (as predicted by theory) and the gain increases as $c$ decreases.
% Notice that theory states that the regret of UCRL should scale with the diameter ($D \gg sp\{h^*\}$). In practice, UCRL recovers policies with high bias span (almost $30$ times the true span, Fig.~\ref{F:3d_span}) but it avoids to visits policies with extreme spans. 
% We have empirically observed that the dependence of the regret w.r.t the diameter is almost linear for UCRL (see Fig.~\ref{F:delta_effect_UCRL}).
%

\textbf{Infinite Diameter.} By selecting $\delta =0$ the diameter becomes infinite ($D = +\infty$) but the MDP is still \emph{weakly} communicating (with transient state~$s_1$). %\footnote{State $s_1$ is transient under all policies.}
UCRL is not able to handle this setting and suffers linear regret.
% ($sp\{h^{\pi_t}\} \to \infty$ as $t \to \infty$).
%
On the contrary, \scal is able to quickly recover the optimal policy (see Fig.~\ref{F:3dresults}(b) and App.~\ref{app:experiments}).
%
%\paragraph{Computational Complexity.} The truncation step executed by $\wt{T}_c^{\ddagger}$ after applying operator $\wt{L}$ is computationally very cheap (compared to computing $\wt{L}$). Once the stopping condition is met, the stochastic policy can be computed efficiently by taking the greedy action everywhere except in states where the truncation occured. For such \emph{``truncated''} states, the action picked is just a Bernoulli between the maximal (\ie greedy) action (``success'') and minimal action (``failure''). The interested reader can refer to App.~\ref{sec:T.policy} for more details on how to efficiently compute the parameter of the Bernoulli. In conclusion, the computational complexities of \regopt and \evi are comparable.

% !TEX root = regal.tex
\vspace{-0.05in}
\section{Conclusion}
\vspace{-0.05in}

In this paper we introduced \scal, a \ucrl-like algorithm that is able to efficiently balance exploration and exploitation in any \emph{weakly communicating} MDP for which a finite bound $c$ on the optimal bias span $\SP{h^*}$ is known.
While \ucrl exclusively relies on \emph{optimism} and uses \evi to compute the exploratory policy, \scal leverages the knowledge of $c$ through the use of \regopt, a new planning algorithm specifically designed to handle constraints on the bias span.
We showed both theoretically and empirically that \scal achieves smaller regret than \ucrl. 
%When $c \leq \rmaxbound D$, \scal scales linearly with $c$ while \ucrl scales linearly with $\rmaxbound D$ (all other terms being equal). As a consequence, unlike \ucrl, \scal is able to achieve sub-linear regret even when the diameter is infinite. On the other hand, when $c > \rmaxbound D$, \scal performs exactly as \ucrl \ie although $D$ is not known, no additional regret is suffered from trying to leverage on $c$.
%%
%Moreover, the benefit of \scal over \ucrl comes at a negligible additional computational cost.
%
Although \scal was inspired by \regalc, it is the only \emph{implementable} approach so far. Therefore, this paper answers the long-standing open question of whether it is actually possible to design an \emph{algorithm} that does not scale with the diameter $D$ in the worst case.
Moreover, \scal paves the way for implementable algorithms able to learn in an MDP with \emph{continuous} state space. Indeed, existing algorithms achieving regret guarantees in this framework~\citep{DBLP:journals/corr/abs-1302-2550,pmlr-v37-lakshmanan15} all rely on \regalc.
% This result does not contradict the lower-bound derived by \citet{Jaksch10} since the MDP they use to prove the lower-bound is such that $\SP{h^*} \approx \frac{D}{2}$.
%
We also believe that our approach can easily be extended to optimistic \psrl \citep{Agrawal2017posterior} to achieve an even better regret bound of $\wt{\O}\left(\min{\lbrace c, \rmaxbound D \rbrace \sqrt{SAT}}\right)$, \ie drop the dependency in $\nextstates$.
Finally, we leave it as an open question whether the assumption that $c$ is known can be relaxed. 
% \citet{Bartlett2009regal} introduced a \emph{pseudo-algorithm} that directly scales with $\SP{h^*}$ instead of $c$ while not requiring any prior knowledge about the MDP. Unfortunately, their approach is not \emph{implementable} in practice as it requires to solve a \emph{complex} optimization problem and to \emph{predict in advance} the states visited in the future episode.
% Relaxing this assumption would be useful to solve on-line \emph{
% representation learning
% % meta-Reinforcement Learning
% } tasks aiming to learn both a model of the environment and the environment itself simultaneously~\citep[\eg][]{pmlr-v28-maillard13}.

%\input{operatorT}
%\input{projection}
% \input{operatorT_average}
%\input{operatorN}
%\input{discount}
%\input{operatorT_average}
%\input{aperiodicity}
\newpage
\section*{Acknowledgements}
This research was supported in part by French Ministry of Higher Education and Research, Nord-Pas-de-Calais Regional Council and French National Research Agency (ANR) under project ExTra-Learn (n.ANR-14-CE24-0010-01).
Furthermore, this work was supported in part by the Austrian Science Fund (FWF): I 3437-N33 in the framework of the CHIST-ERA ERA-NET (DELTA project).

\bibliographystyle{plainnat}
\bibliography{span.bib}

\newpage
\onecolumn
\appendix

\section*{Index of the Appendix}
We start providing a brief recap of the content of the appendix:
\begin{itemize}
 \item App.~\ref{app:optim_problem_formulation}
 \begin{itemize}
 \item Proof of Lem.~\ref{prop:P.well.defined}: If $M$ is unichain then $\PiC^*(M) \neq \emptyset$. As a starting point the continuity of gain $g$ and span $h$ \wrt the policy is proved (see Lem.~\ref{L:continuity.unichain}).
 \item The policy associated to $\opT{v}$ can be interpreted as a solution of an LP problem (see App.~\ref{app:optim_problem_formulation2})
 \end{itemize}
 \item App.~\ref{app:counterexamples}
 \begin{itemize}
  \item Shows the limitations of \regopt ($\opT{}$): non-feasibility~\ref{sapp:counterexamples.nonfeasT} and non-convergence~\ref{sapp:nonconvergenctop}.
 \end{itemize}
 \item App.~\ref{sec:T.policy}
 \begin{itemize}
  \item We show how to compute the policy associated to the operator $\opT{v}$ when $\opT{}$ is feasible in $v$. We consider both MDPs and extended MDPs.
 \end{itemize}
 \item App.~\ref{app:operators}
 \begin{itemize}
  \item Proof of Lem.~\ref{L:TRpolicyexistence} \ie when operator $\opT{}$ is feasible (see App.~\ref{sapp:feasibility}).
  \item Proof of Lem.~\ref{lem:contraction} \ie that under Asm.~\ref{asm:L.contraction}, $\opT{}$ is a span contraction (see App.~\ref{sapp:contraction}).
  \item Proof of Lem.~\ref{lem:optimality.equation.T} \ie existence and uniqueness of the optimality equation for $\opT{}$, convergence of \regopt and gain dominance (see App.~\ref{sapp:optimality.equation.T})
  \item Proof of Thm.~\ref{thm:opT.optimality} \ie stopping condition for \regopt and approximation guarantees (see App.~\ref{sapp:opT.optimality}).
 \end{itemize}
 \item App.~\ref{app:augmented.perturbed.mdp}
 \begin{itemize}
  \item A formal definition of perturbed extended MDP (see Lem.~\ref{L:pperturbed_bmdp}) and span contraction property for $\wt{L}$ in the perturbed MDP.
  \item A formal definition of (reward) augmented extended MDP (see Lem.~\ref{L:r_augmented_bmdp}), equality of the operator in the original and augmented extend MDPs, and non-emptiness of $D(c,v)$ in the augmented MDP when $\SP{v} \leq c$.
  \item Proof of Thm.~\ref{th:augmented.perturbed.mdp}. We prove existence and uniqueness of the optimality equation for $\opT{}$, convergence of \regopt and gain dominance for the perturbed and augmented extended MDP (\ie $\mathcal{M}_K^\ddagger$) (see Thm.~\ref{T:improved_op}).
 \end{itemize}
 \item App~\ref{app:regret.reg.ucrl}
 \begin{itemize}
  \item Proof of Thm.~\ref{thm:regret.scal} \ie the regret of \regopt.
 \end{itemize}
 \item App~\ref{app:experiments}
         \begin{itemize}
                 \item We test \scal and \ucrl on a larger and more challenging domain (the knight quest)
         \end{itemize}
\end{itemize}

% !TEX root = regal.tex

\section{Optimization with bias span constraint}\label{app:optim_problem_formulation}

\subsection{Existence of gain optimal policies under bias-span constraint: the unichain case (proof of Lem.~\ref{prop:P.well.defined})}

In this section we provide a formal proof of Lem.~\ref{prop:P.well.defined}.

In unichain MDPs, all policies $\pi\in\SR$ have a constant gain $g^\pi$ \citep[section 8.4]{puterman1994markov}, thus the search space reduces to $\PiC = \{\pi\in\SR: \SP{h^\pi}\leq c\}$. We assume that $\PiC \neq \emptyset$.
% Define $f : \SR \rightarrow \Re$ by $f(\pi) = \SP{h^\pi}$.
We first prove the following lemma.
\begin{lemma}\label{L:continuity.unichain}
 In a unichain MDP, $g:\pi \mapsto g^\pi$ and $h:\pi \mapsto h^\pi$ are continuous maps from $\SR$ to $\Re$ and $\SR$ to $\Re^{S}$ respectively.
\end{lemma}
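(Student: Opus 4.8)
The plan is to view $\SR$ as (homeomorphic to) the compact set of Markov randomized decision rules $\MR=\prod_{s\in\calS}\mathcal{P}(\A_s)$, regarded as a subset of $\Re^{S\times A}$, and to exploit that both $d\mapsto P_d$ and $d\mapsto r_d$ are \emph{affine}, hence continuous, in the entries $d(s,a)$. The whole argument then reduces to showing that the linear systems defining $g^\pi$ and $h^\pi$ have coefficient matrices that depend continuously on $d$ and are \emph{invertible for every} $d\in\MR$; continuity of the solutions follows at once from Cramer's rule, since the entries of $A^{-1}$ are rational functions of the entries of $A$ and are therefore continuous on the open set $\{\det A\neq 0\}$.

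First I would handle the stationary distribution. In a unichain MDP every $P_d$ admits a \emph{unique} invariant distribution $\mu_d$, characterised by $\mu_d^\transp(I-P_d)=0$ and $\mu_d^\transp e=1$. Replacing one (redundant) balance equation by the normalisation $\mu_d^\transp e=1$ yields a square linear system whose matrix is nonsingular precisely because the unichain assumption forces the invariant distribution to be unique. Hence $d\mapsto\mu_d$ is continuous, and since $g^\pi=\mu_d^\transp r_d$ this already gives continuity of $g:\pi\mapsto g^\pi$.

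For the bias I would combine the evaluation equation~\eqref{eq:eval.equations}, written as $(I-P_d)h^\pi+g^\pi e=r_d$, with the normalisation $\mu_d^\transp h^\pi=0$ that singles out the specific Cesaro-limit bias, into the single $(S+1)\times(S+1)$ system
\begin{equation*}
\begin{bmatrix} I-P_d & e \\ \mu_d^\transp & 0 \end{bmatrix}
\begin{bmatrix} h \\ g \end{bmatrix}
=
\begin{bmatrix} r_d \\ 0 \end{bmatrix}.
\end{equation*}
Its coefficients are continuous in $d$ (using the continuity of $\mu_d$ just established), so it remains to verify invertibility for every unichain $d$. This is the crux: if $(h,g)$ solves the homogeneous system, then left-multiplying $(I-P_d)h+ge=0$ by $\mu_d^\transp$ gives $g=0$ (since $\mu_d^\transp(I-P_d)=0$ and $\mu_d^\transp e=1$), whence $(I-P_d)h=0$; in a unichain chain the only harmonic vectors are the constants, so $h=\kappa e$, and $\mu_d^\transp h=0$ forces $\kappa=0$. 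Thus the matrix is nonsingular and $d\mapsto(h^\pi,g^\pi)$ is continuous, which proves the lemma.

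The main obstacle is precisely this uniform invertibility step, which is where the \emph{unichain} hypothesis is genuinely used, via the two structural facts that the invariant distribution is unique and that $(I-P_d)x=0$ implies $x$ constant. Everything else — affineness of $P_d,r_d$ in $d$ and continuity of matrix inversion — is routine. A minor point worth double-checking is that the $\mu_d$-normalisation indeed identifies $h^\pi$ as the bias defined by the Cesaro limit (equivalently $P_d^* h^\pi=0$), which is standard~\citep[Sec.~8.2]{puterman1994markov} and is what makes the statement a continuity result for the genuine bias rather than for $h^\pi$ only up to an additive constant.
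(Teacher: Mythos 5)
Your proof is correct, but it takes a genuinely different route from the paper's. The paper proves continuity of $d\mapsto\mu_d$ \emph{quantitatively}, via Seneta's perturbation identity $(\mu^\intercal-\wh{\mu}^\intercal)=\mu^\intercal(P-\wh{P})H_{\wh{P}}$, where $H_{\wh{P}}=(I-\wh{P}+e\wh{\mu}^\intercal)^{-1}-e\wh{\mu}^\intercal$ is the deviation matrix, yielding the explicit bound $\|\mu-\wh{\mu}\|_1\leq\|P-\wh{P}\|_{\infty,1}\|H_{\wh{P}}\|_{\infty,\infty}$; continuity of $g^\pi=\mu^\intercal r_d$ follows, and continuity of the bias is then read off from $h^\pi=H_P r_d$, invoking that forming $H_P$ involves only continuous matrix operations. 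You instead characterize $\mu_d$ and the pair $(h^\pi,g^\pi)$ as solutions of square linear systems whose coefficients are continuous in $d$ — one balance equation replaced by the normalization $\mu^\intercal e=1$, and the bordered evaluation system with the normalization $\mu_d^\intercal h=0$ — prove nonsingularity for every $d$ directly from the unichain structure (uniqueness of the invariant distribution; harmonic vectors are constant), and conclude by continuity of matrix inversion. The skeleton is the same in both cases (unichain $\Rightarrow$ a suitable linear map is invertible, plus continuity of inversion), but your version is more self-contained, since the paper outsources the invertibility of $I-\wh{P}+e\wh{\mu}^\intercal$ and the properties of $H_{\wh{P}}$ to Puterman's Appendix A, whereas you prove nonsingularity of your bordered matrix inline; conversely, Seneta's bound is quantitative and gives local Lipschitz-type control that your Cramer's-rule argument does not. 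Two points you treat tersely are fine but worth spelling out: (i) dropping any one balance equation loses nothing because the $S$ balance equations sum identically to zero, so any solution of the reduced homogeneous system also satisfies the dropped equation — this is what turns ``unique invariant distribution'' into ``nonsingular square matrix''; (ii) the normalization $\mu_d^\intercal h^\pi=0$ for the Cesaro-limit bias is exactly $P_d^*h^\pi=0$ with $P_d^*=e\mu_d^\intercal$ in the unichain case, consistent with the paper's representation $h^\pi=H_{P_d}r_d$.
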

\begin{proof}
 Let's consider two stationary policies $\pi = d^\infty \in \SR$ and $\wh{\pi} = \wh{d}^\infty \in \SR$. Denote by $P$ and $\wh{P}$ the transition matrices associated to $d$ and $\wh{d}$ respectively. Since the MDP is unichain by assumption, the Markov Chains characterized by $P$ and $\wh{P}$ each have a unique stationary distribution $\mu$ and $\wh{\mu}$ respectively. We express the gap $\mu - \wh{\mu}$ using the same decomposition as \citet{Seneta1993}
 \begin{align*}
   (\mu^\intercal - \wh{\mu}^\intercal)(I-\wh{P} + e \wh{\mu}^\intercal) = \mu^\intercal (P-\wh{P}) \implies (\mu^\intercal - \wh{\mu}^\intercal) = \mu^\intercal (P-\wh{P}) H_{\wh{P}}
 \end{align*}
 where $I$ is the identity matrix, $e = (1\dots,1)^\intercal$ is the vector of all 1's and $H_{\wh{P}} = (I-\wh{P} + e \wh{\mu}^\intercal)^{-1} - e \wh{\mu}^\intercal$ is the \emph{Drazin inverse} of $I-\wh{P}$ also known as the \emph{deviation matrix} of $\wh{P}$ (always well-defined, see Appendix A of \citep{puterman1994markov}). The above equality implies that
 \begin{align*}
  \|\mu^\intercal - \wh{\mu}^\intercal\|_1 \leq \underbrace{\| \mu^\intercal \|_1}_{=1} \|P-\wh{P}\|_{\infty,1} \|H_{\wh{P}}\|_{\infty,\infty} = \|P-\wh{P}\|_{\infty,1} \|H_{\wh{P}}\|_{\infty,\infty}
 \end{align*}
 where $\|A\|_{\infty,1} := \max_{i}\sum_j{|A_{ij}|}$ and $\|A\|_{\infty,\infty} := \max_{i,j}{|A_{ij}|}$. As a consequence of the above inequality, when $P \to \wh{P}$ we have $\mu \to \wh{\mu}$. Moreover, when $d \to \wh{d}$ we have $P \to \wh{P}$ by linearity and thus by composition:
 \begin{align*}
  \lim_{d \to \wh{d}} \mu = \wh{\mu}
 \end{align*}
 Denote by $r$ and $\wh{r}$ the reward functions associated to $d$ and $\wh{d}$ respectively.
 We have $g^\pi = \mu^\intercal r$ and $g^{\wh{\pi}} = \wh{\mu}^\intercal \wh{r}$ and since $r$ is linear (hence continuous) in $d$ we conclude that
 \begin{align*}
  \lim_{\pi \to \wh{\pi}} g^\pi = g^{\wh{\pi}}
 \end{align*}
 or in other words,  $g:\pi \mapsto g^\pi$ is continuous at $\wh{\pi}$ and since $\wh{\pi}$ was chosen arbitrarily, $g$ is continuous everywhere. Similarly, $h^\pi = H_P r$ and $P \mapsto H_P$ is continuous in $P$ (the computation of $H_P$ involves only continuous operations of $P$ and $\mu$ like addition, multiplication and inversion of matrices) and therefore $h^\pi$ is continuous too.
\end{proof}
Note that Lem.~\ref{L:continuity.unichain} does not hold in general when the MDP is not unichain (see Ex.~\ref{E:twostates} when $x \to 0$ and $y\to0$).

Since $\SP{\cdot}$ is a semi-norm, it is a continuous map from $\SR$ to $\Re$ and so the function $f: \pi \mapsto \SP{h^\pi}$ is continuous by composition. Since $f$ is continuous, $\SR$ is compact and $\Re$ is a Hausdorff space, we know from basic topology that $f$ is a proper map i.e, the preimage of every compact set in $\Re$ by $f$ is compact in $\SR$. Since we can express $\PiC$ as the preimage of the compact interval $[0,c]$ by $f$ \ie $\PiC = f^{-1}([0,c])$, it is clear that $\PiC$ is compact.
% Furthermore, $f(\pi)$ is proper since the compact set $\SR$ is mapped to a compact in $\Re$ as the MDP has bounded reward. 
As a result, since $g^\pi$ is continuous in $\pi$ and $\PiC$ is compact, by Weierstrass extreme value theorem the maximum of $g^\pi$ is attained in $\PiC$ and so $\Pi_c^* \neq \emptyset$.

\subsection{Greedy policy under bias span constraint: LP formulation}\label{app:optim_problem_formulation2}

In this section, we show that the policy associated to $\opT{v}$ can be interpreted as the solution of a Linear Programming (LP) problem.

As mentioned in Sec.~\ref{ssec:operators}, a consequence of Lem.~\ref{L:TRpolicyexistence} (see proof in App.~\ref{app:operators}) is that whenever $D(c,v) \neq \emptyset$, there exists $\delta_v^+ \in D(c,v)$ such that $L_{\delta_v^+} v \geq L_d v$ for all $d \in D(c,v)$ \emph{component-wise}, and moreover $\opT{v} = L_{\delta_v^+} v$. As a result, we can express $\delta_v^+$ as a maximizer of the following optimization problem
\begin{align}\label{E:opT.as.LP}
 \max_{d \in  D(c,v)} \left\lbrace (L_d v)^\intercal e \right\rbrace
\end{align}
where $e=(1\dots1)^\intercal$ is the vector of all 1's. The maximum of \eqref{E:opT.as.LP} is then $(\opT{v})^\intercal e$. Since $d \mapsto r_d$ and $d \mapsto P_d$ are linear maps, the function we maximize $d \mapsto (L_d v)^\intercal e$ is also linear in $d$. Moreover, the set $D(c,v)$ can be expressed as a set of $S \times (S - 1)$ linear constraints in $L_d v$:
\begin{align*}
 L_dv(s) - L_dv(s') \leq c, ~~ \forall s \neq s'
\end{align*}
Therefore, optimization problem \eqref{E:opT.as.LP} can be formulated as an LP problem. But of course it is much easier to compute $\opT{v}$ using Def.~\ref{def:opT}. The policy $\delta_v^+$ associated to $\opT{v}$ can also be computed efficiently without solving \eqref{E:opT.as.LP} (see App.~\ref{sec:T.policy} for more details).

\begin{remark}
 Recall that computing the maximal gain of an MDP can be done by solving the following \emph{primal} LP problem \citep[Section 8.8]{puterman1994markov}
 \begin{align}\label{E:planning.LP}
 \begin{split}
   &\min_{g \in \mathbb{R},h\in \mathbb{R}^S} \lbrace g \rbrace\\
   \text{s.t.}~~ g + h(s) - \sum_{s' \in \calS}p(s'|s,a) &h(s') \geq r(s,a),~~ \forall s \in \calS, ~\forall a\in \A_s
   \end{split}
 \end{align}
One might wonder whether it is possible to reformulate optimization problem \eqref{P:opt_probl_well_posed} presented in Sec.~\ref{sec:optimization} by adapting the above primal formulation with the addition of $S \times (S - 1)$ linear constraints in $h$ (as we did above for $L_dv$):
\begin{align*}
 h(s) - h(s') \leq c, ~~ \forall s \neq s'
\end{align*}
Unfortunately it is not that simple. Indeed, the validity of LP problem \eqref{E:planning.LP} is a consequence of the following two properties \citep[Theorem 8.4.1]{puterman1994markov}:
\begin{align*}
 1)~~ge + h -Lh \geq 0 \implies g \geq g^*\\
 2)~~ge + h -Lh = 0 \implies g = g^*
\end{align*}
In general, these properties no longer hold for operator $\opT{}$ and optimal bias-span-constrained gain $g^*_c$. Therefore, the LP approach fails (one can easily try to solve the constrained LP on a simple MDP and observe the solution is incorrect). Using the \emph{dual} formulation is also tricky because the span constraint is not linear in the dual variables. Whether it is possible to formulate problem \eqref{P:opt_probl_well_posed} as an LP problem is left as an open question.
\end{remark}
% !TEX root = regal.tex

\section{Limitations of \regopt}\label{app:counterexamples}
In this section, we illustrate the limitations of operator $\opT{}$ on some simple examples. For convenience, we introduce notation $\opN{}$ to denote the (value) operator associated to policy $\opG{v}$ as
 \begin{align}\label{eq:Nop}
 \opN{v} = L_{(\opG{v})}v.
 \end{align}
Trivially, if $\opT{v}$ is globally feasible, then $\opN{v} = \opT{v}$ and $\opG{v} = \delta^+_v$.

\begin{figure}[t]
        \begin{minipage}[b]{.32\textwidth}
  \centering
  \begin{tikzpicture}[->, >=stealth', shorten >=1pt, auto, node distance=2.8cm, semithick]
        \tikzstyle{every state}=[fill=red,draw=none,text=white]
        \node[state] (A) {$s_0$};
        \node[state] (B) [right of=A] {$s_1$};
        \path (A) edge[bend left] node {$a_0$, {\color{red}$r=0$}} (B);
        \path (A) edge[loop above] node {$a_1$, {\color{red}$r=0$}} (A);

        \path (B) edge[bend left] node {$a_0$, {\color{red} $r=1$}} (A);
        \path (B) edge[loop below] node {$a_1$, {\color{red} $r=1$}} (B);
  \end{tikzpicture}

\caption{\label{F:n_different_t}
        Example showing that $\opT{}$ might not always be feasible even when $\PiC^* \neq \emptyset$.
}

        \end{minipage}
        \hfill
        \begin{minipage}[b]{.32\textwidth}

  \centering
  \begin{tikzpicture}[->, >=stealth', shorten >=1pt, auto, node distance=2.4cm, semithick]
        \tikzstyle{every state}=[fill=red,draw=none,text=white]
        \node[state] (A) {$s_0$};
        \node[state] (B) [below of=A] {$s_1$};
        \node[state] (C) [right of=B] {$s_2$};
        \path (A) edge[bend right] node {$a_0$, {\color{red}$r=\alpha$}} (B);

        \path (B) edge[bend right] node {$a_1$, {\color{red} $r=\delta$}} (C);

        \path (B) edge[bend left] node {$a_0$, {\color{red}$r=\beta$}} (C);
        \path (C) edge[loop below] node {$a_0$, {\color{red} $r=0$}} (C);
  \end{tikzpicture}
\caption{\label{F:conjecture_wrong}
        Example showing that $\opT{}$ might not always be feasible at its fixed-point $h^+$.
}
        \end{minipage}
        \hfill
        \begin{minipage}[b]{.32\textwidth}
  \centering
  \begin{tikzpicture}[->, >=stealth', shorten >=1pt, auto, node distance=2.4cm, semithick]
        \tikzstyle{every state}=[fill=red,draw=none,text=white]
        \node[state] (A) {$s_0$};
        \node[state] (B) [right of=A] {$s_1$};
        \node[state] (C) [below of=B] {$s_2$};
        \node (D) at (1,0) {\color{red}$r=1$};
        \path (A) edge[bend left] node {$1-\delta$} (B);
        \path (A) edge[loop above] node {$\delta$} (A);

        \path (B) edge node {\color{red}$r=0$} (C);

        \path (C) edge[bend left] node {\color{red} $r=0$} (A);
  \end{tikzpicture}
\caption{\label{F:not_converging}
        Example showing that the sequence $(\opT{})^n v_0$ might not converge even when $\PiC \neq \emptyset$ and all policies are both unichain and aperiodic.
}
        \end{minipage}
\end{figure}

\subsection{Non-feasibility of $\opT$}\label{sapp:counterexamples.nonfeasT}
The following example shows that operator $\opT{}$ may generate vectors that do not correspond to a one-step policy evaluation, \ie there may not exists $\delta_v^+ \in D^{MR}$ such that $\opT{v} = L_{\delta_v^+} v$, even if $\SP{v} \leq c$ and/or if $\PiC \neq \emptyset$.
   \begin{example}\label{E:n_different_t}
        Consider the simple MDP provided in Fig.~\ref{F:n_different_t}. Let $v = [0,0]$ and $c=1/2$. In this case the policy $\pi$ playing 
        $a_0$ in both states matches the constraint $c$ (\ie $\SP{h^\pi}\leq c \implies \pi \in \PiC \implies \PiC \neq \emptyset$) and moreover $\SP{v} \leq c$, but clearly there exists no policy $\delta_v^+$ achieving $L_{\delta_v^+} v = T_c v$ and moreover
        \begin{align*}
                T_c v = [0, 1/2] \neq N_c v = [0, 1]
        \end{align*}
       
\end{example}

The previous example shows that there may not exists a policy associated to the \emph{one-step application} of operator $\opT{}$. Instead, Ex.~\ref{E:conjecture_wrong} shows that $\opT{}$ may also not be feasible \emph{at convergence}. In particular, we show that, surprising as it may seem, $\regopt$ can sometimes converge to a value $h^+$ that is not associated to any policy, even when $\Pi_c \neq \emptyset$ and even when $\Pi_c^* \neq \emptyset$.

\begin{example}\label{E:conjecture_wrong}
        Consider the simple MDP $M$ of Fig.~\ref{F:conjecture_wrong} where we assume that $\beta < \delta < \alpha$ and we set $c = \alpha + \beta$. The MDP is
        unichain and all gains are equal to $0$. The set of randomized decision rules can be parametrized by the probability $p$ of playing $a_1$ in $s_1$ and the associated set of bias functions is
        $$H = \left\lbrace \left[\begin{matrix} \alpha + (1-p)\cdot \beta + p \cdot \delta\\ (1-p)\cdot \beta + p \cdot \delta\\ 0 \end{matrix}\right] : p\in[0,1] \right\rbrace.$$
        Let's denote by $h(p)$ the bias associated to a policy parameterized by $p$, then $\SP{h(p)} = \alpha + (1-p)\cdot \beta + p \cdot \delta > c$ for all $p >0$. So there exists only one policy $\pi = d^\infty$ achieving the span constraint which plays $a_0$ in $s_1$
        (\ie $p=0$) implying that $\PiC(M) = \lbrace d^\infty \rbrace = \PiC^*(M) \neq \emptyset \implies \pi_c^* = d^\infty$. It is easy to verify that:
        \begin{align*}
        \text{Fixed point of }~ \opT{}:~ h^+&= ~ \left[\begin{matrix} \alpha + \beta \\ \delta\\ 0 \end{matrix}\right] \notin H\\
        \text{Fixed point of }~ \opN{}:~ h^\#&= ~ \left[\begin{matrix} \alpha + \delta \\ \delta\\ 0 \end{matrix}\right] \in H ~~ \text{but}~~ \SP{h^\#} > c,
        \end{align*}
Although $\opT{}$ admits a fixed point $h^+$, it is not globally feasible at $h^+$. On the other hand, while $N_c$ is globally feasible at its fixed point $h^\#$ by definition, $h^\#$ does not satisfy the bias constraint.
%        Note that it is straightforward to adapt the counter-example to the discounted setting $\gamma <1$ as long as $\gamma>0$ (for $\gamma = 0$ the converse property always holds true as stated by Thm.~\ref{T:converse_property}). 
        One might be tempted to think that the problem in this example arises from the fact that $\PiC(M)$ is a singleton but it is actually more subtle than 
        that. Indeed, if we assume that $\beta >0$ and if we add an action $a_2$ in $s_1$ that goes to $s_2$ with probability $1$ and gives a reward $0$, we face the same problem but this time
        $\PiC(M)$ contains an infinite number of policies (since we include stochastic policies). The problem is actually coming from the fact that the action played in the only state achieving maximum bias (\ie $s_0$) is deterministic while the action played in state $s_1$ (which achieves a lower bias than $s_0$) is stochastic. $\opT{}$ is unable to converge to such policies: by definition, it can only converge to a policy that plays a stochastic action in the states with maximal bias (it can also converge to a bias that is not associated to any policy like in this example).
\end{example}
The issue presented in Ex.~\ref{E:conjecture_wrong} can be overcome by duplicating all actions and adjusting the rewards of the duplicated actions.
More formally, denote by $\wb{a}$ the action obtained by duplicating $a$. The probability of transition is not modified (\ie $p(\cdot|s,\wb{a}) = p(\cdot|s,a)$) but the reward is set to the minimal value (\ie $r(s,\wb{a}) = r_{\min} = 0$). Denote by $M^{\downarrow}$ this ``\emph{augmented}'' MDP. It is easy to verify that $h^+(M^{{\downarrow}}) = h^+(M) = (\alpha+\beta,\beta,0)^\intercal$ but unlike $M$, $M^{\downarrow}$ admits a policy associated to $h^+(M^{{\downarrow}})$ (using duplicated actions). As this example shows, augmenting the MDP never modifies the fixed point of $\opT{}$ but always makes $\opT{}$ globally feasible at any vector $v$ satisfying $\SP{v} \leq c$. Since by definition the fixed point $h^+$ of $\opT{}$ satisfies the span constraint, $\opT{}$ is globally feasible at $h^+$.
This example gives an intuition why \scal uses a modified MDP $\wt{\mathcal{M}}_k^\ddagger$ with augmented rewards (the confidence intervals $B_{r}^k$ are ``augmented'' by below).

\subsection{Non-convergence of $\opT{}^n$}\label{sapp:nonconvergenctop}
It is rather easy to design an MDP for which the stopping condition of \regopt (\ie $\SP{v_{n+1} - v_n}\leq \varepsilon$) is never met although all policies are unichain and aperiodic and $\PiC \neq \emptyset$. In contrast, for the optimal Bellman operator $L$, unichain and aperiodicity are sufficient conditions to ensure that the stopping condition $\SP{v_{n+1} - v_n}\leq \varepsilon$ is met after a finite number of iterations \citep[Theorem 8.5.7]{puterman1994markov}.
\begin{example}\label{E:not_converging}
        Consider the simple MDP $M$ provided in Fig.~\ref{F:not_converging} where we assume that $1>\delta >0$ and $1/2 \geq c>0$. 
        There is only one action available in every state and thus there is only one decision rule $d$.
        In that case $L=L_d$.
        % In that case $L=L_d$ and $T_c = \Gamma_c L_d$.
        The contraction condition of ~\citep[Theorem 8.5.3]{puterman1994markov} holds for $J=2$, \ie $L$ is a $2$-stage span contraction. More precisely we have:
        \begin{align*}
         P_d = \left[\begin{matrix} \delta & 1-\delta & 0 \\0 & 0 & 1\\ 1 & 0 & 0 \end{matrix}\right] \implies
         P_d^2 = \left[\begin{matrix} \delta^2 & \delta-\delta^2 & 1-\delta \\1 & 0 & 1\\ \delta & 1-\delta & 0 \end{matrix}\right]\\
         \implies \gamma_{d} \overset{def}{=} 1 - \min_{s,u \in \mathcal{S}} \left\lbrace \sum_{j \in \mathcal{S}} \min \lbrace P_d^2(j|s), P_d^2(j|u) \rbrace \right\rbrace 
         = \delta <1
        \end{align*}
        where $\gamma_{d}$ is the \emph{ergodic coefficient} associated to Markov Chain $P_d$.
	This implies that $L$ is a $2$-stage $\gamma_{d}$-span contraction: $\SP{L^{2n+1} v -L^{2n}v} \leq \gamma_{d}^n \SP{L v -v}$ for any vector $v$ and any integer $n \geq 0$. 
    On the other hand, the sequence $\opT{}^n v_0$ starting from $v_0 =0$ proceeds as follows
        \begin{align*}
                v_0 = \left[\begin{matrix} 0 \\ 0 \\0 \end{matrix}\right], ~
                v_1 = \left[\begin{matrix} c \\ 0 \\0 \end{matrix}\right], ~
                v_2 = \left[\begin{matrix} c \\ 0 \\c \end{matrix}\right], ~
                v_3 = \left[\begin{matrix} 2c \\ c \\c \end{matrix}\right] = v_1+ c e, ~
                \dots, ~
                v_{2n} = v_2 + n c e,~
                v_{2n+1} = v_1 + n c e
        \end{align*}
        where $e = (1,\dots, 1)^\intercal$ denotes the vector of all 1's. We see that unlike $L^nv_0$, $(\opT{})^n v_0$ is cycling with period 2 and the quantity $\SP{v_{2n+1} -v_{2n}} = \SP{v_2 - v_1}$ does not converge to $0$. Although Lem.~\ref{lem:contraction} shows that when $L$ is a $J$-stage span contraction with $J =1$ then $T_c$ is also a span contraction (proof in App.~\ref{app:operators}), surprisingly $(\opT{})^n v_0$ might not converge when $J >1$.
%         However, any solution $(g,h)$ of the optimality equation satisfies $\SP{h} >2/3$ and for $c \geq \SP{h}$, the algorithm seems to always converge.
	Note that in this example $\PiC(M) = \emptyset$ and so one might wonder whether when $\PiC(M) \neq \emptyset$ the sequence $\opT{}^n v_0$ converges in span semi-norm.
%         The question that arises is: when $\PiC(M) \neq \emptyset$, does $(\opT{})^n v_0$ always converge? 
        Unfortunately, it is not the case.
        Take the same MDP, duplicate the action in $s_0$ and assign a reward of $0$ to this new action (the new action loops on $s_0$ with probability $\delta$ and goes to 
        $s_1$ with probability $1-\delta$ as the original action, but the reward is $0$ instead of $1$). In that case, $\PiC(M) \neq \emptyset$ for all $c \geq 0$ but 
        we still have $L=L_d$ where $d$ plays the original action in $s_0$. Therefore, we face exactly the same problem as before although $\PiC(M) \neq \emptyset$.
\end{example}

% !TEX root = regal.tex
\section{Policy $\delta_v^+$ associated to $\opT{v}$} \label{sec:T.policy}
In this section, we provide a detailed description on how to efficiently compute a policy $\delta_v^+$ associated to $\opT{v}$ when $\opT{}$ is feasible at $v$.
As mentioned in Sec.~\ref{sec:optimization}, we say that $\opT{}$ is \emph{feasible} at $v \in \mathbb{R}^S$ and $s\in\calS$ when there exists a distribution $\delta^+_v(s) \in \mathcal{P}(\mathcal{A})$ such that
\begin{equation}
\opT{v}(s) = \sum_{a\in \mathcal{A}_s} \delta_v^+(s,a) \big[ r(s,a) + p(\cdot,s,a)^\transp v \big].
\end{equation}
We distinguish between two types of states:
\begin{itemize}
        \item \textbf{Greedy states.} When $L{v}(s) \leq \min \{L{v}\} + c$~~\ie $s \in \overline{\calS}(c,v)$ (see Def.~\ref{def:opT}), 
                $\delta^+_v(s)$ plays a deterministic \emph{greedy} action $\wb{a} \in \argmax_{a \in \A_s} \{r(s,a) + p(\cdot|s,a)^\transp v\}.$
        \item \textbf{Truncated states.} When $L{v}(s) > \min\{L{v}\} + c$ \ie $s \notin \overline{\calS}(c,v)$ (see Def.~\ref{def:opT}), by definition of $L$ there exists at least one action $\wb{a}$ (\eg any greedy action) such that $r(s,\wb{a}) + p(\cdot|s,\wb{a})^\transp v > \min\{L{v}\} + c$. In addition, under the assumption that $\opT{}$ is feasible at $v$ and $s$, we know from condition~\eqref{E:Top_policy_cond} of Lem.~\ref{L:TRpolicyexistence} (proof in App.~\ref{app:operators}) that there exists an action $\underline{a}$ such that $r(s,\underline{a}) + p(\cdot|s,\underline{a})^\transp v \leq \min\{L{v}\} + c$.
                By the intermediate value theorem, we know that there exists a convex combination $\delta_v^+(s)$ of actions $\wb{a}$ and $\underline{a}$ achieving exactly $L_{\delta^+_v}v(s) = \min\{L v\} + c$.
                Note that there may exist multiple policies achieving this value (\eg when there are multiple actions achieving higher or smaller values than $\min\{L v\} + c$). However, to simplify the implementation, we can simply set $\delta_v^+(s)$ to play with non-zero probability only a greedy action $\wb{a} \in \argmax_{a \in \A_s} \{r(s,a) + p(\cdot|s,a)^\transp v\}$ and a minimal action $\underline{a} \in \argmin_{a \in \A_s} \{r(s,a) + p(\cdot|s,a)^\transp v\}$.
                Formally, let $\underline{v} = \min_{a \in \A_s} \{r(s,a) + p(\cdot|s,a)^\transp v\}$ and $\wb{v}= \max_{a \in \A_s} \{r(s,a) + p(\cdot|s,a)^\transp v\} = Lv(s)$. Then,
                \begin{equation}\label{E:T.policy.feasible}
                        \delta^+_v(s,a) = \begin{cases}
                                {(\wb{v} - \min\{L{v}\} -c)}/{(\wb{v} - \underline{v})} & \text{if } a = \underline{a}\\
                                {(\min\{L{v}\} + c - \underline{v})}/{(\wb{v} - \underline{v})} & \text{if } a = \wb{a}\\
                                0 & \text{otherwise}
                        \end{cases}
                \end{equation}
\end{itemize}

\paragraph{Bounded-Parameter MDP.} 
When we consider a bounded-parameter MDP $\wt{\mathcal{M}}$, the only change is in the computation of the minimal and maximal actions.
Define
\begin{align*}
        \wt{L}v(s) = \max_{a \in \A_s,\wt{r} \in B_r(s,a), \wt{p} \in B_p(s,a)} \big[ \wt{r} + \wt{p}^\transp v\big]\\
        \uwt{L}v(s) = \min_{a \in \A_s,\wt{r} \in B_r(s,a), \wt{p} \in B_p(s,a)} \big[ \wt{r} + \wt{p}^\transp v\big]
\end{align*}
Then, $\wb{a}$ and $\underline{a}$ are the actions associated to $\wt{L}v(s)$ and $\uwt{L}v(s)$, respectively. Given $\wb{a}$ and $\underline{a}$, the policy $\delta_v^+(s)$ associated to $\opT{v}(s)$ is computed as in~\eqref{E:T.policy.feasible}. The maximum and minimum of $\wt{r}$ for $\wt{r} \in B_r(s,a)$ are easy to compute (they correspond to the extreme values of the closed interval $B_r(s,a)$). The maximum of $\wt{p}^\transp v$ for $\wt{p}(s') \in B_p(s,a,s')$ can be computed in $\O(S)$ operations using the algorithm described in \citep[Appendix A]{DannBrunskill15}. To compute the minimum of $\wt{p}^\transp v$, the exact same algorithm can be used with input $-v$ instead of $v$ since $\min_{\wt{p}} \lbrace \wt{p}^\intercal v \rbrace = -\max_{\wt{p}} \lbrace \wt{p}^\intercal (-v)\rbrace$.

% !TEX root = regal.tex

\section{Properties of operators $\opT{}$ and $\opG{}$}\label{app:operators}

\subsection{Feasibility of $\opT$ (proof of Lemma~\ref{L:TRpolicyexistence})}
\label{sapp:feasibility}

In this section, we prove Lem.~\ref{L:TRpolicyexistence}.

We start by proving that for any decision rule $d \in D(c,v)$, we have $\opT{v} \geq L_d v$ component-wise. By definition of $\opT{}$ and the optimal Bellman operator $L$ it holds that
$$
\forall s \in \overline{\calS}(c,v),\; \forall d \in D(c,v), \quad \opT{}v(s) = Lv(s) \geq L_d v(s).
$$
Moreover, let $m := \min_s \{Lv(s)\}$, then
\begin{align}
\forall s \in \mathcal{S} \setminus \overline{\calS}(c,v),\; \forall d \in D(c,v), \quad 
\opT{v}(s) &= m + c \\
                &\geq m + \SP{L_d v} \label{E:dcvstep} \\
                &= m + \max \{L_d v\} - \min \{L_d v\}\\
                &\geq m + L_d v (s) - m \label{E:maxminstep} = L_d v(s),
\end{align}
Inequality~\eqref{E:dcvstep} is a consequence of the fact that $d \in D(c,v) \Leftrightarrow \SP{L_d v} \leq c$.
Denote by $\hat{s} \in\argmax_s\{ Lv(s) \}$ any state achieving minimum value for $Lv(s)$. 
Inequality~\eqref{E:maxminstep} follows by noticing that $m =
Lv(\hat{s}) \geq L_d v(\hat{s}) \geq \min \{L_d v\}$.
% This proves that the inequality holds for any policy $d \in D(c,v)$.
In conclusion, for any $d \in D(c,v)$ and any $s \in \calS$, $\opT{v}(s) \geq L_d v(s)$. This immediately implies that whenever $\opT{}$ is globally feasible at $v$, $\opT{v} = \max_{\delta\in D(c,v)} L_{\delta} v$ and $\delta^+_v \in \argmax_{\delta\in D(c,v)} L_{\delta} v.$

Let's now prove the equivalence between the feasibility of $\opT$ at $v \in \mathbb{R}^S$ and $s \in \calS$ and condition~\eqref{E:Top_policy_cond} \ie
% We assume that condition~\eqref{E:Top_policy_cond} is verified in all states, i.e.,
%
\begin{equation*}
\min_{a\in \mathcal{A}_s} \{ r(s,a) +  p(\cdot,s,a)^\transp v \} \leq \min_s \{L v(s) \}+ c.
\end{equation*}
If condition~\eqref{E:Top_policy_cond} holds, we can use the constructive procedure described in App.~\ref{sec:T.policy} to construct a stochastic action $\delta_v^+(s) \in \mathcal{P}(\mathcal{A})$ such that $L_{\delta_v^+}v(s) = \min_s \{L v(s) \}+ c =\opT{v}(s)$ and thus $\opT$ is feasible at $v$ and $s$. On the other hand, if condition~\eqref{E:Top_policy_cond} does not hold \ie $\min_{a\in \mathcal{A}_s} \{ r(s,a) +  p(\cdot|s,a)^\transp v \} > \min_s \{L v(s) \}+ c$ then it is clear that any $d(s) \in \mathcal{P}(\mathcal{A})$ will be such that $L_dv(s) >  \min_s \{L v(s) \}+ c$ and so $\opT$ is not feasible at $v$ and $s$. By contraposition, if $\opT$ is feasible at $v$ and $s$ then condition~\eqref{E:Top_policy_cond} holds thus proving the equivalence.

Finally, $\opT$ is globally feasible at $v$ if and only condition~\eqref{E:Top_policy_cond} holds in every state $s\in\calS$. Trivially, if $\opT$ is globally feasible then $\SP{\opT{v}} = \SP{L_{\delta_v^+} v} \leq c$ and so $\delta_v^+ \in D(c,v)$ implying that $D(c,v) \neq \emptyset$. If $D(c,v) \neq \emptyset$ then there exists $\delta\in \MR$ such that $\SP{L_\delta v} \leq c$. Assume that condition~\eqref{E:Top_policy_cond} does not hold in at least one state $\wb{s}\in \calS$ meaning that $\min_{a\in \mathcal{A}_{\wb{s}}} \{ r(\wb{s},a) +  p(\cdot|\wb{s},a)^\transp v \} > \min_s \{L v(s) \}+ c$ which implies that for any $d \in \MR$, $L_d v(\wb{s}) > \min_s \{L v(s) \}+ c$. This contradicts the fact that \[ \SP{L_\delta v } \leq c \implies L_\delta v (\wb{s}) \leq \max_s \{L_d v(s) \} \leq \min_s \{L_d v(s) \}+ c \leq \min_s \{L v(s) \}+ c\] where we used the definition of the span $\SP{u}:= \max \{u\} -\min \{u\}$ and the fact that $L_dv \leq L v$ component-wise by definition of $L$ implying that $\min_s \{L_d v(s) \} \leq \min_s \{L v(s) \}$. Therefore, condition~\eqref{E:Top_policy_cond} must hold in every state. In conclusion, $\opT$ is globally feasible at $v$ if and only $D(c,v) \neq \emptyset$.

\subsection{Contraction property of $\opT$ (proof of Lemma~\ref{lem:contraction})}
\label{sapp:contraction}

The purpose of this section is to prove Lem.~\ref{lem:contraction}. We first reinterpret operator $\opT$ as the composition of a \emph{projection} $\proj{}$ and the optimal Belmman operator $L$ ($\opT = \proj L$) and we prove interesting properties for $\proj{}$ and $\opT$.

Recall that $\opT{}$ can be seen as the \emph{truncation} of the optimal Bellman operator \ie $\opT{v}(s) = \min\{Lv(s), \min_{x}\{Lv(x)\} + c\}$. The following lemma shows that the truncation step is actually a projection in span semi-norm. Let $V_c = \{v : \SP{v} \leq c\}$ be the \emph{``semi-ball''} of span constrained value functions. For any vector $v \in \mathbb{R}^{S}$ and any $c \geq 0$, we define the truncation operator $\proj:~\mathbb{R}^{S}\rightarrow V_c$ as $\proj{v}(s) = \min \left\{v(s), \min_{x}\{v(x)\} +c \right\}$.
\begin{lemma}\label{prop:truncation}
For any vector $v \in \mathbb{R}^{S}$ and $c \geq 0$, $\proj{v}$ is the projection of $v$ on the semi-ball $V_c$ in span semi-norm \ie
$$
\proj{v} = \min_{z \in V_c} \SP{z - v}.
$$
\end{lemma}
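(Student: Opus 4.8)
The plan is to verify two things: that $\proj{v}$ actually lies in $V_c$, and that it realizes the smallest possible span distance to $v$ among all members of $V_c$. The lower bound on the distance will come entirely from the triangle inequality for the span semi-norm. Before starting I would record the two elementary properties of $\spn$ that I will use, both immediate from $\SP{u} = \max_s u(s) - \min_s u(s)$: \emph{symmetry}, $\SP{-u} = \SP{u}$, and \emph{subadditivity}, $\SP{u+w} \leq \SP{u} + \SP{w}$ (the latter because $\max_s(u+w) \leq \max_s u + \max_s w$ and $\min_s(u+w) \geq \min_s u + \min_s w$).

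\textbf{Feasibility and the achieved distance.} Since $\proj{v}(s) = \min\{v(s), \min_x v(x) + c\}$, the minimum of $\proj{v}$ is unchanged and equals $\min_x v(x)$, while its maximum becomes $\min\{\max_x v(x), \min_x v(x) + c\}$; hence $\SP{\proj{v}} = \min\{\SP{v}, c\} \leq c$, so $\proj{v} \in V_c$. Next I would compute the residual $\proj{v} - v$, whose coordinate in state $s$ is $\min\{0,\, \min_x v(x) + c - v(s)\}$: it vanishes at every untruncated state (in particular at the argmin of $v$) and attains its most negative value $c - \SP{v}$ at the argmax of $v$ whenever $\SP{v} > c$. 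Consequently $\SP{\proj{v} - v} = \max\{0,\, \SP{v} - c\}$, which is also correct in the trivial case $\SP{v} \leq c$ where $\proj{v} = v$.

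\textbf{The matching lower bound and conclusion.} For any competitor $z \in V_c$, subadditivity gives $\SP{v} = \SP{(v - z) + z} \leq \SP{v - z} + \SP{z} \leq \SP{v - z} + c$, and by symmetry $\SP{v - z} = \SP{z - v}$, so $\SP{z - v} \geq \SP{v} - c$. Since also $\SP{z - v} \geq 0$, we obtain $\SP{z - v} \geq \max\{0, \SP{v} - c\} = \SP{\proj{v} - v}$ for every $z \in V_c$, proving that $\proj{v}$ attains the minimal span distance and is therefore a projection of $v$ onto $V_c$. I do not anticipate a serious obstacle; the only subtlety worth flagging is that because $\spn$ is a semi-norm rather than a norm, any translate $\proj{v} + \lambda e$ is an equally valid minimizer, so the claim should be read as ``$\proj{v}$ is \emph{a}'' projection, unique only up to an additive multiple of $e$.
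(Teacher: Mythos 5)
Your proof is correct and follows essentially the same route as the paper's: compute the achieved distance $\SP{\proj{v}-v} = \max\{0,\SP{v}-c\}$ explicitly, then use the (reverse) triangle inequality for the span semi-norm to show no $z \in V_c$ can do better. The only differences are cosmetic — you treat both cases uniformly via the $\max\{0,\cdot\}$ formulation and derive the reverse triangle inequality from subadditivity rather than citing it, and your closing remark about uniqueness only up to multiples of $e$ is a fair reading of the statement.
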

\begin{proof}
Let $w = \proj{v}$. If $\SP{v} \leq c$, then by definition of $\proj{}$, $w= \proj{v} = v \in \argmin_{z \in V_c} \SP{z - v}$. Otherwise, using again the definition of $\proj{}$ we have that $w \leq v$ component-wise. As a result, we have
\begin{align*}
\max_s\{w(s) - v(s)\} &= 0 
\end{align*}
Moreover, the difference between $w$ and $v$ is maximal in the states $\wb{s} \in \argmax_s v(s)$ and thus
\begin{align*}
\min_s\{w(s) - v(s)\} &= -\max_s\{v(s)\} + \min_s\{v(s)\} + c,
\end{align*}
Therefore: $\SP{w-v} = \max\{w - v\} - \min \{w -v\} = \SP{v} - c$. Furthermore, by reverse triangle inequality\footnote{The triangle inequality for the span is proved in~\citep[Section 6.6.1]{puterman1994markov}.}, for any vector $z$ such that $\SP{z} \leq c$ we have that
\begin{align*}
 \SP{z - v} \geq \SP{v} - \SP{z} \geq \SP{v} - c = \SP{w-v},
\end{align*}
thus proving the lemma.
\end{proof}

We prove the following useful properties for $\proj{}$:

\begin{lemma}\label{L:proj_properties}
Let $v$ and $u$ be vectors in $\mathbb{R}^{S}$, then:
        \begin{enumerate}[label=(\alph*), labelindent=\parindent]
                \item Monotonicity
                        $$v \geq u \implies \proj{v} \geq \proj{u}.$$
                \item For any $s \in \mathcal{S}$ 
                        \begin{equation}\label{E:min_max_diff_proj}
                                \min\{v - u\} \leq \proj{v}(s) - \proj{u}(s) \leq \max\{v -u\}.
                        \end{equation}
                \item $\proj{}$ is non-expansive in span semi-norm
                $$\SP{\proj{v} - \proj{u}} \leq \SP{v-u}.$$
                \item $\proj{}$ is non-expansive in $\ell_\infty$-norm
                $$\norm[\infty]{\proj{v} - \proj{u}} \leq \norm[\infty]{v-u}.$$
                \item  Linearity
                $$\forall\lambda \in \mathbb{R},~~\proj{\left(v + \lambda e\right)} = \proj{v} + \lambda e.$$
        \end{enumerate}
\end{lemma}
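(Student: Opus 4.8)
The plan is to exploit the explicit formula $\proj{v}(s) = \min\{v(s),\, \min_x v(x) + c\}$ and to observe that properties (a) and (e) are the two elementary facts from which (b) follows, with (c) and (d) in turn being immediate consequences of (b). So although the lemma lists the claims as (a)--(e), I would prove them in the logical order (e), (a), (b), (c), (d). Throughout I abbreviate $m_v := \min_x v(x)$.

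First I would dispatch linearity (e): since shifting by $\lambda e$ shifts every coordinate and the minimum equally, $\min_x(v(x)+\lambda) = m_v + \lambda$, whence $\proj{(v+\lambda e)}(s) = \min\{v(s)+\lambda,\, m_v+\lambda+c\} = \lambda + \min\{v(s), m_v+c\} = \proj{v}(s) + \lambda$. Next, for monotonicity (a), I would use that the minimum is a monotone operation: if $v \geq u$ coordinatewise then $v(s) \geq u(s)$ and $m_v \geq m_u$, so both arguments of the outer minimum increase, giving $\proj{v}(s) = \min\{v(s), m_v+c\} \geq \min\{u(s), m_u+c\} = \proj{u}(s)$.

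The heart of the argument is (b), which I would obtain by a sandwiching trick built on (a) and (e). Setting $\alpha := \min_s(v(s)-u(s))$ and $\beta := \max_s(v(s)-u(s))$, the coordinatewise inequalities $u + \alpha e \leq v \leq u + \beta e$ hold by definition of $\alpha,\beta$. Applying $\proj{}$ and using monotonicity (a) followed by linearity (e) yields $\proj{u} + \alpha e \leq \proj{v} \leq \proj{u} + \beta e$, i.e., $\alpha \leq \proj{v}(s) - \proj{u}(s) \leq \beta$ for every $s$, which is exactly (b). From (b), taking the maximum and minimum over $s$ of $\proj{v}(s)-\proj{u}(s)$ gives $\SP{\proj{v}-\proj{u}} \leq \beta - \alpha = \SP{v-u}$, proving (c). For (d) I would note that $\norm[\infty]{v-u} = \max\{\beta, -\alpha\}$, so $\beta \leq \norm[\infty]{v-u}$ and $\alpha \geq -\norm[\infty]{v-u}$; combined with (b) this forces $|\proj{v}(s)-\proj{u}(s)| \leq \norm[\infty]{v-u}$ for all $s$, hence (d) after a maximum over $s$.

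There is no serious obstacle here; the only point requiring a little care is recognizing that (b) is the linchpin and that it should be derived from (a) and (e) by the sandwich $u+\alpha e \le v \le u + \beta e$, rather than through a direct case analysis on whether each coordinate is truncated (which is possible but considerably messier). Once (b) is established, both non-expansiveness statements (c) and (d) are one-line corollaries obtained by reading off the extreme values of the coordinatewise difference $\proj{v}(s)-\proj{u}(s)$.
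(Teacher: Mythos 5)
Your proof is correct, but it takes a genuinely different route from the paper's. The paper's proof begins by writing out the four possible values of $\proj{v}(s) - \proj{u}(s)$ according to whether each of $v(s)$ and $u(s)$ is truncated, and then establishes both (a) and (b) by checking each of the four cases separately (with (c) and (d) then read off from (b), and (e) by direct computation, just as you do). You instead prove (a) directly from the monotonicity of the pointwise minimum (both arguments $v(s)$ and $\min_x v(x)+c$ increase when $v$ increases coordinatewise), prove (e) by direct computation, and then obtain (b) from the sandwich $u + \alpha e \leq v \leq u + \beta e$ with $\alpha = \min\{v-u\}$, $\beta = \max\{v-u\}$, applying monotonicity and then linearity. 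This bypasses the case analysis entirely and is shorter and cleaner; it also isolates the general principle at work, namely that any operator which is monotone and commutes with adding constant vectors is automatically non-expansive in both the span semi-norm and the $\ell_\infty$-norm --- the same structural argument that underlies non-expansiveness of Bellman operators. What the paper's case-by-case computation buys in exchange is an explicit closed-form expression for the difference $\proj{v}(s)-\proj{u}(s)$ in each truncation regime, but since those formulas are not used anywhere else in the paper, nothing is lost by your approach.
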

\begin{proof}
For any state $s \in \mathcal{S}$, the difference $\proj{v}(s) - \proj{u}(s) $ can only take four different values depending on the configuration of $v(s)$ and $u(s)$ 
 \begin{subnumcases}{\proj{v}(s) - \proj{u}(s) =}
  v(s) - u(s) & \text{ if } $u(s)\leq \min\{u\}+c \text{ and }  v(s) \leq \min\{v\}+c$ \label{case:proj1}\\
  \min\{v\} + c - u(s) & \text{ if } $u(s)\leq \min\{u\}+c \text{ and }  v(s) > \min\{v\}+c$ \label{case:proj2}\\
  v(s) - \min\{u\} - c & \text{ if } $u(s)> \min\{u\}+c \text{ and }  v(s) \leq \min\{v\}+c$ \label{case:proj3}\\
  \min\{v\} - \min\{u\} & \text{ if } $u(s)> \min\{u\}+c \text{ and }  v(s) > \min\{v\}+c$\label{case:proj4}
 \end{subnumcases}
\begin{enumerate}[label=(\alph*), labelindent=\parindent]
 \item We need to show that for all $s\in \calS$, the difference $\proj{v}(s) - \proj{u}(s)$ is bigger or equal than zero in all four cases. Case~\eqref{case:proj1} follows directly from the assumption $v \geq  u$, while case~\eqref{case:proj4} is trivially proved since $v \geq u$ implies $\min\{v\} \geq \min\{u\}$. Case~\eqref{case:proj3} follows from $v(s) - \min\{u\} - c > v(s) - u(s) \geq 0$ (by assumption $u(s)> \min\{u\}+c$ in this case). Finally, case~\eqref{case:proj2} reduces to case~\eqref{case:proj4} since we assume that $u(s)\leq \min\{u\}+c$ implying that $\min\{v\} + c - u(s) \geq \min\{v\} - \min\{u\} \geq0$.
 \item We treat all four cases separately as we did to prove \emph{(a)}.
 \begin{itemize}
  \item Case~\eqref{case:proj1}: it is straightforward to see that $\min\{v - u\} \leq \proj{v}(s) - \proj{u}(s) = v(s) - u(s) \leq \max\{v -u\}$
  \item Case~\eqref{case:proj2}: we have that $\min\{v\} + c - u(s) < v(s) - u(s) \leq \max\{v-s\}$. To prove the other inequality we start by noticing that $\min\{v\} + c - u(s) > \min\{v\} + c - \min\{u\} - c = \min\{v\} - \min\{u\}$. Since moreover
\begin{equation}\label{E:min_diff}
        \min_s\{v(s)-u(s)\} \leq \min_{s}\left\{v(s) - \min_{s'} \{ u(s') \} \right\} = \min\{v\} - \min\{u\}
\end{equation}
the inequality holds.
  \item Case~\eqref{case:proj3}: by definition $v(s) - \min\{u\} - c \leq \min\{v\} + c - \min\{u\} - c = \min\{v\} - \min\{u\}$. Then:
\begin{align}\label{E:max_diff}
 \max_s\{v(s)-u(s)\} \geq \max_s \left\{ \min_{s'} \{v(s')\} -u(s) \right\} = \min\{v\} - \min\{u\}
\end{align}
 The other inequality trivially follows by noticing that $v(s) - \min\{u\} - c > v(s) - u(s) \geq \min\{v-u\}$.
  \item Case~\eqref{case:proj4}: inequalities~\eqref{E:min_max_diff_proj} is a consequence of inequalities~\eqref{E:min_diff} and~\eqref{E:max_diff}.
 \end{itemize}
 \item This is easy to prove exploiting inequality~\eqref{E:min_max_diff_proj} (property \emph{(b)}):
\begin{align*}
        \SP{\proj{v} - \proj{u}}
        &= \max\{\proj{v} - \proj{u}\} - \min\{\proj{v} - \proj{u}\}\\ 
        &\leq \max\{v-u\} - \min\{v-u\} = \SP{v-u}
\end{align*}
 \item We can again use inequality~\eqref{E:min_max_diff_proj}
\begin{align*}
        \norm[\infty]{\proj{v} - \proj{u}} 
        &= \max_s \left\{ |\proj{v}(s) - \proj{u}(s)| \right\} \\
        &= \max\left\{
                \max_s\{ \proj{v}(s) - \proj{u}(s)\}, \max_s \{\proj{u}(s) - \proj{v}(s)\}
           \right\}\\
           &\leq \max\left\{
                \max_s \{ v(s) - u(s) \}, \max_s \{ u(s) - v(s) \}
                     \right\} = \norm{v - u}
\end{align*}
 \item By definition of $\proj{}$, for any $s \in \mathcal{S}$: 
\begin{align*}
 \proj{\left(v + \lambda e\right)}(s) &= \min \left\{v(s) + \lambda, \min\{v + \lambda e \} +c \right\}\\
 &= \min \left\{v(s) + \lambda, \min\{v\} + \lambda  +c \right\}\\
 &= \min \left\{v(s) , \min\{v\}  +c \right\} + \lambda\\
 &= \proj{v }(s) +  \lambda.
\end{align*}
\end{enumerate}
\end{proof}

We are now ready to prove the following lemma:

\begin{lemma}\label{L:T_properties}
        Let $v$ and $u$ be vectors in $\mathbb{R}^{S}$. Operator $\opT{}$ enjoys the following properties:
        \begin{enumerate}[label=(\alph*), labelindent=\parindent]
                \item Monotonicity
                        $$ v \geq u \implies \opT{v} \geq \opT{u}.$$
                \item $\opT{v} \leq  L{v}$ and if in addition $\SP v \leq c$ and $v \leq Lv$ then $v \leq \opT{v}.$
%                 \item For any $s \in \mathcal{S}$
%                         $$
%                         \gamma \min_s\{v-u\} \quad \leq \quad \opT{v}(s) - \opT{u}(s) \quad \leq \quad \gamma \max_s\{v-u\}
%                         $$
%                 \item Define $d_u,d_v \in D^{SD}$ such that $Lv = L_{d_v}v$ and $Lu = L_{d_u}u$, then
%                         \begin{equation}\label{E:Ld_proj_ineq}
%                                 \proj{L_{d_u}}v - \proj{L_{d_u}}u \quad \preceq \quad \underbrace{\proj{L}v - \proj{L}u}_{\opT{v} - \opT{u}} \quad \preceq \quad \proj{L_{d_v}}v - \proj{L_{d_v}}u
%                         \end{equation}
%                 \item For any vector $v$ satisfying $\SP v \leq C$ and $v \preceq Lv$, we have: $v \preceq T_C v$
                \item Linearity
                $$\forall \lambda \in \mathbb{R},~~\opT{\left(v + \lambda e\right)} = \opT{v} + \lambda e.$$
                \item $\opT{}$ is non-expansive both in span semi-norm and $\ell_\infty$-norm
                $$\SP{\opT{v} - \opT{u}} \leq \SP{v-u} ~~\text{and}~~\norm[\infty]{\opT{v} - \opT{u}} \leq \norm[\infty]{v-u}.$$
                Moreover, if $L$ is a $\gamma$-span contraction then $\opT$ is also a $\gamma$-span contraction (Lem.~\ref{lem:contraction}).
        \end{enumerate}
\end{lemma}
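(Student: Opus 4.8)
The plan is to exploit the decomposition $\opT = \proj L$ established at the start of this subsection, so that every property of $\opT$ reduces to combining the corresponding (classical) property of the optimal Bellman operator $L$ with the properties of the projection $\proj$ already proved in Lem.~\ref{L:proj_properties}. In this way no new analysis of the truncation is needed: $\proj$ has been fully characterized, and $L$ is a well-understood object.

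For \emph{(a)} monotonicity, I would first invoke monotonicity of $L$ ($v \geq u \implies Lv \geq Lu$) and then apply part \emph{(a)} of Lem.~\ref{L:proj_properties} to obtain $\proj{(Lv)} \geq \proj{(Lu)}$, i.e.\ $\opT{v} \geq \opT{u}$. For \emph{(c)} linearity, I would write $\opT{(v+\lambda e)} = \proj{(L(v+\lambda e))} = \proj{(Lv + \lambda e)}$ using linearity of $L$, and conclude with part \emph{(e)} of Lem.~\ref{L:proj_properties}. For \emph{(d)}, the chain $\SP{\opT{v}-\opT{u}} = \SP{\proj{(Lv)} - \proj{(Lu)}} \leq \SP{Lv - Lu}$ follows from part \emph{(c)} of Lem.~\ref{L:proj_properties}, and the $\ell_\infty$ bound follows identically from part \emph{(d)}. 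It then remains to bound $\SP{Lv - Lu}$ and $\norm[\infty]{Lv - Lu}$, which are the standard non-expansiveness properties of $L$: for a fixed state $s$, comparing $Lv(s)$ and $Lu(s)$ through their respective maximizing decision rules gives the two-sided bound $\min\{v-u\} \leq Lv(s)-Lu(s) \leq \max\{v-u\}$, and hence both $\SP{Lv-Lu} \leq \SP{v-u}$ and $\norm[\infty]{Lv-Lu}\leq \norm[\infty]{v-u}$. For the contraction claim (Lem.~\ref{lem:contraction}), I would simply replace the span non-expansiveness of $L$ by the assumed $\gamma$-span contraction, so that $\SP{\opT{v}-\opT{u}} \leq \SP{Lv-Lu} \leq \gamma \SP{v-u}$.

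The one part requiring a little care is the second half of \emph{(b)}. The inequality $\opT{v} \leq Lv$ is immediate from the definition $\opT{v}(s) = \min\{Lv(s), \min_x Lv(x) + c\}$. For the reverse bound under the hypotheses $\SP{v} \leq c$ and $v \leq Lv$, I would show $v(s) \leq \opT{v}(s)$ by checking $v(s)$ lies below both terms of the minimum. The first, $v(s) \leq Lv(s)$, is exactly the hypothesis. For the second, the span constraint gives $v(s) \leq \max_x v(x) \leq \min_x v(x) + c$, while $v \leq Lv$ gives $\min_x v(x) \leq \min_x Lv(x)$, so $v(s) \leq \min_x Lv(x) + c$; combining the two yields $v(s) \leq \min\{Lv(s),\, \min_x Lv(x)+c\} = \opT{v}(s)$.

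I expect no genuine obstacle here: the projection decomposition does all the heavy lifting, so each claim is a one-line composition of a property of $\proj$ with a property of $L$. The only place demanding attention is \emph{(b)}, where both hypotheses must be used simultaneously—the span bound to control the truncation level $\min_x Lv(x)+c$, and the fixed-point-type inequality $v \leq Lv$ to lower-bound $\min_x Lv(x)$ by $\min_x v(x)$.
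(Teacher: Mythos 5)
Your proposal is correct and takes essentially the same route as the paper: decompose $\opT{} = \proj{}L$ and reduce every claim to Lem.~\ref{L:proj_properties} combined with the standard monotonicity, linearity and non-expansiveness of $L$ (with the contraction claim obtained by substituting the $\gamma$-span contraction of $L$ into the same chain). The only immaterial difference is in part \emph{(b)}, where you verify $v \leq \opT{v}$ by checking $v(s)$ against both terms of the defining minimum, whereas the paper composes monotonicity of $\proj{}$ with the identity $v = \proj{v}$ valid when $\SP{v}\leq c$; both are one-line arguments.
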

\begin{proof}
We rely on the properties proved in Lem.~\ref{L:proj_properties}.
  \begin{enumerate}[label=(\alph*), labelindent=\parindent]
   \item The monotonicity of $\opT$ is a direct consequence of the monotonicity of both $L$ \citep{puterman1994markov} and $\proj{}$ (property \emph{(a)} of Lem.~\ref{L:proj_properties}) and the fact that monotonicity is preserved by composition.
   \item $\opT{v} = \proj{Lv} \leq Lv$ by definition of $\proj{}$. If $v \leq Lv$ then using the fact that $\proj{}$ is monotone we have $\proj{v} \leq \proj{Lv} = \opT{v}$ and if we assume that $\SP v \leq c$ then $v = \proj{v}$ and so $v \leq \opT{v}$.
   \item The linearity of $\opT$ is a direct consequence of the linearity of both $L$ \citep{puterman1994markov} and $\proj{}$ (property \emph{(e)} of Lem.~\ref{L:proj_properties}) and the fact that linearity is preserved by composition.
   \item Using the fact that $L$ \citep{puterman1994markov} and $\proj{}$ (properties \emph{(c)} and \emph{(d)} of Lem.~\ref{L:proj_properties}) are non-expansive both in span semi-norm and $\ell_\infty$-norm we show the following: 
   \begin{align*}
    &\SP{\opT{v} - \opT{u}} = \SP{\proj{Lv} - \proj{Lu}} \leq \SP{Lv-Lu} \leq \SP{v-u} \\~~\text{and}~~ &\|\opT{v} - \opT{u}\|_\infty = \|\proj{Lv} - \proj{Lu}\|_\infty \leq \|Lv-Lu\|_\infty \leq \|v-u\|_\infty
   \end{align*}
   If $L$ is a $\gamma$-span contraction then:
   \begin{align*}
    \SP{\opT{v} - \opT{u}} = \SP{\proj{Lv} - \proj{Lu}} \leq \SP{Lv-Lu} \leq \gamma\SP{v-u}
   \end{align*}
   meaning that $\opT$ is also a $\gamma$-span contraction.
  \end{enumerate}
\end{proof}
Lem.~\ref{lem:contraction} immediately follows from property \emph{(d)} of Lem.~\ref{L:T_properties}.
% Since $\opT{v}$ can be written as $\opT{v} = \proj{Lv}$ and $\proj{}$ is non-expansive (property \textit{(c)}), we can conclude that $\opT{}$ preserves any contraction property of $L$, thus proving the statement.

%%%%%%%%%%%%%%%%%%%%%%%%%%%%%%%%%%%%%%%%%%%%%%%%%%%%%%%%%%%%%%%%%%%%%%%%%%%
%%%%%%%%%%%%%%%%%%%%%%%%%%%%%%%%%%%%%%%%%%%%%%%%%%%%%%%%%%%%%%%%%%%%%%%%%%%
%%%%%%%%%%%%%%%%%%%%%%%%%%%%%%%%%%%%%%%%%%%%%%%%%%%%%%%%%%%%%%%%%%%%%%%%%%%

\subsection{Convergence properties of $\opT$ (proof of Lemma~\ref{lem:optimality.equation.T})}
\label{sapp:optimality.equation.T}

In this section we provide a detailed proof of Lem.~\ref{lem:optimality.equation.T}.

We assume that Asm.~\ref{asm:L.contraction} holds which implies that $\opT$ is a $\gamma$-span contraction by Lem.~~\ref{lem:contraction}.
\begin{enumerate}
 \item \emph{Existence and uniqueness of the solution of optimality equation~\eqref{E:optimality_eq_average_T}:}\\
 Consider the quotient vector space $W = \mathbb{R}^{S}/\text{Span}(e)$ where $\text{Span}(e)$ is the linear span of vector $e$ i.e., the intersection of all vector spaces containing $e$: $\text{Span}(e) = \left\lbrace \lambda e:~ \lambda \in \mathbb{R} \right\rbrace$. The quotient space $W$ is a vector space with dimension $S -1$ (it is in bijection with $\mathbb{R}^{S-1} \times \lbrace 0 \rbrace$, where one coordinate is set to $0$ and the others are free real variables). Since $\text{Span}(e)$ is the null space of the \textit{semi-norm} $\SP{\cdot}$, then $\SP{\cdot}$ is indeed a \textit{norm} on $W$ and thus $\left( W, \SP{\cdot} \right)$ is a normed vector space. The operator $\opT{}$ is well-defined also on $\left( W, \SP{\cdot} \right)$ because of property \textit{(c)} of Lem.~\ref{L:T_properties} (linearity of $\opT$): $\forall h \in \mathbb{R}^{S}, ~\opT{(h + \lambda e)} = \opT{h} + \lambda e$ implying that for any given $w \in W$, the vector $\opT{w} \in W$ is uniquely defined (\ie there is no ambiguity in the definition of $\opT$). Moreover, if $h \in \mathbb{R}^S$ maps to $w \in W$ then $\opT h \in \mathbb{R}^S$ maps to $\opT w \in W$. Since $\opT{}$ is a span contraction, then $\opT{}$ has a unique fixed point $w^+$ in $W$ by Banach fixed-point theorem, which corresponds to the optimality equation $\opT{w^+} = w^+$ (in $W$). Let $h^+\in\mathbb{R}^{S}$ be an arbitrary (bounded) vector in the original space that maps to $w^+ \in W$. Since $\opT h^+ \in \mathbb{R}^S$ maps to $\opT w^+ \in W$ and $\opT{w^+} = w^+$ we have that $\opT{h^+}$ and $h^+$ differ only by a constant vector \ie $\SP{\opT{h^+} - h^+} =0$ or in other words, there exists a constant $g^+ \in \mathbb{R}$ such that $\opT{h^+} = h^+ +g^+e$ which proves the existence of the solution of optimality equation~\eqref{E:optimality_eq_average_T}. Any other solution $h'\in \mathbb{R}^S$ to this equation  will necessarily map to $w^+\in W$ by uniqueness of the solution in $W$ and so $\SP{h^+ - h'} =0$. As a result, the fixed point property of $\opT{}$ in $W$ translates into a fixed point \textit{up to a constant vector} in $\mathbb{R}^{S}$, which leads to the optimality equation $\opT{h^+} = h^+ +g^+e$ as in~\eqref{E:optimality_eq_average_T} where $h^+$ is defined up to a constant. Furthermore, let $(g^+_1,h^+_1)$ and $(g^+_2,h^+_2)$ be two solutions of~\eqref{E:optimality_eq_average_T}. Since there exists a $\lambda \in \mathbb{R}$ such that $h^+_2 = h^+_1 + \lambda e$ we have
\begin{align*}
g_2^+e = \opT{h_2^+} -h^+_2 = \opT{(h^+_1 + \lambda  e)} - (h^+_1 + \lambda  e) = \opT{h^+_1} + \lambda  e -  h^+_1 - \lambda  e = \opT{ h^+_1 }-  h^+_1 = g^+_1 e,
\end{align*}
where we used property \textit{(c)} of Lemma~\ref{L:T_properties}, which leads to $g_1^+ = g_2^+$ and thus the uniqueness of $g^+$ in~\eqref{E:optimality_eq_average_T}.
\item \emph{Convergence of (relative) value iteration:}\\
Fix an arbitrary state $\overline{s} \in \mathcal{S}$ and any initial vector  $v_0 = v \in \mathbb{R}^{S}$, the relative value iteration algorithm implemented by \regopt proceeds through iterations as
\begin{align}\label{eq:rel.v.iteration}
v_n = \opT{} v_{n-1} - (\opT{} v_{n-1})(\overline{s}) e = \opT{}^n v - \left(\opT{}^n v\right) (\overline{s}) e.
\end{align}
The last equality in~\eqref{eq:rel.v.iteration} can be proved by induction on $n\geq 1$: it is trivially true for $n=1$ and assuming that for a given $n\geq1$ it holds that $ v_n= \opT{}^n v - \left(\opT{}^n v\right) (\overline{s}) e$, then 
\begin{align*}
 v_{n+1} = \opT{} v_{n} - (\opT{} v_{n})(\overline{s}) e &= \opT{( \opT{}^n v - \left(\opT{}^n v\right) (\overline{s}) e)} -\left( \opT\left(\opT{}^n v - \left(\opT{}^n v\right) (\overline{s}) e \right)\right)(\overline{s}) e\\
 &= \opT{}^{n+1} v - \left(\opT{}^n v\right) (\overline{s}) e  - \left(\opT{}^{n+1} v\right) (\overline{s}) e + \left(\opT{}^n v\right) (\overline{s}) e\\
 &= \opT{}^{n+1} v -  \left(\opT{}^{n+1} v\right) (\overline{s}) e
\end{align*}
where we used the linearity of $\opT$ (property \textit{(c)} of Lem.~\ref{L:T_properties}).

Denote by $q_n = v_{n+1} - v_n$. Since $v_{n+1}(\overline{s}) = v_n(\overline{s}) = 0$, then $q_n(\overline{s}) =0$ and the absolute value of any component $q_n(s)$ can be upper-bounded by its span.~\footnote{Since $q_n$ takes the value $0$ in $\wb s$, there are only three possible scenarios: \textit{1)} $q_n$ is non-negative (with 0 included) and then $\max\{q_n\} = \SP{q_n}$, \textit{2)} $q_n$ is non-positive (with 0 included) and then $\max\{|q_n|\} = 0 - \min\{q_n\} = \SP{q_n}$, \textit{3)} $q_n$ has both positive and negative values and then both its maximal and minimal value are smaller than the span.} As a result, we have
\begin{align*}
|q_n(s)| \leq \|q_n\|_\infty \leq \SP{q_n}.
\end{align*}
Using the span contraction property of $\opT$ we have that
\begin{align*}
        |v_{n+1}(s) - v_n(s)|
        &\leq \SP{v_{n+1} - v_n} = \SP{\opT{v_n} - (\opT{v_n})e - \opT{v_{n-1}} + (\opT{v_{n-1}})e}\\
        &\stackrel{(a)}{=} \SP{\opT{v_n} - \opT{v_{n-1}}} \leq \gamma \SP{v_n - v_{n-1}} = \gamma \SP{q_{n-1}}\\
        &\stackrel{(b)}{\leq} \gamma^n \SP{\opT{v} - v},
\end{align*}
where \textit{(a)} follows from the fact that $\SP{f} = \SP{f + \lambda e}$ for any $\lambda\in\Re$ and \textit{(b)} is obtained by iterating the first inequality. Since $\SP{\opT{v} - v}$ is bounded and $\gamma < 1$, we can conclude that $\{v_{n+1} - v_n\}_n$ is a convergent sequence. Now we show that $\{v_n\}_n$ is a Cauchy sequence. Let $m > n$, then the following inequalities hold
\begin{align}\label{eq:cauchy}
        |v_{m}(s) - v_n(s)|
        &\leq \SP{v_{m} - v_n} = \SP{v_{m} - v_{m-1} + v_{m-1} - v_{m-2} + \ldots + v_{n+1} - v_n} \\
        &\leq \SP{v_{m} - v_{m-1}} + \SP{v_{m-1} - v_{m-2}} + \ldots + \SP{v_{n+1} - v_n} \nonumber\\
        &\stackrel{(a)}{\leq} \gamma^{m-1} \SP{\opT{v} - v} + \gamma^{m-2} \SP{\opT{v} - v} + \ldots + \gamma^{n} \SP{\opT{v} - v} \nonumber\\
        &= \gamma^n \SP{\opT{v} - v} \sum_{k=0}^{m-n-1}\gamma^{k} \leq \gamma^n \SP{\opT{v} - v} \sum_{k=0}^{\infty}\gamma^{k} = \frac{\gamma^n}{1-\gamma} \SP{\opT{v} - v},\nonumber
\end{align}
where \textit{(a)} is the application of the previous inequality $\SP{v_{n+1} - v_n} \leq \gamma^n \SP{\opT{v} - v}$. Since $\gamma <1$, for any arbitrary $\varepsilon>0$, there exists a $N_{\varepsilon}$, so that for any $m>n>N_\varepsilon$, $\|v_m - v_n\|_\infty \leq \varepsilon$. As a result $\{v_n\}_n$ is a Cauchy sequence and since $(\Re^S,\|\cdot\|_\infty)$ is a Banach space, $v_n$ converges to a vector that we denote by $h(v,\overline{s})$. We now show that $h(v,\overline{s})$ satisfies the optimality equation. Using property~\emph{(c)} in Lem.~\ref{L:T_properties} and Eq.~\ref{eq:rel.v.iteration}, we can write
\begin{align}\label{eq:rel.vi.eq}
\opT{v_n} = \opT{\big(\opT^nv - (\opT^nv)(\wb s)e\big)}
&=  \opT{^{n+1}} v - (\opT{}^n v) (\overline{s}) e = v_{n+1} + \left( \opT{^{n+1}v} - \opT{^nv} \right)(\overline{s})e.
\end{align}
Then,
\begin{align}\label{eq:rel.vi.eq2}
\SP{\opT{v_n} - v_{n+1}} = \SP{\big( \opT{^{n+1}v} - \opT{^nv}) \big)(\overline{s})e} = 0.
\end{align}
By continuity of the semi-norm $\SP{\cdot}$ and uniqueness of the limit this implies that
\begin{equation*}
        \label{E:limtozero_T}
\lim_{n\to+\infty} \SP{\opT{v_n} - v_{n+1}} = \SP{\opT{h(v,\overline{s})} - h(v,\overline{s})} = 0,
\end{equation*}
where we used the fact that the sequences $v_{n}$ and $v_{n+1}$ converge to $h(v,\overline{s})$. Since $\opT{h(v,\overline{s})} - h(v,\overline{s})$ has zero span, we conclude that there exists a constant value $g \in\mathbb{R}$ such that $\opT{h(v,\overline{s})} = h(v,\overline{s}) + ge$, which is indeed optimality equation~\eqref{E:optimality_eq_average_T} and by uniqueness of the solution, $g = g^+$. This proves that relative value iteration using $\opT$ does converge to a solution of the optimality equation. 

Alternatively, we can prove that standard value iteration converges in the sense that $\lim_{n \rightarrow + \infty} \opT{}^{n+1} v - \opT{}^n v = g^+e$. Using the continuity of $\opT$ and the fact that $\lim_{n\to +\infty} v_n = \lim_{n\to +\infty} v_{n+1} = h(v,\overline{s})$, we can write
\begin{equation}\label{E:limtozero_T2}
\lim_{n \rightarrow + \infty} \opT{v_n} - v_{n+1} = \opT{h(v,\overline{s})} - h(v,\overline{s}) = g^+ e.
\end{equation}
Using the definition of relative value iteration ($v_n = \opT{}^n v - \opT{}^n v(\overline{s})e$) and the linearity of $\opT$ we have
\begin{align*}
 \opT{}^{n+1}v-\opT{}^n v &= \opT{}(v_n + (T_c^n v)(\overline{s}) e)-(v_n + (T_c^n v)(\overline{s}) e)\\ &= \opT{}v_n - v_n  \\&=  \underbrace{v_{n+1} - v_n}_{\underset{n \rightarrow + \infty}{\longrightarrow} 0} + \underbrace{\opT{}v_n - v_{n+1}}_{\underset{n \rightarrow + \infty}{\longrightarrow} g^+ e} \underset{n \rightarrow + \infty}{\longrightarrow} g^+ e,
\end{align*}
where the limits rely on the convergence of $v_n$ and Eq.~\ref{E:limtozero_T2}.
\item \emph{Dominance} $g^+ \geq g_c^*$\emph{:}\\
Let $\pi = d^{\infty}\in \Pi_c$ be a policy with constant gain and bounded bias span. The evaluation Bellman equation gives $L_d h^{\pi} = h^{\pi} + g^{\pi} e$ and $\SP{L_d h^{\pi}} = \SP{h^{\pi}} \leq c$. This implies that $d \in D(c,h^{\pi}) \neq \emptyset$ and so from Lemma~\ref{L:TRpolicyexistence} we have
\begin{align*}
  \opT{h^{\pi}} \geq h^{\pi} + g^{\pi} e.
  % \implies \opT{}^2 h^d \geq \opT{}(h^d + g^d e) = \opT{}h^d + g^d e \geq h^d + 2 g^d e
\end{align*}
By monotonicity and linearity of $\opT{}$ (properties \emph{(a)} and \emph{(c)}) of Lemma~\ref{L:T_properties}) we have
\begin{align*}
\opT{h^{\pi}} \geq h^{\pi} + g^{\pi} e \implies \opT{}^2 h^{\pi} \geq \opT{}(h^{\pi} + g^{\pi} e) = \opT{}h^{\pi} + g^{\pi} e \geq h^{\pi} + 2 g^{\pi} e,
\end{align*}
As a result, we can iterate the inequality and obtain for all $n \in \mathbb{N}$:
\begin{align*}
\opT{}^{n+1} h^{\pi} \geq \opT{}^{n}h^{\pi} + g^{\pi} e \implies \underbrace{\opT{}^{n+1} h^{\pi} - \opT{}^{n}h^{\pi}}_{\underset{n \rightarrow + \infty}{\longrightarrow} g^+ e} \geq g^{\pi} e \implies g^+ \geq g^{\pi},
 \end{align*}
where we used property \emph{2.} of Lem.~\ref{lem:optimality.equation.T} proved above. Since the inequality holds for any $\pi \in \Pi_c$, it also holds for the supremum $\sup_{\pi \in \PiC} g^\pi = g_c^*$ (solution to problem~\eqref{P:opt_probl_well_posed}) \ie $g^+ \geq g^*_c$.
\end{enumerate}

%%%%%%%%%%%%%%%%%%%%%%%%%%%%%%%%%%%%%%%%%%%%%%%%%%%%%%%%%%%%%%%%%%%%%%%%%%%
%%%%%%%%%%%%%%%%%%%%%%%%%%%%%%%%%%%%%%%%%%%%%%%%%%%%%%%%%%%%%%%%%%%%%%%%%%%
%%%%%%%%%%%%%%%%%%%%%%%%%%%%%%%%%%%%%%%%%%%%%%%%%%%%%%%%%%%%%%%%%%%%%%%%%%%

\subsection{Approximation guarantees of \regopt (proof of Theorem~\ref{thm:opT.optimality})}
\label{sapp:opT.optimality}

In this section we prove a slightly more general statement than Thm.~\ref{thm:opT.optimality}.

We use operators $\opG{}$ and $\opN{}$ defined in Def.~\ref{def:Gop} and App.~\ref{app:counterexamples} respectively. We recall that when $\opT{}$ is globally feasible at $v$, then $\opN{v} = \opT{v}$ and $\opG{v} = \delta^+_v$. We first slightly relax Asm.~\ref{asm:feasibility} (Asm.~\ref{asm:feasibility2} below) and then prove a generalisation of Thm.~\ref{thm:opT.optimality} (Thm.~\ref{thm:opT.optimality2} below).

\begin{assumption}\label{asm:feasibility2}
Operator $\opT{}$ is globally feasible at $h^+$, i.e., the decision rule $d^+ = \opG{h^+}$ is such that $\opT{h^+} = L_{d^+}{h^+} = \opN{h^+} = h^+ + g^+ e$.
\end{assumption}

\begin{theorem}\label{thm:opT.optimality2}
Assume Asm.~\ref{asm:L.contraction} and~\ref{asm:feasibility2} hold and let
 \begin{align*}
  M_n^+ &= \frac{2 \gamma^n}{1-\gamma} \SP{v_1 - v_0} + \max_s \left\lbrace \opT{v_n}(s) -v_n(s) \right\rbrace\\
  m_n^+ &= \frac{- 2 \gamma^n}{1-\gamma} \SP{v_1 - v_0} + \min_s \left\lbrace \opT{v_n}(s) -v_n(s) \right\rbrace \\
  M_n &= \max_s \left\lbrace \opN{v_n}(s) -v_n(s) \right\rbrace \\
  m_n &= \min_s \left\lbrace \opN{v_n}(s) -v_n(s) \right\rbrace,
 \end{align*}
and $d_n = \opG{v_n}$ be the decision rule computed after $n$ iterations and $\pi_n = (d_n)^\infty$ the corresponding policy. Then we have%
 \begin{align}
  &1) ~ \Big\| g^{\pi_n} - \frac{1}{2} \left( M_n + m_n \right) e \Big\|_{\infty}\leq \frac{1}{2} \left( M_n - m_n \right) = \frac{1}{2} \SP{\opN{v_n} -v_n}\\
  &2) ~ \Big| g^+ - \frac{1}{2} \left( M_n^+ + m_n^+ \right) \Big| \leq \frac{1}{2} \left( M_n^+ - m_n^+ \right) = \frac{2 \gamma^n}{1-\gamma} \SP{v_1 - v_0} + \frac{1}{2}\SP{\opT{v_n} -v_n}\\
  &3) ~ \Big\| g^+e - g^{\pi_n} \Big\|_\infty \leq \max\left\{ M_n^+ - m_n, \; M_n-m_n^+ \right\}
 \end{align}
Moreover, if in addition $\pi^+ = (d^+)^\infty$ is unichain then $g^+$ is the solution to optimization problem~\eqref{P:opt_probl_well_posed}, i.e., $g^+ = g^*_c$ and $\pi^+ \in \PiC^*$.
\end{theorem}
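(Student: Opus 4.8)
The plan is to establish the three numbered inequalities in order and then handle the unichain case, relying throughout on the operator properties collected in Lem.~\ref{L:T_properties} and on the convergence estimates for \regopt from Lem.~\ref{lem:optimality.equation.T}.

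\emph{Part 1 (bounding $g^{\pi_n}$).} Since $d_n = \opG{v_n}$, the definition of $\opN{}$ gives $\opN{v_n} = L_{d_n} v_n$, so by the definitions of $M_n, m_n$ we have $m_n e \le L_{d_n} v_n - v_n \le M_n e$. The operator $L_{d_n} x = r_{d_n} + P_{d_n} x$ is affine, monotone, and satisfies $L_{d_n}(x+\lambda e) = L_{d_n} x + \lambda e$ because $P_{d_n} e = e$. Iterating $L_{d_n} v_n \ge v_n + m_n e$ yields $L_{d_n}^N v_n \ge v_n + N m_n e$ for all $N$; dividing by $N$ and letting $N\to\infty$, the Cesàro limit $\tfrac1N L_{d_n}^N v_n \to g^{\pi_n}$ gives $g^{\pi_n} \ge m_n e$, and symmetrically $g^{\pi_n} \le M_n e$. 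Hence $g^{\pi_n}(s) \in [m_n,M_n]$ for every $s$, which is exactly inequality~(1). This step uses no feasibility assumption and holds even though $\pi_n$ need not have constant gain.

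\emph{Part 2 (bounding $g^+$).} Writing $v_n = \opT{}^n v_0 - (\opT{}^n v_0)(\wb s)e$ and using linearity of $\opT{}$ (Lem.~\ref{L:T_properties}(c)) gives the key identity $\opT{v_n} - v_n = \opT{}^{n+1} v_0 - \opT{}^n v_0$. I would then import the $\ell_\infty$ estimate already derived in the convergence proof: letting $m\to\infty$ in~\eqref{eq:cauchy} yields $\norm{h^+ - v_n} \le \tfrac{\gamma^n}{1-\gamma}\SP{v_1-v_0}$, where $h^+ = \lim_n v_n$ satisfies $\opT{h^+} = h^+ + g^+e$. Since $\opT{}$ is non-expansive in $\ell_\infty$-norm (Lem.~\ref{L:T_properties}(d)), the triangle inequality gives $\norm{(\opT{v_n}-v_n) - g^+e} \le \norm{\opT{v_n}-\opT{h^+}} + \norm{v_n - h^+} \le 2\norm{v_n-h^+} \le \tfrac{2\gamma^n}{1-\gamma}\SP{v_1-v_0}$. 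Thus $|g^+ - (\opT{v_n}-v_n)(s)| \le \tfrac{2\gamma^n}{1-\gamma}\SP{v_1-v_0}$ for every $s$, which immediately forces $g^+ \le M_n^+$ and $g^+ \ge m_n^+$, i.e.\ inequality~(2).

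\emph{Part 3 and the unichain case.} Inequality~(3) is the combination of (1) and (2): from $g^+ \in [m_n^+,M_n^+]$ and $g^{\pi_n}(s)\in[m_n,M_n]$ one reads off $m_n^+ - M_n \le g^+ - g^{\pi_n}(s) \le M_n^+ - m_n$, giving the stated $\max\{M_n^+-m_n,\,M_n-m_n^+\}$ bound. For the final claim, Asm.~\ref{asm:feasibility2} gives $L_{d^+}h^+ = h^+ + g^+e$ with $d^+ = \opG{h^+}$; if $\pi^+ = (d^+)^\infty$ is unichain, the solution of this evaluation equation is unique up to an additive multiple of $e$ and its gain is constant, so $g^{\pi^+} = g^+$ and $h^{\pi^+}$ differs from $h^+$ only by a constant vector (Puterman, Cor.~8.2.7); hence $\SP{h^{\pi^+}} = \SP{h^+} = \SP{\opT{h^+}} \le c$ by the truncation built into $\opT{}$. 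Therefore $\pi^+ \in \PiC$, so $g^+ = g^{\pi^+} \le g^*_c$, and combined with the dominance $g^+ \ge g^*_c$ from Lem.~\ref{lem:optimality.equation.T}(3) this yields $g^+ = g^*_c$ and $\pi^+ \in \PiC^*$. The step needing most care is Part~2 — recognizing that $\opT{v_n}-v_n$ equals the successive difference $\opT{}^{n+1}v_0-\opT{}^nv_0$ and extracting two-sided scalar control of $g^+$ from a vector estimate with the right constant — together with the final unichain argument, which is the only place the relaxed feasibility Asm.~\ref{asm:feasibility2} is used and where transferring the span bound from $h^+$ to $h^{\pi^+}$ must be justified carefully.
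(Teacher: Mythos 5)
Your proposal is correct and follows essentially the same route as the paper's own proof: the same Cauchy estimate $\|h^+ - v_n\|_\infty \leq \frac{\gamma^n}{1-\gamma}\SP{v_1-v_0}$ obtained from~\eqref{eq:cauchy}, the same combination of bounds (1) and (2) to get (3), and the same unichain argument via Puterman's Corollary 8.2.7, the span-invariance $\SP{h^{\pi^+}}=\SP{h^+}=\SP{\opT{h^+}}\leq c$, and the dominance property of Lem.~\ref{lem:optimality.equation.T}. The only cosmetic differences are in Part 1, where you iterate $L_{d_n}$ and pass to the Ces\`aro limit while the paper multiplies $L_{d_n}v_n - v_n$ directly by the stationary matrix $P^*_{d_n}$, and in Part 2, where you use $\ell_\infty$ non-expansiveness of $\opT{}$ plus the triangle inequality while the paper uses monotonicity and linearity; both variants rest on the same lemmas and produce identical constants.
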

\begin{proof}
We first analyse the convergence error. Let $\pi_n$ be the policy associated to the decision rule $d_n = \opG{v_n}$.
We prove the three convergence statements of the theorem.
\begin{enumerate}
 \item We recall that operator $\opN{}$ is such that $L_{d_n} v_n = L_{\opG{v_n}} v_n = \opN{v_n}$. By definition of the gain $g^{\pi_n}$, there exists a \emph{stationary transition matrix} $P^*_{d_n}:=C\text{-}\lim_{k \to +\infty} (P_{d_n})^k$ \citep[Appendix A]{puterman1994markov} such that $g^{\pi_n} = P^*_{d_n} r_{d_n}$. Furthermore, since $P^*_{d_n} P_{d_n} = P^*_{d_n}$, for any vector $v_n$, $g^{\pi_n} = P^*_{d_n} (r_{d_n} + P_{d_n} v_n - v_n ) = P^*_{d_n} (L_{d_n} v_n - v_n )$. Since $\min \left\lbrace L_{d_n} v_n -v_n \right\rbrace e \leq L_{d_n} v_n - v_n \leq \max \left\lbrace L_{d_n} v_n -v_n \right\rbrace e$ and by multiplying these two inequalities by $P^*_{d_n}$ (which is a stochastic matrix) we obtain $m_n e \leq g^{\pi_n} \leq M_n e $ and the result holds.
 \item Using Eq.~\ref{eq:cauchy} and letting $m\rightarrow \infty$, we have $ \| h^+ - v_n \|_{\infty} \leq \frac{\gamma^n}{1-\gamma}\SP{v_1 - v_0} $. Therefore by monotonicity and linearity of $\opT{}$ (property \emph{(a)} and \emph{(c)} of Lem.~\ref{L:T_properties}) and using optimality equation $\opT{h^+} = h^+ + g^+e$:
 \begin{align*}
  &- \frac{\gamma^n}{1-\gamma}\SP{v_1 - v_0}e + h^+ \leq v_n \leq h^+ + \frac{\gamma^n}{1-\gamma}\SP{v_1 - v_0}e\\
  \implies &- \frac{\gamma^n}{1-\gamma}\SP{v_1 - v_0}e + h^+ + g^+ e \leq \opT{v_n} \leq h^+ + g^+e + \frac{\gamma^n}{1-\gamma}\SP{v_1 - v_0}e \\
  \implies &- \frac{2 \gamma^n}{1-\gamma}\SP{v_1 - v_0}e + g^+ e  \leq \opT{v_n} - v_n \leq g^+ e  + \frac{2 \gamma^n}{1-\gamma}\SP{v_1 - v_0}e\\
  \implies &- \frac{2 \gamma^n}{1-\gamma}\SP{v_1 - v_0}e + \opT{v_n} - v_n \leq g^+ e \leq \opT{v_n} - v_n + \frac{2 \gamma^n}{1-\gamma}\SP{v_1 - v_0}e\\
  \implies  &~m_n^+ \leq g^+ \leq M_n^+
 \end{align*}
  and the result holds.
  \item The last inequality is a direct consequence of the two inequalities previously proved:
 \begin{align*}
  m_n e \leq g^{\pi_n} \leq M_n e ~~
  \text{and}~~m_n^+ \leq g^+ \leq M_n^+.
 \end{align*}
\end{enumerate}

We now prove optimality. Under the global feasibility assumption at $h^+$ (Asm.~\ref{asm:feasibility2}), we have that there exists a decision rule $d^+$ and an associated policy $\pi^+ = (d^+)^\infty$ such that
\begin{align}\label{eq:optim.eq.feasible}
\opT{h^+} = L_{d^+}h^+ = h^+ + g^+e,
\end{align}
%
% and moreover
% %
% \begin{align*}
% L_{d^+}h^{\pi^+} = h^{\pi^+} + g^{\pi^+}e,
% \end{align*}
% %
 Since $(g^+,h^+)$ is a solution of the Bellman evaluation equations \eqref{eq:eval.equations} associated to $\pi^+ = (d^+)^\infty$ and since by assumption $\pi^+$ is unichain, Corollary 8.2.7. of \citet{puterman1994markov} holds and so $g^+ = g^{\pi^+}$ and there exists $\lambda \in \Re$ such that $h^{\pi^+} = h^+ + \lambda e$ implying that
%  and since the solution of the evaluation Bellman equation is unique in gain \cite{puterman1994markov} we have $g^+ = g^{\pi^+}$. This implies that $\opT{h^+}-h^+ = L_{d^+}h^{\pi^+} - h^{\pi^+}$ (using property \emph{1.} of Lem.~\ref{lem:optimality.equation.T} and the uniqueness of the solution of the evaluation Bellman equation ). and thus $h^{\pi^+} = h^+ + \lambda e$ Furthermore, we have
%
\begin{align*}
\SP{h^{\pi^+}} = \SP{h^{+} + \lambda e} = \SP{h^{+}} = \SP{h^{+} + g^{+}e} = \SP{\opT{h^{+}}}\leq c,
\end{align*}
where we used the invariance of the span by translation, Eq.~\ref{eq:optim.eq.feasible} and the definition of $\opT$. As a result, $\pi^+ \in \PiC$ and by property \emph{3.} of Lem.~\ref{lem:optimality.equation.T} we can conclude that 
\begin{align*}
g^{\pi^+} = g^+ \geq g^\pi, \quad \forall \pi\in\PiC,
\end{align*}
which implies that $\pi^+ \in \PiC^*$ and $g^{\pi^+} = g^*_c$. Note that if $\pi^+$ is not unichain then we might have $sp\{h^{\pi^+}\} > \SP{h^{+}}$ in which case it is possible that $\pi^+ \notin \PiC$ and so the result does not hold.
\end{proof}

We now relate Asm.~\ref{asm:feasibility2} and Thm.~\ref{thm:opT.optimality2} to (respectively) Asm.~\ref{asm:feasibility} and Thm.~\ref{thm:opT.optimality}. As we just showed in the proof of Thm.~\ref{thm:opT.optimality2}, we always have $\SP{h^{+}} \leq c$ and therefore, whenever Asm.~\ref{asm:feasibility} holds, Asm.~\ref{asm:feasibility2} holds too. As a result, if Asm.~\ref{asm:L.contraction} also holds then the first part of Thm.~\ref{thm:opT.optimality2} holds too. Moreover, if $\SP{v_0} \leq c$ it is straightforward to see that $\SP{v_n} \leq c$ for any $n\geq 1$ and so due to Asm.~\ref{asm:feasibility}, $\opT{v_n} = \opN{v_n}$ for all $n \geq 1$. This implies that \[\max\left\{ M_n^+ - m_n, \; M_n-m_n^+ \right\} = \SP{\opT{v_n} -v_n} + \frac{2 \gamma^n}{1-\gamma} \SP{v_1 - v_0} = \SP{v_{n+1} -v_n} + \frac{2 \gamma^n}{1-\gamma} \SP{v_1 - v_0} \]
and as a consequence Thm.~\ref{thm:opT.optimality} holds. 

% !TEX root = regal.tex

\section{Modified bounded-parameter (extended) MDPs (Proof of Thm.~\ref{th:augmented.perturbed.mdp})}\label{app:augmented.perturbed.mdp}

In this section we prove Thm.~\ref{th:augmented.perturbed.mdp}.

We analyse separately the two modifications introduced in Def.~\ref{def:augmented.perturbed.mdp} on the rewards and transition probabilities (transition ``\emph{kernel}''). For a given bounded-parameter (extended) MDP $\wt{\mathcal{M}}$, we denote by $\wt{\mathcal{M}}_\eta$ the \emph{perturbed} bounded-parameter MDP whose transition kernel is an $\eta$-perturbation of the original one (see formal definition in Lem.~\ref{L:pperturbed_bmdp} below), and by $\wt{\mathcal{M}}^{\downarrow}$ the \emph{augmented} bounded-parameter MDP whose reward intervals are extended from below compared to the original ones (the maximum is not changed while the lower bound of the interval is set to zero, see formal definition in Lem.~\ref{L:r_augmented_bmdp} below). We first prove interesting properties for operators $\wt{T}_c^{\eta}$ associated to $\wt{\mathcal{M}}_\eta$ (Lem.~\ref{L:pperturbed_bmdp}) and $\wt{T}_c^{\downarrow}$ associated to $\wt{\mathcal{M}}^{\downarrow}$ (Lem.~\ref{L:r_augmented_bmdp}). We then consider the MDP $\wt{\mathcal{M}}_\eta^{\downarrow}$ that is both \emph{augmented} and \emph{perturbed} and we present the properties of the corresponding operator $\wt{T}_c^{\eta,\downarrow}$ in Thm.~\ref{T:improved_op}. Note that in Sec.~\ref{sec:regret} we called the augmented and perturbed MDP ``\emph{modified MDP}'' for simplicity, and we used the notation $\wt{\mathcal{M}}^\ddagger$ instead of $\wt{\mathcal{M}}_\eta^{\downarrow}$ for clarity. Thm.~\ref{th:augmented.perturbed.mdp} in Sec.~\ref{sec:regret} is thus equivalent to Thm.~\ref{T:improved_op} stated below.

In the following, for any closed interval $[a,b] \subset \Re$ we use the notations $\min\{ [a,b]\} := a$ and $\max\{ [a,b]\} := b$.

\begin{lemma}\label{L:pperturbed_bmdp}
 Let $\wt{\mathcal{M}}$ be a bounded-parameter MDP defined for all $s, s' \in \mathcal{S}$ and all $a \in \mathcal{A}_s$ by
 \begin{align*}
  r(s,a) \in B_r(s,a) ~~ \text{and} ~~  p(s'|s,a) \in B_p(s,a,s')
 \end{align*}
 where $\mathcal{S}$ and $\mathcal{A}_s$ are finite, $B_r(s,a)$ and $B_p(s,a,s')$ are closed intervals of $[0,\rmaxbound]$ and $[0,1]$ respectively.
%  and $B_p(s,a)$ is an $\ell_1$-ball of $\mathbb{R}^{S}$ depending on whether Berstein or Hoeffding bounds are used. 
 Let $1 \geq \eta >0$ and $\overline{s} \in \mathcal{S}$ and consider the ``\emph{perturbed}'' bounded-parameter MDP $\wt{\mathcal{M}}_{\eta}$ defined $\forall s,s' \in \mathcal{S}$ and $\forall a \in \mathcal{A}_s$ by:
 \begin{align*}
  B_r^\eta(s,a) = B_r(s,a) ~~ \text{and} ~~ B_p^\eta(s,a,s') = \begin{cases}B_p(s,a,s')~~ \text{if}~~s'\neq \overline{s} \\ \text{and}~~B_p(s,a,\overline{s}) \cap [\eta, 1]~~ \text{if}~~s'= \overline{s} \end{cases}
%   \\ \text{or} ~~ B_p^\eta(s,a) = B_p(s,a) \cap \lbrace x \in \mathbb{R}^{S}: ~ x(\overline{s}) \geq \eta \rbrace \end{cases}
 \end{align*}
 where we assume that $\eta$ is small enough so that $\forall s \in \mathcal{S}$ and $\forall a \in \mathcal{A}_s$,
%  \todoaout{How do we enforce the fact that any element in $B_p$ is a probability distribution?}
 \begin{enumerate}[nosep]
  \item $B_p(s,a,\overline{s}) \cap [\eta, 1] \neq \emptyset$ 
%   or $B_p(s,a) \cap \lbrace x \in \mathbb{R}^{S}: ~ x(\overline{s}) \geq \eta \rbrace \neq \emptyset$
  \item $\sum_{s' \in \mathcal{S}} \min \{B_p^\eta(s,a,s')\} \leq 1$ and $\sum_{s' \in \mathcal{S}} \max \{B_p(s,a,s')\} \geq 1$ 
%   or $B_p^\eta(s,a) \cap \big\lbrace x \in \mathbb{R}^{S}: ~ \sum_{s' \in \mathcal{S}} x(s') = 1 \big\rbrace \neq \emptyset$
 \end{enumerate}
% $B_p(s,a,\overline{s}) \cap [\eta, 1] \neq \emptyset$ (or $B_p(s,a) \cap \lbrace x \in \mathbb{R}^{S}: ~ x(s') \geq \eta \rbrace \neq \emptyset$) and $\sum_{s' \in \mathcal{S}} \min B_p(s,a,s') \leq 1$ and $\sum_{s' \in \mathcal{S}} \max B_p(s,a,s') \geq 1$ (or $B_p(s,a) \cap \big\lbrace x \in \mathbb{R}^{S}: ~ \sum_{s' \in \mathcal{S}} x(s') = 1 \big\rbrace \neq \emptyset$).
 If $\wt{L}$ denotes the optimal Bellman operator of $\wt{\mathcal{M}}$ and $\wt{L}_{\eta}$ the optimal Bellman operator of $\wt{\mathcal{M}}_{\eta}$, then $\forall v\in \mathbb{R}^{S}$:
 \begin{align}
  \big\| \wt{L} v - \wt{L}_\eta v \big\|_\infty \leq \SP{v}\eta 
 \end{align}
 Moreover, $\wt{L}_\eta$ is a $\gamma$-span contraction with $\gamma \leq 1- \eta <1$ and $\wt{\mathcal{M}}_{\eta}$ is unichain.
\end{lemma}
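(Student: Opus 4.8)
The plan is to treat the three claims in order, reducing the first to a statement about a single state--action pair. Before anything else I would invoke assumptions~1 and~2 to note that each perturbed interval is non-empty and that the perturbed box still meets the probability simplex, so $\wt{\mathcal{M}}_{\eta}$ is a well-defined bounded-parameter MDP. Now fix $v \in \Re^S$ and a state $s$. Since $\wt{L}v(s) = \max_{a,\wt r,\wt p}[\wt r + \wt p^\transp v]$ with the maximum over $\wt r \in B_r(s,a)$ and $\wt p \in B_p(s,a)$, and since the perturbation leaves the rewards untouched and only \emph{shrinks} the transition polytope by imposing $\wt p(\overline{s}) \ge \eta$, the feasible set of $\wt{L}_\eta$ is contained in that of $\wt{L}$. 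Hence $\wt{L}_\eta v(s) \le \wt{L}v(s)$ immediately, giving one side of the bound. For the other side I would take an optimal triple $(a^*,\wt r^*,\wt p^*)$ for $\wt{L}v(s)$; if $\wt p^*(\overline{s}) \ge \eta$ it is already feasible for $\wt{L}_\eta$ and the two values coincide, so the only case to handle is $\wt p^*(\overline{s}) < \eta$.

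In that case I would construct a feasible perturbed distribution $\wt p^\eta$ from $\wt p^*$ by raising the $\overline{s}$-coordinate to $\eta$, i.e.\ adding mass $\delta := \eta - \wt p^*(\overline{s}) < \eta$ on $\overline{s}$ and removing the same total mass $\delta$ from the other coordinates, each kept above its lower endpoint $\min\{B_p(s,a,s')\}$. Whatever states the removed mass is taken from, the induced value change is $(\wt p^*)^\transp v - (\wt p^\eta)^\transp v = -\delta\, v(\overline{s}) + \sum_{s'\neq\overline{s}}\alpha_{s'}v(s')$ with $\alpha_{s'}\ge 0$ and $\sum_{s'\neq\overline{s}}\alpha_{s'}=\delta$; bounding $\sum_{s'\neq\overline{s}}\alpha_{s'}v(s') \le \delta\max_s v(s)$ gives a loss of at most $\delta(\max_s v(s) - v(\overline{s})) \le \delta\,\SP{v} \le \eta\,\SP{v}$. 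Since $\wt r^* + (\wt p^\eta)^\transp v$ is feasible for $\wt{L}_\eta v(s)$, this yields $\wt{L}v(s) - \wt{L}_\eta v(s) \le \eta\,\SP{v}$, and combining with $\wt{L}_\eta v(s) \le \wt{L}v(s)$ proves $\|\wt{L}v - \wt{L}_\eta v\|_\infty \le \eta\,\SP{v}$.

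The delicate point — and the one I expect to be the main obstacle — is guaranteeing that this mass removal can be performed \emph{feasibly}, i.e.\ that there is enough slack below the lower endpoints to absorb $\delta$. Here assumption~2 is exactly what is needed: the total removable mass equals $\sum_{s'\neq\overline{s}}\big(\wt p^*(s') - \min\{B_p(s,a,s')\}\big) = (1-\wt p^*(\overline{s})) - \sum_{s'\neq\overline{s}}\min\{B_p(s,a,s')\}$, and since $\sum_{s'}\min\{B_p^\eta(s,a,s')\}\le 1$ together with $\min\{B_p^\eta(s,a,\overline{s})\}\ge\eta$ gives $\sum_{s'\neq\overline{s}}\min\{B_p(s,a,s')\}\le 1-\eta$, this removable mass is at least $(1-\wt p^*(\overline{s})) - (1-\eta) = \delta$. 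Thus the construction always goes through.

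For the contraction claim I would use the ergodic-coefficient characterisation (as in~\citep[Thm.~6.6.6]{puterman1994markov}) of the span-contraction modulus of $\wt{L}_\eta$, namely $\gamma = 1 - \min \sum_{j}\min\{\wt p(j),\wt q(j)\}$ where the minimum ranges over all $(s,a),(u,b)$ and over $\wt p\in B_p^\eta(s,a)$, $\wt q\in B_p^\eta(u,b)$. Because every admissible transition places mass at least $\eta$ on $\overline{s}$, we have $\sum_j\min\{\wt p(j),\wt q(j)\} \ge \min\{\wt p(\overline{s}),\wt q(\overline{s})\}\ge\eta$, whence $\gamma \le 1-\eta < 1$. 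Finally, for the unichain claim I would observe that every decision rule of $\wt{\mathcal{M}}_{\eta}$ induces a transition matrix with $P(\overline{s}\mid s)\ge\eta>0$ for all $s$, so $\overline{s}$ is accessible from every state; since recurrent classes are closed under accessibility and the state space is finite, $\overline{s}$ must belong to every recurrent class, forcing a single recurrent class and hence a unichain structure for every policy.
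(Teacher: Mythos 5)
Your proof is correct, and for the central inequality it takes a genuinely different --- and more elementary --- route than the paper. The paper bounds $|\wt{L}v(s) - \wt{L}_\eta v(s)|$ by $\max_a \|\wt{p}(\cdot|s,a) - \wt{p}_\eta(\cdot|s,a)\|_1 \cdot \SP{v}/2$ after recentering $v$, and then proves $\|\wt{p}(\cdot|s,a) - \wt{p}_\eta(\cdot|s,a)\|_1 \leq 2\eta$ by a lengthy case analysis of the sorted greedy-allocation procedure (Dann and Brunskill) that computes the two maximizing distributions. You instead exploit the one-sided inclusion of feasible sets --- the perturbation only shrinks the transition polytope and leaves rewards untouched --- so $\wt{L}_\eta v \leq \wt{L} v$ is immediate, and for the reverse direction you directly repair an optimal unperturbed distribution by moving $\delta = \eta - \wt{p}^*(\overline{s}) \leq \eta$ of mass onto $\overline{s}$, paying at most $\delta\left(\max_s v(s) - v(\overline{s})\right) \leq \eta \SP{v}$. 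Your slack computation is the key step and it is sound: assumption 2 plus $\min\{B_p^\eta(s,a,\overline{s})\} \geq \eta$ gives $\sum_{s'\neq\overline{s}}\min\{B_p(s,a,s')\} \leq 1-\eta$, so the removable mass $(1-\wt{p}^*(\overline{s})) - \sum_{s'\neq\overline{s}}\min\{B_p(s,a,s')\}$ is at least $\delta$, making the repair always feasible. Two small points deserve a sentence in a polished write-up: first, $\eta$ indeed lies in $B_p^\eta(s,a,\overline{s})$, because $\wt{p}^*(\overline{s}) < \eta$ belongs to the closed interval $B_p(s,a,\overline{s})$ whose upper endpoint is at least $\eta$ by assumption 1; second, the maxima defining $\wt{L}$ and $\wt{L}_\eta$ are attained by compactness, which you use when selecting $(a^*, \wt{r}^*, \wt{p}^*)$. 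What the paper's longer argument buys is an explicit $\ell_1$-closeness guarantee between the maximizers actually produced by the implementation's allocation algorithm, which is extra information beyond the lemma's statement; your argument buys brevity and independence from any particular algorithmic construction. The contraction and unichain parts of your proof coincide with the paper's (ergodic-coefficient bound via Theorem 6.6.6 of Puterman, and the observation that $\overline{s}$ must belong to every recurrent class, forcing a single one).
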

\begin{proof}
 For all states $s \in \mathcal{S}$ and actions $a \in \mathcal{A}_s$ we use the following notations
 \begin{align}
  \widetilde{r}(s,a) {:=} \max\{B_r(s,a)\} ~~ \text{and} ~~ \widetilde{p}(\cdot|s,a) {:=} \argmax_{p(s') \in B_p(s,a,s')} \sum_{s' \in \mathcal{S}} p(s')v(s')
 \end{align}
 and we define $\widetilde{r}_\eta(s,a)$ and $\widetilde{p}_\eta(\cdot|s,a)$ similarly with $B_r(s,a)$ and $B_p(s,a,s')$ replaced by $B_r^\eta(s,a)$ and $B_p^\eta(s,a,s')$.
 \begin{align*}
  \forall s \in \mathcal{S}, ~ |Lv(s) - \wt{L}_\eta v(s)| &= \left|\max_{a \in \mathcal{A}_s} \left\lbrace \widetilde{r}(s,a) +  \sum_{s' \in \mathcal{S}} \widetilde{p}(s'|s,a) v(s') \right\rbrace - \max_{a \in \mathcal{A}_s} \left\lbrace \widetilde{r}_\eta(s,a) +  \sum_{s' \in \mathcal{S}} \widetilde{p}_\eta(s'|s,a) v(s') \right\rbrace \right|\\
  &\leq \max_{a \in \mathcal{A}_s} \left| \underbrace{\widetilde{r}(s,a) - \widetilde{r}_\eta(s,a)}_{=0} + \sum_{s' \in \mathcal{S}} (\widetilde{p}(s'|s,a) - \widetilde{p}_\eta(s'|s,a))v(s') \right|
 \end{align*}
  where we used the fact that $|\max_x f(x) - \max_x g(x)| \leq \max_x |f(x) - g(x)| $ and $B_r^\eta(s,a) =B_r(s,a)$ by definition. Since $\widetilde{p}(\cdot| s,a)$ and $\widetilde{p}_\eta(\cdot|s,a)$ are probability distributions (i.e., sum to 1), for any real $\lambda$: $$\sum_{s' \in \mathcal{S}} (\widetilde{p}(s'|s,a) - \widetilde{p}_\eta(s'|s,a))v(s')= \sum_{s' \in \mathcal{S}} (\widetilde{p}(s'|s,a) - \widetilde{p}_\eta(s'|s,a))(v(s') + \lambda)$$
  Taking $\lambda = -(\max_s v(s) + \min_s v(s))/2$ we obtain:
  \begin{align*}
  \forall s \in \mathcal{S}, ~ |Lv(s) - \wt{L}_\eta v(s)| &\leq \max_{a \in \mathcal{A}_s} \sum_{s' \in \mathcal{S}} |\widetilde{p}(s'|s,a) - \widetilde{p}_\eta(s'|s,a)| \cdot \max_s \{v(s) +\lambda\}\\
  &=\max_{a \in \mathcal{A}_s} \| \widetilde{p}(\cdot|s,a) - \widetilde{p}_\eta(\cdot|s,a)\|_1 \cdot (\max_s \{v(s)\} +\lambda)\\
  &= \max_{a \in \mathcal{A}_s} \| \widetilde{p}(\cdot|s,a) - \widetilde{p}_\eta(\cdot|s,a)\|_1 \cdot \frac{\SP{v}}{2}
 \end{align*}
  We now need to upper-bound $\| \widetilde{p}(\cdot|s,a) - \widetilde{p}_\eta(\cdot|s,a)\|_1$.
%   Let's start with the case where the bounds on $\widetilde{p}(\cdot|s,a)$ and $\widetilde{p}_\eta(\cdot|s,a)$ are state-dependent: $B_p(s,a,s')$ and $B_p^\eta(s,a,s')$. 
  $\widetilde{p}(\cdot|s,a)$ and $\widetilde{p}_\eta(\cdot|s,a)$ can be computed using the following procedure \citep[Appendix A]{DannBrunskill15}:
  \begin{enumerate}
   \item Assume without loss of generality that the coordinates of $v$ are sorted in decreasing order: $v(s_1) \geq v(s_2) \geq ... \geq v(s_n)$
   \item Initialise $\widetilde{p}^0(s'|s,a) = \min \{B_p(s,a,s')\}$, $\Delta^{0} = 1 - \sum_{s'\in\mathcal{S}}\widetilde{p}^0(s'|s,a)$ for all $s' \in \mathcal{S}$, and $i=1$ 
   \item While $\Delta^{i-1} > 0$ do
   \begin{itemize}
   \item $ \delta^i \leftarrow \min \big\lbrace \Delta^{i-1}; \max \{B_p(s,a,s_i)\} - \widetilde{p}^{i-1}(s_i|s,a)\big\rbrace$
    \item $\widetilde{p}^{i}(s_i|s,a) \leftarrow \widetilde{p}^{i-1}(s_i|s,a)+ \delta^i $
    \item $\Delta^{i} \leftarrow \Delta^{i-1} - \delta^i$
    \item $i \leftarrow i+1$
   \end{itemize}
   \item Return $\widetilde{p}(\cdot|s,a) = \widetilde{p}^{i-1}(\cdot|s,a)$
  \end{enumerate}
  Let's now show that at any iteration of the above procedure $\widetilde{p}(\cdot|s,a)$ and $\widetilde{p}_\eta(\cdot|s,a)$ are at most $2\eta$-far in $\ell_1$-norm.
  % We index by $i$ the values obtained at iteration $i$. 
  Notice that at the end of iteration $i$, the vector $\widetilde{p}^{i}(\cdot|s,a)$ differs from $\widetilde{p}^{i-1}(\cdot|s,a)$ only in state $s_i$. In the following we use index $\eta$ to denote the quantities obtained when the above procedure is applied with $B_p^\eta(s,a,s')$ instead of $B_p(s,a,s')$ (the output is then $\widetilde{p}_\eta(\cdot|s,a)$). The conditions $B_p(s,a,\overline{s}) \cap [\eta, 1] \neq \emptyset$, $\sum_{s' \in \mathcal{S}} \min \{B_p^\eta(s,a,s')\} \leq 1$ and $\sum_{s' \in \mathcal{S}} \max \{B_p(s,a,s')\} \geq 1$ ensure that the procedure doesn't stop prematurely when $B_p^\eta(s,a,s')$ replaces $B_p(s,a,s')$. Indeed, when they hold there exists a vector $p$ satisfying $\sum_{s' \in \calS}p(s') = 1$ and $\forall s' \in \calS, ~ p(s') \in B_p^\eta(s,a,s')$.
  \begin{itemize}
          \item \textbf{$\bm{i=0}$ (initialization):} By definition, for any $s' \neq \overline{s}$, $B_p^\eta(s,a,s') = B_p(s,a,s')$ implying that $\widetilde{p}^0(s'|s,a) = \widetilde{p}_\eta^0(s'|s,a)$ and moreover $\eta \leq\min \{B_p^\eta(s,a,\overline{s})\} \leq \eta + \min \{B_p(s,a,\overline{s})\}$ implying that $\eta \leq \widetilde{p}^0_\eta(\overline{s}|s,a) \leq \eta + \widetilde{p}^0(\overline{s}|s,a)$ and thus
   $\Delta^0_\eta +\eta \geq \Delta^0 \geq \Delta^0_\eta$ and 
   \[ \| \widetilde{p}^0(\cdot|s,a) - \widetilde{p}_\eta^0(\cdot|s,a) \|_1 = |\widetilde{p}^0(\overline{s}|s,a) - \widetilde{p}_\eta^0(\overline{s}|s,a)| \leq \eta \]
   \end{itemize}
   When the procedure to compute $\widetilde{p}_\eta(\cdot|s,a)$ stops, there are only two possibilities: state $\overline{s}$ is updated either before or after $\Delta_\eta = 0$. In the following we analyze separately the two cases.
%    \item If $\Delta^{i-1}_\eta > 0$ and for $k=1,...,i-1$, $s_k \neq \overline{s}$:\\
%    For $k=1,...,i-1$: $\delta^k_\eta = \max B_p^\eta(s,a,s_k) - \widetilde{p}^{k}_\eta(s_k|s,a) = \max B_p(s,a,s_k) - \widetilde{p}^{k}(s_k|s,a)  = \delta^k$ and $\Delta^k - \Delta^k_\eta =\Delta^0 - \Delta^0_\eta$ and $\| \widetilde{p}^k(\cdot|s,a) - \widetilde{p}^k_\eta(\cdot|s,a)\|_1 = \Delta^0 - \Delta^0_\eta$\\
%    At iteration $i$, there are only two possible cases
%    \begin{itemize}
%    \item If $\Delta^{i-1}_\eta \geq \max B_p^\eta(s,a,s_i) - \widetilde{p}_\eta^{i-1}(s_i|s,a)$ then:\\
%    $\delta^i_\eta = \max B_p^\eta(s,a,s_i) - \widetilde{p}_\eta^{i-1}(s_i|s,a) = \max B_p(s,a,s_i) - \widetilde{p}^{i-1}(s_i|s,a) \leq \Delta^{i-1}_\eta \leq \Delta^{i-1}$
%    $\implies \delta^i_\eta = \delta^i$
%    \item 
%    \end{itemize}
%    $\delta^i = \delta^i_\eta \implies \Delta^i - \Delta^i_\eta =\Delta^0 - \Delta^0_\eta$ and $\| \widetilde{p}^i(\cdot|s,a) - \widetilde{p}_\eta^i(\cdot|s,a) \|_1 = \Delta^0 - \Delta^0_\eta \leq \eta$
   \begin{itemize}
           \item \textbf{$\bm{\Delta_\eta = 0}$ occurs first:} \ie $\Delta^{i}_\eta = 0$ and for $k=1,...,i$, $s_k \neq \overline{s}$ and $\Delta^{k-1}_\eta > 0$.\\
                   As a consequence, we have that $\widetilde{p}_\eta(\cdot|s,a) = \widetilde{p}_\eta^{i}(\cdot|s,a)$ and by triangle inequality:
   \begin{align}\label{eqn:temp.eq}
    \| \widetilde{p}(\cdot|s,a) - \widetilde{p}_\eta(\cdot|s,a)\|_1 \leq \| \widetilde{p}^{i}(\cdot|s,a) - \widetilde{p}_\eta^{i}(\cdot|s,a)\|_1 + \| \widetilde{p}^{i}(\cdot|s,a) - \widetilde{p}(\cdot|s,a)\|_1
   \end{align}
% Moreover, for all $k=1,...,i,$ $\Delta^{k-1} \geq \Delta^{k-1}_\eta > 0$ therefore: 
   By assumption for $k=1,...,i-1$, $\Delta^{k-1}_\eta > 0$, we have that $\widetilde{p}^k_\eta(s_k|s,a) = \max \{B_p^\eta(s,a,s_k)\} = \max \{B_p(s,a,s_k)\}$. Moreover, for all $k=1,...,i-1$, ~$s_k \neq \overline{s}$ and thus by trivial induction we have that $\Delta^{k} - \Delta^{k}_\eta = \Delta^0 - \Delta^0_\eta$, $\Delta^{k-1} \geq \Delta^{k-1}_\eta > 0$ and  ~$\widetilde{p}^k(s_k|s,a) =\max \{B_p(s,a,s_k)\} = \widetilde{p}^k_\eta(s_k|s,a)$. Then:  
   \[ \forall k=1,...,i-1, \quad \| \widetilde{p}^k(\cdot|s,a) - \widetilde{p}^k_\eta(\cdot|s,a)\|_1 = \Delta^0 - \Delta^0_\eta \leq \eta\]
   and thus $\| \widetilde{p}^{i}(\cdot|s,a) - \widetilde{p}^{i}_\eta(\cdot|s,a)\|_1 = \Delta^0 - \Delta^0_\eta + \delta^{i} - \delta^{i}_\eta$.
   Since $\| \widetilde{p}^{i}(\cdot|s,a) - \widetilde{p}(\cdot|s,a)\|_1 = \Delta^{i}$, after incorporating everything into~\eqref{eqn:temp.eq} we obtain: 
   \begin{align*}
    \| \widetilde{p}(\cdot|s,a) - \widetilde{p}_\eta(\cdot|s,a)\|_1 &\leq \Delta^0 - \Delta^0_\eta - \delta^{i}_\eta + \underbrace{\delta^{i} + \Delta^{i}}_{= \Delta^{i-1}}\\
    &= \Delta^0 - \Delta^0_\eta + \underbrace{\Delta^{i-1}}_{= \Delta^{i-1}_ \eta+\Delta^0 - \Delta^0_\eta} - \delta^{i}_\eta = 2(\Delta^0 - \Delta^0_\eta) + \underbrace{\Delta^{i-1}_ \eta - \delta^{i}_\eta }_{= \Delta^{i}_\eta = 0}\\
    &= 2(\Delta^0 - \Delta^0_\eta) \leq 2\eta
   \end{align*}
   \item \textbf{$\bm{\overline{s}}$ is updated first:} \ie $s_i = \overline{s}$ and for $k=1,...,i-1$, $s_k \neq \overline{s}$ and $\Delta^{k}_\eta > 0$. By trivial induction we have that for all $k < i$, ~$\Delta^{k} - \Delta^{k}_\eta = \Delta^0 - \Delta^0_\eta$~ and ~$\widetilde{p}^k(s_k|s,a) =\max \{B_p(s,a,s_k)\} = \max \{B_p^\eta(s,a,s_k)\} = \widetilde{p}^k_\eta(s_k|s,a)$. Since $\delta^i_\eta$ is defined as the minimum between two values, there are only two possible cases:
   \begin{itemize}
    \item If $\delta^i_\eta = \max \{B_p^\eta(s,a,s_i)\} - \widetilde{p}_\eta^{i-1}(s_i|s,a) \leq \Delta^{i-1}_\eta$ we have that 
    \begin{align*}
     \max \{B_p(s,a,s_i)\} - \widetilde{p}^{i-1}(s_i|s,a)&= \max \{B_p^\eta(s,a,s_i)\} - \widetilde{p}_\eta^{i-1}(s_i|s,a) +  \Delta^0 - \Delta^0_\eta\\
     &\leq \Delta^{i-1}_\eta +  \Delta^0 - \Delta^0_\eta= \Delta^{i-1}
    \end{align*}
    which implies $\delta^i = \max \{B_p(s,a,s_i)\} - \widetilde{p}^{i-1}(s_i|s,a)$. As a consequence,
    \begin{itemize}
            \item $\widetilde{p}^{i}(s_i|s,a) = \max \{B_p(s,a,s_i)\} = \max \{B_p^\eta(s,a,s_i)\} =\widetilde{p}_\eta^{i}(s_i|s,a)$
            \item $\Delta^{i} = \Delta^{i}_\eta$
    \end{itemize}
     Thus for $k > i$, $\widetilde{p}^{k}(s_k|s,a) = \widetilde{p}_\eta^{k}(s_k|s,a)$ and so $\widetilde{p}(\cdot|s,a) = \widetilde{p}_\eta(\cdot|s,a)$.
     \item If $\delta^i_\eta = \Delta^{i-1}_\eta \leq \max \{B_p^\eta(s,a,s_i)\} - \widetilde{p}_\eta^{i-1}(s_i|s,a) $ we have that
     \begin{align*}
     \widetilde{p}^{i-1}_\eta(s_i|s,a) = \widetilde{p}^{0}_\eta(s_i|s,a) = \widetilde{p}^{0}_\eta(\overline{s}|s,a) = \min\{B_p^\eta(s,a,\overline{s})\} &= \min\{B_p(s,a,\overline{s})\} +\Delta^0 - \Delta^0_\eta\\ &= \widetilde{p}^{0}(\overline{s}|s,a) +\Delta^0 - \Delta^0_\eta\\ &= \widetilde{p}^{0}(s_i|s,a) + \Delta^0 - \Delta^0_\eta\\ &= \widetilde{p}^{i-1}(s_i|s,a) + \Delta^0 - \Delta^0_\eta
     \end{align*}
     implying that
     \begin{align*}
     \Delta^{i-1} = \Delta^{i-1}_\eta + \Delta^0 - \Delta^0_\eta &\leq \max \{B_p^\eta(s,a,s_i)\} - \widetilde{p}^{i-1}_\eta(s_i|s,a) + \Delta^0 - \Delta^0_\eta\\
     &= \max \{B_p(s,a,s_i)\} - \widetilde{p}^{i-1}(s_i|s,a)
    \end{align*}
    which implies $\delta^i = \Delta^{i-1}$. As a consequence,
    \begin{itemize}
            \item $\widetilde{p}^{i}(s_i|s,a) = \widetilde{p}^{i-1}(s_i|s,a) + \delta^i = \underbrace{\widetilde{p}^{i-1}(s_i|s,a) + \Delta^0 - \Delta^0_\eta}_{\widetilde{p}^{i-1}_\eta(\bar{s}|s,a)} + \Delta^{i-1}_\eta = \widetilde{p}_\eta^{i}(s_i|s,a)$
            \item $\Delta^{i} = \Delta^{i}_\eta = 0$
    \end{itemize}
     Thus for $k > i$, $\widetilde{p}^{k}(s_k|s,a) = \widetilde{p}_\eta^{k}(s_k|s,a)$ and so $\widetilde{p}(\cdot|s,a) = \widetilde{p}_\eta(\cdot|s,a)$.
   \end{itemize}
   \end{itemize}
   In conclusion: $\max_{a \in \mathcal{A}_s} \| \widetilde{p}(\cdot|s,a) - \widetilde{p}_\eta(\cdot|s,a)\|_1 \leq 2 \eta$ implying  $\|Lv - \wt{L}_{\eta}v\|_\infty \leq \SP{v} \eta $. 
%    This inequality also holds in the case where the bounds on $\widetilde{p}(\cdot|s,a)$ and $\widetilde{p}_\eta(\cdot|s,a)$ are in $\ell_1$-norm ($B_p(s,a)$ and $B_p^\eta(s,a)$) since the procedure to compute $\widetilde{p}(\cdot|s,a)$ is exactly the same with $B_p(s,a,s')= [\max\{0;c(s')-r/2\}, \min\{1;c(s')+r/2\}]$ where $\bm{c}$ and $r$ are respectively the center and radius of the $\ell_1$-ball $B_p(s,a)$.
   
   From~\citep[Theorem 6.6.6]{puterman1994markov}, we know that $\wt{L}_\eta$ is Lipschitz continuous in span semi-norm with Lipschitz constant:
   \begin{align*}
           \gamma &= 1 - \min_{s\in \mathcal{S},u \in \mathcal{S}, a \in \mathcal{A}_s, b \in \mathcal{A}_u} \min_{\widetilde{p}_\eta, \overline{p}_\eta} \left\lbrace \sum_{j \in \mathcal{S}}\min \left\{ \widetilde{p}_\eta(j|s,a),\overline{p}_\eta(j|u,b) \right\} \right\rbrace\\
    &= 1 - \min_{s \in \mathcal{S},u \in \mathcal{S}, a \in \mathcal{A}_s, b \in \mathcal{A}_u}\min_{\widetilde{p}_\eta, \overline{p}_\eta} \left\lbrace \underbrace{\sum_{j \neq \overline{s}}\min \left \lbrace \widetilde{p}_\eta(j|s,a),\overline{p}_\eta(j|u,b)  \right \rbrace}_{\geq 0} + \underbrace{\min \left \lbrace \widetilde{p}_\eta(\overline{s}|s,a),\overline{p}_\eta(\overline{s}|u,b) \right \rbrace}_{\geq \eta} \right\rbrace \leq 1- \eta < 1
   \end{align*}
   Thus $\wt{L}_\eta$ is a $\gamma$-span-contraction with $\gamma \leq 1-\eta <1$. The term $\gamma$ is often referred to as ``\emph{ergodic coefficient}'' in the literature.
   
   Finally, by definition of $\wt{\mathcal{M}}_{\eta}$, for any decision rule $d \in \MR$ and for any state $s \in \calS$:~ $p(\overline{s}|s,d(s))>0$. Assume that the policy $\pi = d^\infty$ associated to $d$ has more than one \emph{recurrent class} and pick $s_1, s_2 \in \calS$ belonging to two different recurrent classes. Since, $p(\overline{s}|s_1,d(s_1))>0$ and $p(\overline{s}|s_2,d(s_2))>0$, necessarily $\overline{s}$ must belong to both recurrent classes which is impossible as two distinct recurrent classes have disjoint state spaces by definition. Therefore $\pi$ is unichain. Since $\pi$ was chosen arbitrarily, $\wt{\mathcal{M}}_{\eta}$ is unichain which concludes the proof.
\end{proof}

We now consider the perturbation of the reward intervals.

\begin{lemma}\label{L:r_augmented_bmdp}
 Let $\wt{\mathcal{M}}$ be a bounded-parameter MDP defined $\forall s, s' \in \mathcal{S}$ and $\forall a \in \mathcal{A}_s$ by:
 \begin{align*}
  r(s,a) \in B_r(s,a) ~~ \text{and} ~~ p(s'|s,a) \in B_p(s,a,s')
 \end{align*}
 where $\mathcal{S}$ and $\mathcal{A}_s$ are finite, $B_r(s,a)$ and $B_p(s,a,s')$ are closed intervals of $[0,\rmaxbound]$ and $[0,1]$ respectively.
%  and $B_p(s,a)$ is an $\ell_1$-ball of $\mathbb{R}^{S}$.
 Consider the ``augmented'' bounded-parameter MDP $\wt{\mathcal{M}}^{\downarrow}$ defined $\forall s, s' \in \mathcal{S}$ and $\forall a \in \mathcal{A}_s$ by:
 \begin{align*}
         B^{\downarrow}_r(s,a) = [r_{\min}, \max\{B_r(s,a)\}] ~~ \text{and} ~~ B^{\downarrow}_p(s,a,s') = B_p(s,a,s')
 \end{align*}

 Let $\wt{L}_d$ ($\wt{L}$) and $\wt{L}^{\downarrow}_d$ ($\wt{L}^{\downarrow}$) be the (optimal) Bellman operators of $\wt{\mathcal{M}}$ and $\wt{\mathcal{M}}^{\downarrow}$, respectively. Then %Define  $\opT{} := \Gamma_c \wt{L}$ and $T^+_c := \Gamma_c \wt{L}^+$. Then:
 \begin{enumerate}
         \item $\forall v \in  \mathbb{R}^{S}$, $\wt{L}v = \wt{L}^{\downarrow}v$  and $\wt{\opT{}}v = \wt{T}^{\downarrow}_cv$.
         \item $\forall v \in \mathbb{R}^{S}$ s.t. $\SP{v} \leq c$,~~
         $
                 \wt{D}^{\downarrow}(c,v) = \{d \in \MR(\wt{\mathcal{M}}^{\downarrow}) ~|~ sp\{\wt{L}^{\downarrow}_d v\} \leq c\} \neq \emptyset
         $
 \end{enumerate}
\end{lemma}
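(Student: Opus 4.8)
The plan is to handle the two claims separately, in both cases exploiting that the ``downward'' augmentation only lowers the \emph{left} endpoints of the reward intervals while leaving their upper endpoints and \emph{all} transition intervals untouched.

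For the first claim, I would start from the definition of the optimal Bellman operator of a bounded-parameter MDP (Eq.~\eqref{eq:evi}): $\wt{L}v(s) = \max_{a,\wt{r},\wt{p}}[\wt{r} + \wt{p}^\transp v]$, where the maximization runs over $\wt{r}\in B_r(s,a)$ and $\wt{p}\in B_p(s,a)$. Since the objective is increasing in $\wt{r}$, the optimizing reward in each state is always the upper endpoint $\max\{B_r(s,a)\}$. Because $B^{\downarrow}_r(s,a) = [r_{\min},\max\{B_r(s,a)\}]$ has the \emph{same} upper endpoint as $B_r(s,a)$, and $B^{\downarrow}_p = B_p$, the two maximizations coincide pointwise, giving $\wt{L}v = \wt{L}^{\downarrow}v$ for every $v$. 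The identity $\wt{\opT{}}v = \wt{T}^{\downarrow}_c v$ then follows immediately: by Def.~\ref{def:opT} the truncation $\opT{}$ is a deterministic function of $v$ \emph{only through} the vector $\wt{L}v$, so equal Bellman images produce equal truncations.

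For the second claim the key observation is that the augmentation places the left endpoint $r_{\min}$ inside every interval $B^{\downarrow}_r(s,a)$. Hence there exists a decision rule $\delta \in \MR(\wt{\mathcal{M}}^{\downarrow})$ that in every state selects an extended action whose reward component equals $r_{\min}$ (together with any admissible transition), so that $\wt{r}^{\downarrow}_\delta = r_{\min}e$ componentwise. For such a $\delta$ we have $\wt{L}^{\downarrow}_\delta v = r_{\min}e + \wt{P}^{\downarrow}_\delta v$, and by translation invariance of the span together with the fact that multiplication by a stochastic matrix is non-expansive in span semi-norm~\citep[Proposition 6.6.1]{puterman1994markov}, we get $\SP{\wt{L}^{\downarrow}_\delta v} = \SP{\wt{P}^{\downarrow}_\delta v} \leq \SP{v} \leq c$ whenever $\SP{v}\leq c$. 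This exhibits a decision rule with span-constrained one-step value, i.e.\ $\delta\in\wt{D}^{\downarrow}(c,v)$, proving $\wt{D}^{\downarrow}(c,v)\neq\emptyset$.

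I do not expect a genuine obstacle: both parts reduce to elementary facts about interval endpoints and the span-nonexpansiveness of stochastic matrices. The only point requiring care is to phrase everything in the extended-MDP formalism, where a decision rule jointly picks an action and admissible reward/transition values; selecting the reward $r_{\min}$ uniformly is legitimate \emph{precisely because} the augmentation guarantees $r_{\min}\in B^{\downarrow}_r(s,a)$ for all $(s,a)$, which is exactly the property the original (non-augmented) intervals need not satisfy.
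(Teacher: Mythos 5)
Your proposal is correct and follows essentially the same route as the paper's proof: part 1 uses that the augmentation preserves the upper endpoints of $B_r$ and all of $B_p$, so the optimal Bellman images coincide and the truncation (which depends on $v$ only through $\wt{L}v$) coincides as well; part 2 exhibits a decision rule with constant reward $r_{\min}$ and invokes translation invariance of the span together with the non-expansiveness of stochastic matrices \citep[Proposition 6.6.1]{puterman1994markov}. The only cosmetic difference is that the paper takes $\delta_v$ to be the componentwise minimizer $\argmin_d \{\wt{L}^{\downarrow}_d v\}$ (which automatically selects $r_{\min}$), whereas you pair $r_{\min}$ with an arbitrary admissible transition — both choices work for exactly the reason you give.
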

\begin{proof}
For all states $s \in \mathcal{S}$ and actions $a \in \mathcal{A}_s$ we use the following notations
 \begin{align}
  \widetilde{r}(s,a) {:=} \max\{B_r(s,a)\} ~~ \text{and} ~~ \widetilde{p}(\cdot|s,a) {:=} \argmax_{p(s') \in B_p(s,a,s')} \sum_{s' \in \mathcal{S}} p(s')v(s')\\
  \uwidetilde{r}(s,a) {:=} \min\{B_r(s,a)\} ~~ \text{and} ~~ \uwidetilde{p}(\cdot|s,a) {:=} \argmin_{p(s') \in B_p(s,a,s')} \sum_{s' \in \mathcal{S}} p(s')v(s')
 \end{align}
and we define $\widetilde{r}_{\downarrow}(s,a)$, ~$\uwidetilde{r}_{\downarrow}(s,a)$ and $\widetilde{p}_{\downarrow}(\cdot|s,a)$, ~$\uwidetilde{p}_{\downarrow}(s,a)$ similarly with $B_r(s,a)$ and $B_p(s,a,s')$ replaced by $B_r^{\downarrow}(s,a)$ and $B_p^{\downarrow}(s,a,s')$.
% We additionally define $\uwidetilde{r}$ ($\uwidetilde{r}_{\downarrow}$) and $\uwidetilde{p}$ ($\uwidetilde{p}_{\downarrow}$) as the minimization \wrt $B_r$ ($B_r^{\downarrow}$) and $B_p$ ($B_p^{\downarrow}$), respectively.

Notice that the bounded-parameter MDP $\wt{\mathcal{M}}^{\downarrow}$ is just augmented from below, i.e., the maximum value of the reward is not altered: $\forall (s,a) \in \mathcal{S}\times\mathcal{A}$, $\widetilde{r}(s,a)=\widetilde{r}_{\downarrow}(s,a)$. Moreover, by definition: $\forall s,s' \in \mathcal{S},~ \forall a \in\mathcal{A}$,~ $\widetilde{p}(s'|s,a)=\widetilde{p}_{\downarrow}(s'|s,a)$.
As a consequence, $\forall v \in \mathbb{R}^{S}$ and $\forall s \in \mathcal{S}$, $\wt{L}v(s) = \max_{a\in \mathcal{A}_s} \left\{ \widetilde{r}(s,a) + \sum_{s' \in \mathcal{S}} \widetilde{p}(s'|s,a) v(s') \right\} = \wt{L}^{\downarrow}v(s)$.
Since $\wt{\opT{}}v = \Gamma_c \wt{L}v$ and $\wt{T}^{\downarrow}_c v = \Gamma_c \wt{L}^{\downarrow}v$ by definition, it follows that $\wt{\opT{}}v = \wt{T}^{\downarrow}_cv$
% The application of $\Gamma_c$ preserves the equality.

To prove the second statement, for any $v \in \Re^S$ we define $\delta_v\in \MR(\wt{\mathcal{M}}^{\downarrow})$ achieving the component-wise minimal value of $\wt{L}^{\downarrow}_{d} v$ for $d\in \MR(\wt{\mathcal{M}}^{\downarrow})$. Formally:
\[
        \delta_v := \argmin_{d\in \MR(\wt{\mathcal{M}}^{\downarrow})} \left\{ \wt{L}^{\downarrow}_d v
\right\} \implies \forall s \in \calS,~~ L_{\delta_v} v(s) = \min_{a\in\mathcal{A}_s} \left\{
                \underbrace{\uwidetilde{r}_{\downarrow}(s,a)}_{=r_{\min}} + \sum_{s' \in \mathcal{S}} \uwidetilde{p}_{\downarrow}(s'|s,a) v(s')
\right\}.
\]
As a consequence, if $v$ satisfies $\SP{v} \leq c$ then:
\begin{align*}
        \SP{\wt{L}^{\downarrow}_{\delta_v} v} &= \SP{r_{\min}\bm{e} + \uwidetilde{P}^{\downarrow}_{\delta_v} v } = \SP{\uwidetilde{P}^{\downarrow}_{\delta_v} v } \leq \SP{v} \leq c
\end{align*}
where we used \citep[Poposition 6.6.1]{puterman1994markov} applied to the stochastic matrix $\uwidetilde{P}^{\downarrow}_{\delta_v}$. Then, ${\delta_v}\in \wt{D}^{\downarrow}(c,v) \neq \emptyset$.
\end{proof}

We can finally ``merge'' Lem.~\ref{L:pperturbed_bmdp} and~\ref{L:r_augmented_bmdp} and provide properties for operator $\opT{}^{\eta,\downarrow}$ associated to the augmented and perturbed bounded-parameter MDP $\wt{\mathcal{M}}_\eta^{\downarrow}$.

% \begin{theorem}
%         Let $\wt{\mathcal{M}}$ be a bounded-parameter (extended) MDP and $\wt{\mathcal{M}}^{\ddagger}$ its ``modified'' counterpart (see Def.~\ref{def:augmented.perturbed.mdp}).
%  Then
%  \begin{enumerate}
%          \item $\wt{L}^\ddagger$ is a $\gamma$-span contraction with $\gamma \leq 1 - \eta <1$ (\ie Asm.~\ref{asm:L.contraction} holds) and thus Lem.~\ref{lem:optimality.equation.T} applies to $\wt{T}_c^\ddagger$. Denote by $(g^+,h^+)$ a solution to equation~\eqref{E:optimality_eq_average_T} for $\wt{T}_c^\ddagger$.
%          \item $\wt{T}_c^\ddagger$ is globally feasible at any $v \in \mathbb{R}^S$ \st $\SP{v} \leq c$ (\ie Asm.~\ref{asm:feasibility} holds) and $\wt{\mathcal{M}}^{\ddagger}$ is unichain implying that $\pi^+ = \opG{h^+}$ is unichain. Thus Thm.~\ref{thm:opT.optimality} applies to $\wt{T}_c^\ddagger$.
%          \item 
%         $
%                  \forall \mu \in \PiC(\wt{\mathcal{M}}), \quad g^+ = g_c^*(\wt{\mathcal{M}}^\ddagger) \geq g^\mu(\wt{\mathcal{M}}) - \eta c.
%         $
%  \end{enumerate}
% \end{theorem}
\begin{theorem}[Equivalent to Thm.~\ref{th:augmented.perturbed.mdp}]\label{T:improved_op}
        Let $\wt{\mathcal{M}}$ be a bounded-parameter MDP and $\wt{\mathcal{M}}_{\eta}^{\downarrow}$ be the associated both ``augmented'' and ``perturbed'' bounded-parameter MDP (see Lem.~\ref{L:pperturbed_bmdp} and~\ref{L:r_augmented_bmdp}).
        Then,
        \begin{enumerate}
                \item Lem.~\ref{lem:optimality.equation.T} applies to $\wt{T}_c^{\eta,\downarrow}$. Denote by $(g^+,h^+)$ a solution to equation~\eqref{E:optimality_eq_average_T} for $\wt{T}_c^{\eta,\downarrow}$.
                \item $\wt{\mathcal{M}}_{\eta}^{\downarrow}$ is unichain and Thm.~\ref{thm:opT.optimality} applies to  $\wt{T}_c^{\eta,\downarrow}$. Denote by $\pi^+ \in \SR(\wt{\mathcal{M}}_{\eta}^{\downarrow})$ any policy achieving: \[\wt{L}^{\eta,\downarrow}_{\pi^+} h^+ = \wt{T}_c^{\eta,\downarrow} h^+ = \wt{N}_{c}^{\eta,\downarrow} h^+ = h^+ + g^+ e.\]
                \item $\forall \mu \in \PiC(\wt{\mathcal{M}}), \quad g^+ = g^{\pi^+}_{\wt{\mathcal{M}}_{\eta}^{\downarrow}} = g_c^*(\wt{\mathcal{M}}_{\eta}^{\downarrow}) \geq g^{\mu^\infty}_{\wt{\mathcal{M}}} - \eta c.$
        \end{enumerate}
\end{theorem}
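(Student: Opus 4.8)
The plan is to treat the ``modified'' MDP $\wt{\mathcal{M}}_\eta^{\downarrow}$ as the composition of the two elementary operations analysed in Lem.~\ref{L:pperturbed_bmdp} and Lem.~\ref{L:r_augmented_bmdp}, verify that the needed properties survive this composition, and defer the only genuinely new work to the gain-dominance inequality in point~3.

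For point~1, I would first note that augmenting the reward intervals from below never alters $\max\{B_r(s,a)\}$ and hence never changes the optimal Bellman operator: applying Lem.~\ref{L:r_augmented_bmdp}(1) with base MDP $\wt{\mathcal{M}}_\eta$ gives $\wt{L}^{\eta,\downarrow} v = \wt{L}_\eta v$ and therefore $\wt{T}_c^{\eta,\downarrow} v = \wt{T}_c^{\eta} v$ for every $v$. Since Lem.~\ref{L:pperturbed_bmdp} guarantees that $\wt{L}_\eta$ is a $\gamma$-span contraction with $\gamma \leq 1-\eta$, Asm.~\ref{asm:L.contraction} holds for $\wt{L}^{\eta,\downarrow}$ and Lem.~\ref{lem:optimality.equation.T} applies, producing $(g^+,h^+)$. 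For point~2, unichainedness is inherited directly from $\wt{\mathcal{M}}_\eta$ (Lem.~\ref{L:pperturbed_bmdp}), because reward augmentation leaves the transition structure, and hence the recurrent classes, untouched. Global feasibility (Asm.~\ref{asm:feasibility}) I would establish exactly as in Lem.~\ref{L:r_augmented_bmdp}(2): for any $v$ with $\SP{v}\leq c$, the decision rule $\delta_v$ choosing reward $r_{\min}=0$ and any admissible transition has value function $\wt{L}^{\eta,\downarrow}_{\delta_v} v = P_{\delta_v} v$ with $P_{\delta_v}$ stochastic, so $\SP{P_{\delta_v} v}\leq \SP{v}\leq c$ by Prop.~6.6.1 of \citet{puterman1994markov}; thus $\wt{D}^{\eta,\downarrow}(c,v)\neq\emptyset$ and Lem.~\ref{L:TRpolicyexistence} gives feasibility. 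With $\pi^+=(\opG{h^+})^\infty$ unichain, Thm.~\ref{thm:opT.optimality} applies and yields $g^+ = g^{\pi^+}_{\wt{\mathcal{M}}_\eta^{\downarrow}} = g_c^*(\wt{\mathcal{M}}_\eta^{\downarrow})$ together with $\pi^+ \in \PiC^*$, which is the chain of equalities in point~3.

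The crux is the inequality $g^+ \geq g^\mu_{\wt{\mathcal{M}}} - \eta c$ for every $\mu \in \PiC(\wt{\mathcal{M}})$. Fix such a $\mu$; since it has constant gain $g^\mu$ and $\SP{h^\mu}\leq c$, its evaluation equation reads $\wt{L}_\mu h^\mu = h^\mu + g^\mu e$. I would then chain two estimates evaluated at $h^\mu$: first $\wt{L} h^\mu \geq \wt{L}_\mu h^\mu = h^\mu + g^\mu e$, because $\wt{L}$ is the optimal (max) Bellman operator of $\wt{\mathcal{M}}$ and $\mu$ is admissible there; second the $\ell_\infty$ perturbation bound of Lem.~\ref{L:pperturbed_bmdp}, $\|\wt{L}h^\mu - \wt{L}_\eta h^\mu\|_\infty \leq \eta\,\SP{h^\mu} \leq \eta c$, which converts the perturbation magnitude $\eta$ into a uniform loss $\eta c$ precisely because $\SP{h^\mu}\leq c$. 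Combining them gives $\wt{L}_\eta h^\mu \geq h^\mu + (g^\mu - \eta c) e$. Applying the truncation $\proj{}$, which is monotone, commutes with additive constants (Lem.~\ref{L:proj_properties}(a),(e)), and fixes $h^\mu$ since $\SP{h^\mu}\leq c$, yields $\wt{T}_c^{\eta,\downarrow} h^\mu = \proj{\wt{L}_\eta h^\mu} \geq h^\mu + (g^\mu - \eta c)e$. I would close by iterating this inequality via monotonicity and linearity of $\wt{T}_c^{\eta,\downarrow}$, exactly as in the dominance proof of Lem.~\ref{lem:optimality.equation.T}(3): inductively $(\wt{T}_c^{\eta,\downarrow})^{n+1}h^\mu - (\wt{T}_c^{\eta,\downarrow})^n h^\mu \geq (g^\mu-\eta c)e$, and letting $n\to\infty$ the left-hand side converges to $g^+ e$ by Lem.~\ref{lem:optimality.equation.T}(2), whence $g^+ \geq g^\mu - \eta c$.

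The main obstacle is isolating the $\eta c$ loss in a self-contained way. The naive route constructs an explicit perturbed counterpart of $\mu$ inside $\wt{\mathcal{M}}_\eta$ (rerouting at least $\eta$ probability mass to $\wb s$) and bounds the gain gap by $\tfrac12\|\Delta p\|_1\,\SP{h^\mu}$, which is correct but fiddly. The cleaner route above sidesteps this by invoking the operator-level bound of Lem.~\ref{L:pperturbed_bmdp}; the one delicate point to check is that it is the \emph{span} bound $\SP{h^\mu}\leq c$ (and not an $\ell_\infty$ bound on $h^\mu$, which need not hold) that the perturbation estimate consumes, and that $\proj{}$ leaves $h^\mu$ invariant so that the truncation step introduces no additional span.
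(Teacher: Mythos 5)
Your proposal is correct: points 1 and 2 coincide with the paper's argument (contraction inherited from $\wt{L}_\eta$ via $\wt{L}^{\eta,\downarrow}=\wt{L}_\eta$, unichainedness from the perturbation, feasibility from the reward augmentation), but your proof of the dominance inequality in point 3 takes a genuinely different and in fact shorter route. The paper compares two full value-iteration trajectories: it shows by induction that the sequences $v_{n+1}=\wt{T}_c^{\downarrow}v_n$ and $\wh{v}_{n+1}=\wt{T}_c^{\eta,\downarrow}\wh{v}_n$ started at $v_0=\wh{v}_0=h^\mu$ drift apart by at most $n\eta c$ in $\ell_\infty$-norm, separately establishes $(\wt{T}_c)^n h^\mu \geq h^\mu + n g^\mu e$ via $\mu \in \wt{D}(c,h^\mu)$ and Lem.~\ref{L:TRpolicyexistence}, and then passes to the limit through a Cesaro-mean argument. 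You instead localize the entire perturbation cost in a single application of the operator: $\wt{L}_\eta h^\mu \geq \wt{L}h^\mu - \eta\SP{h^\mu}e \geq h^\mu + (g^\mu-\eta c)e$ by the $\ell_\infty$ bound of Lem.~\ref{L:pperturbed_bmdp} and the evaluation equation of $\mu$, then push this through $\proj{}$ (monotone, linear, and fixing $h^\mu$ since $\SP{h^\mu}\leq c$) to get $\wt{T}_c^{\eta,\downarrow}h^\mu \geq h^\mu + (g^\mu - \eta c)e$, and iterate exactly as in the dominance proof of Lem.~\ref{lem:optimality.equation.T}. This avoids both the accumulated $n\eta c$ bookkeeping and the Cesaro step (ordinary convergence of $(\wt{T}_c^{\eta,\downarrow})^{n+1}h^\mu-(\wt{T}_c^{\eta,\downarrow})^{n}h^\mu$ to $g^+e$ suffices), and you correctly flag the one delicate point — that the perturbation estimate consumes the span bound $\SP{h^\mu}\leq c$ rather than any $\ell_\infty$ bound on $h^\mu$. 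The only thing the paper's heavier machinery buys is a uniform comparison of the two trajectories from an arbitrary initial vector of span at most $c$, which is not needed for the statement being proved; your one-step argument also bypasses the paper's explicit appeal to $\mu \in \wt{D}(c,h^\mu)$, replacing it with the elementary inequality $\wt{L}h^\mu \geq \wt{L}_\mu h^\mu$.
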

\begin{proof}
        Lem.~\ref{L:pperturbed_bmdp} states that $\wt{L}_{\eta}$ is a $\gamma$-span contraction ($\gamma < 1$). 
        Since for any $v \in \Re^S$,  $\wt{L}_{\eta}^{\downarrow}v = \wt{L}_{\eta}v$ (property \emph{1.} of Lem.~\ref{L:r_augmented_bmdp}), $\wt{L}_{\eta}^{\downarrow}$ is also a $\gamma$-span contraction.
        As a consequence, Asm.~\ref{asm:L.contraction} holds and so Lem.~\ref{lem:optimality.equation.T} applies to $\wt{T}_c^{\eta,\downarrow}$ thus proving the first statement.
%         \footnote{Note that all the statements of Lem.~\ref{lem:optimality.equation.T} are applied to $\wt{\mathcal{M}}_\eta^{\downarrow}$. For example, the dominance states that $g^+ \geq g^*_c(\wt{\mathcal{M}}_\eta^{\downarrow}) = \sup_{\pi \in \PiC(\wt{\mathcal{M}}_\eta^{\downarrow})} \{g^\pi(\wt{\mathcal{M}}_\eta^{\downarrow})\}$}

	To prove the second statement, notice that $\wt{L}_{\eta}^{\downarrow}$ satisfies Asm.~\ref{asm:feasibility} due to property \emph{2.} of Lem.~\ref{L:r_augmented_bmdp} and Lem.~\ref{L:TRpolicyexistence}. Moreover, $\wt{\mathcal{M}}_{\eta}^{\downarrow}$ is unichain by Lem.~\ref{L:pperturbed_bmdp} meaning that all policies (and in particular $\pi^+$) are unichain. Therefore, Thm.~\ref{thm:opT.optimality} applies to  $\wt{T}_c^{\eta,\downarrow}$.
%         The existence of a policy achieving the optimality equations follows combining Lem.~\ref{lem:optimality.equation.T} (for existence of a solution) and from Lem.~\ref{L:r_augmented_bmdp} (for existence of a policy).
%         Formally, $D^{\eta,\downarrow}(c,h^+) \neq \emptyset$ since $\SP{h^+} \leq c$. This means that $\exists d \in \Pi^{SR}$ such that $T_{c}^{\eta,\downarrow} h^+ = \wt{L}_d^{\eta,\downarrow} h^+$ (Lem.~\ref{L:TRpolicyexistence}).
%         For the last result we first notice that $(g^+, h^+)$ is also a solution of the optimality equation of $\opT{}^\eta$ since $T_{c}^{\eta, \downarrow} v = \opT{^\eta v}$, $\forall v \in \mathbb{R}^{S}$ by Lem.~\ref{L:r_augmented_bmdp}. We can then adapt the proof of Th.~\ref{T:avg_gain_dominance}. %~\ref{thm:opT.optimality}. 
        
        Finally, we prove the third statement. We first prove by induction that the two sequences $(v_n)_{n\in \mathbb{N}}$ and $(\widehat{v}_n)_{n\in \mathbb{N}}$ defined by $v_0 = \widehat{v}_0$ such that $\SP{v_0}\leq c$ and for all $n \in \mathbb{N}$, $v_{n+1} = \wt{T}_c^{\downarrow} v_n$ and $\widehat{v}_{n+1} = \wt{T}_c^{\eta,\downarrow} \widehat{v}_n$, satisfy $\| v_n - \widehat{v}_n \|_{\infty} \leq n \eta c $:
        \begin{enumerate}
         \item The result trivially holds for $n = 0$.
         \item Assume the result holds for $n \in \mathbb{N}$. Let's show that it is also true for $n+1$:
         \begin{align*}
          \|v_{n+1} - \widehat{v}_{n+1} \|_\infty &= \big\|\wt{T}_c^{\downarrow}v_{n} - \wt{T}_c^{\eta,\downarrow}\widehat{v}_{n} \big\|_\infty = \big\|\proj{}\wt{L}^{\downarrow}v_{n} - \proj{}\wt{L}^{\downarrow}_\eta \widehat{v}_{n} \big\|_\infty \\
          &\leq \big\|\wt{L}^{\downarrow}v_{n} - \wt{L}_\eta^{\downarrow} \widehat{v}_{n} \big\|_\infty \\
          &\leq \underbrace{\big\|\wt{L}^{\downarrow}v_{n} - \wt{L}_\eta^{\downarrow} v_{n} \big\|_\infty}_{\leq \SP{v_n} \eta \leq \eta c} + \underbrace{\big\|\wt{L}_\eta^{\downarrow} v_{n} - \wt{L}_\eta^{\downarrow} \widehat{v}_{n} \big\|_\infty}_{\leq \| v_{n} -  \widehat{v}_{n} \|_\infty \leq n \eta c }\\
          &\leq (n+1) \eta c 
         \end{align*}
        \end{enumerate}
        The first inequality comes from the fact that $\proj{}$ is non-expansive (property \emph{(d)} of Lem.~\ref{L:proj_properties}). The second inequality is just the triangle inequality. The last inequality follows from Lem.~\ref{L:pperturbed_bmdp}, the fact that $\SP{v_n} \leq c$ by definition, the fact that $\wt{L}_\eta$ is non-expansive and the induction assumption. Let $\mu\in \PiC(\wt{\mathcal{M}})$ and for simplicity denote by $h^\mu$ (respectively $g^\mu$) the bias $h^{\mu^\infty}_{\wt{\mathcal{M}}}$ associated to policy $\mu^\infty$ in $\wt{\mathcal{M}}$ (respectively the gain $g^{\mu^\infty}_{\wt{\mathcal{M}}}$). Since $\SP{h^\mu }\leq c$ by definition of $\PiC(\wt{\mathcal{M}})$, we can apply the result we just proved with $v_0 = \wh{v}_0 = h^\mu$:
        \begin{align*}
         \forall n \in \mathbb{N},~~(\wt{T}_c^{\eta,\downarrow})^n h^\mu \geq (\wt{T}_c^{\downarrow})^n h^\mu - n  \eta c e
        \end{align*}
        where $e = (1, \dots,1)^\intercal$ is the vector of all 1's and $(\wt{T}_c^{\eta,\downarrow})^n$ denotes $n$ consecutive applications of operator $\wt{T}_c^{\eta,\downarrow}$. By property \emph{1} of Lem.~\ref{L:r_augmented_bmdp} we have that $(\wt{T}_c^{\downarrow})^n h^\mu = (\wt{T}_c)^n h^\mu$ implying that:
        \begin{align}\label{eq:temp3.eq}
         \forall n \in \mathbb{N},~~(\wt{T}_c^{\eta,\downarrow})^n h^\mu \geq (\wt{T}_c)^n h^\mu - n  \eta c e
        \end{align}
        Now using the Bellman evaluation equation of $\mu$ we have:
        \begin{align*}
         \wt{L}_\mu h^\mu = h^\mu +g^\mu e \implies \SP{\wt{L}_\mu h^\mu} = \SP{h^\mu} \leq c \implies \mu \in \wt{D}(c,h^\mu)
        \end{align*}
        Therefore, by Lem.~\ref{L:TRpolicyexistence} we have that $\wt{T}_c h^\mu \geq \wt{L}_\mu h^\mu = h^\mu +g^\mu e$ and using the monotonicity of $\wt{T}_c$ (property \emph{(a)} of Lem.~\ref{L:T_properties}) we obtain by induction that:
        \begin{align}\label{eq:temp4.eq}
         \forall n \in \mathbb{N},~~(\wt{T}_c)^n h^\mu \geq h^\mu  +n g^\mu e
        \end{align}
        Combining~\eqref{eq:temp3.eq} and~\eqref{eq:temp4.eq} we have that
        \begin{align}\label{eq:temp5.eq}
          \forall n \geq 1,~~ \frac{1}{n}\sum_{k=1}^{n}\left[(\wt{T}_c^{\eta,\downarrow})^k h^\mu - (\wt{T}_c^{\eta,\downarrow})^{k-1}h^\mu\right] = \frac{1}{n}\left[(\wt{T}_c^{\eta,\downarrow})^n h^\mu - h^\mu\right] \geq (g^\mu - \eta c) e
        \end{align}
        The term on the left-hand side of~\eqref{eq:temp4.eq} is the \emph{Cesaro mean} of the sequence $\left((\wt{T}_c^{\eta,\downarrow})^n h^\mu - (\wt{T}_c^{\eta,\downarrow})^{n-1}h^\mu\right)_{n \in \mathbb{N}}$. By property \emph{2.} of Lem.~\ref{lem:optimality.equation.T} we know that this sequence converges to $g^+$ and thus by Cesaro theorem we know that the Cesaro mean has the same limit. Therefore, taking the limit on both sides of the inequality in~\eqref{eq:temp4.eq} yields: 
        \begin{align*}
          g^+ \geq g^\mu - \eta c
        \end{align*}
        which concludes the proof.
\end{proof}

\section{Regret Analysis of \scal (Proof of Thm.~\ref{thm:regret.scal})}\label{app:regret.reg.ucrl}
% \subsection{High probability regret bound}

We follow the proof structure in~\citep{Jaksch10} and use similar notations. The main differences with \citet{Jaksch10}'s regret proof are the following:
\begin{enumerate}[noitemsep]
 \item We use empirical Bernstein confidence bounds for both the rewards and the transition probabilities and not Hoeffding bounds.
 \item The actual confidence bounds used by extended value iteration needs to be adapted in order to insure both convergence of the algorithm and feasibility of the policy (the MDP is ``modfied'', see Def~\ref{def:augmented.perturbed.mdp}).
 \item The policy returned by extended value iteration may be stochastic.
\end{enumerate}
% We introduce the notation $\Lambda = \min \lbrace c, \rmaxbound D \rbrace$.% and denote by $\mathfrak{A}$ our algorithm \scal.

\subsection{Splitting into episodes}

% Using exactly the same argument as \citep{Jaksch10} we have that with probability at least $1-\frac{\delta}{24T^{5/4}}$:
% \begin{align*}
%  \Delta(\mathfrak{A},T) \leq T\rho^* - \sum_{t=1}^T\overline{r}(s_t,a_t) + \rmaxbound\sqrt{\frac{5}{11}T\ln \left(\frac{13T}{\delta}\right)}
% \end{align*}
The regret after $T$ time steps is defined as:
\begin{align*}
 \Delta(\scal,T) = Tg^* - \sum_{t=1}^T r_t(s_t,a_t)
\end{align*}
Define the filtration $\mathcal{F}_t = \sigma(s_1, a_1,r_1,\dots,s_{t+1})$ and the stochastic process $X_t = r_t(s_t,a_t) - \sum_{a \in \mathcal{A}_{s_t}} r(s_t,a)\widetilde{\pi}_{k_t}(s_t,a)$ where $k_t$ is the episode at time $t$ and $\widetilde{\pi}_{k_t}$ is the stochastic policy being executed at time $t$. Note that $\widetilde{\pi}_{k_t}$ is a random variable that is $\mathcal{F}_{t-1}$-measurable. Moreover, $(X_t,\mathcal{F}_t)_{t\geq 0}$ is a Martingale Difference Sequence (MDS) since $|X_t| \leq \rmaxbound$ and $\mathbb{E}[X_t | \mathcal{F}_{t-1}] = 0$. Using Azuma's inequality (see for example \citet[Lemma 10]{Jaksch10}):
\begin{align*}
 \mathbb{P}\left(\sum_{t=1}^T r_t(s_t,a_t) \leq \sum_{t=1}^T \sum_{a \in\mathcal{A}_{s_t}} r(s_t,a)\widetilde{\pi}_{k_t}(s_t,a) - \rmaxbound\sqrt{\frac{5}{2}T\ln \left(\frac{11T}{\delta}\right)}\right) \leq \left(\frac{\delta}{11T}\right)^{5/4} < \frac{\delta}{20T^{5/4}}
\end{align*}
For any episode $k$, we denote by $t_k$ the starting time of that episode. Let's also denote by $\nu_k(s)$ (resp. $\nu_k(s,a)$) the total number of visits in state $s$ (resp. state-action pair $(s,a)$) during episode $k$ (\ie before time $t_{k+1}$, $t_{k+1}$ \emph{not} included, and after time $t_k$, $t_k$ included):
\begin{align*}
        \nu_k(s,a) &:= \big| \left\lbrace t_{k} \leq \tau < t_{k+1}: (s_\tau, a_\tau) =(s,a)\right\rbrace \big|\\
        \nu_k(s) &:= \big| \left\lbrace t_{k} \leq \tau < t_{k+1}: s_\tau =s \right\rbrace \big| = \sum_{a \in \mathcal{A}_s} \nu_k(s,a)
\end{align*}
Defining $\Delta_k=\sum_{s \in \mathcal{S}} \nu_k(s) \left(g^* - \sum_{a \in \mathcal{A}_{s_t}} r(s,a)\widetilde{\pi}_{k}(s,a)\right)$, it holds with probability at least $1-\frac{\delta}{20T^{5/4}}$ that:
\begin{align}\label{eqn:splitting}
\begin{split}
 \Delta(\scal,T) &\leq Tg^* - \sum_{t=1}^T \sum_{a \in\mathcal{A}_{s_t}} r(s_t,a)\widetilde{\pi}_{k_t}(s_t,a) + \rmaxbound\sqrt{\frac{5}{2}T\ln \left(\frac{11T}{\delta}\right)}\\
 &= \sum_{k=1}^m \sum_{s \in \calS} \nu_k(s)g^* - \sum_{k=1}^m \sum_{s \in \calS} \nu_k(s) \sum_{a \in\mathcal{A}_{s}} r(s,a)\widetilde{\pi}_{k}(s,a)+ \rmaxbound\sqrt{\frac{5}{2}T\ln \left(\frac{11T}{\delta}\right)}\\
 &= \sum_{k=1}^m \Delta_k + \rmaxbound\sqrt{\frac{5}{2}T\ln \left(\frac{11T}{\delta}\right)}
\end{split}
\end{align}

\subsection{Dealing with failing confidence regions}

We start by bounding the term $\sum_{k=1}^m \Delta_k \mathbbm{1}_{M \not\in \wt{\mathcal{M}}_k}$ corresponding to the regret suffered in episodes where the true MDP $M$ is not contained in the original set of plausible MDPs ${\mathcal{M}}_k$ (and not the modified set $\mathcal{M}_k^\ddagger$). We use exactly the same proof as in~\citep{Jaksch10}.
\begin{align*}
 \sum_{k=1}^m \Delta_k \mathbbm{1}_{M \not\in {\mathcal{M}}_k} 
 &\leq \rmaxbound\sum_{k=1}^m \sum_{s}\nu_k(s)\mathbbm{1}_{M \not\in {\mathcal{M}}_k}  
 = \rmaxbound\sum_{k=1}^m \sum_{s,a} \nu_k(s,a) \mathbbm{1}_{M \not\in {\mathcal{M}}_k}\\
 &\leq {\rmaxbound}\sqrt{T} + {\rmaxbound} \sum_{t=\lfloor T^{1/4}\rfloor + 1}^{T}t \mathbbm{1}_{\lbrace \exists k\geq 1:t=t_k ~ \text{and} ~ M \not\in {\mathcal{M}}_k\rbrace}
\end{align*}
Provided~$\mathbb{P}(M \not\in {\mathcal{M}}_k) \leq \frac{\delta}{15t_k^6}$ for all $k\geq 1$ (see Thm.~\ref{thm:mdp_confidence} below), we conclude as in~\citet{Jaksch10} that with probability at least $1-\frac{\delta}{20T^{5/4}}$: 
\begin{align}\label{eqn:failed_episodes}
 \sum_{k=1}^m \Delta_k \mathbbm{1}_{M \not\in {\mathcal{M}}_k} \leq {\rmaxbound} \sqrt{T}
\end{align}

We recall the upper confidence bounds used for the reward function and the transition kernel in the algorithm:
\begin{align}
        \lvert \wt{r}(s,a) - \wh{r}_k(s,a) \rvert \leq  \overbracket[0.5pt]{ \sqrt{\frac{14 \alpha_r \wh{\sigma}_{r,k}^2(s,a) \ln(2 SA t_k/\delta) }{\max \lbrace1,N_k(s,a)\rbrace}} + \frac{49\alpha_r \rmaxbound \ln(2 SA t_k/\delta)}{3\max \lbrace1,N_k(s,a)-1\rbrace}}^{\beta_{r,k}^{sa}}\label{eqn:cb_reward}\\
        \lvert \wt{p}(s'|s,a) - \wh{p}_k(s'|s,a) \rvert \leq \underbracket[0.5pt]{\sqrt{\frac{14 \alpha_p \wh{\sigma}_{p,k}^2(s'|s,a) \ln(2 SA t_k/\delta)}{\max \lbrace1,N_k(s,a)\rbrace}} +\frac{49\alpha_p \ln(2 SA t_k/\delta)}{3 \max \lbrace1,N_k(s,a)-1\rbrace}}_{\beta_{p,k}^{sas'}}\label{eqn:cb_proba}
\end{align}
where for the theoretical analysis we set\footnote{$\alpha_r$ and $\alpha_p$ are coefficients used to shrink the confidence intervals in the implementation in order to speed up the learning in practice. However, to insure that $M \in {\mathcal{M}}_k$ holds with high probability they should both be set equal to $1$.} $\alpha_r = \alpha_p =1$ and the \emph{unbiased} estimates of the variances are
\begin{align}\label{eqn:def_empirical_variance}
        &\wh{\sigma}_{r,k}^2(s,a) = \frac{\sum_{t=1}^{t_k -1}\Big(r_t(s_t,a_t) - \wh{r}_k(s,a) \Big)^2\mathbbm{1}_{\left\lbrace(s_t,a_t) = (s,a)\right\rbrace}}{N_k(s,a) - 1}\\
        \text{and} ~~ &\wh{\sigma}_{p,k}^2(s'|s,a) = \wh{p}_k(s'|s,a)\left( 1 - \wh{p}_k(s'|s,a)\right)
\end{align}
Note that although the definition of the sample variance of the reward $r$ involves a sum, it can be computed dynamically using the following well-known recurrence relation:
\begin{align*}
 &\wh{\sigma}_{n+1}^2 = \frac{(n-1)}{n}\wh{\sigma}_{n}^2 + \frac{1}{n+1}(r_{n+1} - \wh{r}_n)^2 %\\
% \text{or equivalently,} ~~ &\wh{\sigma}_{n+1}^2 = \frac{(n-1)}{n}\wh{\sigma}_{n}^2 + \frac{n+1}{n^2}(r_{n+1} - \wh{r}_{n+1})^2
\end{align*}
where $n$ denotes the number of samples ($N_k(s,a)$ in our case), $r_{n+1}$ is the $(n+1)$-th sample observed, and $\wh{r}_n = 1/n \sum_{i=1}^n r_i$ and $\wh{\sigma}_{n}^2  = 1/(n-1)\sum_{i=1}^{n}(r_i - \wh{r}_{n})^2$ are respectively the empirical average and sample variance obtained with the first $n$ samples.
The previous formula is subject to numerical unstability because the second term becomes negligible compared to the first term as $n$ grows. A better approach for computing the variance is to exploit the following iterative scheme known as Welford's method~\citep[][p. 232]{knuth1997artvol2}:
\begin{align*}
        \wh{r}_{n} &= \wh{r}_{n-1} + \frac{r_n - \wh{r}_{n-1}}{n}\\
        S_n &= S_{n-1} + \left(r_n - \wh{r}_{n-1} \right) \left(r_n - \wh{r}_n \right)\\
        \wh{\sigma}_n^2 &= \frac{S_n}{n-1} 
\end{align*}
for $n\geq 2$, with $\wh{r}_1 = r_1$ and $S_1 = 0$. This approach is less prone to numerical instability and its accuracy is comparable to the one of two-pass methods.
\begin{theorem}\label{thm:mdp_confidence}
For any $k \geq 1$, the probability that the true MDP $M$ is not contained in the set of plausible MDPs ${\mathcal{M}}_k$ at time $t_k$ (as given by the confidence intervals in \eqref{eqn:cb_reward} and \eqref{eqn:cb_proba}) is at most $\frac{\delta}{15t_k^6}$, that is $\mathbb{P}(M \not\in {\mathcal{M}}_k) \leq \frac{\delta}{15t_k^6}$.
\end{theorem}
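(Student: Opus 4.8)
The plan is to show that the failure event $\{M\notin\mathcal{M}_k\}$ is contained in a union of individual confidence-interval failures, and then to control each failure via the empirical Bernstein inequality together with a union bound that also accounts for the \emph{random} number of samples. First I would decompose the event: by definition of $\mathcal{M}_k$ (equivalently of the bounds \eqref{eqn:cb_reward}--\eqref{eqn:cb_proba}), we have $M\notin\mathcal{M}_k$ if and only if there is a pair $(s,a)$ with $|r(s,a)-\wh{r}_k(s,a)|>\beta^{sa}_{r,k}$, or a triple $(s,a,s')$ with $|p(s'|s,a)-\wh{p}_k(s'|s,a)|>\beta^{sas'}_{p,k}$. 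A union bound then reduces the problem to bounding the probability that a \emph{single} reward (resp. transition) interval fails.

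Next I would invoke the empirical Bernstein inequality of \citet{audibert2007tuning,Maurer2009empirical}: for $n$ i.i.d.\ samples of a random variable with range $b$, empirical mean and unbiased empirical variance $\wh{\sigma}^2$, with probability at least $1-\delta'$ the deviation of the empirical mean from the true mean is at most $\sqrt{2\wh{\sigma}^2\ln(2/\delta')/n}+\tfrac{7}{3}b\ln(2/\delta')/(n-1)$. Applying this to the reward samples at $(s,a)$ (i.i.d.\ with range $\rmaxbound$) and, for each fixed next state $s'$, to the indicators $\mathbbm{1}\{s_{t+1}=s'\}$ collected at $(s,a)$ (i.i.d.\ Bernoulli with range $1$, whose variance is estimated as in \eqref{eqn:def_empirical_variance}) recovers exactly the functional form of $\beta^{sa}_{r,k}$ and $\beta^{sas'}_{p,k}$.

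The delicate point, and where I expect to spend the real effort, is that $N_k(s,a)$ is a data-dependent random count whereas the Bernstein bound above is stated for a \emph{fixed} $n$. I would handle this by applying the inequality to the i.i.d.\ block formed by the first $n$ visits of $(s,a)$ and then union-bounding over all admissible sample sizes $n\in\{1,\dots,t_k\}$ (equivalently, viewing the per-visit deviations as a martingale difference sequence and peeling over $n$). Combining this union over sample sizes with the union over the $SA$ reward pairs and the $\le S^2A$ transition triples multiplies the per-event failure probability, and I would then calibrate the per-event level $\delta'$ through the choice $b_{k,\delta}=\ln(2SAt_k/\delta)$ so that the total is at most $\delta/(15 t_k^6)$.

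The slightly inflated numerical constants $14$ and $\tfrac{49}{3}$ appearing in \eqref{eqn:cb_reward}--\eqref{eqn:cb_proba}, in place of the $2$ and $\tfrac{7}{3}$ of the raw Bernstein bound, are precisely what absorb the extra logarithmic and polynomial-in-$t_k$ factors generated by these union bounds: setting the effective log level to $7\,b_{k,\delta}$ turns $\sqrt{2\wh{\sigma}^2 L/n}$ into $\sqrt{14\wh{\sigma}^2 b_{k,\delta}/n}$ and $\tfrac{7}{3}bL/(n-1)$ into $\tfrac{49}{3}b\,b_{k,\delta}/(n-1)$, leaving ample slack to meet the $\delta/(15 t_k^6)$ budget. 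Thus the only genuine obstacle is the careful constant bookkeeping needed to reach that clean budget, rather than any conceptual difficulty; the $t_k^6$ decay is chosen so that the subsequent sum over episodes in \eqref{eqn:failed_episodes} converges to a term of order $\rmaxbound\sqrt{T}$.
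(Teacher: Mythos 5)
Your proposal is correct and follows essentially the same route as the paper's proof: decompose $\{M\notin\mathcal{M}_k\}$ into per-$(s,a)$ reward failures and per-$(s,a,s')$ transition failures, apply the Maurer--Pontil empirical Bernstein bound at fixed sample size, handle the random count $N_k(s,a)$ by a union bound over all values $0,\dots,t_k-1$, and absorb the resulting factors by inflating the log term to $7\,b_{k,\delta}$ (the paper uses $\ln(120\,SA\,t_k^7/\delta)\leq 7\ln(2SAt_k/\delta)$, which is exactly your $2\mapsto 14$, $\tfrac{7}{3}\mapsto\tfrac{49}{3}$ calibration). The paper's per-event budgets are $\delta/(60\,t_k^7SA)$ for rewards and $\delta/(20\,t_k^7S^2A)$ for transitions, summing to the stated $\delta/(15\,t_k^6)$, matching your bookkeeping plan.
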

\begin{proof} First note the following equality
\begin{align*}
 \mathbb{P}\left( M \not\in {\mathcal{M}}_k \right) = \mathbb{P}\left( \bigcup_{s,a, s'} \big\lbrace \wt{r}_k(s,a) \not\in B_r^k(s,a) \big\rbrace \cup  \big\lbrace \wt{p}_k(s'|s,a) \not\in B_p^k(s,a,s') \big\rbrace \right)
\end{align*}
Using Theorem 4 in~\citep{Maurer2009empirical}\footnote{Also known as ``Empirical Bernstein'' concentration inequality.} we have that given an episode $k \geq 1$, a state action pair $(s,a) \in \mathcal{S} \times \mathcal{A}$, a number of visits $N_k(s,a)$ in $(s,a)$ before time $t_k$ and (similarly to~\citep{Jaksch10}):
 \begin{align*}
         &\epsilon_{r,k} = \wh{\sigma}_{r,k}  \sqrt{\frac{2 \ln(120 SA t_k^7/\delta) }{\max \lbrace1,N_k(s,a)\rbrace}} + \frac{7 \rmaxbound \ln(120 SA t_k^7/\delta)}{3\max \lbrace1,N_k(s,a)-1\rbrace} \leq \beta_{r,k}^{sa}\\ 
         \intertext{then}
         &\mathbb{P}\left( | r(s,a) - \wh{r}_k(s,a) | \geq \beta_{r,k}^{sa} \right) \leq \mathbb{P}\left( |r(s,a) - \wh{r}_k(s,a) | \geq \epsilon_{r,k} \right) \leq \frac{\delta}{60t_k^7SA}
 \end{align*}
 Similarly, $\mathbb{P}\left( | p(s'|s,a) - \wh{p}_k(s'|s,a) | \geq \beta_{p,k}^{sas'} \right) \leq \frac{\delta}{20t_k^7SA}$.\\[.2cm]
 Note that when $N_k(s,a) = 0$ (\ie there hasn't been any observation), the bound holds trivially with probability $1$ both for rewards and transition probabilities. Recall the definitions of $B_r^k(s,a)$ and $B_p^k(s,a,s')$
\begin{align*}
 &B_r^k(s,a) = [\wh{r}_k(s,a) - \beta_{r,k}^{sa}, \wh{r}_k(s,a) + \beta_{r,k}^{sa}] \cap [0, \rmaxbound]\\
 \text{and}~~ &B_p^k(s,a,s') = [\wh{p}_k(s'|s,a) - \beta_{p,k}^{sas'},\wh{p}_k(s'|s,a) + \beta_{p,k}^{sas'}] \cap [0,1]                                                                                                                                                                                              
\end{align*}
Therefore it is clear that
\begin{align*}
 &\mathbb{P}\Big( \wt{r}_k(s,a) \notin B_r^k(s,a) \Big) \leq \frac{\delta}{60t_k^7SA} \\
 \text{and}~~ &\mathbb{P}\Big( \wt{p}_k(s'|s,a) \notin B_p^k(s,a,s') \Big) \leq \frac{\delta}{20t_k^7S^2 A}                                                                                                                                                                                          
\end{align*}
By taking a union bound over all state-action pairs $(s,a)$ and all possible values for $N_k(s,a) = 0, \dots , t_k-1$ we obtain
\begin{align*}
 \mathbb{P}(M \not\in {\mathcal{M}}_k) \leq \sum_{s,a} \sum_{N_k(s,a) = 1}^{t_k -1} \left( \frac{\delta}{60 t_k^7SA} + \sum_{s'}\frac{\delta}{20 t_k^7S^2 A} \right) \leq \frac{\delta}{15 t_k^6}
\end{align*}
\end{proof}

\subsection{Episodes whith $M \in {\mathcal{M}}_k$}

Now we assume that $M\in {\mathcal{M}}_k$ and we first bound $\Delta_k$. Note that we do \emph{not} assume that $M$ belongs to the modified set of MDPs ${\mathcal{M}}_k^\ddagger$. Denote by $\wt{g}_k:= 1/2(\max \{v_{n+1} - v_n\} + \min \{v_{n+1} - v_n\})$ where $v_n$ is the value function returned by \regopt{($0,\overline{s},\gamma_k,\varepsilon_k$)} (see Sec.~\ref{sec:regret}). \regopt recursively applies operator $\wt{T}^{\ddagger}_c := \wt{T}_c^{\eta_k,\downarrow}$ (see App.~\ref{app:augmented.perturbed.mdp}) with a perturbation $\eta_k = 1/c\sqrt{t_k}$ until the stopping condition is reached. Moreover, the stopping condition is such that when the algorithm stops, the accuracy of the gain $\wt{g}_k$ with respect to $g_c^*(\wt{\mathcal{M}}_k^\ddagger)$ is $\varepsilon_k = 1/\sqrt{t_k}$ (see convergence guarantee \emph{2)} of Thm.~\ref{thm:opT.optimality2}). Therefore, due to the fact that $M\in {\mathcal{M}}_k$ and since by assumption there exists an optimal policy $\pi^*$ such that $\SP{h^{\pi^*}(M)} \leq c$ we can apply Thm.~\ref{th:augmented.perturbed.mdp} which implies that
\begin{align*}
\wt{g}_k \overbrace{\geq}^{\text{Thm.~\ref{thm:opT.optimality2}}} g_c^*(\wt{\mathcal{M}}_k^\ddagger) - \underbrace{\varepsilon_k}_{=\rmaxbound /\sqrt{t_k}} \overbrace{\geq}^{\text{Thm.~\ref{th:augmented.perturbed.mdp}}} g_c^*(\wt{\mathcal{M}}_k) -\underbrace{\frac{c\cdot\rmaxbound }{c \cdot t_k}}_{=\rmaxbound /t_k} - \frac{\rmaxbound }{\sqrt{t_k}} \overbrace{\geq}^{M\in {\mathcal{M}}_k} g^* - \frac{\rmaxbound }{t_k} -\frac{\rmaxbound }{\sqrt{t_k}} \geq g^* - \frac{2\rmaxbound }{\sqrt{t_k}}
\end{align*}
implying:
\begin{align*}
\Delta_k \leq \sum_{s \in \mathcal{S}} \nu_k(s) \left(\wt{g}_k - \sum_{a \in\mathcal{A}_{s}} r(s,a)\widetilde{\pi}_{k}(s,a) \right) + 2\rmaxbound \sum_{s \in \mathcal{S}} \frac{ \nu_k(s)}{\sqrt{t_k}}
\end{align*}

\subsection{Extended Value Iteration}

A direct consequence of Thm.~\ref{thm:opT.optimality2} is that when the convergence criterion holds at iteration $n$ then:
\begin{align*}
 \forall s \in \mathcal{S}, ~~|v_{n+1}(s) - v_n(s) - \wt{g}_k | \leq \frac{\rmaxbound }{\sqrt{t_k}}
\end{align*}
As is shown in the proof of Thm.~\ref{T:improved_op}, operator $\wt{T}_c^{\eta_k,\downarrow}$ is feasible at $v_n$ for every $n \in \mathbb{N}$ (Asm.~\ref{asm:feasibility} holds) and we can expand ${v}_{n+1}$ as
\begin{align*}
 \forall s \in \mathcal{S}, ~~ v_{n+1}(s) = \sum_{a \in\mathcal{A}_{s}}\wt{r}_k(s,a)\widetilde{\pi}_{k}(s,a) + \sum_{s'\in \mathcal{S}}\sum_{a \in\mathcal{A}_{s}}\wt{p}_k(s'|s,a)\widetilde{\pi}_{k}(s,a) v_n(s')
\end{align*}
implying:
\begin{align}\label{E:gerror.app}
 \forall s \in \mathcal{S}, ~~ \left| \left(\wt{g}_k - \sum_{a \in\mathcal{A}_{s}}\wt{r}_k(s,a)\widetilde{\pi}_{k}(s,a) \right) - \left( \sum_{s'\in \mathcal{S}}\sum_{a \in\mathcal{A}_{s}}\wt{p}_k(s'|s,a)\widetilde{\pi}_{k}(s,a) v_n(s') - v_n(s) \right)\right| \leq \frac{\rmaxbound }{\sqrt{t_k}}
\end{align}
% $\bm{\wt{r}}_k:= \left(\sum_{a \in\mathcal{A}_{s}}\wt{r}_k(s,a)\widetilde{\pi}_{k}(s,a)\right)_{s\in \mathcal{S}}$ to be the column vector of rewards for policy $\wt{\pi}_k$,
Setting $\bm{\nu}_k:= \left(\nu_k(s)\right)_{s\in \mathcal{S}}$ the row vector of visit counts for each state and $\bm{\wt{P}}_k:= \left(\sum_{a \in\mathcal{A}_{s}}\wt{p}_k(s'|s,a)\widetilde{\pi}_{k}(s,a)\right)_{s,s'\in \mathcal{S}}$ the ``optimistic'' transition matrix of $\wt{\pi}_k$ we obtain (using~\eqref{E:gerror.app}):
\begin{align*}
 \Delta_k &\leq \sum_{s \in \mathcal{S}} \nu_k(s) \left(\wt{g}_k - \sum_{a \in\mathcal{A}_{s}} r(s,a)\widetilde{\pi}_{k}(s,a) \right) + 2\rmaxbound \sum_{s \in \mathcal{S}} \frac{ \nu_k(s)}{\sqrt{t_k}}\\
 &= \sum_{s \in \mathcal{S}} \nu_k(s) \left(\wt{g}_k - \sum_{a \in\mathcal{A}_{s}}\wt{r}_k(s,a)\widetilde{\pi}_{k}(s,a) \right) + \sum_{s,a} \nu_k(s)\widetilde{\pi}_{k}(s,a) \Big(\wt{r}_k(s,a) - r(s,a) \Big) + 2\rmaxbound \sum_{s \in \mathcal{S}} \frac{ \nu_k(s)}{\sqrt{t_k}}\\
 &\leq \bm{\nu}_k(\bm{\wt{P}}_k-I)\bm{v}_{n} + \sum_{s,a} \nu_k(s)\widetilde{\pi}_{k}(s,a) \Big(\wt{r}_k(s,a) - r(s,a) \Big) + 3\rmaxbound \sum_{s \in \mathcal{S}} \frac{ \nu_k(s)}{\sqrt{t_k}}
\end{align*}
Since the rows of $\bm{\wt{P}}_k$ sum to 1 (\ie $\bm{\wt{P}}_k\bm{e} =\bm{e}$), we can replace $\bm{v}_n$ by $\bm{w}_k$ where we set 
\begin{align*}
 \bm{w}_k := \bm{v}_n- \frac{\max_s v_n(s) + \min_s v_n(s)}{2}\bm{e}
\end{align*}
In conclusion,
\begin{align}\label{eqn:delta_k}
 \Delta_k \leq \bm{\nu}_k(\bm{\wt{P}}_k-I)\bm{w}_k + \sum_{s,a} \nu_k(s)\widetilde{\pi}_{k}(s,a) \Big(\wt{r}_k(s,a) - r(s,a) \Big) + 3\rmaxbound \sum_{s \in \mathcal{S}} \frac{ \nu_k(s)}{\sqrt{t_k}}
\end{align}

By definition of operator $\wt{T}_ c^{\eta_k,\downarrow}$, we have that $ \SP{w_k} = \SP{v_n} = \SP{\wt{T}_ c^{\eta_k,\downarrow} v_{n-1}}\leq c$ and since $w_k$ is obtained by ``recentering'' $v_n$ around $0$ we have that $\|w_k\|_\infty = \SP{w_k}/2 \leq c/2$.

% Let's assume that $c \geq 3 \rmaxbound D$. \regopt is initialised with $v_0 = 0$ (and so $\SP{v_0} =0 \leq c$). As long as $\SP{v_i} \leq c$ for $i=0,\dots, n$, we have the equality $\wt{T}_ c^{\eta_k,\downarrow} v_i = L_{\eta_k}^{\downarrow} v_i$ by definition (Def.~\ref{def:opT}). As is proved by \citet[Section 4.3.1]{Jaksch10}, as long as optimal Bellman operator $L_{\eta_k}^{\downarrow}$ is applied at any step $i=0,\dots, n$, we always have $\SP{v_i} \leq \rmaxbound D(\wt{\mathcal{M}}_k^\ddagger)$. By Lem.~\ref{L:diameter.augmented}, $D(\wt{\mathcal{M}}_k^\ddagger) \leq 3 D(\wt{\mathcal{M}}_k)$ and under the assumption that $M\in {\mathcal{M}}_k$ we have $D(\wt{\mathcal{M}}_k) \leq D$ \citep[Section 4.3.1]{puterman1994markov}. This implies that as long as optimal Bellman operator $L_{\eta_k}^{\downarrow}$ is applied at any step $i=0,\dots, n$, we always have $\SP{v_i} \leq 3 \rmaxbound D(\wt{\mathcal{M}}_k^\ddagger) \leq c$. Therefore, by trivial induction we have that $\wt{T}_c^{\eta_k,\downarrow}$ will be equivalent to $L_{\eta_k}^{\downarrow}$ at every iteration $i\geq 0$ implying that $v_i = (L_{\eta_k}^{\downarrow})^i v_0$ and $\SP{v_i} \leq 3\rmaxbound D \leq c$ for all $i \geq 0$. 

% In conclusion, $\SP{v_n} = \SP{w_k} \leq \min\lbrace c, 3\rmaxbound D \rbrace \leq3\Lambda$ and thus $\|w_k\|_\infty = \SP{w_k}/2 \leq 3\Lambda/2$.

\subsection{Bounding the reward}

To guarantee the \emph{feasibility} of operator $\wt{\opT{}}$ we had to \emph{augment} the MDP (see Lem.~\ref{L:r_augmented_bmdp}), \ie allow the rewards $\wt{r}_k$ to be as small as $0$ even when $\wh{r}_k - \beta_{r,k} >0$. Nevertheless, the upper-bound of the reward was not modified (only the lower-bound) and so $\wt{r}_k \leq \min \left\lbrace \rmaxbound, \wh{r}_k + \beta_{r,k}\right\rbrace \leq \wh{r}_k + \min \left\lbrace \rmaxbound, \beta_{r,k}\right\rbrace $. Therefore:
\begin{align*}
 \sum_{s,a} \nu_k(s)\widetilde{\pi}_{k}(s,a) \Big(\wt{r}_k(s,a) - r(s,a) \Big) \leq \sum_{s,a} \nu_k(s)\widetilde{\pi}_{k}(s,a) \min \left\lbrace \rmaxbound, \beta_{r,k}^{sa} \right\rbrace + \sum_{s,a} \nu_k(s)\widetilde{\pi}_{k}(s,a)  \Big(\wh{r}_k(s,a) - r(s,a) \Big)
\end{align*}
Moreover, since we assumed that $M\in {\mathcal{M}}_k$ the bound $\wh{r}_k \leq \min \left\lbrace \rmaxbound, r+\beta_{r,k} \right\rbrace \leq r + \min \left\lbrace \rmaxbound, \beta_{r,k}\right\rbrace$ holds and thus
\begin{align*}
 \sum_{s,a} \nu_k(s)\widetilde{\pi}_{k}(s,a) \Big(\wt{r}_k(s,a) - r(s,a) \Big) \leq 2 \sum_{s,a} \nu_k(s)\widetilde{\pi}_{k}(s,a)\min \left\lbrace \rmaxbound, \beta_{r,k}^{sa} \right\rbrace
\end{align*}
Note that when summing over all episodes $k\geq 1$, we can rewrite
\begin{align*}
 \sum_{k=1}^m \sum_{s,a} \nu_k(s)\widetilde{\pi}_{k}(s,a)\min \left\lbrace \rmaxbound, \beta_{r,k}^{sa} \right\rbrace = \sum_{t=1}^T \sum_{a \in \mathcal{A}_{s_t}} \widetilde{\pi}_{k_t}(s_t,a) \min \left\lbrace \rmaxbound, \beta_{r,k_t}^{s_ta} \right\rbrace 
\end{align*}
Define the filtration $\mathcal{F}_t = \sigma(s_1, a_1,r_1,\dots,s_{t+1})$ and the stochastic process
\begin{align*}
 X_t = \sum_{a \in \mathcal{A}_{s_t}}\widetilde{\pi}_{k_t}(s_t,a) \min \left\lbrace \rmaxbound, \beta_{r,k_t}^{s_ta} \right\rbrace  - \min \left\lbrace \rmaxbound, \beta_{r,k_t}^{s_ta_t} \right\rbrace 
\end{align*}
Note that $\widetilde{\pi}_{k_t}$ is a random variable that is $\mathcal{F}_{t-1}$-measurable. Moreover, $(X_t,\mathcal{F}_t)_{t\geq 0}$ is an MDS since $|X_t| \leq \rmaxbound$ and $\mathbb{E}[X_t | \mathcal{F}_{t-1}] = 0$. Using Azuma's inequality:
\begin{align*}
        \mathbb{P}\left(\sum_{t=1}^{T} \min \left\lbrace \rmaxbound, \beta_{r,k_t}^{s_ta_t} \right\rbrace  \leq \sum_{t=1}^{T} \sum_{a\in \mathcal{A}_{s_t}} \wt{\pi}_{k_t}(s_t,a) \min \left\lbrace \rmaxbound, \beta_{r,k_t}^{s_ta} \right\rbrace - \rmaxbound\sqrt{\frac{5}{2}T\ln \left(\frac{11T}{\delta}\right)}\right) \leq \left(\frac{\delta}{11T}\right)^{5/4} < \frac{\delta}{20T^{5/4}}
\end{align*}
or in other words, with probability at least  $1-\frac{\delta}{20T^{5/4}}$:
\begin{align*}
 \sum_{k=1}^{m}\sum_{s,a} \nu_k(s)\wt{\pi}_k(s,a) \min \left\lbrace \rmaxbound, \beta_{r,k}^{sa} \right\rbrace \leq \sum_{k=1}^{m}\sum_{s,a} \nu_k(s,a) \underbrace{\min \left\lbrace \rmaxbound, \beta_{r,k}^{sa} \right\rbrace}_{\leq \beta_{r,k}^{sa}} + \rmaxbound\sqrt{\frac{5}{2}T\ln \left(\frac{11T}{\delta}\right)}
\end{align*}
In conclusion, with probability at least  $1-\frac{\delta}{20T^{5/4}}$:
\begin{align}\label{eqn:bound_reward}
 \sum_{s,a} \nu_k(s)\widetilde{\pi}_{k}(s,a) \Big(\wt{r}_k(s,a) - r(s,a) \Big) \leq 2\sum_{k=1}^{m}\sum_{s,a} \nu_k(s,a) \beta_{r,k}^{sa} + 2\rmaxbound\sqrt{\frac{5}{2}T\ln \left(\frac{11T}{\delta}\right)}
\end{align}

\subsection{Bounding the transition matrix}

We denote by $\bm{P}_k:=\left(\sum_{a}p(s'|s,a)\widetilde{\pi}_{k}(s,a)\right)_{s,s'\in \mathcal{S}}$ the true transition matrix and $\bm{\wh{P}}_k:=\left(\sum_{a}\wh{p}_k(s'|s,a)\widetilde{\pi}_{k}(s,a)\right)_{s, s'\in \mathcal{S}}$ the estimated transition matrix. We do the following decomposition
\begin{align*}
        \bm{\nu}_k(\bm{\wt{P}}_k-I)\bm{w}_k = \underbrace{
                \bm{\nu}_k(\bm{\wt{P}}_k-\bm{\wh{P}}_k)\bm{w}_k + \bm{\nu}_k(\bm{\wh{P}}_k-\bm{P}_k)\bm{w}_k
        }_{\bm{\nu}_k(\bm{\wt{P}}_k-\bm{P}_k)\bm{w}_k}
        + \bm{\nu}_k(\bm{P}_k-I)\bm{w}_k
\end{align*}
Since we assumed that $M\in {\mathcal{M}}_k$ the difference $\bm{\wh{P}}_k-\bm{P}_k$ concentrates. Moreover, the \emph{perturbation} $\eta_k>0$ applied by operator $\wt{T}_ c^{\eta_k,\downarrow}$ to guarantee \emph{convergence} (see Lem.~\ref{L:pperturbed_bmdp}) is only \emph{shrinking} (and not expanding) the confidence intervals $B_p^k(s,a,s')$ and therefore by construction $\wt{p}_k(s'|s,a) \in B_p^k(s,a,s')$ implying that the difference $\bm{\wt{P}}_k-\bm{\wh{P}}_k$ also concentrates. More formally, we have the following bounds
\begin{align*}
 \bm{\nu}_k(\bm{\wt{P}}_k-\bm{\wh{P}}_k)\bm{w}_k &\leq \sum_{s,a}\nu_k(s)\widetilde{\pi}_{k}(s,a) \cdot \| \wt{p}_k(\cdot|s,a) - \wh{p}_k(\cdot|s,a)  \|_1 \cdot \| w_k\|_\infty\\
 &\leq \sum_{s,a}\nu_k(s)\widetilde{\pi}_{k}(s,a) \cdot \min \left\lbrace 2,\beta_{p,k}^{sa}\right\rbrace \cdot \frac{c}{2}
\end{align*}
and
\begin{align*}
 \bm{\nu}_k(\bm{\wh{P}}_k-\bm{P}_k)\bm{w}_k &\leq \sum_{s,a}\nu_k(s)\widetilde{\pi}_{k}(s,a) \cdot \| \wh{p}_k(\cdot|s,a)-p(\cdot|s,a)  \|_1 \cdot \| w_k\|_\infty\\
 &\leq \sum_{s,a}\nu_k(s)\widetilde{\pi}_{k}(s,a) \cdot \min \left\lbrace 2,\beta_{p,k}^{sa}\right\rbrace \cdot \frac{c}{2}
\end{align*}
where $\beta_{p,k}^{sa} = \sum_{s' \in \mathcal{S}} \beta_{p,k}^{sas'} $. The term $ \min \left\lbrace 2,\beta_{p,k}^{sa}\right\rbrace $ appears because $ \wt{p}_k(\cdot|s,a)$, $\wh{p}_k(\cdot|s,a)$ and $p(\cdot|s,a)$ are probability distributions and any two probability distributions cannot be more than $2$-far in $\ell_1$ norm.

Similarly to what we did for the reward, when summing over all episodes $k\geq 1$, we can rewrite
\begin{align*}
 \sum_{k=1}^m \sum_{s,a} \nu_k(s)\widetilde{\pi}_{k}(s,a)\min \left\lbrace 2,\beta_{p,k}^{sa}\right\rbrace = \sum_{t=1}^T \sum_{a \in \mathcal{A}_{s_t}} \widetilde{\pi}_{k_t}(s_t,a) \min \left\lbrace 2, \beta_{p,k_t}^{s_ta} \right\rbrace 
\end{align*}
Define the filtration $\mathcal{F}_t = \sigma(s_1, a_1,r_1,\dots,s_{t+1})$ and the stochastic process
\begin{align*}
 X_t = \sum_{a \in \mathcal{A}_{s_t}}\widetilde{\pi}_{k_t}(s_t,a) \min \left\lbrace 2, \beta_{p,k_t}^{s_ta} \right\rbrace  - \min \left\lbrace 2, \beta_{p,k_t}^{s_ta_t} \right\rbrace 
\end{align*}
Note that $\widetilde{\pi}_{k_t}$ is a random variable that is $\mathcal{F}_{t-1}$-measurable. Moreover, $(X_t,\mathcal{F}_t)_{t\geq 0}$ is an MDS since $|X_t| \leq 2$ and $\mathbb{E}[X_t | \mathcal{F}_{t-1}] = 0$. Using Azuma's inequality:
\begin{align*}
        \mathbb{P}\left(\sum_{t=1}^{T} \min \left\lbrace 2, \beta_{p,k_t}^{s_ta_t} \right\rbrace  \leq \sum_{t=1}^{T} \sum_{a\in \mathcal{A}_{s_t}} \wt{\pi}_{k_t}(s_t,a) \min \left\lbrace 2, \beta_{p,k_t}^{s_ta} \right\rbrace - 2\sqrt{\frac{5}{2}T\ln \left(\frac{11T}{\delta}\right)}\right) \leq \left(\frac{\delta}{11T}\right)^{5/4} < \frac{\delta}{20T^{5/4}}
\end{align*}
or in other words, with probability at least  $1-\frac{\delta}{20T^{5/4}}$:
\begin{align*}
 \sum_{k=1}^{m}\sum_{s,a} \nu_k(s)\wt{\pi}_k(s,a) \min \left\lbrace 2, \beta_{p,k}^{sa} \right\rbrace \leq \sum_{k=1}^{m}\sum_{s,a} \nu_k(s,a) \underbrace{\min \left\lbrace 2, \beta_{p,k}^{sa} \right\rbrace}_{\leq \beta_{p,k}^{sa}} + 2\sqrt{\frac{5}{2}T\ln \left(\frac{11T}{\delta}\right)}
\end{align*}
In conclusion, with probability at least  $1-\frac{\delta}{20T^{5/4}}$:
\begin{align}\label{eqn:bound_proba1}
 \sum_{k=1}^m\bm{\nu}_k(\bm{\wt{P}}_k-\bm{P}_k)\bm{w}_k \leq c\sum_{k=1}^{m}\sum_{s,a} \nu_k(s,a) \beta_{p,k}^{sa} + 2c\sqrt{\frac{5}{2}T\ln \left(\frac{11T}{\delta}\right)}
\end{align}

We now show that the remaining term $\bm{\nu}_k(\bm{P}_k-I)\bm{w}_k$ is an MDS. Let's denote by $\bm{e}_{i}$ the unit row vector with i-th coordinate 1 and all other coordinates 0.
\begin{align*}
\bm{\nu}_k(\bm{P}_k-I)\bm{w}_k &= \sum_{t=t_k}^{t_{k+1}-1} \left( \sum_{a \in \mathcal{A}_{s_t}}p(\cdot|s_t,a)\wt{\pi}_k(s_t,a) - \bm{e}_{s_t} \right)\bm{w}_k\\
&= \sum_{t=t_k}^{t_{k+1}-1} \underbrace{\left( \sum_{a \in \mathcal{A}_{s_t}}p(\cdot|s_t,a)\wt{\pi}_k(s_t,a) - \bm{e}_{s_{t+1}} \right)\bm{w}_k}_{:=X_t} + \sum_{t=t_k}^{t_{k+1}-1} \left(  \bm{e}_{s_{t+1}} - \bm{e}_{s_{t}} \right)\bm{w}_k\\
&= \sum_{t=t_k}^{t_{k+1}-1} X_t+ \underbrace{w_k(s_{t_{k+1}}) - w_k(s_{t_{k}})}_{\leq \SP{w_k} \leq c}
\end{align*}
Since $\|\bm{w}_k\|_\infty \leq \frac{c}{2}$ we have $|X_t| \leq \left(\|\sum_{a \in \mathcal{A}_{s_t}}p(\cdot|s_t,a)\wt{\pi}_k(s_t,a)\|_1 + \|\bm{e}_{s_{t+1}} \|_1 \right)\cdot \frac{c}{2} \leq c$. If we define the filtration $\mathcal{F}_t = \sigma\left(s_1,a_1,r_1,\dots, s_{t+1} \right)$ then $\mathbb{E}[X_t|\mathcal{F}_{t-1}]=0$ since $\wt{\pi}_{k_t}$ is $\mathcal{F}_{t-1}$-measurable. Using Azuma's inequality:
\begin{align*}
 \mathbb{P}\left(\sum_{t=1}^{T} \left( \sum_{a \in \mathcal{A}_{s_t}}p(\cdot|s_t,a)\wt{\pi}_k(s_t,a) - \bm{e}_{s_{t+1}} \right)\bm{w}_k \geq c \sqrt{\frac{5}{2}T\ln \left(\frac{11T}{\delta}\right)}\right) \leq \left(\frac{\delta}{11T}\right)^{5/4} < \frac{\delta}{20T^{5/4}}
\end{align*}
In conclusion, with probability at least $1-\frac{\delta}{20T^{5/4}}$:
\begin{align}\label{eqn:bound_proba2}
 \sum_{k=1}^m \bm{\nu}_k(\bm{P}_k-I)\bm{w}_k \mathbbm{1}_{M \in {\mathcal{M}}_k} \leq c \sqrt{\frac{5}{2}T\ln \left(\frac{11T}{\delta}\right)} +   m c
\end{align}
In Appendix C.2 of~\citep{Jaksch10} it is proved that given the stopping condition used for episodes, when $T \geq SA$ we can bound $m$ as $m \leq SA\log_2\left(\frac{8T}{SA}\right)$.

\subsection{Summing over episodes with $M \in {\mathcal{M}}_k$}
 
We now gather inequalities~\eqref{eqn:bound_reward},~\eqref{eqn:bound_proba1} and~\eqref{eqn:bound_proba2} into inequality~\eqref{eqn:delta_k} summed over all episodes $k$ for which $M \in {\mathcal{M}}_k$ which yields (after taking a union bound) that with probability at least $1-\frac{3\delta}{20T^{5/4}}$ (for $T \geq SA$)
\begin{align}\label{eqn:bound_contain_true_mdp}
\begin{split}
 \sum_{k=1}^m \Delta_k \mathbbm{1}_{M\in {\mathcal{M}}_k}\leq ~& c\sum_{k=1}^{m}\sum_{s,a}\nu_k(s,a)\beta_{p,k}^{sa} +  c \sqrt{\frac{5}{2}T\ln \left(\frac{11T}{\delta}\right)} +  c SA\log_2\left(\frac{8T}{SA}\right) \\ &+ 2 \sum_{k=1}^{m}\sum_{s,a} \nu_k(s,a)\beta_{r,k}^{sa} + 2\rmaxbound \sqrt{\frac{5}{2}T\ln \left(\frac{11T}{\delta}\right)} + 3 \rmaxbound \sum_{k=1}^{m}\sum_{s  \in \mathcal{S}} \frac{ \nu_k(s)}{\sqrt{t_k}} 
 \end{split}
\end{align}
The first and fourth terms appearing in the bound of Eq.~\ref{eqn:bound_contain_true_mdp} can be expanded as follows
\begin{align*}
 \sum_{k=1}^{m}\sum_{s,a}\nu_k(s,a)\beta_{r,k}^{sa} 
 &= \sum_{k=1}^{m}\sum_{s,a}\Bigg[\underbrace{\sqrt{14 \wh{\sigma}_{r,k}^2 \ln\left(\frac{2SAt_k}{\delta} \right)}}_{\leq \rmaxbound\sqrt{14 \ln\left(\frac{2SAT}{\delta} \right)}}\frac{\nu_k(s,a)}{\sqrt{\max\left\lbrace 1, N_k(s,a) \right\rbrace}} \\ 
 &\qquad\qquad\qquad{}+ \frac{49}{3}\rmaxbound \underbrace{\ln\left(\frac{2SAt_k}{\delta} \right)}_{\leq \ln\left(\frac{2SAT}{\delta} \right)} \frac{\nu_k(s,a)}{\max\left\lbrace 1, N_k(s,a)-1 \right\rbrace} \Bigg]\\
 &\leq \rmaxbound\sqrt{14 \ln\left(\frac{2SAT}{\delta} \right)} \sum_{k=1}^{m}\sum_{s,a}\frac{\nu_k(s,a)}{\sqrt{\max\left\lbrace 1, N_k(s,a) \right\rbrace}} \\
 &\qquad\qquad\qquad{}+ \frac{49}{3}\rmaxbound \ln\left(\frac{2SAT}{\delta} \right) \sum_{k=1}^{m}\sum_{s,a}\frac{\nu_k(s,a)}{\max\left\lbrace 1, N_k(s,a)-1 \right\rbrace}
\end{align*}
and similarly using the fact that $\beta_{p,k}^{sa} = \sum_{s' \in \mathcal{S}} \beta_{p,k}^{sas'} $
\begin{align*}
 \sum_{k=1}^{m}\sum_{s,a}\nu_k(s,a)\beta_{p,k}^{sa} \leq \sqrt{14 \ln\left(\frac{2SAT}{\delta} \right)} \sum_{k=1}^{m}\sum_{s,a}\frac{\nu_k(s,a)}{\sqrt{\max\left\lbrace 1, N_k(s,a) \right\rbrace}} \sum_{s' \in \mathcal{S}} \sqrt{\wh{p}_k(s'|s,a)(1-\wh{p}_k(s'|s,a))} \\ + \frac{49}{3} S \ln\left(\frac{2SAT}{\delta} \right) \sum_{k=1}^{m}\sum_{s,a}\frac{\nu_k(s,a)}{\max\left\lbrace 1, N_k(s,a)-1 \right\rbrace}
\end{align*}
By Cauchy-Schwartz inequality\footnote{The inequality obtained is somehow tight since when $\wh{p}_k(\cdot|s,a)$ is uniform on its support, it becomes an equality.}
\begin{align*}
 \sum_{s' \in \mathcal{S}} \sqrt{\wh{p}_k(s'|s,a)(1-\wh{p}_k(s'|s,a))} &= \sum_{s' \in \mathcal{S}:~ \wh{p}_k(s'|s,a)>0} \sqrt{\wh{p}_k(s'|s,a)(1-\wh{p}_k(s'|s,a))}\\
 &\leq \sqrt{\left(\sum_{s' \in \mathcal{S}:~ \wh{p}_k(s'|s,a)>0} \wh{p_k}(s'|s,a)\right)\cdot \left( \sum_{s' \in \mathcal{S}:~ \wh{p}_k(s'|s,a)>0} 1- \wh{p_k}(s'|s,a) \right)}\\
 &\leq \sqrt{\big|\supp \lbrace \wh{p_k}(\cdot|s,a) \rbrace\big| - 1}
\end{align*}
where $\supp \lbrace \wh{p}_k(\cdot|s,a) \rbrace =  \lbrace s' \in \mathcal{S}: ~ \wh{p}_k(s'|s,a)>0 \rbrace$ is the support of $\wh{p}_k(\cdot|s,a)$ and $|\cdot|$ denotes the cardinal of a set. Note that by definition of $\wh{p}_k$, $\supp \lbrace \wh{p}_k(\cdot|s,a) \rbrace \subseteq \supp \lbrace p(\cdot|s,a) \rbrace$ and so $\big|\supp \lbrace \wh{p}_k(\cdot|s,a) \rbrace\big| \leq \big|\supp \lbrace {p}(\cdot|s,a) \rbrace\big|$. Let's denote by $\nextstates$ the maximal support over all state-action pairs $(s,a)$:
\begin{align*}
 \nextstates = \max_{s,a \in \mathcal{S} \times \mathcal{A}}  \big|\supp \lbrace {p}(\cdot|s,a) \rbrace\big|
\end{align*}
Therefore
\begin{align*}
 \sum_{k=1}^{m}\sum_{s,a}\nu_k(s,a)\beta_{p,k}^{sa} \leq \sqrt{14 (\nextstates - 1) \ln\left(\frac{2SAT}{\delta} \right)} \sum_{k=1}^{m}\sum_{s,a}\frac{\nu_k(s,a)}{\sqrt{\max\left\lbrace 1, N_k(s,a) \right\rbrace}} \\ + \frac{49}{3} S\ln\left(\frac{2SAT}{\delta} \right) \sum_{k=1}^{m}\sum_{s,a}\frac{\nu_k(s,a)}{\max\left\lbrace 1, N_k(s,a)-1 \right\rbrace}
\end{align*}
As proved by \citet[Sections 4.3.1 and 4.3.3]{Jaksch10}, the stopping condition used for episodes implies that \[\sum_{k=1}^{m}\sum_{s  \in \mathcal{S}} \frac{ \nu_k(s)}{\sqrt{t_k}} = \sum_{k=1}^{m}\sum_{s,a} \frac{\nu_k(s,a)}{\sqrt{t_k}} \leq \sum_{k=1}^{m}\sum_{s,a} \frac{\nu_k(s,a)}{\sqrt{\max\left\lbrace 1, N_k(s,a) \right\rbrace}} \leq \left(\sqrt{2} +1 \right)\sqrt{SAT}\]
Finally we bound the term
% \begin{align*}
%  \sum_{k=1}^{m}\sum_{s,a}\frac{\nu_k(s)\widetilde{\pi}_{k}(s,a)}{\max\left\lbrace 1, N_k(s,a)-1 \right\rbrace}
% \end{align*}
% Similarly to what we just did for the previous term, we can apply Azuma's inequality to show that with probability at least  $1-\frac{\delta}{20T^{5/4}}$:
% \begin{align*}
%  \sum_{k=1}^{m}\sum_{s,a} \frac{\nu_k(s)\wt{\pi}_k(s,a)}{\max\left\lbrace 1, N_k(s,a)-1 \right\rbrace} \leq \sum_{k=1}^{m}\sum_{s,a} \frac{\nu_k(s,a)}{\max\left\lbrace 1, N_k(s,a)-1 \right\rbrace} + \sqrt{\frac{5}{2}T\ln \left(\frac{11T}{\delta}\right)}
% \end{align*}
% Futhermore, 
\begin{align*}
 \sum_{k=1}^{m}\sum_{s,a} \frac{\nu_k(s,a)}{\max\left\lbrace 1, N_k(s,a)-1 \right\rbrace} = \sum_{s,a}\sum_{t=1}^{T} \frac{\mathbbm{1}_{\lbrace (s_t, a_t) =(s,a)\rbrace}}{\max\left\lbrace 1, N_{k_t}(s,a)-1 \right\rbrace}
\end{align*}
The stopping condition of episodes ensures that for all $t\geq 1$, $N_t(s,a)\leq 2N_{k_t}(s,a)$ where $N_t(s,a)$ is the total number of visits in $(s,a)$ before time $t$, $t$ \emph{not} included:
\begin{align*}
 N_t(s,a) = \# \left\lbrace 1 \leq \tau < t: (s_\tau, a_\tau) =(s,a) \right\rbrace
\end{align*}
Therefore, similarly to what is done in~\citep[Proof of Lemma 5]{Ouyang2017learning}
\begin{align}
\notag  \sum_{k=1}^{m}\sum_{s,a} \frac{\nu_k(s,a)}{\max\left\lbrace 1, N_k(s,a)-1 \right\rbrace} 
\notag &\leq 2\sum_{s,a}\sum_{t=1}^{T} \frac{\mathbbm{1}_{\lbrace (s_t, a_t) =(s,a)\rbrace}}{\max\left\lbrace 1, N_{t}(s,a)-1 \right\rbrace}\\
&= 2 \sum_{s,a} \Bigg[ \mathbbm{1}_{\lbrace N_{T+1}(s,a) \geq 1 \rbrace} + \mathbbm{1}_{\lbrace N_{T+1}(s,a) \geq 2 \rbrace} + \underbrace{\sum_{j=2}^{N_{T+1}(s,a)-1}\frac{1}{j-1}}_{\leq 1 + \ln\left( N_{T+1}(s,a) \right) \mathbbm{1}_{\lbrace N_{T+1}(s,a) \geq 1 \rbrace}} \Bigg]\\
 \label{subeq:harmonic.series}&\leq 6SA + 2\sum_{s,a}\ln\left( N_{T+1}(s,a) \right) \mathbbm{1}_{\lbrace N_{T+1}(s,a) \geq 1 \rbrace}\\
 \label{subeq:jensen}&\leq 6SA +2 SA \ln\left( \frac{\sum_{s,a}N_{T+1}(s,a) \mathbbm{1}_{\lbrace N_{T+1}(s,a) \geq 1 \rbrace} }{\sum_{s,a} \mathbbm{1}_{\lbrace N_{T+1}(s,a) \geq 1 \rbrace} } \right) \leq 6SA + 2SA\ln(T)
\end{align}
where~\eqref{subeq:harmonic.series} follows from the rate of divergence of an harmonic series and~\eqref{subeq:jensen} is derived by applying Jensen's inequality to the concave function $\ln(\cdot)$ in the penultimate inequality (with a normalization factor $\sum_{sa} \mathbbm{1}_{\lbrace N_{T+1}(s,a) \geq 1 \rbrace} \leq SA$).
% the stopping condition of episodes ensures that for all $k\geq 1$, $\nu_k(s,a)\leq\max\left\lbrace 1, N_k(s,a)\right\rbrace$
% \begin{align*}
%  \sum_{k=1}^{m}\sum_{s,a} \frac{\nu_k(s,a)}{\max\left\lbrace 1, N_k(s,a)-1 \right\rbrace} \leq \sum_{k=1}^{m}\sum_{s,a} \frac{\max\left\lbrace 1, N_k(s,a)\right\rbrace}{\max\left\lbrace 1, N_k(s,a)-1 \right\rbrace} \leq \sum_{k=1}^{m}\sum_{s,a} \frac{\max\left\lbrace 1, N_k(s,a)\right\rbrace}{\max\left\lbrace 1, N_k(s,a)-1 \right\rbrace}
% \end{align*}

In conclusion, for $T\geq SA$, with probability at least $1-\frac{3\delta}{20T^{5/4}}$
\begin{align}\label{eqn:successful_episodes} 
 \begin{split}
 \sum_{k=1}^m \Delta_k \mathbbm{1}_{M\in {\mathcal{M}}_k}\leq &\left(\sqrt{28}+\sqrt{14} \right)  c\sqrt{(\nextstates - 1) SAT\ln\left(\frac{2SAT}{\delta} \right)} + \frac{98}{3} c S^2A\ln\left(\frac{2SAT}{\delta} \right)(3+\ln(T))\\
 &+2\left(\sqrt{28}+\sqrt{14} \right) \rmaxbound \sqrt{ SAT\ln\left(\frac{2SAT}{\delta} \right)} + \frac{196}{3}\rmaxbound SA\ln\left(\frac{2SAT}{\delta} \right)(3+\ln(T))\\
 &+  (3c + 2\rmaxbound) \sqrt{\frac{5}{2}T\ln \left(\frac{11T}{\delta}\right)} + c SA\log_2\left(\frac{_T}{SA}\right)  + 3(\sqrt{2}+1)\rmaxbound \sqrt{SAT}
 \end{split}
\end{align}

\subsection{Completing the regret bound}

From~\eqref{eqn:splitting} we have that with probability at least $1-\frac{\delta}{20T^{5/4}}$
\begin{align}\label{eqn:splitting2}
 \Delta(\scal,T) \leq  \sum_{k=1}^m \Delta_k \mathbbm{1}_{M \in {\mathcal{M}}_k} + \sum_{k=1}^m \Delta_k \mathbbm{1}_{M \not\in {\mathcal{M}}_k} + \rmaxbound \sqrt{\frac{5}{2}T\ln \left(\frac{11T}{\delta}\right)}
\end{align}
By gathering~\eqref{eqn:failed_episodes} and~\eqref{eqn:successful_episodes} into~\eqref{eqn:splitting2} (using a union bound) we have that with probability at least $1 - \frac{\delta}{20T^{5/4}} - \frac{3\delta}{20 T^{5/4}} - \frac{\delta}{20T^{4/5}} = 1-\frac{\delta}{4T^{5/4}}$ (for $T\geq SA$)
\begin{align}\label{eqn:final_bound_expanded} 
 \begin{split}
 \Delta(\scal,T)\leq &\left(\sqrt{28}+\sqrt{14} \right) c\sqrt{(\nextstates - 1) SAT\ln\left(\frac{2SAT}{\delta} \right)} + \frac{98}{3} c S^2A\ln\left(\frac{2SAT}{\delta} \right)(3+\ln(T))\\
 &+2\left(\sqrt{28}+\sqrt{14} \right) \rmaxbound \sqrt{ SAT\ln\left(\frac{2SAT}{\delta} \right)} + \frac{196}{3}\rmaxbound SA\ln\left(\frac{2SAT}{\delta} \right)(3+\ln(T))\\
 &+ 3 (c + \rmaxbound) \sqrt{\frac{5}{2}T\ln \left(\frac{11T}{\delta}\right)} + c SA\log_2\left(\frac{8T}{SA}\right)  + 3(\sqrt{2}+1)\rmaxbound \sqrt{SAT} + {\rmaxbound} \sqrt{T}
 \end{split}
\end{align}
For $T\leq SA$ the regret can be bounded with probability $1$ as
\begin{align*}
 \Delta(\scal,T)\leq \rmaxbound T = \rmaxbound \sqrt{T} \cdot \sqrt{T} \leq \rmaxbound \sqrt{SAT}
\end{align*}
Finally, we take a union bound over all possible values of $T$ and use the fact that $\sum_{T= 2}^{+\infty}\frac{\delta}{4T^{5/4}}<\delta$.

In conclusion, there exists a numerical constant $\alpha$ such that for any MDP $M$, with probability at least $1-\delta$ our algorithm \scal has a regret bounded by
\begin{align}\label{eqn:final_bound_simplified} 
 \Delta(\scal,T) \leq \alpha\cdot\left( \max{\left\lbrace \rmaxbound,c \right\rbrace} \sqrt{\nextstates S A T \ln\left(\frac{T}{\delta}\right)} + \max{\left\lbrace \rmaxbound,c \right\rbrace} S^2A\ln^2\left(\frac{T}{\delta} \right) \right)
\end{align}
The second term of the upper-bound in Eq.~\ref{eqn:final_bound_simplified} is negligible when $T$ is big enough and so
\begin{align*}
        \mathbb{P}\left( \Delta(\scal,T) = \mathcal{O} \left( \max{\left\lbrace \rmaxbound,c \right\rbrace} \sqrt{\nextstates S A T \ln\left(\frac{T}{\delta}\right)}\right) \right) \geq 1-\delta
\end{align*}

\section{Additional Experiments}\label{app:experiments}
In this section we provide clearer figures for the three-states domain and we present a more challenging domain: Knight Quest.

\subsection{Three-States MDP}
We simply restate the results presented in the main paper on bigger figures (see Fig.~\ref{f:3d_delta_big} and~\ref{f:3d_delta_zero_big}).
\begin{figure}[t]
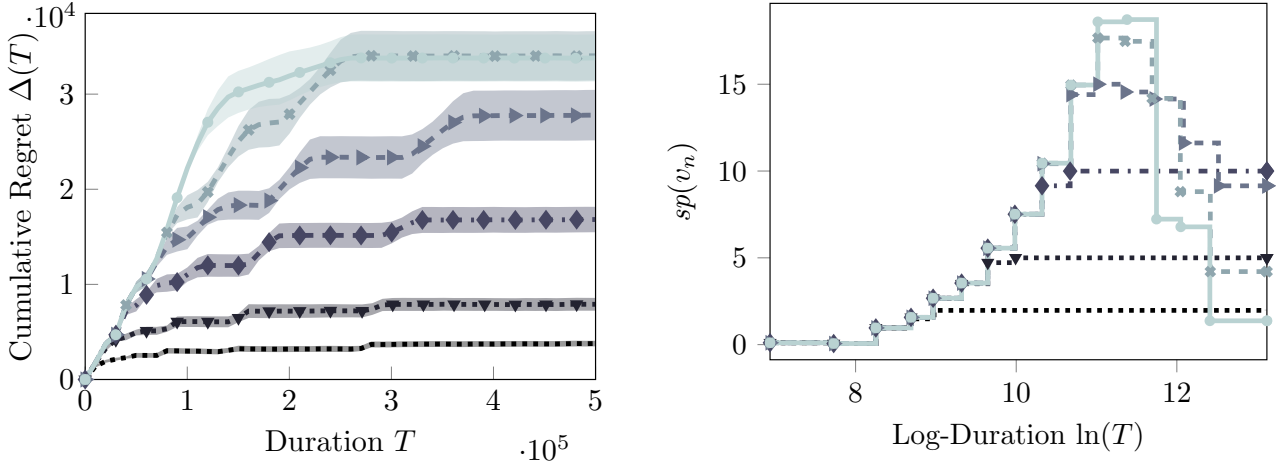

        \centering
        \includegraphics[height=6.1cm]{REGRET_Toy3D_BERN_pure.pdf}\hfill
        \includegraphics[height=6.1cm]{SPAN_Toy3D_BERN_pure.pdf}
        \caption{\ucrl and \scal behaviour with $\delta=0.005$ in the three-states MDP.}
        \label{f:3d_delta_big}
\end{figure}
\begin{figure}[t]
        \centering
        \includegraphics[height=6.1cm]{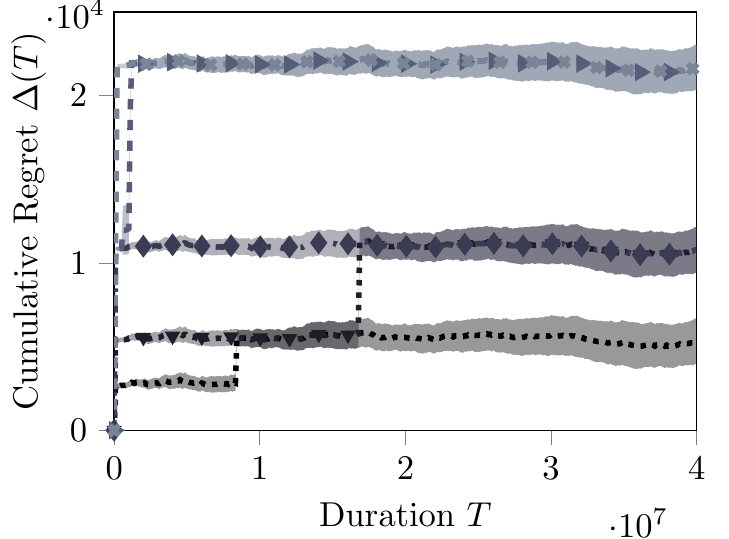}\hfill
        \includegraphics[height=6.1cm]{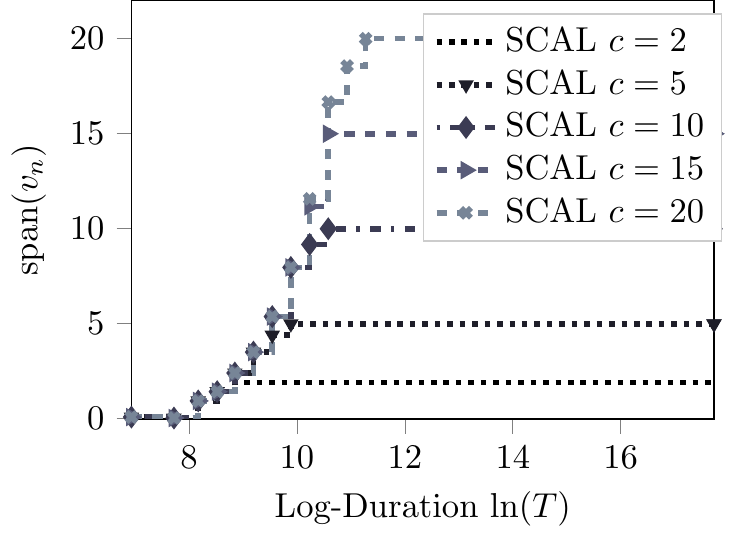}
        \caption{\scal behaviour with $\delta=0$ in the three-states MDP. In this setting, the MDP is weakly communicating and \ucrl is not able to learn. We have omitted \ucrl since it is out-of-scale (see Fig.~\ref{F:3dresults}).}
        \label{f:3d_delta_zero_big}
\end{figure}

\subsection{Knight Quest}
The second environment takes inspiration from classical arcade games. The goal is to rescue a princess in the shortest time without being killed by the dragon.
To achieve this task, the knight needs to collect gold, buy the magic key and reach the princess location.
A representation of the environment is provided in Fig.~\ref{F:knightenv}.

The elements of the game are: I) the knight; II) the princess; III) a dragon patrolling the princess; IV) a gold mine and V) a town.

\textbf{Town, Princess and Gold Mine.} These elements are special states of the environment. The town (T) is the place where the knight can buy objects and where it is reset when he rescues the princess or he is killed by the dragon. The princess (P) is the terminal state, while the gold mine (G) is the place where the knight can collect gold.

\textbf{Dragon.} The dragon (D) is the enemy and it is randomly moving around the princess's location. Let's denote with $d \in \{0,1,2\}$ the position of the dragon such that: $d= 0 := (0,1)$, $d = 1 := (1,0)$ and $d = 2 := (1,1)$. The transition probabilities of the dragon are:
\begin{align*}
        p_{d}(\cdot|0) = [0.4, 0, 0.6]; \quad
        p_{d}(\cdot|1) = [0, 0.4, 0.6]; \quad
        p_{d}(\cdot|2) = [0.4, 0.2, 0.4].
\end{align*}
When the dragon can kill the knight when they are at the same position and the knight does not have the armour.

% \begin{figure}[t]
\begin{wrapfigure}{r}{0.5\textwidth}
        \centering
        \begin{tikzpicture}
                \draw[step=1.5cm,gray,very thin] (0,0) grid (6,-6.0001);
                \filldraw[fill=Lavender,draw=gray, very thin, fill opacity=0.1] (0,0) rectangle (1.5,-1.5);                
                \node[Lavender] at (0.75,-0.75) {\huge P};
                \filldraw[fill=Gray,draw=gray, very thin, fill opacity=0.1] (4.5,0) rectangle (6,-1.5);                
                \node[Gray] at (5.25,-0.75) {\huge T};
                \filldraw[fill=Goldenrod,draw=gray, very thin, fill opacity=0.1] (1.5,-4.5) rectangle (3,-6);                
                \node[Orange] at (2.25,-5.25) {\huge G};
                \filldraw[fill=ForestGreen,draw=gray, very thin, fill opacity=0.1] (1.5,0) rectangle (3,-3);                
                \filldraw[fill=ForestGreen,draw=gray, very thin, fill opacity=0.1] (0,-1.5) rectangle (1.5,-3);                
                \node at (2.25,-2.25) {\color{ForestGreen}\huge D};
                \draw[<->,dashed, ForestGreen] (0.5, -2.25) -- (1.75,-2.25);
                \draw[<->,dashed, ForestGreen] (2.25, -0.5) -- (2.25,-1.75);
                \foreach \x in {0,1,2,3}
                    \node at ($(1.5*\x cm, 0.2)+(0.75,0)$) {$\x$};
                \foreach \x in {0,1,2,3}
                    \node at ($(-0.2, -1.5*\x cm)+(0, -0.75)$) {$\x$};
        \end{tikzpicture}
        \caption{Representation of the Knight Quest $4\times 4$ map. The green shadowed cells are the locations where the dragon can move.}
        \label{F:knightenv}
\end{wrapfigure}
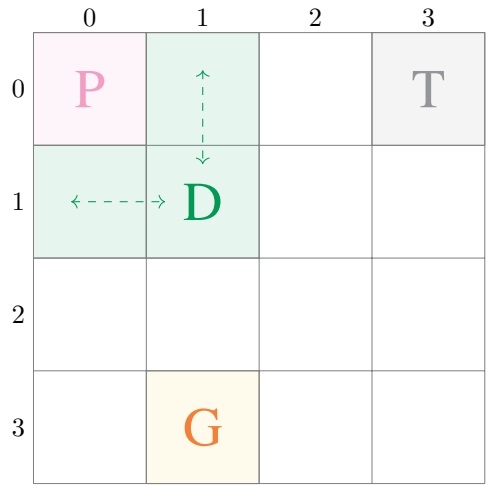        
%\end{figure}

\textbf{Knight.} The knight is the only player of the game.
He moves in the environment using the four cardinal actions (\ie, \emph{right}, \emph{down}, \emph{left} and \emph{up}) plus an action to keep the current position (\emph{stay}). We refer to these $5$ actions as \emph{movement actions}.
Additionally, the knight can collect the gold (action \emph{CG}), buy a key (action \emph{BK}) or buy an armour (action \emph{BA}).

\textbf{State representation, action effect and reward.}
The state $s_t$ of the game is represented by the following elements:
\begin{itemize}
        \item Knight position: coordinates of the grid $(row, col)$, $row,col \in {0,1,2,3}$;
        \item Gold level: the amount of gold own by the knight, $g \in \{0,1\}$;
        \item Dragon position: $d \in \{0,1,2\}$;
        \item Object identifier: the object(s) own by the knight, $o = \{0,1,2,3\}$ where $0:=$ nothing, $1:=$ key, $2 :=$ armour and $3 := $ key and armour.
\end{itemize}
Now we can finally explain the effects of the actions, \ie how states $s_{t+1}$ is generated.
The movement actions have the trivial effect of changing the knight position.
%The special actions succeed only when the agent is at the gold mine (for CG) and in town (for BK and BA).
The action CG changes the state only when the knight is at the mine.
In this case the level of gold is incremented by one, formally $g_{t+1} = \min\{1, g_{t} + 1\}$.
Actions BK and BA alter the state only when are executed in the town with gold-level equal to $1$, \ie
\begin{align*}
        a_t = BK \wedge (row_t,col_t) = T \wedge g_t = 1  \implies o_{t+1} = \begin{cases}
        1 & \text{if } o_t = 0\\
        3 & \text{otherwise}
\end{cases}\\
a_t = BA \wedge (row_t,col_t) = T \wedge g_t = 1  \implies o_{t+1} = \begin{cases}
        2 & \text{if } o_t = 0\\
        3 & \text{otherwise}
\end{cases}
\end{align*}

All the actions are deterministic when the knight does not own the armour.
When the knight has the armour:
\begin{itemize}
        \item The movement actions result in a normal (correct) transition with probability $0.5$, otherwise the current position is kept;
        \item The CG action fails with probability $0.99$, \ie with probability $0.01$ the gold level is increment by $1$.
        \item Actions BK and BA are not modified.
\end{itemize}
Notice that when the knight is equipped with the armour it cannot be killed by the dragon (\ie knight and dragon can occupy the same cell).
However, due to the weight of the armour, knight's gait is unsteady. At the same time, the armour makes the collection of the goal very challenging (\ie success probability is $0.01$). You can imagine that mining with a metal armour can be very difficult!

The basic reward signal is $-1$ at each time step. Nevertheless, the knight receives a reward of $-10$ when he executes CG, BK or BA outside the designed location (\ie mine and town). Finally, he obtains a reward of $20$ when he reaches the princess with the key and $-20$ when he is killed by the dragon (\ie knight and dragon are in the same cell and the knight does not have the armour). For the experiments, we have scaled the reward in the range $[0,1]$.

Finally, when the episode ends (\ie the knight reaches the princess with the key or he is killed), the knight is reset at town location with no gold or object ($g,o=0$) and the dragon position is randomly drawn ($d \sim \mathcal{U}(\{0,1,2\})$).
%\[
%        \mathbb{P}\left(s_0 = \{(row_t,col_t) =T, g=0, d, o = 0\} \right) = 1/3, \quad \forall d \in \{0,1,2\}.
%\]

\textbf{Properties of the game.}
The state and action space size are $S=360$ and $A=8$, while the diameter of the MDP is $D \approx 250$.
The diameter is due to the following path: start from the town with no gold but the armour and reach the princess with one unit of gold, key and armour.
However, the optimal strategy is simply to collect the gold, buy the magic key from the town and rescue the princess.\footnote{Notice that there are deterministic strategies to the princess's location avoiding the dragon.}
The optimal policy is such that $g^* \approx 0.5$, $\SP{h^*} \approx 3.28$.

This game is challenging for OFU approaches since the policy suffering the diameter is orthogonal to the optimal one. 
This is due to the presence of actions that are not relevant to the final objective and simply mess up the navigability of the environment.
We think that this characteristic is shared by common real-world applications where the agent can face several choices (actions) and most of them are useless.
This property can be interpreted also as a hierarchical structure that has been proved to be at the core of many applications.

More practically, the high diameter induces \ucrl to explore remote states that are seen as promising states (in contrast with the real importance).
On contrary, \scal can leverage on the knowledge of ``simple'' game (\ie small span) in order to condition the exploration.
Notice that the span constraint $c$ can be interpreted as the difficulty of the game. This game becomes difficult only if I want reach extreme states that are
nevertheless useless to the final goal (rescue the princess). By giving small $c$ values to the algorithm, we are implicitly saying that the game is simple, do not trust states that are too promising (\ie generate a high span).

\textbf{Results.}
We have tested \ucrl and \scal with different constraints over an horizon $T = 4 \cdot 10^8$ with Bernstein's bound.
The code is available on GitHub (\url{https://github.com/RonanFR/UCRL}).
\scal is run with the reward augmented but no perturbation of the transition matrix ($\eta_k =0$). For the terminal condition of \regopt we set $\gamma_k = 0$.\footnote{Note that in EVI and \regopt the optimistic reward is truncated at $\rmaxbound$, \ie $\max_{\wt{r}}\{\wt{r}(s,a)\} = \min\left\{\rmaxbound, \wh{r}(s,a) + \max \{B^r(s,a)\}\right\}$.}
In order to speed up the learning we have set $\alpha_p = \alpha_r = 0.05$ (see Eq.~\ref{eqn:cb_proba} and~\ref{eqn:cb_reward}).
This still guarantees that the confidence intervals at $t_0$ are still bigger than $1$ and $\rmaxbound$, respectively.
Results are reported in Fig.~\ref{F:KQ_regret_span}.
We can notice that \scal is able to outperform \ucrl by a big margin.
It is interesting to notice that in the regret curves it is easy to identify the linear and logarithmic regimes, while the square-root one is almost absent.
This is due to the fact that once the algorithms discover that visiting extreme states is not relevant they have almost perfectly learnt the dynamics under the optimal policy.\footnote{The actions executed by the optimal policy are deterministic. This means that by using Bernstein's bound we have the term involving the variance equal to zero and the second term scales linearly with the number of visits.}

\begin{figure}[t]
 \centering
 \includegraphics[height=6.1cm]{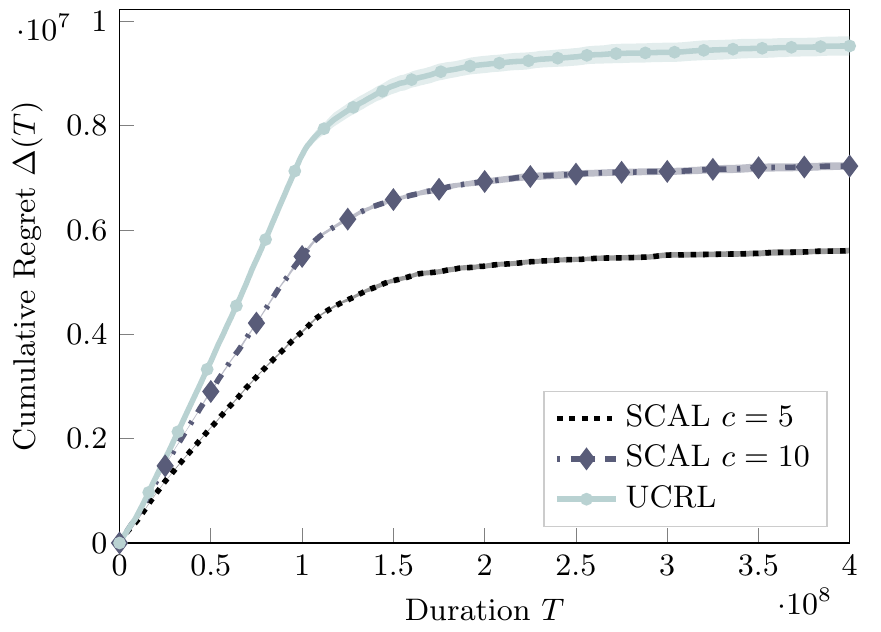}\hfill
 \includegraphics[height=6.1cm]{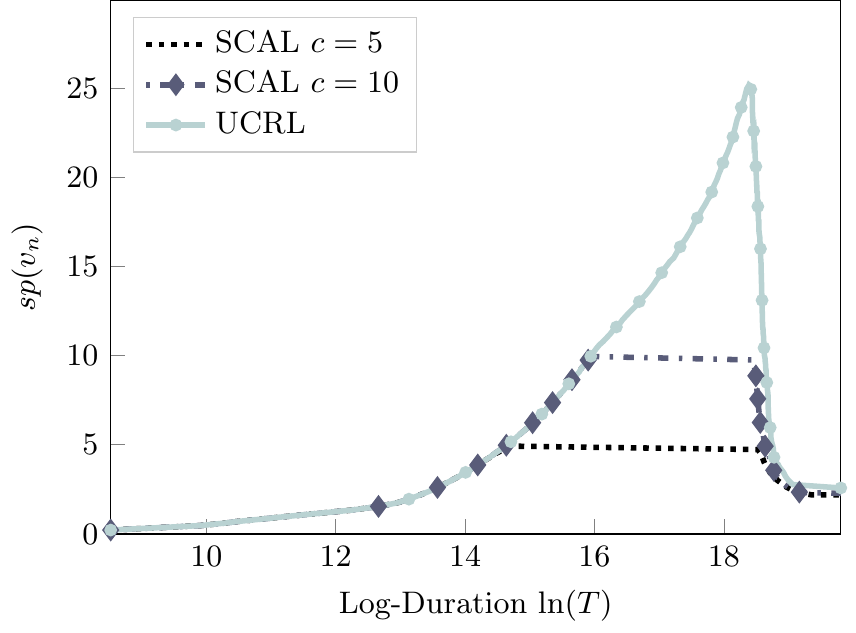}
 \caption{ Algorithm behaviour in the knight quest game.
Figures show the span of the optimistic bias \textit{(right)} and the cumulative regret \textit{(left)} as a function of $T$.
            Results are averaged over $15$ runs and $95\%$ confidence intervals of the mean are shown for the regret.
}
 \label{F:KQ_regret_span}
\end{figure}

\end{document}